\documentclass[aos,preprint]{imsart}

\usepackage{graphicx}
\usepackage{booktabs} 
\usepackage{tikz}
\usepackage{hyperref,graphicx,amsmath,amsfonts,subcaption,amssymb,bm,url,breakurl,epsfig,epsf,color,MnSymbol,mathbbol,fmtcount,semtrans,caption,multirow,comment}

\usepackage{color}
\usepackage[utf8]{inputenc} 
\usepackage[T1]{fontenc}    
\usepackage{hyperref}       
\usepackage{url}            
\usepackage{booktabs}       
\usepackage{amsfonts}       
\usepackage{nicefrac}       
\usepackage{relsize}
\usepackage{enumitem}

\RequirePackage[OT1]{fontenc}
\RequirePackage{graphicx,float}
\RequirePackage[numbers]{natbib}
\RequirePackage[colorlinks,citecolor=blue,urlcolor=blue]{hyperref}
\RequirePackage{caption,subcaption}
\RequirePackage{multirow, slashbox}

\usepackage{hyperref,graphicx,amsmath,amsfonts,amssymb,bm,url,breakurl,epsfig,epsf,color,MnSymbol,mathbbol,fmtcount,semtrans,caption,subcaption,multirow,comment, boldline}
\usepackage[noend]{algpseudocode}
\usepackage{wrapfig}
\usepackage{amssymb}

\usepackage[utf8]{inputenc} 
\usepackage[T1]{fontenc}    
\usepackage{url}            
\usepackage{booktabs}       
\usepackage{amsfonts}       
\usepackage{nicefrac}       
\usepackage{microtype}      

\makeatletter
\providecommand*{\boxast}{%
  \mathbin{
    \mathpalette\@boxit{*}%
  }%
}
\newcommand*{\@boxit}[2]{%
  \sbox0{$\m@th#1\Box$}%
  \ifx#1\displaystyle \ht0=\dimexpr\ht0+.05ex\relax \fi
  \ifx#1\textstyle \ht0=\dimexpr\ht0+.05ex\relax \fi
  \ifx#1\scriptstyle \ht0=\dimexpr\ht0+.04ex\relax \fi
  \ifx#1\scriptscriptstyle \ht0=\dimexpr\ht0+.065ex\relax \fi
  \sbox2{$#1\vcenter{}$}
  \rlap{%
    \hbox to \wd0{%
      \hfill
      \raisebox{%
        \dimexpr.5\dimexpr\ht0+\dp0\relax-\ht2\relax
      }{$\m@th#1#2$}%
      \hfill
    }%
  }%
  \Box
}
\makeatother

  \makeatletter
\def\BState{\State\hskip-\ALG@thistlm}
\makeatother

  \usepackage{mathtools}

\usepackage{titlesec}

\usepackage{tikz}
\usepackage{pgfplots}
\usetikzlibrary{pgfplots.groupplots}

\setcounter{secnumdepth}{4}

\titleformat{\paragraph}
{\normalfont\normalsize\bfseries}{\theparagraph}{1em}{}
\titlespacing*{\paragraph}
{0pt}{3.25ex plus 1ex minus .2ex}{1.5ex plus .2ex}

\usepackage{movie15}

\usepackage{caption}
\usepackage[bottom,hang,flushmargin]{footmisc} 

\setlength{\captionmargin}{30pt}

\newcommand{\tsn}[1]{{\left\vert\kern-0.25ex\left\vert\kern-0.25ex\left\vert #1 
    \right\vert\kern-0.25ex\right\vert\kern-0.25ex\right\vert}}

\definecolor{darkred}{RGB}{150,0,0}
\definecolor{darkgreen}{RGB}{0,150,0}
\definecolor{darkblue}{RGB}{0,0,200}
\hypersetup{colorlinks=true, linkcolor=black, citecolor=darkgreen, urlcolor=darkblue}

\newtheorem{theorem}{Theorem}[section]

\newtheorem{assumption}{Assumption}

\newtheorem{lemma}[theorem]{Lemma}
\newtheorem{corollary}[theorem]{Corollary}
\newtheorem{propo}[theorem]{Proposition}
\newtheorem{definition}[theorem]{Definition}

\newtheorem{remark}[subsection]{Remark}

\def\sF{{\sf F}}
\def\sG{{\sf G}}
\newcommand\tr{{{\operatorname{trace}}}}
\newcommand{\eps}{\varepsilon}

\newcommand{\cF}{\mathcal{F}}

\def\tgamma{\tilde{\gamma}}

\newcommand{\beq}{\begin{equation}}

\newcommand{\eeq}{\end{equation}}

\newcommand{\nn}{\nonumber}

\def\bJ{\mtx{J}}
\newcommand{\A}{{\mtx{A}}}




\newcommand{\Gb}{{\mtx{G}}}

\newcommand{\diag}[1]{{\rm{diag}}(#1)}

\newcommand{\Iden}{{\mtx{I}}}

\newcommand{\ub}{{\vct{u}}}

\newcommand{\z}{{\vct{z}}}

\newcommand{\bbeta}{{\boldsymbol{\beta}}}

\newcommand{\balpha}{{\boldsymbol{\alpha}}}

\newcommand{\Unif}{\rm{Unif}}

\newcommand{\w}{\vct{w}}

\newcommand{\ab}{\vct{a}}
\newcommand{\bb}{\vct{b}}


\newcommand{\event}{\mathcal{E}}

\newcommand{\opnorm}[1]{\left\|#1\right\|}
\newcommand{\fronorm}[1]{\left\|#1\right\|_{F}}
\newcommand{\onenorm}[1]{\left\|#1\right\|_{\ell_1}}
\newcommand{\twonorm}[1]{\left\|#1\right\|_{\ell_2}}

\newcommand{\infnorm}[1]{\left\|#1\right\|_{\ell_\infty}}

\newcommand{\abs}[1]{\left|#1\right|}

\newcommand{\x}{\vct{x}}
\newcommand{\rb}{\vct{r}}

\newcommand{\W}{\mtx{W}}



\definecolor{emmanuel}{RGB}{255,127,0}

\newcommand{\p}{{\vct{p}}}

\newcommand{\pb}{{\vct{p}}}
\newcommand{\qb}{{\vct{q}}}
\newcommand{\R}{\mathbb{R}}

\newcommand{\<}{\langle}
\renewcommand{\>}{\rangle}

\newcommand{\E}{\operatorname{\mathbb{E}}}

\newcommand{\eb}{\vct{e}}

\newcommand{\vct}[1]{\bm{#1}}
\newcommand{\mtx}[1]{\bm{#1}}


\numberwithin{equation}{section} 

\def \endprf{\hfill {\vrule height6pt width6pt depth0pt}\medskip}
\newenvironment{proof}{\noindent {\bf Proof} }{\endprf\par}

\newcommand{\hth}{{\widehat{\boldsymbol{\theta}}}}
\newcommand{\bth}{{\boldsymbol{\theta}}}

\newcommand\cL{\mathcal{L}}

\newcommand\reals{\mathbb{R}}
\newcommand\bSigma{\boldsymbol{\Sigma}}
\newcommand\normal{{\sf N}}
\newcommand\bdelta{\boldsymbol{\delta}}
\newcommand\sign{{\rm sign}}
\newcommand\sT{{\sf T}}

\newcommand\bA{\mtx{A}}
\newcommand\bB{\mtx{B}}
\newcommand\bv{\mtx{v}}
\newcommand\bh{\mtx{h}}
\newcommand\bH{\mtx{H}}

\newcommand\bz{\boldsymbol{z}}
\newcommand\bg{\boldsymbol{g}}

\def\cC{\mathcal{C}}
\def\de{{\rm d}}
\def\bM{\mtx{M}}

\def\bu{\boldsymbol{u}}
\def\by{\boldsymbol{y}}
\def\bw{\boldsymbol{w}}
\def\bX{\boldsymbol{X}}

\def\bx{\vct{x}}

\def\tbx{\widetilde{\boldsymbol{x}}}
\def\tbth{\widetilde{\boldsymbol{\theta}}}

\def\ones{\mathbf{1}}
\def\ind{\mathbb{I}}
\def\cS{\mathcal{S}}
\def\prob{\mathbb{P}}

\def\reals{\mathbb{R}}
\def\SR{{\sf SR}}
\def\AR{{\sf AR}}
\def\ST{{\sf ST}}
\def\bW{\boldsymbol{W}}

\newcommand{\rev}[1]{{{\color{black}{#1}}}}
\newcommand{\revv}[1]{{{\color{black}{#1}}}}

\def\cLo{\overset{\circ}{\cL}}
\def\cLoo{\overset{\circ\circ}{\cL}}
\def\ARo{\overset{\circ}{\AR}}
\def\ARoo{\overset{\circ\circ}{\AR}}
\def\bo{\mathbf{0}}
\def\mbf{\boldsymbol{f}}

\def\ARnl{\AR_{\rm nl}}

\def\elbar{\bar{\ell}}

\def\tqb{\widetilde{\qb}}
\def\br{\boldsymbol{r}}

\def\bK{\boldsymbol{K}}
\def\erf{{\rm erf}}
\def\erfc{{\rm erfc}}

\def\cR{\mathcal{R}}
\def\bOmega{\boldsymbol{\Omega}}
\def\hthoo{\overset{\circ\circ}{\bth}}
\def\delu{\bdelta^{\backslash u}}
\def\be{\boldsymbol{e}}

\def\etest{\eps_{{\rm test}}}

\input{latexdiff_prem}

\begin{document}

\begin{frontmatter}
\title{The curse of overparametrization in adversarial training:
Precise analysis of robust generalization for random features regression}
\runtitle{The curse of overparametrization in adversarial training}

\begin{aug}






\author[A]{\fnms{Hamed}~\snm{Hassani}\ead[label=e1]{hassani@seas.upenn.edu}},
\author[B]{\fnms{Adel}~\snm{Javanmard}\ead[label=e2]{ajavanma@usc.edu }\orcid{0000-0003-1934-8747}}
\address[A]{Department of Electrical and Systems Engineering, University
of Pennsylvania\printead[presep={,\ }]{e1}}

\address[B]{Data Sciences and Operations Department, University
of Southern California\printead[presep={,\ }]{e2}}
\end{aug}

\newcommand{\changelocaltocdepth}[1]{%
  \addtocontents{toc}{\protect\setcounter{tocdepth}{#1}}%
  \setcounter{tocdepth}{#1}%
}

\setcounter{tocdepth}{0}
\begin{abstract}
Successful deep learning models  often involve training neural network architectures that contain more parameters than the number of training samples. Such overparametrized models have recently been extensively studied, and the virtues of overparametrization have been established from both the statistical perspective, via the double-descent phenomenon, and the computational perspective via the structural properties of the optimization landscape. Despite this success, it is also well known that these models are  highly vulnerable to small adversarial perturbations in their inputs.  Even when  adversarially trained, their performance on perturbed inputs (robust generalization) is considerably worse than their best attainable performance on benign inputs (standard generalization). It is thus imperative to understand how overparametrization fundamentally affects robustness.  

In this paper, we will provide a precise characterization of the role of overparametrization on robustness by focusing on  random features regression models (two-layer neural networks with random first layer weights). We consider a regime where the sample size, the input dimension and the number of parameters grow proportionally, and derive an asymptotically exact formula for the robust generalization error when the model is adversarially trained. Our developed theory reveals the nontrivial effect of overparametrization on robustness and indicates that high overparametrization can hurt robust generalization.
\end{abstract}


\begin{keyword}[class=MSC]
\kwd[Primary ]{62E20}
\kwd{62F12}
\kwd[; secondary ]{62F35}
\end{keyword}
\begin{keyword}
\kwd{adversarial training}
\kwd{random features models}
\kwd{precise high-dimensional asymptotics}
\kwd{Gaussian equivalence property}
\end{keyword}

\end{frontmatter}

\section{Introduction}
The success of deep learning models is often reliant on training highly complex neural networks whose number of parameters is much larger than the number of data points. Even though the large complexity of such models allows for perfect interpolation of the data, they often achieve low generalization error.  This behavior has resulted in a growing body of work aiming to analyze such so-called overparametrized models.

Recent work has demonstrated the virtues of overparametrization from statistical and optimization-based perspectives.  From the statistical viewpoint,  it is now well-documented that many overparametrized models exhibit a `double-descent' property \cite{belkin2018understand, belkin2019reconciling, mei2019generalization}: As the model complexity increases, the generalization error first follows  the traditional U-shaped curve until a specific point, after which the error decreases, and
attains a global minimum in the overparametrized regime. In fact, the minimum generalization error often appears to be at
infinite complexity -- the more overparametrized is the model, the smaller is the error. 
It is often argued that the good generalization behavior of highly overparametrized models is due to the inductive bias of gradient-based algorithms which helps with selecting models that generalize well --see e.g,~\cite{bartlett2021deep,hastie2019surprises,soudry2018implicit,NEURIPS2018_0e98aeeb}. From the optimization viewpoint,  training deep neural networks in general involves optimizing highly non-convex functions, but it has been conjectured that highly overparametrized models are easy to optimize despite non-convexity. Instances of this observation has been formally proved, e.g. in \cite{soltanolkotabi2018theoretical,javanmard2019analysis,NEURIPS2018_5a4be1fa,bartlett2021deep,montanari2021tractability}. The high-level intuition here is that in the highly overparametrized regimes, a model that perfectly interpolates the training data (and so is a global minimizer of the empirical risk) is found in the neighborhood of most initializations. 

Despite the remarkable success of deep neural networks, and the crucial role of overparametrization in both the generalization and the tractability aspects, these models are known to be highly vulnerable to perturbations in the input \cite{biggio2013evasion, szegedy2014intriguing}. With an unguarded training approach, these models show unsatisfactory \emph{robust generalization error} in the presence of ``small'' worst-case perturbations to their inputs, a.k.a \emph{asdversarial examples}.
This suggests that learning algorithms, even those with excellent performance on test data, may not be learning the true underlying concepts that determine the response; although they work well on naturally occurring data, adversarial examples have low probability in the data distribution and expose fundamental blind spots in the learning algorithms.

This observation stimulated significant effort to improve robustness using  a wide variety of \emph{adversarial training} methods which often involve augmenting the training loss so
as to become more robust to input perturbations (see e.g. \cite{DBLP:journals/corr/GoodfellowSS14, kurakin2016adversarial, jalal2017robust, DBLP:conf/iclr/MadryMSTV18, DBLP:conf/icml/WongK18, DBLP:conf/icml/ZhangYJXGJ19, cohen2019certified}).  However, there is still a large gap between the robust generalization error and the (standard) generalization error in adversarially-trained models. In summary, while modern overparametrized machine learning models perform very well on benign inputs, they still remain fragile to perturbations in the input. These findings raise a fundamental question: 

\vspace{0.5cm}
\begin{centering}
\hspace{1cm}\textit{How does overparametrization affect robustness  to  perturbations in the input?}
\end{centering}
\vspace{0.5cm}

A few recent papers have begun to answer the above question in specific settings with rather conflicting messages: 

\begin{itemize}
\item \cite{javanmard2020precise} and \cite{donhauser2021interpolation} have studied high-dimensional \emph{linear} models and showed that the robust generalization error of adversarially-trained models becomes \emph{worse} as the models become more overparametrized. It should be noted that for linear models, even in the case where there is no adversary, the best generalization error is attained when the model is  underparametrized~\cite{hastie2019surprises}. 

\item Another line of work provably shows that in order to \emph{interpolate} the training data smoothly, while being robust, overparametrization is  \emph{necessary} \cite{bubeck2021universal, bubeck2021law}. However, we note that, in order to train robust models, it may not be beneficial to interpolate the training data as robustness is measured with worst-case performance over all the points in a neighborhood around the input data. Indeed,~\cite{dohmatob2021fundamental} study the tradeoffs between memorization (of training data) and robustness of two-layer neural networks and established a lower-bound on the non-robustness of the model (via the Sobolev-seminorm of the model) as an increasing function of the amount of memorization.


We will provide a more detailed discussion of these points and other related works in Section~\ref{sec:related}. 
Despite such interesting recent progress, a comprehensive understanding on how overparametrization precisely affects robustness remains largely mysterious. 

\end{itemize}

\begin{figure}[!t]
     \centering
     \begin{subfigure}[b]{0.49\textwidth}
         \centering
         \includegraphics[width=\textwidth]{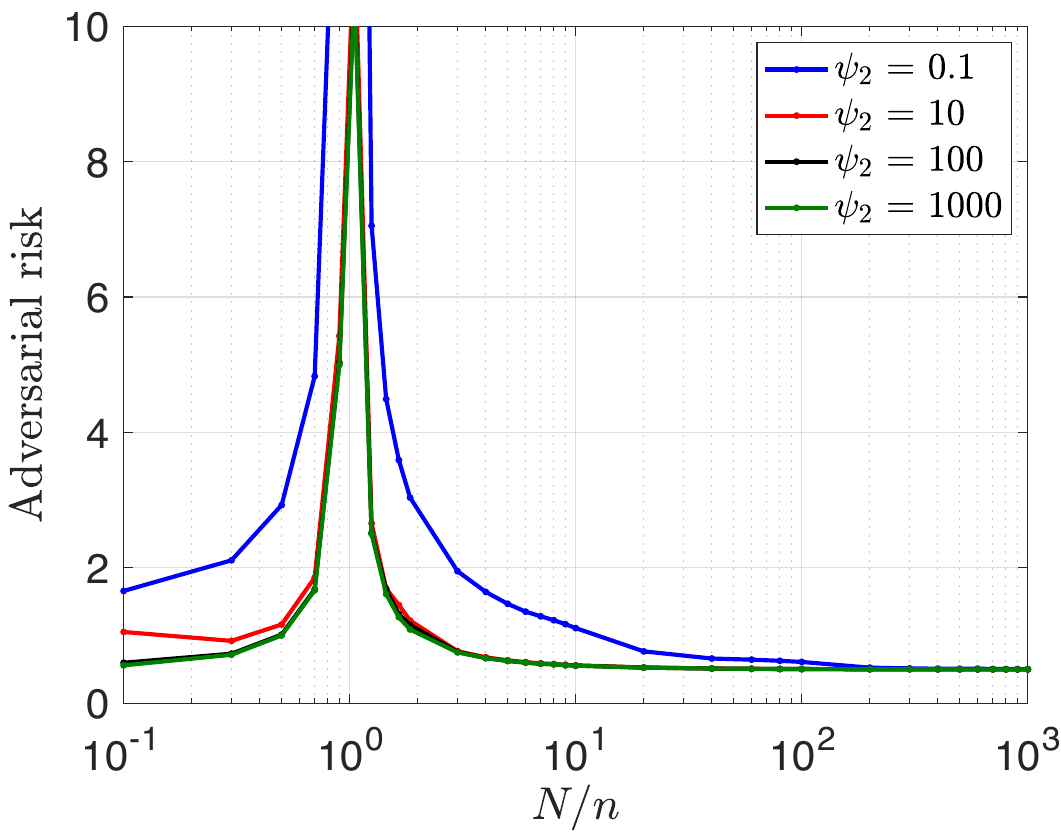}
         \caption{$\varepsilon = 10^{-7}$}
         \label{fig:y equals x}
     \end{subfigure}
     \hfill
     \begin{subfigure}[b]{0.475\textwidth}
         \centering
         \includegraphics[width=\textwidth]{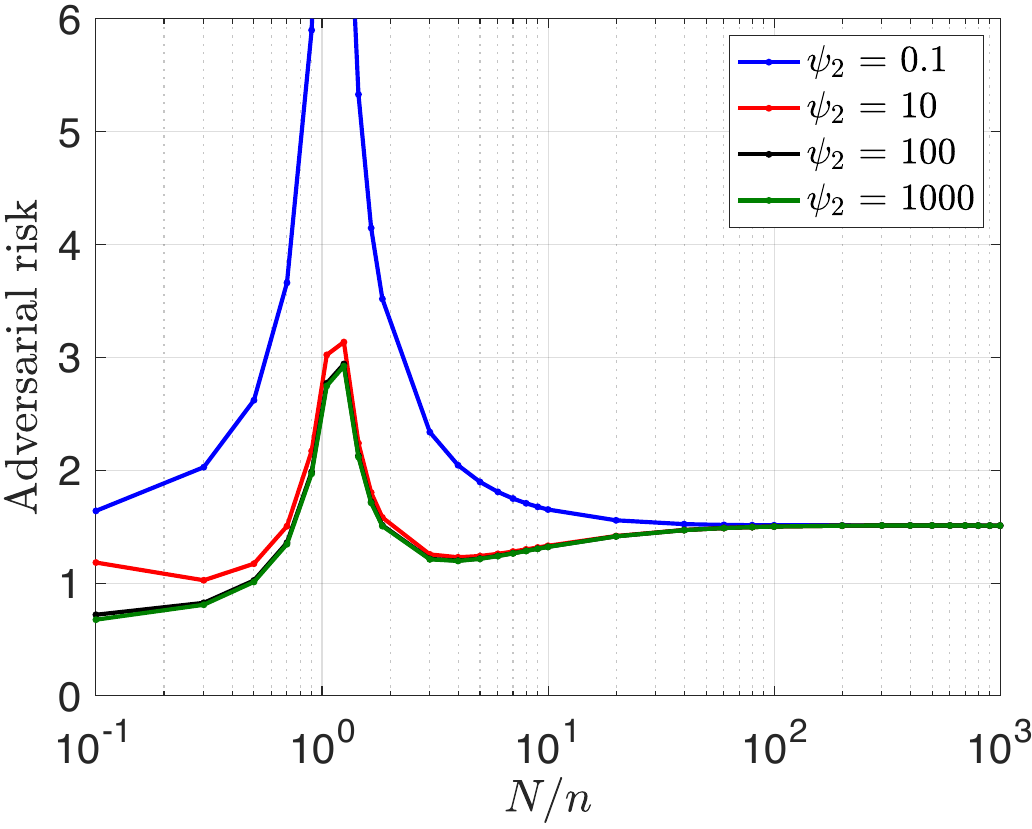}
         \caption{$\varepsilon = 0.1$}
         \label{fig:three sin x}
     \end{subfigure}
     \hfill
     \begin{subfigure}[b]{0.475\textwidth}
         \centering
         \includegraphics[width=\textwidth]{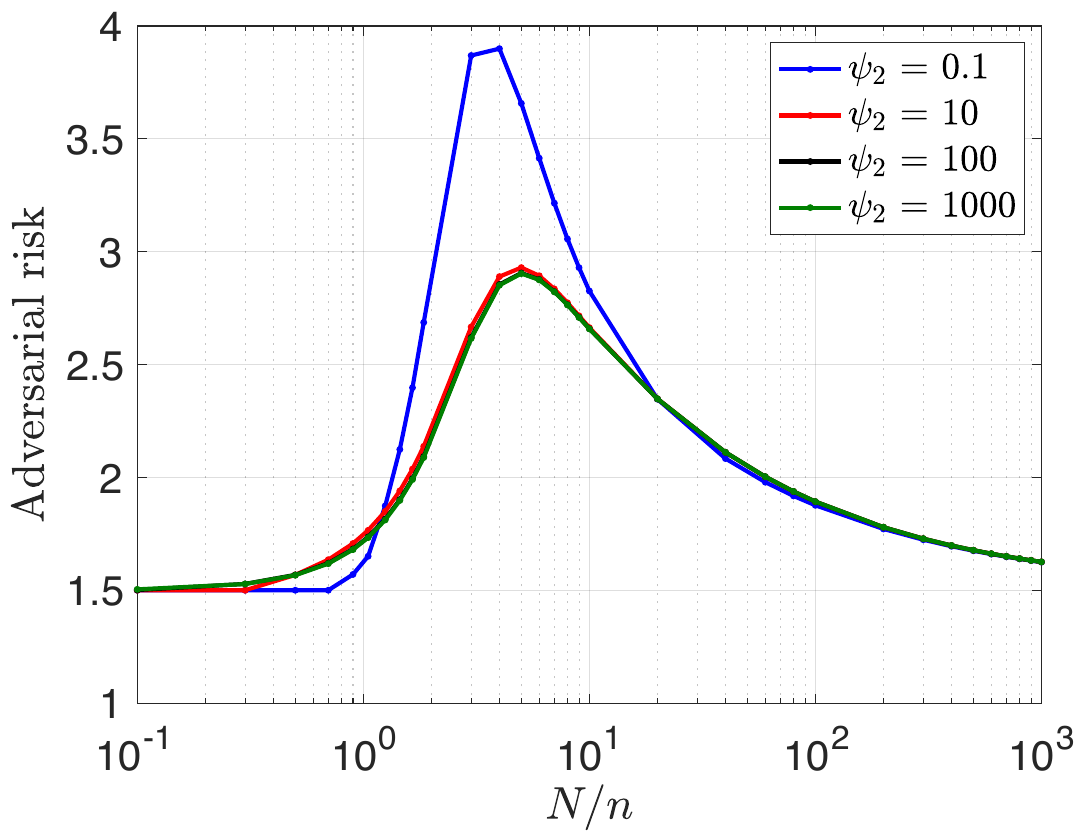}
         \caption{$\varepsilon = 1$}
         \label{fig:five over x}
     \end{subfigure}
        \caption{\small{Random features regression with (shifted) ReLU activation ($\sigma(x) = \max(x,0)-1/\sqrt{2\pi}$). Data $(\x_i, y_i)$ is generated with $d$-dimensional normal covariates $\x_i$ and $y_i = \bbeta^\sT \x_i + \xi_i$, where the noise variables $\xi_i \sim \mathcal{N}(0,0.5)$ and $\twonorm{\bbeta} = 1$. Perturbations are allowed within an Euclidean ball of radius $\varepsilon$, and the models are adversarially trained. We plot the robust generalization error (using Theorem \ref{thm:main}) versus the amount of overparametrization $N/n$, where $N$ is the number of parameters and $n$ is the number of training data points. The plots are obtained for different values of $\varepsilon$ and $\psi_2 = n/d$.} 
        \label{fig:intro_figures}}
\end{figure}

In this paper, we focus on random features regression models that are adversarially trained using robust empirical risk minimization and provide a ``precise characterization'' of the robust generalization.  Our analysis is carried out in a high-dimensional regime where the size of the training data $n$, the number of parameters $N$, and the dimension of the data $d$ grow proportional to each other, i.e. \rev{$N/d \to \psi_1$ and $n/d \to \psi_2$}. We further assume that  the perturbations are bounded in terms of $\ell_2$ norm by a value $\varepsilon >0$.  Our developed theory allows us to precisely characterize the effect of overparamterization on model robustness. One of the main consequences of our analysis is that, in general, higher overparametrization leads to a \emph{worse} robust generalization error  for the adversarially-trained models. Figure~\ref{fig:intro_figures} depicts how the robust generalization error varies with respect to the amount of overparametrization $N/n$. The left figure corresponds to the case where there is no adversary (i.e. $\varepsilon \approx 0$).  In this case, the robust generalization coincides with the (standard) generalization error, and is minimized at infinite overparametrization. However, as seen in the other two figures (for $\varepsilon > 0 $), overparametrization is in general hurting robustness. This is clearly seen in Figure~\ref{fig:intro_figures}(c) and (b) (for $\psi_2\ge 10$) where the minimum robust error is attained when the model is underparametrized. We refer to Figure~\ref{fig:tot} for an extended version of Figure~\ref{fig:intro_figures}  with more choices of $\eps$ and signal-to-noise ratios.

We proceed by providing an informal overview of our results and their implications in Section~\ref{sec:overview_of_results}.  Related works are discussed in Section~\ref{sec:related}. The main result of the paper, which characterizes the robust generalization error for the random features model is explained in Section~\ref{sec:main_results}. The  architecture of the proof of the main result is sketched in Section~\ref{sec:sketch}. 
Our analysis develops a set of techniques that are of independent interest: (i) We derive an asymptotic closed form for adversarial examples in trained random features models; (ii) While features are highly non-Gaussian in  random features models, we prove a Gaussian equivalence property which relates  robust generalization in these models to that of linear models with Gaussian features under the same correlation structure; (iii) Our analysis of the equivalent Gaussian model relies on the Convex Gaussian Min-max Theorem, which is a generalized and tight version of  Gordon's  Gaussian comparison inequalities.

\section{Results and discussion: An informal overview} \label{sec:overview_of_results}
\phantom{a}
\medskip

\noindent\textbf{Problem setting.} Consider a supervised learning scenario where 
we are given i.i.d data $\{(\bx_i,y_i)\}_{i\le n}$ generated according to the following distribution: 
\begin{align}\label{eq:linearModel}
y_i = \<\bx_i,\bbeta\>+\xi_i,\quad \text{with} \quad \bx_i\sim_{iid} \normal(0,\Iden_d),\quad \xi_i\sim \normal(0,\tau^2)\,.
\end{align}
The (linear) dependence between $(\bx_i,y_i)$ is unknown and the goal is to fit a model to this data which can be then used to predict labels for the unlabeled  examples at test time.

We consider modeling the relation between label $y$ and feature vector $\bx$ using the class of random features (RF) model, which can be described as 
\begin{align}\label{eq:RF}
\cF_{{\rm RF}}(\bW) = \Big\{f(\bx,\bth,\bW) = \sum_{\ell=1}^N \theta_\ell \sigma(\<\bw_\ell,\bx\>):\quad \bth= (\theta_1, \cdots, \theta_N) \in \reals^N \; \Big\}\,,
\end{align}
where $\bth$ is the parameter vector to be learned and $\bW\in \reals^{N\times d}$ is a fixed matrix whose rows $\bw_\ell$ are chosen randomly and independently of data. For simplicity we assume the normalization $\twonorm{\bw_\ell} = 1$. Namely, the vectors $\bw_\ell$ are chosen uniformly at random from the unit sphere, $\bw_\ell\sim\Unif(\mathbb{S}^{d-1})$, which implies that $\<\bw_\ell,\bx_j\>$ is of order one. In addition, $\sigma:\reals\mapsto \reals$ is a nonlinear activation function. 

Note that in random features model training is only done on $\bth$ and not on $\bW$. In other words, the random features model can be perceived as a two-layer neural network with the weights of the first layer chosen randomly and independently from data, while the weights in the second layer are learned during the training phase. The random features model was introduced by~\cite{rahimi2007random} for scaling kernel methods to large datasets, and there has been a large body of work drawing connections between random features models, kernel methods and fully trained neural networks~\cite{daniely2016toward,daniely2017sgd,jacot2018neural,li2018learning}. The random features models are arguably the simplest analytically tractable models  that capture
all the features of the double descent phenomenon without assuming ad hoc misspecification structures~\cite{montanari2019generalization}.
In particular, they allow to disentangle  the number
of parameters from the covariates dimension and hence isolate the effects of overparametrization from the effects of the ambient dimension.

To quantify robust generalization, we consider an adversarial framework where at the test time, the feature vector $\bx$ is corrupted by additive perturbation, chosen adversarially, from the Euclidean ball of radius $\varepsilon$. We measure the robust generalization via the \emph{adversarial risk} measure which is the expected test error of the model on perturbed test input.  We train a random features model, using a widely used adversarial training approach, which is based on the robust empirical risk minimizer (robust-ERM estimator)\rev{~\cite{madry2017towards,tsipras2018robustness}:
\begin{align}
\hth^\eps = \arg\min_{\bth\in\reals^N}  \max_{\twonorm{\bdelta_i}\le \eps}  \frac{1}{2n} \sum_{i=1}^n \left(y_i - \bth^\sT \sigma(\bW(\bx_i + \bdelta_i)\right)^2\,.
\end{align}
where $\bdelta_i$ is the norm-bounded adversarial perturbation on sample covariates $\bx_i$ and $\eps$ is the ``perceived'' adversary's power used in the training process.}

\vspace{.2cm}
\noindent\textbf{Results and discussion.}  We study the asymptotic setting, where $N,n,d\to\infty$ with $N/d \to\psi_1$ and $n/d\to \psi_2$ for some positive constants $\psi_1,\psi_2$. \emph{We derive the precise characterization of the adversarial risk of the robust-ERM estimator, as an explicit function of the dimension parameters $\psi_1,\psi_2$, the noise level $\tau^2$, and the adversarial power $\eps$. } We refer to Theorem~\ref{thm:main} for the specific formulae. 

\begin{figure}[!t]
     \centering
         \includegraphics[width=9cm]{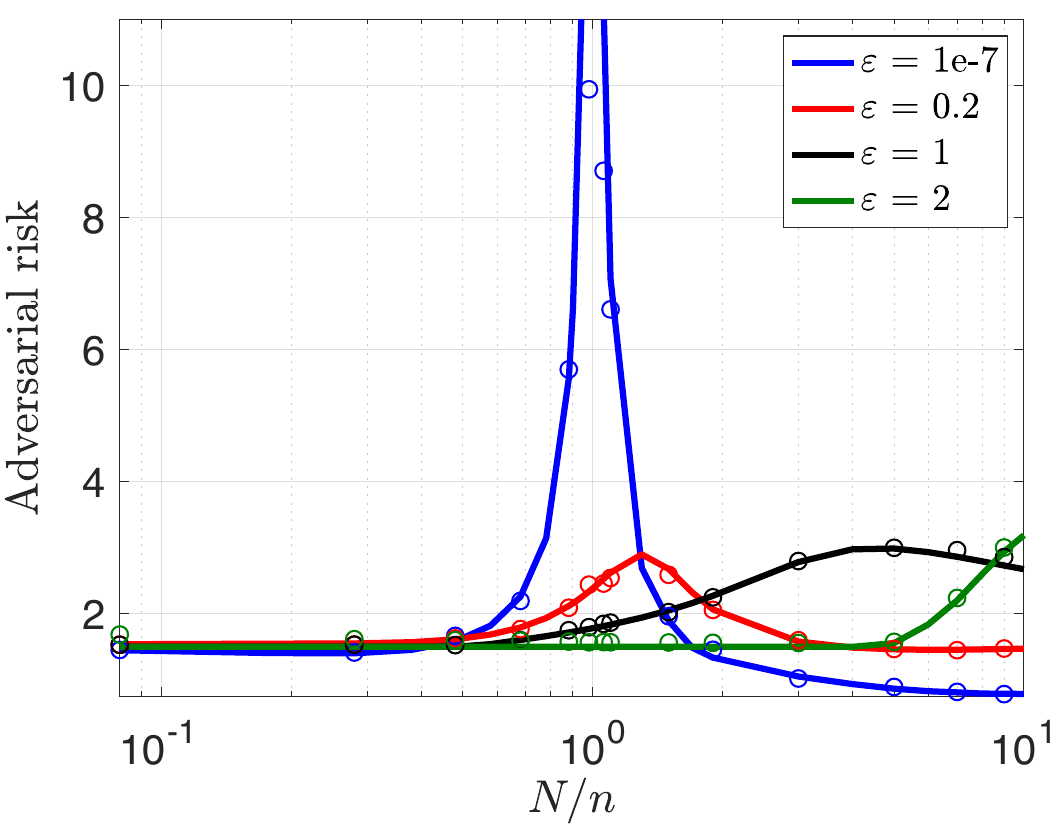}
        \caption{\small{Adversarial risk versus overparametrization $\psi_1/\psi_2 = N/n$ for different values of adversary's power $\eps_0$.  Solid curves
are theoretical predictions and dots are results obtained based on gradient descent on the robust ERM objective. Each dot represents the average of 100 trials. The data is generated according to model~\eqref{eq:linearModel}, with $d = 100$, $n = 300$, $\tau^2 = 0.5$, and $\bbeta\in\reals^d$ obtained by drawing a vector with i.i.d $\normal(0,1)$ entries and then normalizing it to have $\twonorm{\bbeta} = 1$.} 
        \label{fig:comp}}
\end{figure}

Let us now discuss the  behavior of the robust generalization curve under different settings.
We consider the data model~\eqref{eq:linearModel} and the random features regression with shifted ReLU activation: 
\[
\sigma(x) = \max(x,0)-\frac{1}{\sqrt{2\pi}}\,.
\]
The reason behind the intercept term is that since the  response variable is zero mean, we consider fitting a model using zero mean features. Note that $\<w_\ell, \x\>\sim \normal(0,1)$ and for $G\sim \normal(0,1)$, we have $\E[\sigma(G)] = \E[G\ind(G>0)- 1/\sqrt{2\pi}] = 0$.

 We start by Figure~\ref{fig:comp} which shows our theoretical curve versus the overparametrization ratio $\psi_1/\psi_2 = N/n$ along with the corresponding empirical results.  The solid lines depict
theoretical predictions with the dots representing the empirical performance of gradient descent in learning the robust ERM for data model~\eqref{eq:linearModel}, with $d = 100$, $n = 300$, $\tau^2 = 0.5$. In addition, $\bbeta\in\reals^d$ is generated by first drawing a $d$-dimensional vector with i.i.d standard normal entries and then normalizing it to have unit $\ell_2$ norm.
 Each dot represents the average of $20$ trials. As we see, even for moderate covariate dimensions ($d$), our theoretical curve is at excellent match with the empirical results. We note that when $\eps \to 0$ (we did not set $\eps= 0$ exactly for numerical stability), we are in non-adversarial regime and the robust generalization error reduces to the usual test error (blue curve). In this case, we observe the double-descent phenomena and recover the theoretical prediction of~\cite{mei2019generalization}. As $\eps$ grows the robust \rev{generalization curve starts} behaving differently.  For $\eps$ large enough ($\eps=1, 2$ in the figure), we  
see that overparametrization hurts robust generalization.

For a more complete picture, in Figure~\ref{fig:varyeps} we consider similar setting with more choices of $\eps$ and noise variance $\tau^2$, and also a larger range of overparametrization $\psi_2/\psi_1 = N/n$, as we fix $\psi_2 = 3$.  When $N/n\to 0$, we essentially have the risk of the zero estimator, which is $\twonorm{\bbeta}^2+\tau^2$. Several intriguing observations can be made from these plots:
\begin{itemize}
\item In the noiseless case (Figure~\ref{fig:varyeps1}) and for $\eps \le 0.5$, the global minimizer of the adversarial risk is at a finite overparametrization $(N/n>1)$, after which the risk becomes increasing as a function of $N/n$ (higher overparametrization hurst robust generalization).  Similar behavior is observed for $\tau^2 = 0.5$ and $\eps\le 0.05$.
\item In all three plots (corresponding to different SNR levels), when $\eps$ is large enough $(\eps \geq 1)$, the risk first goes up as overparametrization increases and after reaching its peak starts going down, but it remains above $1+\tau^2$ which is the risk at the highly underparametrized regime ($N/n \to 0$). Therefore, somewhat surprisingly, robust ERM estimator has larger adversarial risk compared to the trivial zero estimator, for all the range of overparametrization.
\item The peak of the adversarial risk occurs in the overparametrized regime; for the non-adversarial case $\eps= 0$, it occurs at the interpolation threshold $N/n = 1$ and for $\eps>0$ it occurs at $N/n>1$. The location  of the peak and the value of risk at the peak vary with $\eps$. As $\eps$ grows, the peak shifts to the right and occurs at a higher overparametrization ratio.    
\end{itemize}

\begin{figure}[!t]
     \centering
     \begin{subfigure}[b]{0.49\textwidth}
         \centering
         \includegraphics[width=\textwidth]{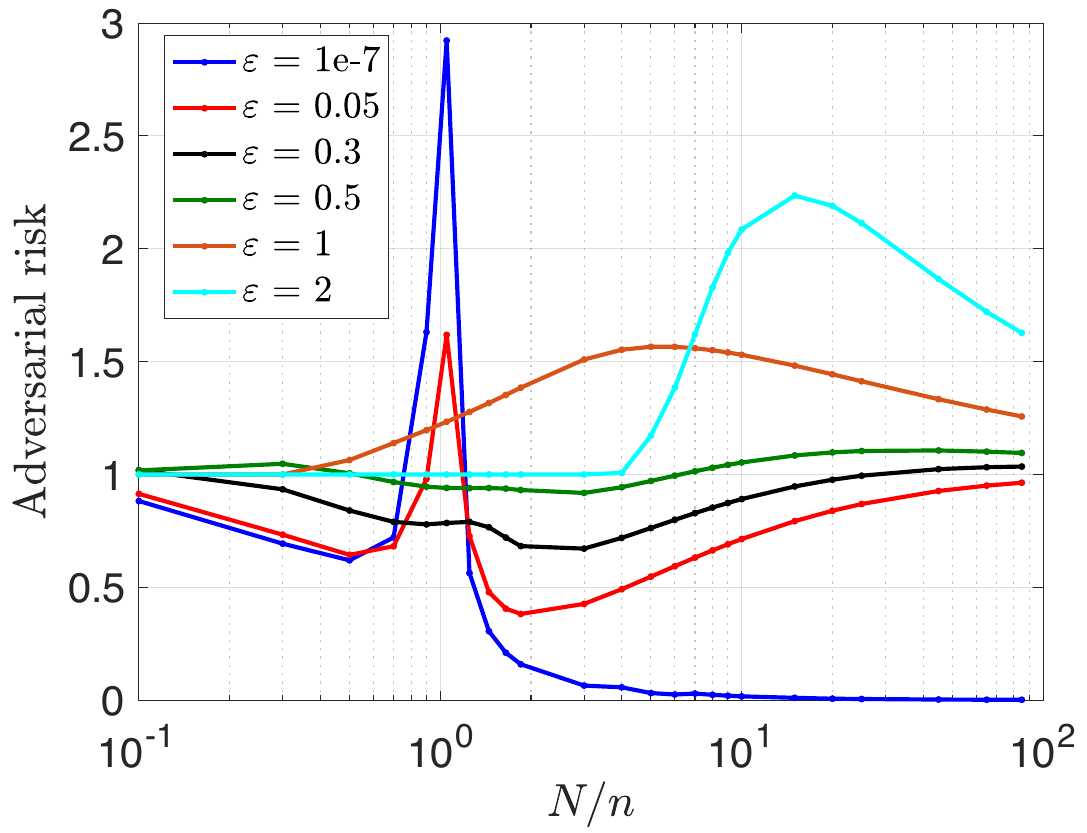}
         \caption{$\tau^2 = 0$}
         \label{fig:varyeps1}
     \end{subfigure}
     \hfill
     \begin{subfigure}[b]{0.475\textwidth}
         \centering
         \includegraphics[width=\textwidth]{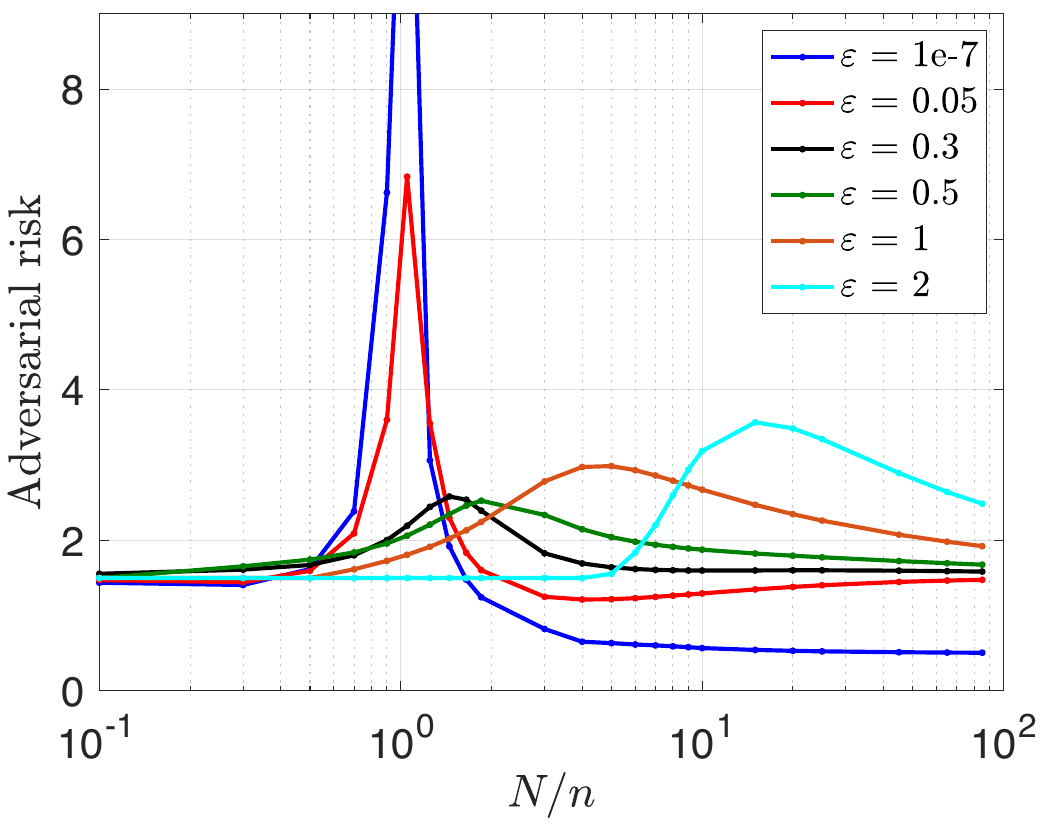}
         \caption{$\tau^2 = 0.5$}
         \label{fig:varyeps2}
     \end{subfigure}
     \hfill
     \begin{subfigure}[b]{0.495\textwidth}
         \centering
         \includegraphics[width=\textwidth]{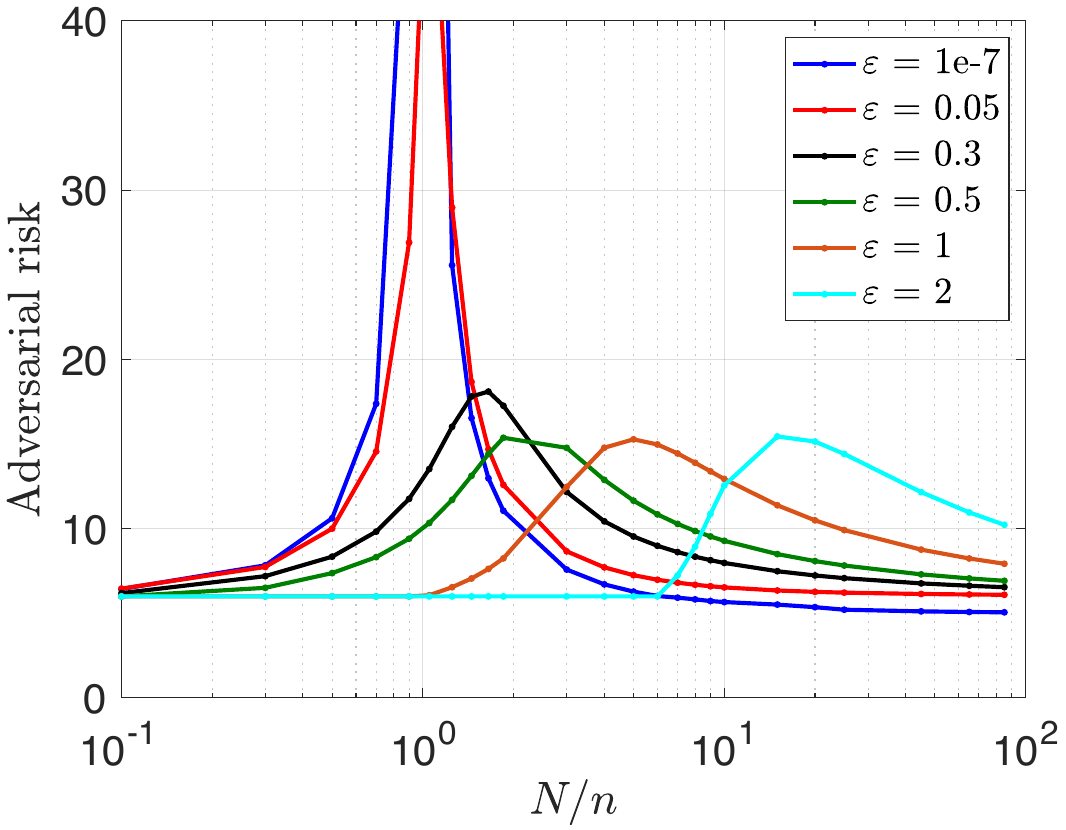}
         \caption{$\tau^2 = 5$}
         \label{fig:varyeps3}
     \end{subfigure}
        \caption{\small{Theoretical prediction curves for adversarial risk of robust ERM as a function of overparametrization $\psi_1/\psi_2 = N/n$ for different values of adversary's power $\eps$ and noise variance $\tau^2$, with the data model~\eqref{eq:linearModel}. Here we fix $\twonorm{\bbeta} = 1$ and $\psi_2 =3$.} 
        \label{fig:varyeps}}
\end{figure}

In Figure~\ref{fig:tot} we depict our theoretical prediction curves for the adversarial risk of the robust ERM estimator as a function of the overparametrization ratio $\psi_1/\psi_2 = N/n$ for different values of $\psi_2 = n/d$. The right panel  corresponds to $\eps = 1$ (strong adversary) and as we see for different values of $\tau^2$ and $\psi_2$, the adversarial risk is first an increasing function of overparametrization ratio, until it reaches its peak (in the overparametrized regime, $N/n>1$) and then becomes decreasing. But it never falls below its initial value at $N/n\approx 0$. The left panel corresponds to $\eps = 0.1$ (weak adversary) and as we see for large $\psi_2$,  overparametrization clearly has a negative effect on robust generalization. For example, in Figure~\ref{fig:tot1}, for $\psi_2 = 100, 1000$ the risk is an increasing function of $\psi_1/\psi_2$ over the entire range. Also in Figure~\ref{fig:tot3}, for $\psi_2\ge 10$ the global minimum of the adversarial risk is achieved in the underparametrized regime ($N/n<1$). 

\begin{figure}[]
     \centering
      \begin{subfigure}[b]{0.47\textwidth}
         \centering
         \includegraphics[width=\textwidth]{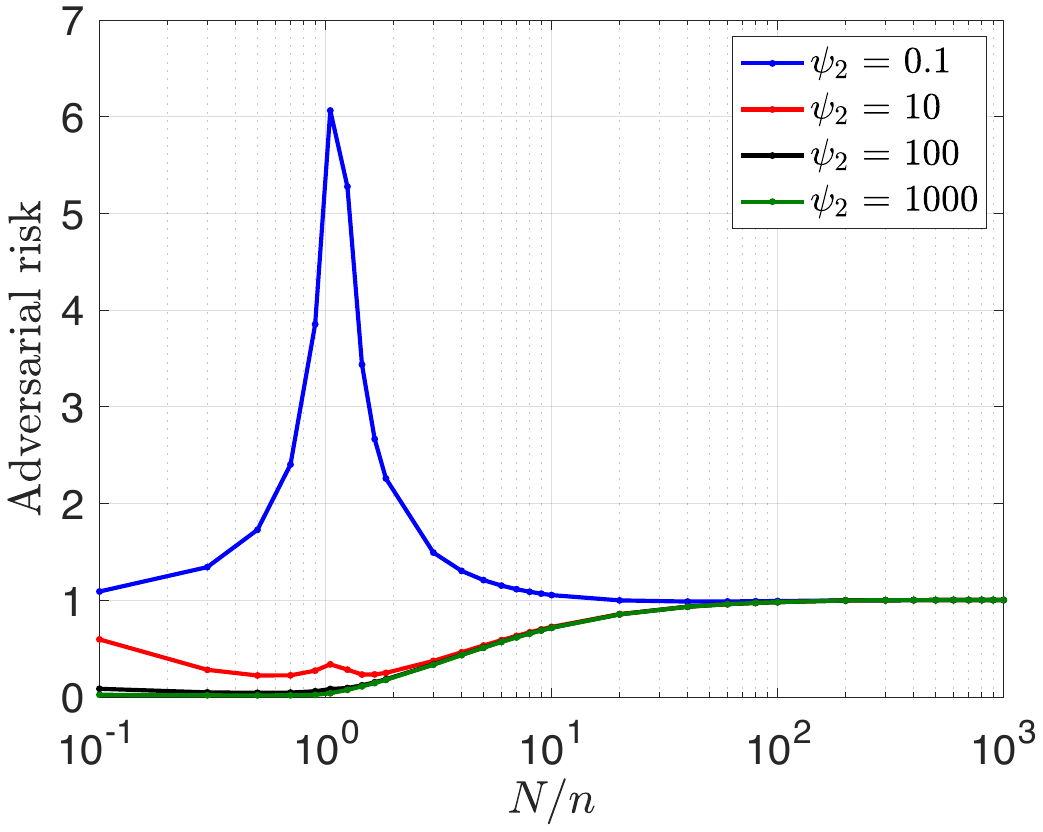}
         \caption{$\tau^2 = 0$, $\varepsilon = 0.1$}
         \label{fig:tot1}
     \end{subfigure}
     \hfill
      \begin{subfigure}[b]{0.485\textwidth}
         \centering
         \includegraphics[width=\textwidth]{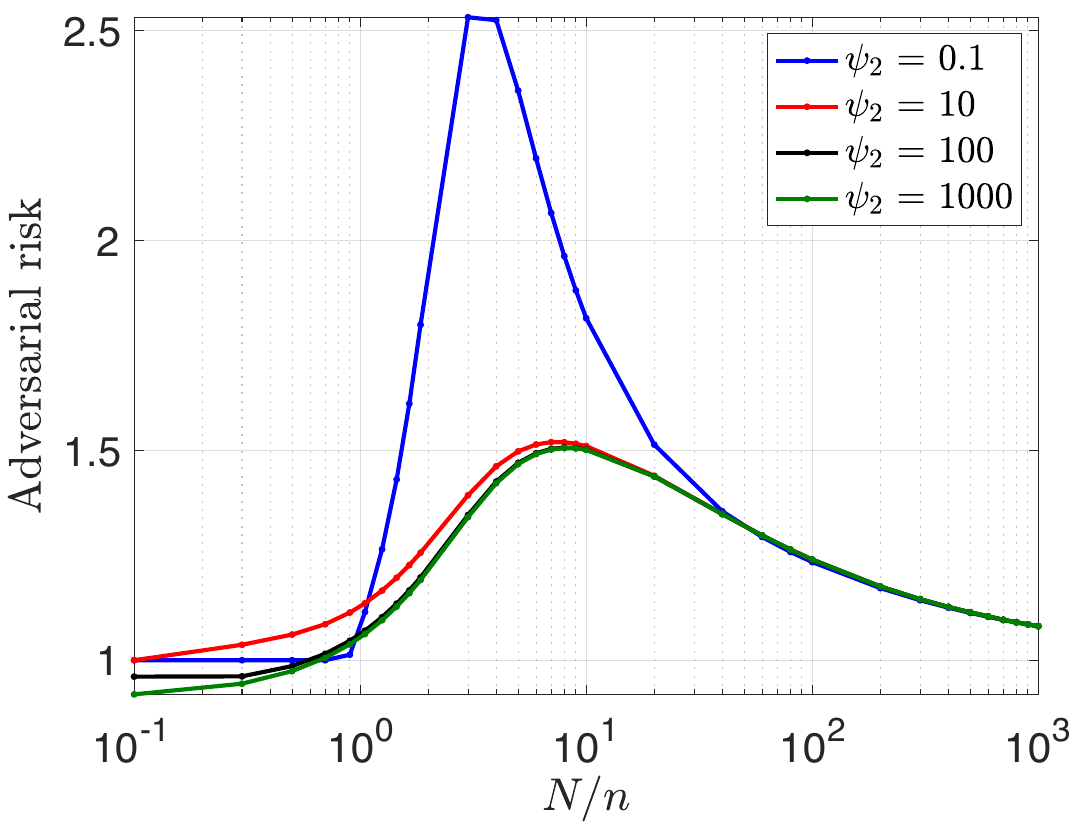}
         \caption{$\tau^2 = 0$, $\varepsilon = 1$}
         \label{fig:tot2}
     \end{subfigure}
     \hfill
     \begin{subfigure}[b]{0.48\textwidth}
         \centering
         \includegraphics[width=\textwidth]{FIG/noise05eps01}
         \caption{$\tau^2 = 0.5$, $\varepsilon = 0.1$}
         \label{fig:tot3}
     \end{subfigure}
     \hfill
     \begin{subfigure}[b]{0.5\textwidth}
         \centering
         \includegraphics[width=\textwidth]{FIG/noise05eps1}
         \caption{$\tau^2 = 0.5$, $\varepsilon = 1$}
         \label{fig:tot4}
     \end{subfigure}
     \hfill
     \begin{subfigure}[b]{0.47\textwidth}
         \centering
         \includegraphics[width=\textwidth]{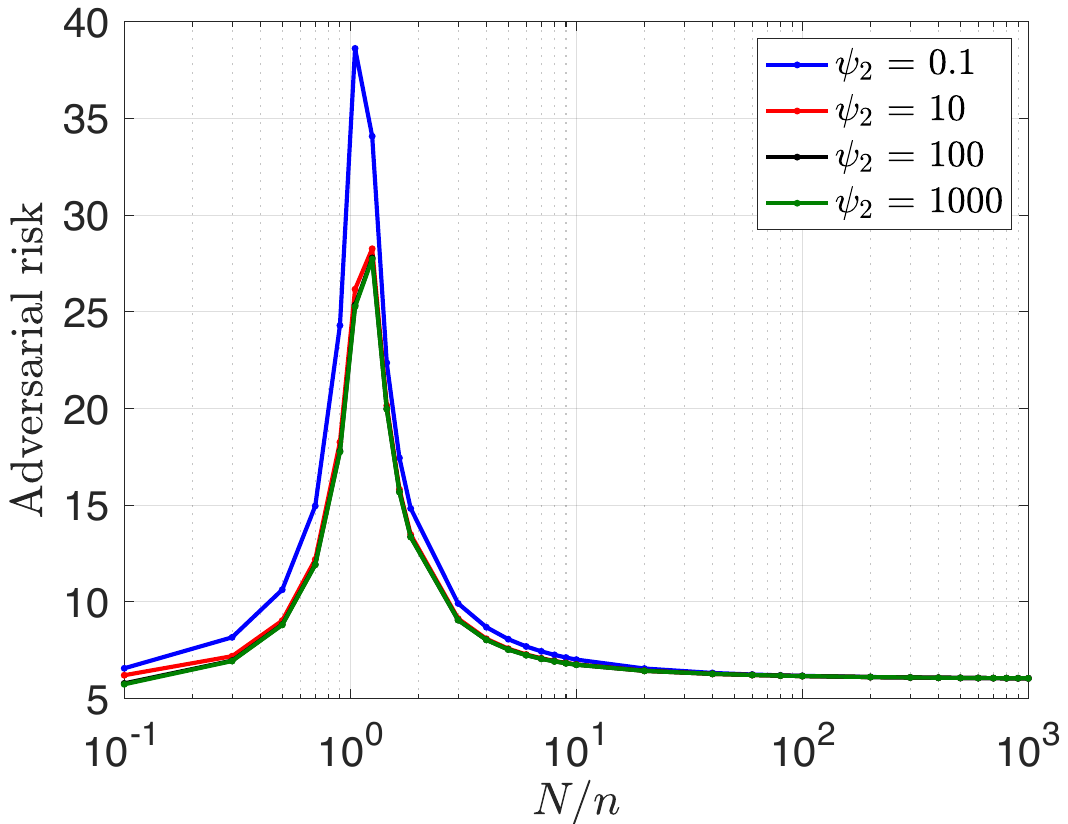}
         \caption{$\tau^2 = 5$, $\varepsilon = 0.1$}
         \label{fig:tot5}
     \end{subfigure}
     \hfill
      \begin{subfigure}[b]{0.465\textwidth}
         \centering
         \includegraphics[width=\textwidth]{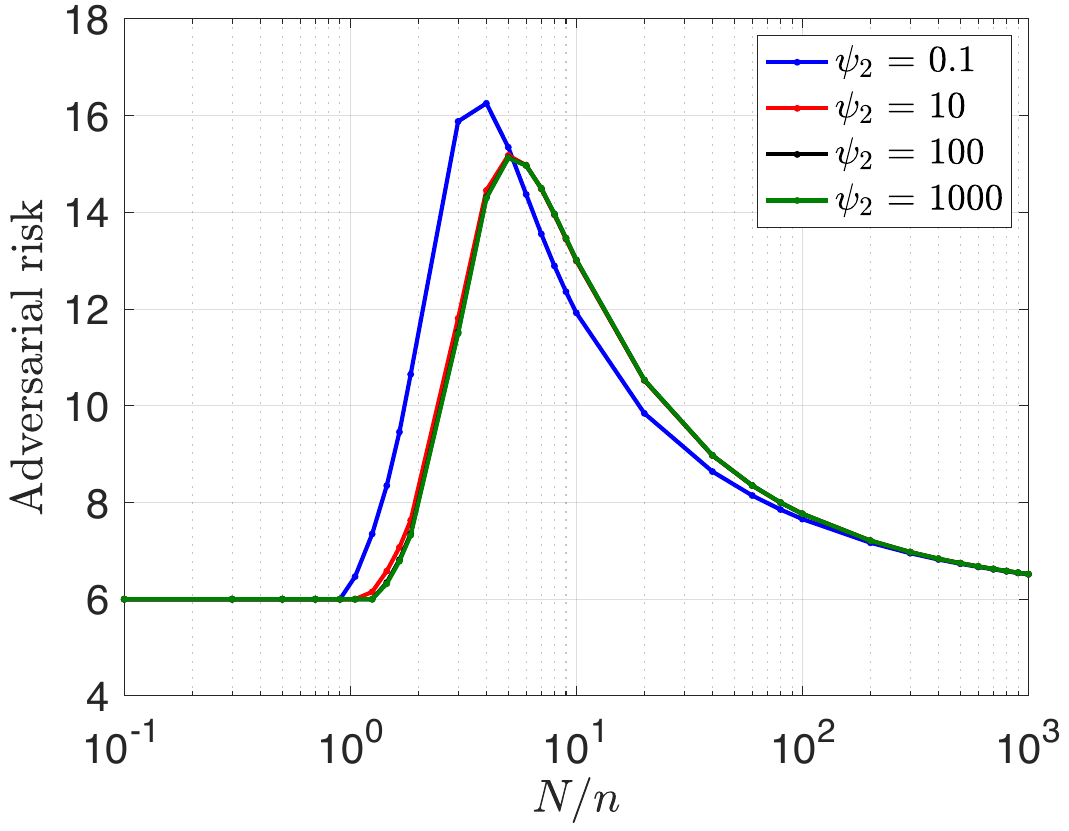}
         \caption{$\tau^2 = 5$, $\varepsilon = 1$}
         \label{fig:tot6}
     \end{subfigure}
        \caption{\small{Theoretical prediction curves for adversarial risk of robust ERM as a function of overparametrization $\psi_1/\psi_2 = N/n$ for different values of $\psi_2 = n/d$. Each plot corresponds to a specific value of adversary's power $\eps$ and noise variance $\tau^2$.} 
        \label{fig:tot}}
\end{figure}

\section{Related Work} \label{sec:related}
Several recent works have focused on the robustness of overparametrized models. On the one hand, \cite{bubeck2021law} 
shows that in order \emph{interpolate} the training data smoothly, the Lipschitz parameter of the resulting model should be at least of order $\sqrt{\frac{nd}{N}}$. \rev{This applies} to data distributions that satisfy a property called isometry--e.g. when the data covariates $\x_i$ are distributed on the unit sphere. For such data distributions, worst-case perturbations are meaningful only if their $\ell_2$ norm is upper-bounded by $\frac{\epsilon}{\sqrt{d}}$.  Otherwise, if the size of the perturbation can be allowed to be much larger than $O(\frac{1}{\sqrt{d}})$, it can be shown that the robust generalization error approaches one for any model--see \cite{DBLP:conf/iclr/ShafahiHSFG19, gilmer2018adversarial,mahloujifar2019curse, mahloujifar2019can}. Putting the above two results together, we can conclude that, in order to interpolate smoothly, while guaranteeing robustness to norm-bounded perturbations, it is necessary that the ratio $N/n$ is bounded away from zero. This is indeed the regime studied in our paper.
However, in this regime, it is not clear why interpolation to training data is beneficial for obtaining robust models. In fact, to obtain robust models, one may have to trade off  the performance on the original data points (i.e. interpolation to training data) with the performance on the points in a ball around each data point (i.e. extrapolation to adversarial examples). In other words, we may have underparametrized models that do not fit the training data perfectly, but have a small Lipschitz constant. Indeed, this can be implied from the main messages of our paper.  

On the other hand, the works in \cite{javanmard2020precise} and \cite{donhauser2021interpolation} have studied the performance of high-dimensional \emph{linear} models and showed that the robust generalization error of adversarially-trained models becomes worse as the models become more overparametrized. In particular, \cite{donhauser2021interpolation} provably shows that avoiding interpolation (and using underparametrized models) improves the robust generalization error in both linear regression and classification--which leads to the first theoretical result on robust overfitting. There are a few reasons on why we might prefer to study non-linear models (such as the random features model) compared to linear models \cite{mei2019generalization}:  First of all, for linear models,  we know that the best (standard) generalization error is attained when the model is highly underparametrized. Second, the number of parameters in a linear model is tied to the covariates dimension $d$ and hence the effects of overparametrization \rev{cannot be} isolated from the effects of the ambient dimensions. Third, a hypothesis put forward in~\cite{DBLP:journals/corr/GoodfellowSS14} is that the origins and ubiquity of adversarial examples is due to the (approximately) linear behavior of a model over large regions of the input space. Shallow linear models are not able to become constant near training points while also assigning different outputs to different
training points. However, the setting of random features is significantly different since this class can express any function to an arbitrary degree of accuracy so long as it has enough number of random features~\cite{rahimi2008uniform}.


In another recent work \cite{wu2021wider},  an extensive study on the robustness of wide neural networks with respect to norm-bounded perturbations is provided. By defining and analyzing a new metric, called perturbation stability, it is shown that while the (standard) generalization error is improved on wider models, the perturbation stability often
worsens, leading to a potential decrease in the overall model robustness. These empirical findings are aligned with the messages of our paper. 

A somehow different line of work~\cite{zhang2022many} studies the sample complexity of the robust interpolation problem where the goal is to interpolate (noisy) training data by a Lipschitz function, under generic covariate distribution (beyond isoperimetry distributions). This work  measures the (non)robustness of a model by its Lipschitz constant and similar to~\cite{dohmatob2021fundamental} establish a lower bound on Lipschitnzenss which is increasing with respect to the overfitting level. This result can be rephrased as an adverse effect of \rev{overparametrization} (thorough overfitting) on robustness. While these work study the effect of \rev{overparametrization} on robustness via memorization/interpolation, we will take a direct approach to study the effect of overparametrization on `adversarially trained' models. 

Several works have shown a non-trivial tradeoff between the robust generalization error and the standard generalization error for parametric models \cite{tsipras2018robustness,su2018robustness,raghunathan2019adversarial,DBLP:conf/icml/ZhangYJXGJ19, javanmard2020precise, edgar}.  It has also been shown that using more data can improve this tradeoff \cite{carmon2019unlabeled, min2021curious, raghunathan2019adversarial,sehwag2021improving,deng2021improving,zhai2019adversarially,najafi2019robustness,gowal2020uncovering, rebuffi2021data}. Again, these findings are aligned with the messages of our paper as more data can  mean less overparametrization.

This paper provides, for the first time, an analysis for the adversarially-trained random features model in the high-dimensional regime. For linear models, this analysis has been carried out in \cite{javanmard2020precise} for the  regression setting, and later on in \cite{javanmard2020precisec, taheri2020asymptotic} for the classification setting. A key ingredient of the analysis in these papers, as well as our paper, is a powerful extension of a classical Gaussian process inequality \cite{gordon1988milman}, known as the Convex Gaussian Minimax Theorem, developed in \cite{thrampoulidis2015regularized} and further extended in \cite{thrampoulidis2018precise, deng2019model}. Another key ingredient of our analysis is the Gaussian Equivalence Property for the random features model which was proposed and studied in \cite{montanari2019generalization} for  maximum-margin linear classifiers in the overparametrized
regime, as well as \cite{hastie2019surprises, abbasi2019universality, montanari2019generalization, mei2019generalization,  gerace2020generalisation,  dhifallah2020precise, hu2020universality, goldt2020modeling, goldt2020gaussian} for the linear Gaussian model under other settings. In particular, a part of our analysis, that establishes equivalence with the so-called noisy linear model in the adversarial setting, is heavily based on the machinery which was elegantly developed in \cite{hu2020universality} for the random features model. This machinery is itself based on the Lindeberg principle \cite{lindeberg1922neue} and the leave-one-out technique developed in \cite{el2018impact, abbasi2019universality}.  

We conclude this section by a broad comparison between adversarial setting and the literature of robust statistics.
\medskip

\noindent{\bf Comparison with robust statistics.}  This area traditionally considers a setting where perturbations are made to the \emph{training data}; a small fraction of data samples are grossly corrupted and the goal is to find estimators
that are robust against outliers (via measures like influence function, breakdown point, and change of variance, etc). In the adversarial training paradigm, one considers the so-called test-time adversarial setting, in which the training data is uncorrupted (say $(\bx_i,y_i)\sim \prob$ for some distribution $\prob$). However, the adversary can perturb \emph{each test data}. In other words, the test data $(\bx,y)$ is drawn from $\prob$ and then $\bx$ is perturbed by the adversary ($\bx\to \tbx$ with $\twonorm{\bx-\tbx}\le \eps$). The goal of adversarial training is to develop a model that can still predict the response $y$ from the perturbed feature $\tbx$. With this view, adversarially robust models are basically those that have good generalization and  are also smooth enough so that they do not change much on small neighborhoods (of radius at most $\eps$).

That said, another line of work (see e.g.~\cite{lai2020adversarial}) considers a different adversarial setup in which an attacker can
observe and modify all training data samples in an
adversarial manner so as to maximize the estimation error caused
by his attack. This work introduces the notion of adversarial influence function (AIF) to quantify the sensitivity of estimators to such adversarial attacks, and further derive the optimal estimator, among a certain class of estimator, that minimizes AIF.  \rev{Related to this setting, there is also a line of work based on the Median of Means approach, see e.g, \cite{huang2023deepmom, depersin2023robustness}), which concerns a data poisoning/ data contamination adversarial setting. In data poisoning, the adversary can pick a (small) fraction of the \emph{training data} and alter it in a way that it hurts the training process, and ultimately the generalization performance.  However, in this paper we consider a different type of adversarial act which has to do with adversarial perturbation (in a small ball) of the input data point at the \emph{test time}.}



\section{Main results} \label{sec:main_results}
Recall the  data distribution given in~\eqref{eq:linearModel}. Given $n$ i.i.d pairs $(\bx_i,y_i)$ drawn from this distribution, we fit a random features model, defined as the function class~\eqref{eq:RF}, with the shifted ReLU activation:
\begin{equation} \label{shifted_relu}
\sigma(x) = \max(x,0)-\frac{1}{\sqrt{2\pi}}\,.
\end{equation}

We consider sequences of parameters $(N,n,d)$ that diverge proportionally to each other and sometimes, we index such sequences by $d$, with $N = N(d)$ and $n= n(d)$ functions of $d$.
\begin{assumption}\label{assumption1}({Asymptotic setting.})
\begin{itemize}
\item[(a)] Defining $\psi_{1,d} = N/d$ and $\psi_{2,d} = n/d$, we assume that the following limits exist:
\[
\lim_{d\to\infty} \psi_{1,d} = \psi_1, \quad \lim_{d\to\infty} \psi_{2,d} = \psi_2\,,
\]
for some positive finite constants $\psi_1$ and $\psi_2$. 
\item[(b)] We assume that the $\ell_2$ norm of the signal $\bbeta$ converges, as $d\to\infty$. For the sake of
normalization and without loss of generality, we assume $\lim_{d\to\infty} \twonorm{\bbeta} = 1$.
\end{itemize}
\end{assumption} 

\rev{Recall that in the data model~\eqref{eq:linearModel}, $\bx_i\sim_{iid}\normal(0,\mtx{I}_d)$ and so its distribution is rotation-invariant. Likewise, in the random features
model \eqref{eq:RF}, the rows $\bw_\ell$ are chosen uniformly at random from the unit sphere, and so has a rotation-invariant distribution. In our adversarial setting, we also focus on
norm-bounded perturbations which is again a rotation-invariant constraint. Using these properties, it is easy to see that the adversarial risk will be invariant if we rotate the model $\bbeta$
in~\eqref{eq:linearModel} and hence only depends on $\twonorm{\bbeta}$. This justifies Assumption~\ref{assumption1}(b) made above.}

\rev{To study  robust generalization of the estimated models, we consider an adversarial framework with \rev{norm bounded} perturbations.
This can be formulated as a game between the learner and the adversary. Given access to a training dataset consisting of $n$ i.i.d. pairs $(\bx_i,y_i)$ generated from~\eqref{eq:linearModel}, the learner chooses a model $\bth$ from the class of random features model $\cF_{{\rm RF}}(\bW)$~\eqref{eq:RF}. Adversarial perturbations occur at the test time. After observing the learner's model, for every test point $\bx$, the adversary perturbs it to $\bx+\bdelta$ where $\bdelta$ is chosen from the Euclidean ball of radius $\eps$. Note that the choice of $\bdelta$ can in general depend on $\bx$ and the learner's model. The robust generalization of the learner's model is quantified via a measure called \emph{adversarial risk}, which \rev{is the expected} prediction loss of the model on an adversarially
corrupted test data point according to the described attack model.}
 
 \begin{definition}(\emph{Adversarial risk.})
 For a predictive model $f$ and a loss of choice $\ell:\reals\times\reals\to \reals_{\ge 0}$, the adversarial risk of model $f$ is defined as:
 \rev{
 \[
 \AR(f): = \E\Big[\max_{\twonorm{\bdelta}\le \etest} \ell(f(\bx+\bdelta), y)\Big]\,,
 \]
 }
  where the expectation is with respect to randomness of $(\bx,y)$. 
  
  In particular, for a random features model $\bth = (\theta_1, \cdots, \theta_N)^\sT$ from $\cF_{{\rm RF}}(\bW)$, defined in \eqref{eq:RF}, and with the choice of squared loss, the adversarial risk of $\bth$ becomes:
   \begin{align}\label{eq:AR}
\AR(\bth): = \E\Big[\max_{\twonorm{\bdelta}\le \etest} \left(y- \bth^\sT\sigma(\bW(\bx+\bdelta))\right)^2 \Big]\,.
\end{align}
 \end{definition}
Norm bounded adversarial attack models are widely used in the literature, motivated by a plethora of safety-critical applications in machine learning, computer vision, natural language processing, medical imaging, and robotics. A popular approach to adversarial training  is by considering the following robust empirical risk minimization (robust-ERM) problem~\cite{madry2017towards,tsipras2018robustness}:
\begin{align}\label{def:Rob-ERM}
\hth^\eps = \arg\min_{\bth\in\reals^N}  \max_{\twonorm{\bdelta_i}\le \eps}  \frac{1}{2n} \sum_{i=1}^n \left(y_i - \bth^\sT \sigma(\bW(\bx_i + \bdelta_i)\right)^2\,.
\end{align}
Here, $\etest$ is a measure of the adversary's power and $\eps$ is the ``perceived'' adversary's power used by the algorithm. Our theory allows for $\eps$ to be different from $\etest$; cf Theorem~\ref{thm:main}. In our numerical experiments in Section~\ref{sec:overview_of_results} we consider $\eps = \etest$ to focus on other relevant quantities, namely $\psi_1$, $\psi_2$ on adversarial risk.

Note that the above objective is the empirical surrogate of the adversarial risk~\eqref{eq:AR}, where the expectation is replaced by the empirical average over the training samples. This minimax approach can also be viewed as an implicit smoothing that
tries to fit the same response $y$ to all the covariate vectors in the $\eps$-neighborhood of $\bx$ simultaneously.

Our main result in this paper is a precise characterization of the adversarial risk of the robust ERM model~\eqref{def:Rob-ERM} under the asymptotic regime described in Assumption~\ref{assumption1}. Before stating our result, we introduce another piece of notation.

For $\psi_1\in (0,\infty)$, we define function $S(\cdot;\psi_1):\reals_{<0}\to \reals_{<0}$:
\begin{align}\label{def:S}
S(z;\psi_1) = \frac{1-\psi_1-z - \sqrt{(1-\psi_1-z)^2-4\psi_1 z}}{-2\psi_1 z}\,.
\end{align}
One may recognize that $S(z;\psi_1)$ is the Stieltjes transform of the Marchenko-Pastur distribution. We refer 
to \rev{Lemma}~\ref{propo:stiel} (Appendix~\ref{sec:useful-lemma}) for more details.

We are now ready to state our main result.
\begin{theorem}\label{thm:main}
Let $n$ i.i.d pairs $(\bx_i,y_i)$ be drawn from the data model~\eqref{eq:linearModel} and let $\hth^\eps$ be the robust ERM fit~\eqref{def:Rob-ERM}  to this data using the class of random features models $\cF_{{\rm RF}}(\bW)$, given by~\eqref{eq:RF} with the shifted ReLU activation. Consider the asymptotic regime, described in Assumption~\ref{assumption1}. 
With function $S(\cdot;\psi_1)$ given by~\eqref{def:S}, define
\[
\sigma^2 = \tau^2 +1 - \psi_1\left(1+\Big(1-\frac{2}{\pi}\Big) S\Big(\frac{2}{\pi}-1;\psi_1\Big)\right)\,.
\]
$(a)$\;  For $\eps>0$, the following convex-concave minimax scalar optimization has a unique solution \rev{$(\alpha_*,\tau_{g*}, \beta_*,\gamma_*,\tau_{q*})$}:
\begin{align}\label{eq:AO-final00}
 \max_{0\le\beta, \gamma,\tau_q}\;\; \min_{0\le \alpha, \tau_g}\;\;  & 
\cR(\alpha,\tau_g, \beta,\gamma,\tau_q) 
\,,
\end{align} 
where
\begin{align*}
\cR(\alpha,\tau_g, \beta,\gamma,\tau_q) : =  &\frac{\tau_q}{2\alpha} (\tau^2+1-\sigma^2)
 - \frac{\alpha \tau_q}{2}+\frac{\beta\tau_g}{2}\psi_2 + \frac{\beta}{2(\tau_g+\beta)} (\sigma^2+\alpha^2) \nn\\
 &+\mathbf{1}_{\Big\{\frac{\gamma(\tau_g+\beta)}{\eps\beta\sqrt{\alpha^2+\sigma^2}}>\sqrt{\frac{2}{\pi}} \Big\}} \frac{\beta^2(\alpha^2+\sigma^2)}{2\tau_g(\tau_g+\beta)}\left(\erf\left(\frac{\nu^*}{\sqrt{2}}\right)-\frac{\gamma(\tau_g+\beta)}{\eps\beta\sqrt{\alpha^2+\sigma^2}}\nu^*\right)\nn\\
 &-\frac{\alpha}{\tau_q} \sup_{0\le {\lambda}< 1}\;\;\left[
\frac{{\lambda}\psi_1}{2}\left\{\frac{\tau_q^2}{\alpha^2}+\beta^2 +\Big(\frac{\tau_q^2}{\alpha^2}\Big(1 - \frac{2}{\pi}{\lambda} \Big)+\frac{2}{\pi} (1- {\lambda})\beta^2\Big) S\Big(\frac{2}{\pi}{\lambda} - 1;\psi_1\Big)\right\}
-\frac{{\lambda}}{2(1- {\lambda})}\gamma^2\right]\,.
\end{align*}
Here, $\nu^*$ is the unique solution to 
\begin{align*}
\frac{\gamma(\tau_g+\beta)}{\eps\beta\sqrt{\alpha^2+\sigma^2}}-\frac{\beta}{\tau_g}\nu-\nu\cdot\erf\left(\frac{\nu}{\sqrt{2}}\right)-\sqrt{\frac{2}{\pi}} e^{-\frac{\nu^2}{2}}= 0\,.
\end{align*}
$(b)$\; The adversarial risk of the robust ERM $\hth^\eps$ converges in probability
\begin{align}
\AR(\hth^\eps) \stackrel{\mathcal{P}}{\to}  
\bigg[1+\Big(\frac{\etest \beta_*\nu_*}{\eps \tau_{g*}}\Big)^2 + 2\sqrt{\frac{2}{\pi}} \frac{\etest \beta_*\nu_*}{\eps\tau_{g*}} \bigg] (\alpha_*^2+\sigma^2)\,.\label{eq:AR-final-form}
\end{align}
Here, the probabilistic statement is with respect to the randomness in both the training data $\{(\bx_i,y_i)\}_{i=1}^n$ and the random features $\bW$.
\end{theorem}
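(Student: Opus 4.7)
The plan is to combine two pillars. First, the Gaussian Equivalence Property (GEP) for random features will replace $\sigma(\bW\bx)$ by a linear-plus-noise surrogate with matched first two Hermite moments, reducing the nonlinear problem to a linear model with a Gaussian design. Second, the Convex Gaussian Min-max Theorem (CGMT) of \cite{thrampoulidis2015regularized} will collapse the resulting high-dimensional convex problem to the scalar minimax in \eqref{eq:AO-final00}. Part $(b)$ then follows by reading off the asymptotic adversarial risk from the CGMT saddle point.

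\textbf{Step 1 — inner max and Gaussian equivalence.} Let $r_i(\bth) = y_i - \bth^\sT\sigma(\bW\bx_i)$ and $g_i(\bth) = \bW^\sT\diag(\sigma'(\bW\bx_i))\bth$. Linearizing $\sigma(\bW(\bx_i+\bdelta_i))$ in $\bdelta_i$ and carrying out the one-dimensional maximization yields
\begin{align*}
\max_{\twonorm{\bdelta_i}\le\eps}\bigl(y_i-\bth^\sT\sigma(\bW(\bx_i+\bdelta_i))\bigr)^2 = \bigl(|r_i(\bth)| + \eps\twonorm{g_i(\bth)}\bigr)^2 + o(1),
\end{align*}
asymptotically on the relevant scale of $\bth$. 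For the shifted ReLU one computes $\mu_0 = \E[\sigma(G)] = 0$, $\mu_1 = \E[G\sigma(G)] = 1/2$, and $\mu_\star^2 = \E[\sigma(G)^2] - \mu_1^2 = 1/4 - 1/(2\pi)$. The GEP then justifies replacing $\sigma(\bW\bx_i)$ by $\mu_1\bW\bx_i + \mu_\star \bz_i$ with $\bz_i\sim\normal(0,\Iden_N)$, while $\twonorm{g_i(\bth)}^2$ concentrates around a deterministic function of $\bth^\sT\bW\bW^\sT\bth$ and $\twonorm{\bth}^2$. I would make this rigorous by extending the Lindeberg-swap and leave-one-out scheme of \cite{hu2020universality} to the robust objective, using that $(|r_i|+\eps\twonorm{g_i})^2/2$ remains convex in $\bth$.

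\textbf{Step 2 — CGMT reduction to the scalar AO.} After Gaussian equivalence, I rewrite each summand $(|r_i|+\eps\twonorm{g_i})^2/2$ variationally via Legendre duality in two auxiliary scalars (one for the fit, one for the gradient penalty); this exposes the Gaussian design bilinearly. By rotational invariance I align $\bbeta$ with a coordinate axis so that only the transverse error norm $\alpha$ matters, and I invoke CGMT to pass to the auxiliary optimization (AO), in which the design is replaced by $\bh^\sT\bth\,\twonorm{\bu} + \bg^\sT\bu\,\twonorm{\bth}$ for independent Gaussians $\bh,\bg$. Concentrating these inner products and evaluating $\bth^\sT\bW\bW^\sT\bth$ against the Marchenko–Pastur spectrum (which introduces the Stieltjes transform $S(\cdot;\psi_1)$ and the supremum over $\lambda\in[0,1)$) collapses everything to a scalar minimax in $(\alpha,\tau_g,\beta,\gamma,\tau_q)$ matching \eqref{eq:AO-final00}. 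The indicator and $\erf$ term arise from an explicit one-dimensional maximization over a Moreau-envelope variable $\nu$, with the indicator condition marking the threshold at which the gradient-penalty piece becomes active; strict convex-concavity in $(\alpha,\tau_g/\beta)$ delivers uniqueness of the saddle.

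\textbf{Step 3 — risk formula and main obstacle.} For part $(b)$, the adversarial risk on a fresh test point equals $\E[(|y-\hth^{\eps\sT}\sigma(\bW\bx)|+\eps\twonorm{g(\hth^\eps;\bx)})^2]$ by the same inner-max analysis. Applying the GEP once more and the CGMT identifications — $\alpha_*$ is the error norm transverse to $\bbeta$, $\sigma^2$ is the irreducible noise plus random-features approximation residual (the term $\psi_1(1+(1-2/\pi)S(2/\pi-1;\psi_1))$ is exactly $\mu_1^2$ times the approximation gap read off from Proposition~\ref{propo:stiel}), and $\beta_*\nu_*/\tau_{g*}$ sets the scale of the gradient-penalty term — the risk splits into the three summands of \eqref{eq:AR-final-form}. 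I expect the main obstacle to be Step~1 in the adversarial regime: standard GEP handles linear/quadratic statistics of $\sigma(\bW\bx)$ against a fixed $\bth$, but here we need joint control of $\sigma(\bW\bx)$, $\sigma'(\bW\bx)$, and their cross-products with $\bth$, uniformly over the relevant ball, because these quantities appear coupled inside the square root $\twonorm{g_i(\bth)}$. Adapting the Lindeberg/leave-one-out argument of \cite{hu2020universality} to this coupled robust loss, and verifying that only the first two Hermite components of $\sigma$ and $\sigma'$ survive in the asymptotic limit, is the technical crux; once in place, the CGMT reduction and the subsequent scalar analysis follow along now-standard lines.
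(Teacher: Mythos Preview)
Your high-level architecture (linearize the inner max, then GEP, then CGMT) matches the paper, but there is a structural simplification you miss that the paper exploits, and without it your identified ``main obstacle'' becomes a genuine gap.

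You plan to treat $\sigma(\bW\bx)$ and $\sigma'(\bW\bx)$ together via an extended joint GEP and correctly flag this coupling inside $\twonorm{g_i(\bth)}$ as the crux. The paper avoids this coupling altogether. Before any GEP, it proves (its Step~2, Proposition~\ref{propo:concentration} and Lemma~\ref{lem:Loss-oo}) that, uniformly over the relevant ball $\cC_\bth$, the sample-dependent adversarial term $\eta_i(\bth)^2=\twonorm{\bW^\sT\diag(\ind(\bW\bx_i>0))\bth}^2$ concentrates around the \emph{deterministic} quadratic form $\bth^\sT\bJ\bth$, with $\bJ$ the kernel matrix~\eqref{eq:J}. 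The argument decomposes $\ind(z>0)=\tfrac12+\tfrac{z}{\sqrt{2\pi}}+\mu_\star\varphi(z)$ and uses that $\E[\varphi(\langle\bw_\ell,\bx\rangle)\varphi(\langle\bw_k,\bx\rangle)]=O(\langle\bw_\ell,\bw_k\rangle^3)=O(d^{-3/2}\log^3 d)$, together with Hanson--Wright and an $\eps$-net over $\bth$. After this replacement, the loss becomes $\tfrac{1}{2n}\sum_i(|y_i-\bth^\sT\sigma(\bW\bx_i)|+\eps\twonorm{\bJ^{1/2}\bth})^2$, and GEP now acts only on $\sigma(\bW\bx_i)$ in the residual---precisely the setting the existing Lindeberg/leave-one-out machinery of \cite{hu2020universality} handles. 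So the joint-GEP problem you pose never needs to be solved.

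Your CGMT description also needs correction. The paper does \emph{not} align $\bbeta$ with a coordinate axis; the matrices $\bJ$ and $\bSigma=\mu_1^2\bW\bW^\sT+\mu_2^2\Iden$ break the isotropy that such an alignment would require. Instead it rewrites the data model as $y_i=\mbf_i^\sT\bth_0+w_i$ with $\bth_0=\mu_1\bSigma^{-1}\bW\bbeta$ and $w_i\sim\normal(0,\sigma^2)$, changes variables to $\z=\bSigma^{1/2}(\bth-\bth_0)$, and takes the Legendre conjugate of the loss in $\z$. The supremum over $\lambda\in[0,1)$ is then the Lagrange dual of a trust-region subproblem (Lemma~\ref{lem:trust}) in the conjugate variable, not a direct Marchenko--Pastur byproduct; the Stieltjes transform enters afterward when trace quantities involving $(\bJ+\lambda\bSigma)^{-1}$ are evaluated via the operator-norm approximation $\bJ\approx\tfrac14(\bW\bW^\sT+\Iden)$ (Lemma~\ref{pro:spectral}).
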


We note that the robust ERM estimator is a \emph{random} and rather complicated high-dimensional function of the training data. 
However, in the asymptotic regime where $N,n,d\to\infty$ at the same order, the adversarial risk of the robust ERM concentrates and the above theorem provides an exact characterization of its limit as a \emph{deterministic} formula. The derived formula is based on a  five dimensional convex-concave mini-max optimization
problem and its  optimal solution can be
easily derived via a simple low-dimensional gradient descent rather quickly and accurately. Alternatively, one can form a system of equations by writing the KKT stationary conditions corresponding to \eqref{eq:AO-final00}. The adversarial risk prediction can then be written in terms of the fixed point of this system of deterministic equations.
   
\rev{Let us re-emphasize the contribution of theorem~\ref{thm:main}. Albeit its involved form, it describes the behavior of a \emph{high-dimensional random} problem in terms of a \emph{deterministic} optimization with a handful number of scalar variables. This theme of result is similar to state evolution equation for approximate message passing algorithms~\cite{AMP}, density evolution for LDPC codes~\cite[Chapter~4]{richardson2008modern}, and characterizing the trajectory of SGD for training neural networks in terms of partial differential equations~\cite{song2018mean,javanmard2019analysis}.}   
   
\rev{
\begin{remark} ({\bf Solving Optimization~\eqref{eq:AO-final00}})
We find the solution to this optimization by solving for the first-order optimality
conditions (stationary equations). We set the (sub)gradient with respect to the variables to zero since it is non-smooth, which results in a system of nonlinear equations with seven variables, namely $\alpha, \tau_g,\beta,\gamma,\tau_q, \lambda, \nu^*$. In the numerical experiments, we use \texttt{fsolve} command in \texttt{Matlab} to solve this system of equations, which is based on the trust-region algorithm. 
\end{remark}
}

\rev{\section{Discussion}
By virtue of Theorem~\ref{thm:main} we characterize the adversarial risk of the robust ERM in term of the solution of the deterministic optimization problem~\eqref{eq:AO-final00}. Given that it does not admit a closed-form solution in general, and is rather involved, in this section we discuss some of the applications of this theorem including optimal choice of $\eps$ during training, trend of adversarial risk with respect to different quantities, and implications for non-adversarial setting.

\subsection{Optimal $\eps$ for robust ERM estimator:}
An interesting application of our theory is to derive the optimal $\eps$ (perceived adversary's perturbation
level) in the robust ERM, while fixing the adversary’s
(actual) perturbation level on test inputs to $\etest$. The optimal $\eps$ here refers to the value which minimized the adversarial risk. An intriguing observation is that the optimal $\eps$ is different than $\etest$ in general, and depends on $\psi_1,\psi_2$
in a non-trivial way (There is no universal solution, which underscores the significance of possessing a precise theory that comprehends the impact of adversarial training, which constitutes the principal objective of the present work.)

In Figure~\ref{fig:vary_eps_a}, we fix $\psi_2 = 3$, $\tau = \sqrt{0.5}$, $\etest = 0.3$, and plot the adversarial risk of $\hth^\eps$ as we vary $\eps$ for different values of
$\psi_1$. As we see the optimal value of $\eps$ (resulting in minimum risk) changes with $\psi_1$, it is in general different from the test adversary's perturbation $\etest$.
In addition, the optimal $\eps$ increases with $\psi_1$. In Figure~\ref{fig:vary_eps_b}, we plot similar curves, fixing $\etest = 0$. In this case, the adversarial risk reduces to the notion of standard risk defined as
\begin{align}\label{eq:SR}
\SR(\bth): = \E\left[(y-\bth^\sT\sigma(\bW\bx))^2\right]\,.
\end{align}
As we see form the plots, even though $\etest = 0$, adversarial training can help as minimum risk is achieved at positive $\eps$. The reason is that adversarial training 
acts as a regularization (It becomes clearer after derivation~\eqref{L_2}, where adversarial training aims to find solution with small $\twonorm{\bJ\bth}$.) In particular, we observe that
\begin{itemize}
\item \revv{The optimal $\eps$ is always greater than or equal to $\etest$, the true test perturbation magnitude. This `additional' regularization helps with minimizing the adversarial risk.}
\item At higher overparametrization measured by $\psi_1/\psi_2 = N/n$ the benefit of this regularization becomes stronger. This is also evident from Figure~\ref{fig:vary_eps_c}, where with fixed $\psi_2$.  As we increase $\psi_1$, the optimal $\eps$ also increases. Likewise, in Figure~\ref{fig:vary_eps_d}, with fixed $\psi_1$, the optimal $\eps$ increases as $\psi_2$ decreases.
\end{itemize}

\begin{figure}[]
     \centering
      \begin{subfigure}[b]{0.48\textwidth}
         \centering
         \includegraphics[width=\textwidth]{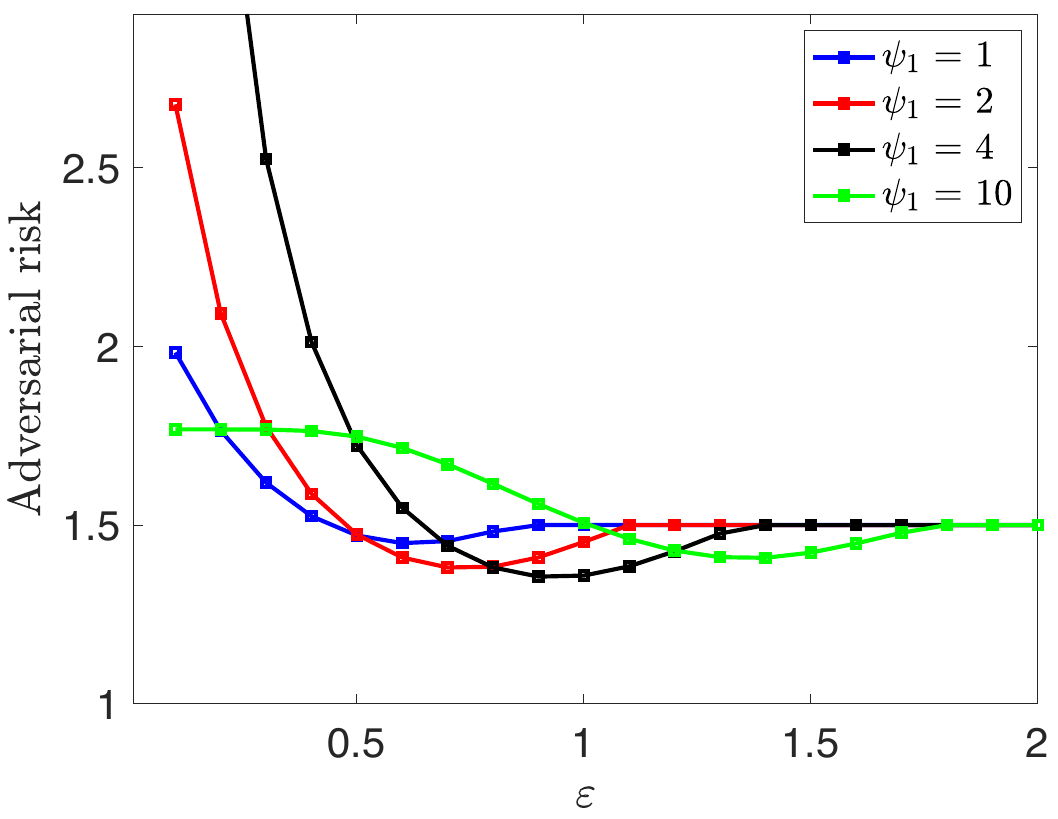}
         \caption{$\etest = 0.3$, $\psi_2 = 3$}
         \label{fig:vary_eps_a}
     \end{subfigure}
     \hfill
      \begin{subfigure}[b]{0.48\textwidth}
         \centering
         \includegraphics[width=\textwidth]{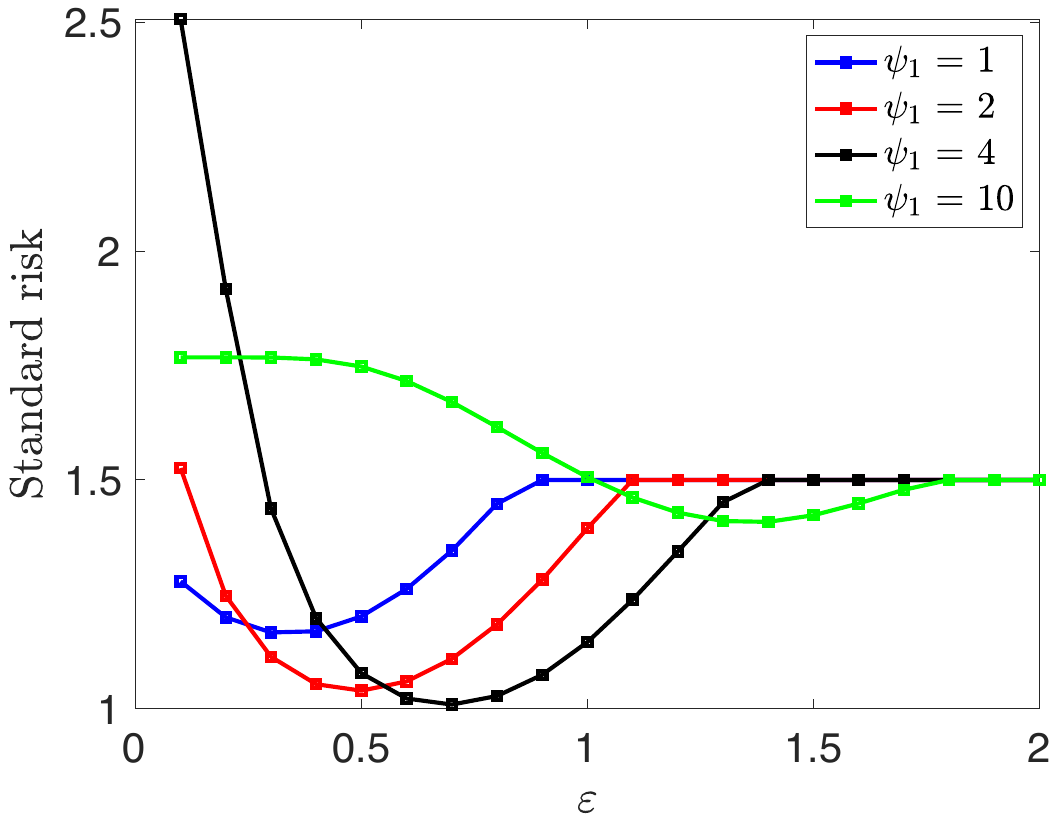}
         \caption{$\etest = 0$, $\psi_2 = 3$}
         \label{fig:vary_eps_b}
     \end{subfigure}
     \hfill
     \begin{subfigure}[b]{0.48\textwidth}
         \centering
         \includegraphics[width=\textwidth]{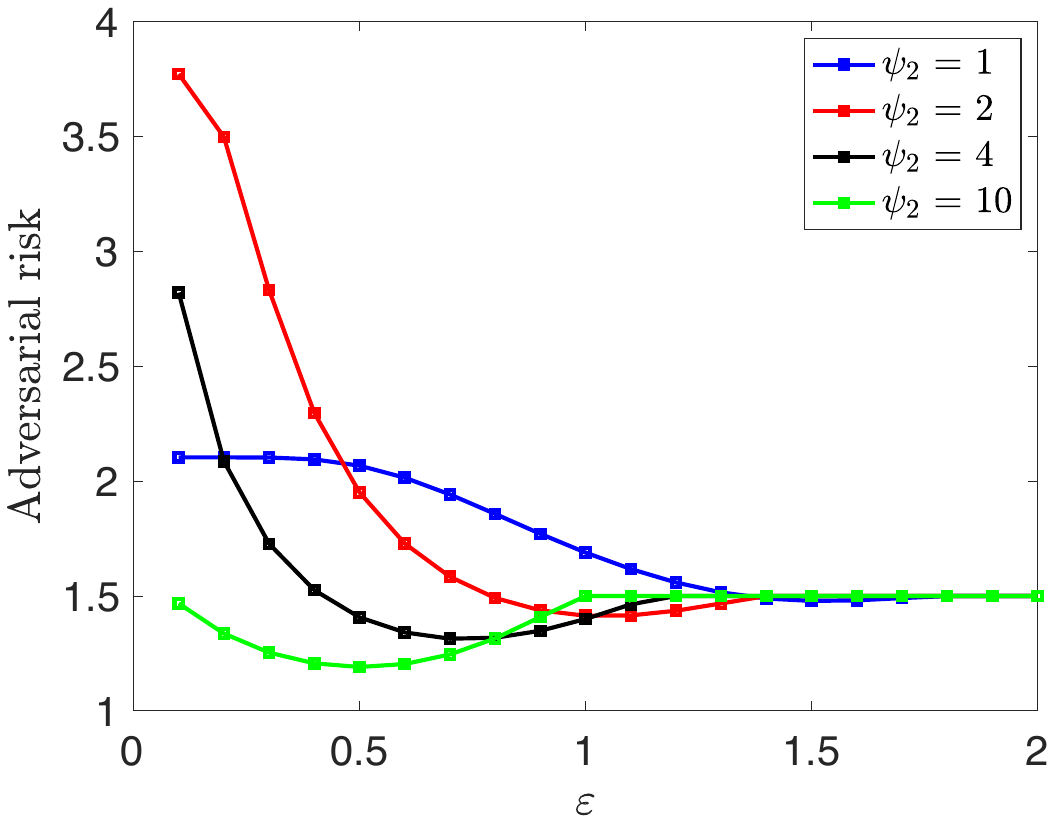}
         \caption{$\etest = 0.3$, $\psi_1 = 3$}
         \label{fig:vary_eps_c}
     \end{subfigure}
     \hfill
     \begin{subfigure}[b]{0.48\textwidth}
         \centering
         \includegraphics[width=\textwidth]{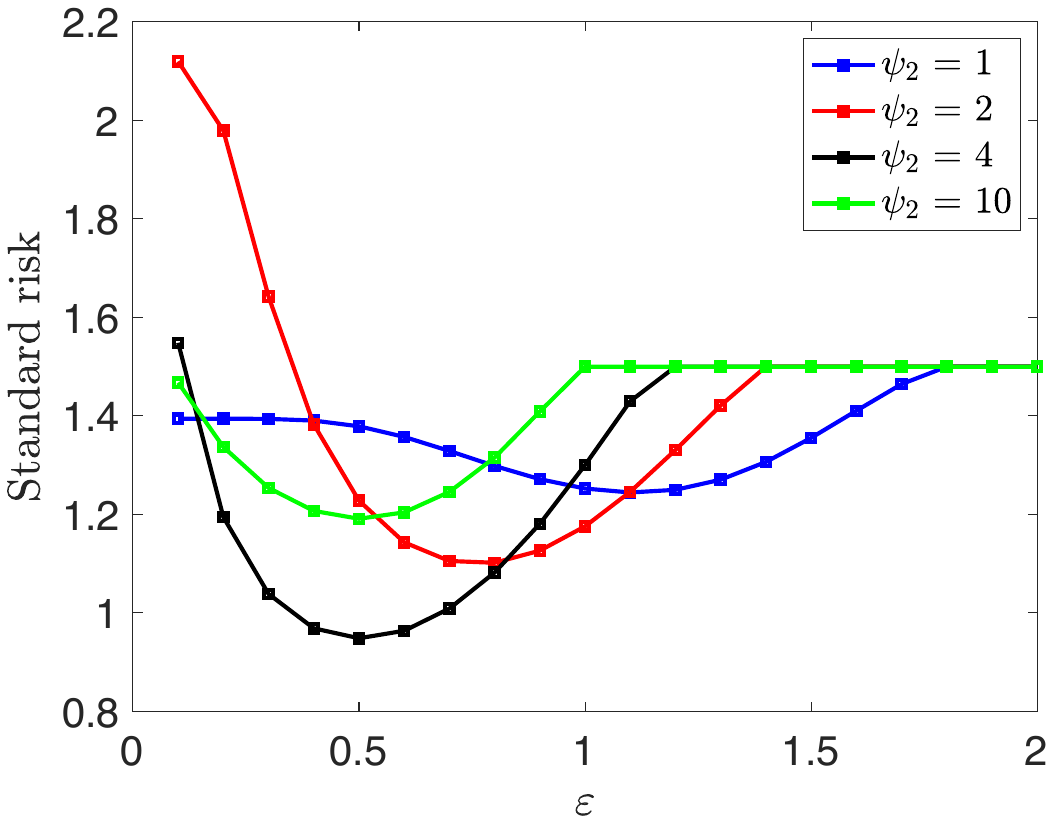}
         \caption{$\etest = 0$, $\psi_1 = 3$}
         \label{fig:vary_eps_d}
     \end{subfigure}
             \caption{\rev{\small{Behavior of adversarial/standard risk as we vary $\eps$, the ``perceived'' adversary's power used in the adversarial training.  In (a), (b), $\psi_2 = 3$ is fixed and in (c), (d), we fix $\psi_1 = 3$. Also, (b), (d) correspond to $\etest = 0$, and so there is no perturbation at the test time. In these cases, adversarial risk reduces to the standard risk. In these experiments, we set $\tau^2 = 0.5$, $\twonorm{\bbeta} = 1$.}} 
        \label{fig:vary_eps}}
\end{figure}

In summary, our theory allows to understand when adversarial training is beneficial and what is the optimal value of $\eps$ to use in training (depending on $\psi_1,\psi_2$, $\etest$ and $\tau$.)

\revv{\subsection{Dependence on $\eps$, $\psi_1$, $\psi_2$: } We discern the following trends in the analytical curves for adversarial risk which are derived based on Theorem~\ref{thm:main} (Equation~\eqref{eq:AR-final-form}).
\begin{itemize}
\item From Figure~\ref{fig:vary_eps}, fixing $\etest$, $\psi_1$ and $\psi_2$, the adversarial risk is first decreasing in $\eps$ until it gets to its minimum, after which it becomes increasing in $\eps$ indicating that the regularization effect from adversarial training is larger than it should be, and then eventually it levels at $\sigma^2+ \|\bth_0\|^2$ (the risk of model $\bth=\vct{0}$, which is the ERM when the regularization is very strong). 
\item For fixed $\psi_2$, if SNR is large enough and $\eps$ is small enough, we observe a double descent behavior in the adversarial risk (see Figure~\ref{fig:varyeps1}, $\eps = 1e-7$ and $\eps = 0.05$ and Figure~\ref{fig:tot3}).
\item Fixing $\etest$ and $\psi_1$, the adversarial risk becomes decreasing in $\psi_2$ after some point. In the next subsection, we characterize the adversarial risk in non-adversarial setting, by which we see this trend for $\psi_2>\psi_1$. 
\end{itemize}
}
\subsection{Non-adversarial training:} An important special case of the result is when $\eps = \etest = 0$. In words, there is no adversarial-training and also there is no adversary during the test time. Theorem~\ref{thm:main} allows us to characterize the standard risk (generalization error) of the ERM estimator.

We focus on the underparameterized regime ($n>N$ or equivalently $\psi_2> \psi_1$), since otherwise at $\eps=0$ the problem is underdetermined. In this case the objective $\mathcal{R}$ in~\eqref{eq:AO-final00} significantly simplifies and we can indeed obtain a closed-form expression for the risk.

\begin{propo}\label{propo:non-ad}
Let $n$ i.i.d pairs $(\bx_i,y_i)$ be drawn from the data model~\eqref{eq:linearModel} and let $\hth$ be the ERM~\eqref{def:Rob-ERM} fit to this data using the class of random features models $\cF_{{\rm RF}}(\bW)$, given by 
\[
\hth = \arg\min_{\bth\in\reals^N} \frac{1}{2n} \sum_{i=1}^n (y_i - \bth^\sT\sigma(\bW\bx_i))^2\,,
\]
with $\sigma(\cdot)$ the shifted ReLU activation. Consider the asymptotic regime, described in Assumption~\ref{assumption1}. 
With function $S(\cdot;\psi_1)$ given by~\eqref{def:S}, define
\[
\sigma^2 = \tau^2 +1 - \psi_1\left(1+\Big(1-\frac{2}{\pi}\Big) S\Big(\frac{2}{\pi}-1;\psi_1\Big)\right)\,.
\]
Then, the standard risk of the ERM $\hth$ converges in probability to
\begin{align}\label{eq:SR-RF}
\SR(\hth) \stackrel{\mathcal{P}}{\to} \sigma^2 \Big(\frac{\psi_2}{\psi_2-\psi_1}\Big)\,.
\end{align}
\end{propo} 
We refer to Section~\ref{proof:propo:non-ad} for the proof of this proposition.

We next use the result of Proposition~\ref{propo:non-ad} to discuss the role of $\psi_1$ and $\psi_2$ on the risk:
\begin{itemize}
\item Recall that $\sigma$ only depends on $\tau$ and $\psi_1$. Fixing $\psi_1$ the dependence of risk on $\psi_2$ is of form $\psi_2/(\psi_2-\psi_1)$. This is decreasing in $\psi_2 = n/d$. For example if $d, N$ are fixed, and we increase the sample size $n$, the risk goes down which is expected.
\item Dependence on $\psi_1$ is more involved as the term $\sigma^2$ also depends on $\psi_1$ through the Stieltjes transform $S$. In Figure~\ref{fig:nonadv_Psi} we plot the risk versus $\psi_1$ for 
different values of $\psi_2$. As we see up to some threshold, it is decreasing in $\psi_1$ but after that it becomes increasing. This is expected because for example fixing $n$, $d$ (and so $\psi_2$), as we increase $N$ (and so $\psi_1$), first the risk goes down because the model becomes richer to capture the data generative model, but after some point it has a reverse effect, because we need to estimate larger number of parameters $N$, from fixed sample size $n$, while this excess model complexity is not needed. As we see in the plots, this threshold  is increasing with $\psi_2$.   
\item It is also worth comparing the standard risk of random features model with that of linear models. For $\psi_2\ge 1$, using the result of~\cite[Proposition 2]{hastie2019surprises}, the risk of ridgeless least squares is given by $\tau^2 \psi_2/(\psi_2 -1)$. This is similar to our characterization~\eqref{eq:SR-RF}, where the noise variance $\tau^2$ is replaced with the effective noise variance $\sigma^2$, and $\psi_2$ is replaced by $\psi_2/\psi_1 = n/N$. (Note that the number of parameters to be learnt in the linear model is $d$, while in the random features model is $N$. So the sample size to parameter size ratio in the linear regression is $\psi_2$, while for the random features model it is $\psi_2/\psi_1$.)  
\end{itemize}

\begin{figure}[]
         \centering
         \includegraphics[width=0.48\textwidth]{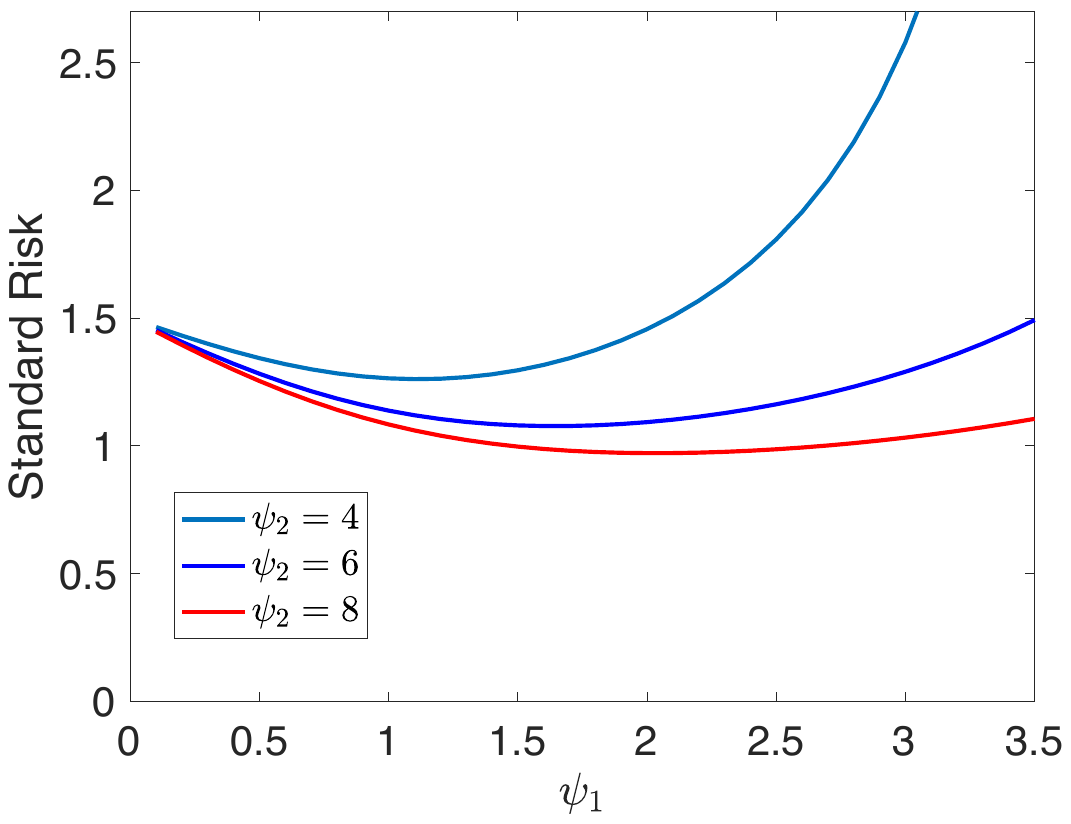}
             \caption{\rev{\small{Behavior of the standard risk of the ERM estimator $\hth$ versus $\psi_1$ for different values of $\psi_2$. As observed, the risk is first decreasing in $\psi_1$, up to some threshold depending on $\psi_2$, after which it becomes increasing. This threshold is increasing with $\psi_2$. }} 
        \label{fig:nonadv_Psi}}
\end{figure}

}

  
\section{Architecture of the proof} \label{sec:sketch}
This section introduces the key steps underlying the proof of our main result, Theorem~\ref{thm:main}. Our analysis is intricate and consists of a host of novel ideas which could be of separate interest. Here we discuss the major steps, along with an overview of the techniques and intermediate results.

Define  the loss $\cL(\bth)$ given by
\begin{equation} \label{R-ERM}
\cL(\bth):= \max_{\twonorm{\bdelta_i}\le \eps}  \frac{1}{2n} \sum_{i=1}^n \left(y_i - \bth^\sT \sigma(\bW(\bx_i + \bdelta_i)\right)^2 + \frac{\zeta}{2} \bth^\sT \bOmega \bth \,,
\end{equation}
where $\bOmega: = \Iden + \frac{\sqrt{\log (d)}}{d} \ones\ones^\sT$. 
Here, $\ones = (1,1,\dotsc,1)\in\reals^N$ and $\zeta>0$ is an arbitrary small but fixed constant. 
By definition, the robust ERM estimator~\eqref{def:Rob-ERM} is the minimizer of $\cL(\bth)$ for $\zeta = 0$. 

The regularization $\frac{\zeta}{2} (\twonorm{\bth}^2+ \frac{\sqrt{\log(d)}}{d} \,  \<\ones,\bth\>^2)$ in the loss $\cL(\bth)$ is added for technical reasons. In our analysis, we let $\zeta\to 0$, \emph{after} letting $d\to \infty$ to characterize the adversarial risk of the robust ERM $\hth^\eps$. \rev{We refer to Section~\ref{sec:interchange} in the supplementary for the justification of this step.}

Before we outline the main steps of the proof, we note that since the rows of the matrix $\bW\in\reals^{N\times d}$ are generated i.i.d. according to $ \bw_\ell \sim \Unif(\mathbb{S}^{d-1})$, then the matrix norm of $\W$ is bounded with high probability and the rows of $\W$ are almost orthogonal. More precisely, 
we define the event 
\begin{align}\label{eq:eventbW_main}
\event_{\bW}:= \left\{\opnorm{\bW}\le \sqrt{\psi_{1,d}} + C, \,\,\, \left| \w_\ell^\sT \w_k \right| \leq \log(d)/ \sqrt{d} \,\,\,\,\,\, \forall \ell \neq k  \right\}\,,
\end{align} 
for a large enough constant $C$ (note that $N/d := \psi_{1,d}$ -- see Assumption~\ref{assumption1}). Using well-known results on the norm of random matrices (see e.g. ~\cite[Theorem 5.39]{Vers}) as well as Hoeffding's inequality  we have
 $\prob(\event_{\bW}) \ge 1-c \exp(- \log^2(d)/c)$ for some constant $c>0$.  In the following, our statements are proven conditioned on the event $\event_{\bW}$ which holds with high probability. 
 
 \vspace{.2cm}
\noindent\textbf{Notation.} We  need to define a few pieces of notation which will be used in the following.   We use $O_d(\cdot)$, $o_d(\cdot)$ to denote the standard big-O and little-o notation, where we stress the asymptotic variable $d$. Likewise, we use $O_{d,\prob}$ and $o_{d,\prob}$ to indicate asymptotic behavior in probability. Specifically, $f(d) = O_{d,\prob}(g(d))$ if for any $\eps>0$, there exists $C_\eps>0$ and large enough $d_\eps$ such that $\prob(|f(d)/g(d)| > C_\eps)\le \eps$, for all $d\ge d_\eps$. Similarly, $f(d) = o_{d,\prob}(g(d))$ if $f(d)/g(d)$ converges to zero in probability. We write $f(d)\approx g(d)$ as $d\to\infty$, when $ f(d)-g(d) \to 0$, in probability.
Note that we consider the asymptotic regime where $n,d,N$ grow at the same scale, ($\lim N/d\to \psi_1$ and $\lim n/d\to \psi_2$ for some positive constants $\psi_1$ and $\psi_2$), the expression $d\to\infty$ implies that $n, N\to\infty$, as well. 

For a matrix $\bA$, we denote by $\opnorm{\bA}$ its operator norm, $\fronorm{\bA} = (\sum_{ij} A_{ij}^2)^{1/2}$ the Frobenius norm of $\bA$.  For an integer $n$, we use the shorthand $[n]= \{1,\dotsc, n\}$. 

Finally, the indicator function is denoted by $\ind(\cdot)$ -- i.e., \rev{$\ind(A)=1$ only if the event} $A$ holds true, and otherwise $\ind(A)=0$.

\subsection{An asymptotically-exact closed form for adversarial examples}
We start by simplifying the loss $\cL(\bth)$. If the activation function $\sigma$ was linear, then finding the worst-case perturbations $\bdelta_i$ (maximizers in the definition of loss $\cL(\bth)$) amounts to a trust-region subproblem that can be solved in closed form--see~\cite{javanmard2020precise}. A major challenge here is the nonlinearity of $\sigma$. In the first step we use the specific form of the activation to derive an asymptotically equivalent but simpler form of $\cL(\bth)$.

Note that $\sigma(z) = \max(z,0)-1/\sqrt{2\pi}$ is linear on the positive $z$ and constant on the negative $z$. Also by the constraint on perturbation we have $\|\<\bw,\bdelta\>\|\le \twonorm{\bw}\twonorm{\bdelta} \le \eps$. Therefore, if \rev{$\|\<\bw,\bx\>\| \ge\eps$,} then $\<\bw,\bx\>$ and the perturbed form $\<\bw,\bx+\bdelta\>$ share the same sign. 
In this case, the worst case $\bdelta$ can be solved exactly.  One can also use the randomness in $\bx$ to bound the number of rows of $\bW$ for which $|\<\bw,\bx\>| < \gamma$, where $\gamma $ is some small constant, and show that the contribution of these terms in the loss is asymptotically negligible. This argument is formalized in the next proposition. All the proofs of the statements in this section are relegated to  Appendix~\ref{proofs_of_step_1}.
\begin{propo}\label{propo:simple1}
Assume $(\bx,y)$ generated according to \eqref{eq:linearModel}. Further define 
\begin{align}\label{def:Cth}
\cC_{\bth}:=  \{\bth\in\reals^N:\; \infnorm{\bth} \le C_0 \sqrt{\log(d)/d},\; \twonorm{\bth}\le C_0\},
\end{align}
 for an arbitrary but fixed constant $C_0>0$. Then, we have 
\begin{align}\label{eq:adv1}
\max_{\twonorm{\bdelta}\le \eps}  \left| y - \bth^\sT \sigma(\bW(\bx + \bdelta))\right| =  |y - \bth^\sT \sigma(\bW\bx)| + \eps \twonorm{\bW^\sT  {{\rm diag}}\left({\ind(\bW \bx > 0)}\right)\; \bth}  + O_{d,\prob}\left(\frac{\log (d) }{d^{1/6}}\right)\,,
\end{align}
uniformly over $\bth\in\cC_{\bth}$. 
Here, the probability bound is with respect to the randomness in $\bx$. I.e. $\W$ is fixed and event $\event_{\W}$ in \eqref{eq:eventbW_main} is assumed to hold. 
\end{propo}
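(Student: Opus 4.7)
The shifted ReLU $\sigma(z)=\max(z,0)-1/\sqrt{2\pi}$ is piecewise linear with derivative $\ind(z>0)$ off zero, so whenever $|\bw_\ell^\sT\bx|>\eps$ we have $\twonorm{\bdelta}\le\eps\Rightarrow\sgn{\bw_\ell^\sT(\bx+\bdelta)}=\sgn{\bw_\ell^\sT\bx}$, and the map $\bdelta\mapsto\sigma(\bw_\ell^\sT(\bx+\bdelta))$ is exactly affine on that row. Writing $\bD:=\diag{\ind(\bW\bx>0)}$, this suggests comparing the nonlinear adversarial objective to its linearization $\bth^\sT\sigma(\bW\bx)+\bth^\sT\bD\bW\bdelta$, whose maximum over $\twonorm{\bdelta}\le\eps$ equals the target expression $|y-\bth^\sT\sigma(\bW\bx)|+\eps\twonorm{\bW^\sT\bD\bth}$ via Cauchy--Schwarz saturation. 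The task is then to show that the nonlinear and linearized maxima agree up to $O(\log(d)/d^{1/6})$, uniformly in $\bth\in\cC_\bth$, with the randomness taken over $\bx$ on the event $\event_\bW$.

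\textbf{Lower bound.} I would plug in the explicit linearized maximizer $\bdelta^\star=-\sgn{y-\bth^\sT\sigma(\bW\bx)}\cdot\eps\,\bW^\sT\bD\bth/\twonorm{\bW^\sT\bD\bth}$. On $\event_\bW$ the near-orthonormality of the rows of $\bW$ forces $|\bw_\ell^\sT\bdelta^\star|$ to scale like $|\theta_\ell|\eps/\twonorm{\bW^\sT\bD\bth}=O(\eps\sqrt{\log(d)/d})$, so a sign flip at row $\ell$ requires $|\bw_\ell^\sT\bx|\lesssim\sqrt{\log(d)/d}$; Gaussian anti-concentration of $\bw_\ell^\sT\bx\sim\normal(0,1)$ then caps the flipped-row count by $O(\sqrt{d\log d})$ in $\bx$-probability. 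The pointwise refinement $|\sigma(\bw_\ell^\sT(\bx+\bdelta^\star))-\sigma(\bw_\ell^\sT\bx)-\ind(\bw_\ell^\sT\bx>0)\bw_\ell^\sT\bdelta^\star|=|\bw_\ell^\sT(\bx+\bdelta^\star)|=O(\sqrt{\log(d)/d})$ on flip rows, combined with $\infnorm{\bth}\le C_0\sqrt{\log(d)/d}$, gives a linearization error of $O(\log^{3/2}(d)/\sqrt d)$ at $\bdelta^\star$.

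\textbf{Upper bound and main difficulty.} For arbitrary $\bdelta$ with $\twonorm{\bdelta}\le\eps$, the mean-value identity $\sigma(\bw_\ell^\sT(\bx+\bdelta))-\sigma(\bw_\ell^\sT\bx)=\tilde d_\ell(\bdelta)\,\bw_\ell^\sT\bdelta$ with $\tilde d_\ell(\bdelta):=\int_0^1\ind(\bw_\ell^\sT(\bx+t\bdelta)>0)\,\de t\in[0,1]$, together with the triangle inequality and Cauchy--Schwarz, yields
\[
|y-\bth^\sT\sigma(\bW(\bx+\bdelta))|\le|y-\bth^\sT\sigma(\bW\bx)|+\eps\twonorm{\bW^\sT\tilde\bD(\bdelta)\bth},
\]
so the task reduces to bounding $\twonorm{\bW^\sT(\tilde\bD(\bdelta)-\bD)\bth}$ uniformly in $\bdelta$. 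The difference is supported on the sign-flip set $S_\bdelta\subseteq I_b:=\{\ell:|\bw_\ell^\sT\bx|\le\eps\}$, whose cardinality is $\Theta(d\eps)$; a per-row estimate therefore only yields $O(\sqrt d)$. To recover the polynomial improvement I would split $S_\bdelta$ at a threshold $\gamma=d^{-\alpha}$: \emph{heavy} rows ($|\bw_\ell^\sT\bx|\le\gamma$) number at most $O(N\gamma)$ by Gaussian anti-concentration in $\bx$, while \emph{light} rows ($\gamma<|\bw_\ell^\sT\bx|\le\eps$) must have $|\bw_\ell^\sT\bdelta|>\gamma$ and hence number at most $\opnorm{\bW}^2\eps^2/\gamma^2=O(d\eps^2/\gamma^2)$. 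Combining these counts with (i) the near-orthogonality $|\bw_\ell^\sT\bw_k|\le\log(d)/\sqrt d$ from $\event_\bW$ to damp the cross terms in $\twonorm{\bW^\sT(\tilde\bD-\bD)\bth}^2$, (ii) the refinement $|r_\ell|\le|\bw_\ell^\sT(\bx+\bdelta)|$ on flip rows, and (iii) the constraints $\infnorm{\bth}\le C_0\sqrt{\log(d)/d}$, $\twonorm{\bth}\le C_0$, an optimization in $\alpha$ produces the claimed $\log(d)/d^{1/6}$ rate. Achieving this balance between heavy and light contributions, so as to beat the naive $\sqrt d$ loss, is the hardest step of the proof.

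\textbf{Uniform control in $\bth$.} Finally, $\cC_\bth$ is bounded with polynomial-in-$d$ radii in both $\ell_\infty$ and $\ell_2$, and both sides of~\eqref{eq:adv1} are Lipschitz in $\bth$ with polynomial-in-$d$ Lipschitz constants on $\cC_\bth$. Covering $\cC_\bth$ by a $d^{-c}$-net of polynomial cardinality and applying a union bound over the net lifts the pointwise-in-$\bth$ bound to a uniform one.
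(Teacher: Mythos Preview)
Your heavy/light split is right and matches the paper's threshold at $d^{-1/3}$, but two steps in your outline do not give what you claim. In the lower bound you assert $|\bw_\ell^\sT\bdelta^\star|=O(\eps\sqrt{\log d/d})$ from near-orthonormality, arguing it scales like $|\theta_\ell|\eps/\twonorm{\bW^\sT\bD\bth}$. But $\bw_\ell^\sT\bdelta^\star=c\bigl(d_\ell\theta_\ell+\sum_{k\neq\ell}(\bw_\ell^\sT\bw_k)d_k\theta_k\bigr)$, and on $\event_\bW$ the off-diagonal sum is only bounded by $O(\log d)$ via Cauchy--Schwarz with $\twonorm{\bth}\le C_0$, not by $O(\sqrt{\log d/d})$; hence $|\bw_\ell^\sT\bdelta^\star|$ is order $\eps$, sign flips can occur at any of the $\Theta(d)$ rows with $|\bw_\ell^\sT\bx|\le\eps$, and your stated $O(\log^{3/2} d/\sqrt d)$ error at $\bdelta^\star$ is unsupported. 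In the upper bound, ``damping cross terms'' in $\twonorm{\bW^\sT(\tilde\bD-\bD)\bth}^2$ via $|\bw_\ell^\sT\bw_k|\le\log d/\sqrt d$ gives an off-diagonal contribution $\lesssim|S_\bdelta|^2\infnorm{\bth}^2\log d/\sqrt d=O(d^{-1/6})$ to the \emph{squared} norm, hence only $d^{-1/12}$ on the norm --- not the $d^{-1/6}$ you claim. (The blunt bound $\twonorm{\bW^\sT\bv}\le\opnorm{\bW}\sqrt{|S_\bdelta|}\,\infnorm{\bth}$ would recover $d^{-1/6}$, but that is not the mechanism you describe; your ingredient~(ii), the refinement $|r_\ell|\le|\bw_\ell^\sT(\bx+\bdelta)|$, is a statement about the scalar residual and does not enter the norm bound.)

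The paper avoids both issues by proving the single scalar estimate
\[
\sup_{\bth\in\cC_\bth,\;\twonorm{\bdelta}\le\eps}\bigl|\bth^\sT\sigma(\bW(\bx+\bdelta))-\bth^\sT\sigma(\bW\bx)-\bth^\sT\bD\bW\bdelta\bigr|\;\le\;C\,\frac{\log d}{d^{1/6}},
\]
on the single $\bx$-event $|\{\ell:|\bw_\ell^\sT\bx|\le d^{-1/3}\}|\le Cd^{2/3}\log d$. Each bad-row sum is controlled by $\infnorm{\bth}\sqrt{|\text{bad}|}\,\twonorm{\bW\bdelta}=O(d^{-1/6}\log d)$. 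This one estimate yields both directions of \eqref{eq:adv1} at once, since $\max_{\twonorm{\bdelta}\le\eps}|y-\bth^\sT\sigma(\bW\bx)-\bth^\sT\bD\bW\bdelta|=|y-\bth^\sT\sigma(\bW\bx)|+\eps\twonorm{\bW^\sT\bD\bth}$ exactly; and because the only random input is the size of the small-margin set $C(\bx)$, which is independent of $\bth$ and $\bdelta$, the uniformity over $\cC_\bth$ is automatic and no $\eps$-net is needed.
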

To be able to use the result of above proposition, we show that the minimizer of $\cL(\bth)$ falls in $\cC_{\bth}$ defined in \eqref{def:Cth}.
\begin{propo}\label{propo:Cth-L0}
Assume $(\bx,y)$ is generated according to \eqref{eq:linearModel}, and recall that the rows of $\bW\in\reals^{N\times d}$ are drawn i.i.d. from $\Unif(\mathbb{S}^{d-1})$. Let $\hth = \arg\min \cL(\bth)$. We have $\hth\in \cC_{\bth}$, with probability at least $1-4e^{-cn}$ for some absolute constant $c>0$.
\end{propo}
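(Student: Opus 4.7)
The plan splits into proving the two bounds defining $\cC_{\bth}$ separately; throughout I condition on the event $\event_{\bW}$ from~\eqref{eq:eventbW_main} and take the randomness over the training data $(\bx_i,\xi_i)_{i\le n}$.

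\emph{$\ell_2$ bound.} Since $\hth$ is the minimizer, $\cL(\hth)\le\cL(\bo)=\frac{1}{2n}\sum_i y_i^2$ (the inner maximization is vacuous at $\bth=\bo$); under \eqref{eq:linearModel}, the $y_i^2$ are sub-exponential with $\E y_i^2\to 1+\tau^2$, so Bernstein's inequality gives $\cL(\bo)\le (1+\tau^2)/2+1$ with probability $\ge 1-e^{-cn}$. Combined with $\cL(\hth)\ge\frac{\zeta}{2}\twonorm{\hth}^2$ (because $\bOmega\succeq\Iden$), this yields $\twonorm{\hth}\le C_1$ for a constant $C_1=C_1(\zeta,\tau)$. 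The rank-one summand in $\bOmega$ additionally forces $|\langle\ones,\hth\rangle|=O(\sqrt{d}(\log d)^{-1/4})$, a fact that enters the next step.

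\emph{$\ell_\infty$ bound.} For the convex loss $\cL$, the first-order condition (via Danskin's theorem when the inner maximizers $\bdelta_i^*(\hth)$ are a.s.\ unique, or a subgradient version otherwise) is $\zeta\bOmega\hth=\frac{1}{n}\sum_i r_i\,\sigma\bigl(\bW(\bx_i+\bdelta_i^*)\bigr)$, where $r_i:=y_i-\hth^\sT\sigma(\bW(\bx_i+\bdelta_i^*))$. Taking the $\ell$-th coordinate and using $(\bOmega\hth)_\ell=\hth_\ell+\tfrac{\sqrt{\log d}}{d}\langle\ones,\hth\rangle$, I obtain $\zeta\hth_\ell=\tfrac{1}{n}\sum_i r_i\,\sigma(\langle\bw_\ell,\bx_i+\bdelta_i^*\rangle)-\zeta\tfrac{\sqrt{\log d}}{d}\langle\ones,\hth\rangle$, whose second term is $O((\log d)^{1/4}/\sqrt{d})=o(\sqrt{\log d/d})$ by the $\ell_2$ step---this is precisely the role of the specific rank-one perturbation in $\bOmega$. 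Substituting $r_i$ into the first-order condition rearranges it into the ridge-type identity $\hth=(\mtx{K}^*+\zeta\bOmega)^{-1}(\mtx{S}^*)^{\sT}\by/n$, where $\mtx{S}^*$ has rows $\sigma(\bW(\bx_i+\bdelta_i^*))^{\sT}$ and $\mtx{K}^*=(\mtx{S}^*)^{\sT}\mtx{S}^*/n$. For each fixed $\bth$ in the $\ell_2$ ball from the previous step, the entries $\tfrac{1}{n}\sum_i y_i\,\sigma(\langle\bw_\ell,\bx_i+\bdelta_i^*(\bth)\rangle)$ are sums of independent sub-exponential random variables (functions of the $i$-th sample alone), so Bernstein plus a union bound over $\ell\in[N]$ gives $O(\sqrt{\log d/n})$ concentration about the expectation. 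The expectation itself is also of the right order: by Stein's lemma and the Hermite expansion of the shifted ReLU, its leading piece is proportional to $\langle\bbeta,\bw_\ell\rangle$, which is $O(\sqrt{\log d/d})$ uniformly in $\ell$ by Gaussian concentration on the sphere. Finally, on $\event_{\bW}$, an El Karoui-style decomposition of the kernel $\mtx{K}^*$ as diagonal plus low rank shows that $(\mtx{K}^*+\zeta\bOmega)^{-1}$ acts approximately entrywise and thereby preserves the $\ell_\infty$ scale of its input, yielding $|\hth_\ell|\le C_0\sqrt{\log d/d}$ uniformly in $\ell$.

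\emph{Main obstacle.} The delicate point is decoupling the dependence of $\hth$ and of the perturbations $\bdelta_i^*(\hth)$ on the training data when making the concentration step uniform in $\bth$. A naive $\epsilon$-net over the $\ell_2$ ball has log-cardinality $\Theta(d)$, which drowns the target $\sqrt{\log d/n}$ rate, so the argument must invoke either a leave-one-out stability bound on $\hth$ (controlling how much it changes when one sample is removed) or an explicit Lipschitz bound on $\bth\mapsto\bdelta_i^*(\bth)$ to pass from per-sample independence to control at the random optimum $\hth$. A secondary but still nontrivial step is the entrywise analysis of $(\mtx{K}^*+\zeta\bOmega)^{-1}$ via kernel random-matrix theory on $\event_{\bW}$; this is what turns the per-coordinate $\ell_\infty$ bound on $(\mtx{S}^*)^{\sT}\by/n$ into the same bound on $\hth$.
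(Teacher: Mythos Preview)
Your $\ell_2$ bound is correct and matches the paper exactly: compare $\cL(\hth)\le\cL(\bo)$, use sub-exponential concentration for $\tfrac{1}{n}\sum y_i^2$, and conclude via $\bOmega\succeq\Iden$.

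For the $\ell_\infty$ bound, however, you have correctly identified the obstacle but not overcome it. Your ridge identity $\hth=(\mtx{K}^*+\zeta\bOmega)^{-1}(\mtx{S}^*)^\sT\by/n$ is formally right, but both $\mtx{S}^*$ and $\mtx{K}^*$ are built from the perturbed features $\sigma(\bW(\bx_i+\bdelta_i^*(\hth)))$, which depend on \emph{all} of the data through $\hth$. Your concentration step treats $\tfrac{1}{n}\sum_i y_i\sigma(\langle\bw_\ell,\bx_i+\bdelta_i^*\rangle)$ as a sum of functions of the $i$-th sample alone, which is false once $\bdelta_i^*=\bdelta_i^*(\hth)$ is plugged in. The leave-one-\emph{sample}-out or Lipschitz-in-$\bth$ routes you mention would have to produce a uniform bound over the full $\ell_2$ ball, and as you note an $\eps$-net costs $\Theta(d)$, so this path does not close. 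The El~Karoui kernel step is also suspect here: those results concern $f(\bw_\ell^\sT\bw_k)$-type kernels with i.i.d.\ inputs, whereas the rows of $\mtx{S}^*$ are neither i.i.d.\ nor independent of each other once the adversarial $\bdelta_i^*(\hth)$ are inserted.

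The paper sidesteps all of this with a leave-one-\emph{feature}-out argument. By symmetry, it suffices to bound a single coordinate, say the last one $\hat u$. Let $\bth_*$ be the minimizer of the loss with that coordinate frozen at $0$; crucially, $\bth_*$ and the associated maximizing perturbations $\bdelta_i^{\backslash u}$ depend only on $\bw_1,\dots,\bw_{N-1}$ and the data, \emph{not} on $\bw_N$. A convexity/strong-convexity comparison of $f(u)=\min_\bth\cL([\bth;u])$ around $u=0$ gives
\[
|\hat u|\;\le\;\frac{2}{\zeta}\,\Bigl|\nabla_u\cL([\bth;u])\big|_{[\bth_*;0]}+\zeta\tfrac{\sqrt{\log d}}{d}\ones^\sT\bth_*\Bigr|,
\]
and the gradient term equals $\tfrac{1}{\sqrt n}\sum_i m_i\,\sigma(\bw_N^\sT(\bx_i+\bdelta_i^{\backslash u}))$ with $m_i$ independent of $\bw_N$. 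Conditioning on everything except $\bw_N$, this is a mean-zero Lipschitz function of $\bw_N\in\mathbb{S}^{d-1}$ with Lipschitz constant $O(1)$ on a high-probability event, so spherical concentration yields $|\hat u|\le C\sqrt{\log d/d}$ with the required probability, and a union bound over coordinates finishes. The decoupling you were looking for is obtained by freezing one coordinate of $\bth$ (so one random feature drops out of the sub-problem), not by removing a sample.
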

%

Motivated by Proposition~\ref{propo:simple1} we define loss $\cLo(\bth)$ as follows:
\begin{align} \label{eq: clo}
\cLo(\bth):=  \frac{1}{2n} \sum_{i=1}^n \left( |y_i - \bth^\sT\sigma(\bW\bx_i)|+ \eps \twonorm{\bW^\sT  {{\rm diag}}\left({\ind(\bW \bx_i > 0)} \right)\; \bth}\right)^2
+ \frac{\zeta}{2} \bth^\sT \bOmega \bth\,.
\end{align}
By using Proposition~\ref{propo:simple1}, we can prove the following. 
\begin{propo} \label{propo:L-L1}
Under the setting of Proposition~\ref{propo:simple1} we have
\begin{align}\label{eq:L-L1}
\sup_{\bth\in\cC_{\bth}} \left|\cL(\bth)-\cLo(\bth) \right| = o_{d,\prob}(1)\,.
\end{align}
\end{propo}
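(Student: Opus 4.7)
The plan is to use Proposition~\ref{propo:simple1} pointwise in the sample index $i$ and then show that the resulting per-sample error terms are small on average, uniformly over $\bth\in\cC_\bth$. Since the regularization term $\tfrac{\zeta}{2}\bth^\sT\bOmega\bth$ is common to $\cL$ and $\cLo$, it cancels. Fix $\bW$ on $\event_\bW$ and set
\begin{equation*}
M_i(\bth):=\max_{\twonorm{\bdelta_i}\le\eps}\bigl|y_i-\bth^\sT\sigma(\bW(\bx_i+\bdelta_i))\bigr|,\quad T_i(\bth):=|y_i-\bth^\sT\sigma(\bW\bx_i)|+\eps\twonorm{\bW^\sT\diag{\ind(\bW\bx_i>0)}\bth},
\end{equation*}
with $r_i(\bth):=M_i(\bth)-T_i(\bth)$. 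Proposition~\ref{propo:simple1} then reads $\sup_{\bth\in\cC_\bth}|r_i(\bth)|=O_{d,\prob}(\log(d)/d^{1/6})$ for each fixed $i$ (probability over $\bx_i$), and expanding the square gives
\begin{equation*}
\cL(\bth)-\cLo(\bth)=\frac{1}{2n}\sum_{i=1}^n(M_i^2-T_i^2)=\frac{1}{n}\sum_{i=1}^n T_i(\bth)r_i(\bth)+\frac{1}{2n}\sum_{i=1}^n r_i(\bth)^2.
\end{equation*}

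First I would establish $\sup_{\bth\in\cC_\bth}\tfrac{1}{n}\sum_i r_i(\bth)^2=o_{d,\prob}(1)$. Pulling the supremum inside, this is bounded by $\tfrac{1}{n}\sum_i \sup_{\bth\in\cC_\bth}r_i(\bth)^2$, an empirical average of i.i.d.\ nonnegative random variables whose common distribution is $O_{d,\prob}((\log d)^2/d^{1/3})$. Lifting this per-sample bound to a bound on the average requires either (i) extracting from the proof of Proposition~\ref{propo:simple1} a quantitative failure probability that is $o(1/n)=o(1/d)$ and then taking a union bound over the $n$ samples, or (ii) extracting the moment estimate $\E\bigl[\sup_{\bth\in\cC_\bth}r_1(\bth)^2\bigr]=O((\log d)^2/d^{1/3})$ and invoking Markov. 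Either route yields $\sup_\bth\tfrac{1}{n}\sum_i r_i^2=O_{d,\prob}((\log d)^2/d^{1/3})=o_{d,\prob}(1)$.

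For the cross term I would apply Cauchy--Schwarz pointwise in $\bth$,
\begin{equation*}
\Bigl|\frac{1}{n}\sum_i T_i r_i\Bigr|\le\Bigl(\frac{1}{n}\sum_i T_i^2\Bigr)^{1/2}\Bigl(\frac{1}{n}\sum_i r_i^2\Bigr)^{1/2},
\end{equation*}
and then show $\sup_{\bth\in\cC_\bth}\tfrac{1}{n}\sum_i T_i(\bth)^2=O_{d,\prob}(1)$. Using $(a+\eps b)^2\le 2a^2+2\eps^2b^2$, the $|y_i-\bth^\sT\sigma(\bW\bx_i)|^2$ contribution equals $\tfrac{1}{n}\twonorm{\y-\Zb\bth}^2$, where $\Zb\in\reals^{n\times N}$ has $i$-th row $\sigma(\bW\bx_i)^\sT$; this is at most $\tfrac{2}{n}\twonorm{\y}^2+\tfrac{2}{n}\opnorm{\Zb}^2\twonorm{\bth}^2$, whose first term is $O_{d,\prob}(1)$ by the LLN and whose second term is $O_{d,\prob}(1)$ by the standard spectral bound $\opnorm{\Zb}^2/n=O_{d,\prob}(1)$ for random-features matrices on $\event_\bW$ together with $\twonorm{\bth}\le C_0$. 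The $\eps^2 b_i^2$ contribution is bounded uniformly in $i$ and $\bth$ via $\twonorm{\bW^\sT\diag{\ind(\bW\bx_i>0)}\bth}^2\le\opnorm{\bW}^2\twonorm{\bth}^2=O(1)$ on $\event_\bW$, since $\diag{\ind(\bW\bx_i>0)}\preceq\Iden$. Combining these with the preceding bound on $\tfrac{1}{n}\sum r_i^2$ delivers $\sup_\bth|\tfrac{1}{n}\sum T_i r_i|=o_{d,\prob}(1)$, and the proposition follows.

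The main obstacle is the first step. Proposition~\ref{propo:simple1} is stated per sample in the abstract $O_{d,\prob}$ notation, but to average $n=\Theta(d)$ independent copies while keeping the supremum over $\bth$ outside, one needs either a quantitative tail bound strictly better than $o(1)$ per sample (polynomial with exponent $>1$, or stretched exponential), or a first-moment control, both of which must be read off from the proof of Proposition~\ref{propo:simple1} rather than its bare statement. Once this refinement is in hand, the rest is routine: Cauchy--Schwarz together with the boundedness of $\opnorm{\Zb}^2/n$ and $\opnorm{\bW}$ on $\event_\bW$.
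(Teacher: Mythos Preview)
Your approach is correct and close in spirit to the paper's, but the two diverge in how the ``size'' factor is controlled. The paper bounds $|M_i^2-T_i^2|\le \alpha_d(2M_i+\alpha_d)$ directly (no Cauchy--Schwarz) and then proves, via a pointwise sub-Gaussian tail plus an $\eps$-net over $\{\twonorm{\bth}\le C_0\}$, that $\sup_{\bth}\tfrac{1}{n}\sum_i|y_i-\bth^\sT\sigma(\bW\bx_i)|=O_{d,\prob}(d^{1/12})$; the slack $d^{1/12}$ is there only to make the tail small enough to survive the union bound over a net of size $2^{cd}$. Your route---Cauchy--Schwarz plus the spectral bound $\opnorm{\Zb}^2/n=O_{d,\prob}(1)$---gives the sharper $\sup_\bth\tfrac{1}{n}\sum_i T_i^2=O_{d,\prob}(1)$ without any covering argument, which is cleaner.

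On the obstacle you flag: you are right that the per-sample statement of Proposition~\ref{propo:simple1} does not immediately give control of all $n$ samples, and in fact the paper's own write-up is silent on this point. However, your option~(i) will not close the gap as stated: the proof of Proposition~\ref{propo:simple1} yields a failure probability of order $c/(\log d)^2$ (coming from the Chebyshev step in bounding $|C(\bx)|$), which is far from $o(1/d)$, so a union bound over $n=\Theta(d)$ samples fails. Your option~(ii) does work, but needs one extra ingredient: on the bad event you must control $r_i^2$ deterministically by $(M_i+T_i)^2$, and then use Cauchy--Schwarz in the form $\E[(M_1+T_1)^2\ind(\text{bad})]\le (\E[(M_1+T_1)^4])^{1/2}\cdot(c/(\log d)^2)^{1/2}=O(1/\log d)$ to get $\E[\sup_\bth r_1^2]=o(1)$. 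Equivalently, one can argue directly that the fraction of ``bad'' samples is $O_{d,\prob}(1/(\log d)^2)$ by the law of large numbers and bound their contribution using $\max_i(M_i+T_i)^2=O_{d,\prob}(\log d)$. Either way, this is the missing step in both your sketch and the paper's presentation.
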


\subsection{Concentration of the adversarial effects}
As can be observed from the loss $\cLo(\bth)$, the effect of adversarial perturbation in reflected via the terms $\eta_i:= \twonorm{\bW^\sT  \diag{\ind(\bW \bx_i > 0)}\; \bth}$. In the next proposition, we show that apart from a negligible fraction of data points $i\in[n]$, the perturbation terms $\eta_i(\bth)^2$ concentrate around their expectation. All the proofs of the statements in this section are relegated to  Appendix~\ref{proofs_in_step_2}.

\begin{propo}\label{propo:concentration}
Let $\eta_i(\bth):=\twonorm{\bW^\sT  \diag{\ind(\bW \bx_i > 0)}\; \bth}$ and
$\nu_i(\bth,\gamma): =  \ind\left(|\eta_i(\bth)^2 -\E[\eta_i(\bth)^2]|>\gamma \right)$. 
Under the setting of Proposition~\ref{propo:simple1}  and for any sequence $\gamma_d$ such that $1/\gamma_d= e^{o(\sqrt{\log (d)})}$, we have  
\begin{align}\label{eq:propo-eq}
\sup_{\bth\in \cC_\bth}\,\, \frac{1}{n}\; \sum_{i=1}^n \nu_i(\bth;\gamma_d) = O_{d,\prob}(\log(d\gamma_d^2)^{-0.5})\,.
\end{align} 
\end{propo}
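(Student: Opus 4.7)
The plan is to reduce the uniform statement to pointwise concentration via a Lipschitz + $\epsilon$-net argument, and to obtain the pointwise concentration through a direct variance bound that exploits the near-independence of the indicators $\ind(\langle\bw_\ell,\bx_i\rangle > 0)$ on $\event_{\bW}$. The delicate point is that, to achieve the rate $O((\log(d\gamma_d^2))^{-1/2})$ after union bounding over the net, I must use a Bernstein-type (binomial Chernoff) tail bound that exploits the small Bernoulli parameter; the cruder Hoeffding bound would not suffice.

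First I would establish a pointwise variance bound. Set $\bu := \bW^{\sT}\diag{\ind(\bW\bx_i > 0)}\bth$ so that $\eta_i(\bth)^2 = \twonorm{\bu}^2$, and expand around $\E[\bu] = \tfrac{1}{2}\bW^{\sT}\bth$ to obtain $\eta_i^2 - \E[\eta_i^2] = V + W$, where $V := 2\langle \E[\bu],\bu - \E[\bu]\rangle$ is linear in the centered indicators $\tilde I_k := \ind(\langle\bw_k,\bx_i\rangle>0) - \tfrac{1}{2}$, and $W := \twonorm{\bu - \E[\bu]}^2 - \E\twonorm{\bu - \E[\bu]}^2$ is a centered quadratic form in them. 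Writing $\bK := \bW\bW^{\sT}$, a direct calculation gives $V = \sum_k a_k \tilde I_k$ with $a_k := \theta_k (\bK \bth)_k$; using $\opnorm{\bK} = O(1)$ on $\event_{\bW}$ together with $\infnorm{\bth} \le C_0\sqrt{\log d/d}$ and $\twonorm{\bth} \le C_0$ gives $\twonorm{\ba}^2 \le \infnorm{\bth}^2 \opnorm{\bK}^2 \twonorm{\bth}^2 = O(\log d/d)$, and since the covariance matrix of $\tilde I$ has operator norm $O(1)$ one obtains $\mathrm{Var}(V) = O(\log d/d)$. The piece $W$ is a centered quadratic form with matrix $\diag{\bth}\,\bK\,\diag{\bth}$ of squared Frobenius norm $O({\rm polylog}(d)/d)$ (using $\fronorm{\bK}^2 \le N\opnorm{\bK}^2 = O(d)$); a Hanson--Wright-type computation, exploiting that on $\event_{\bW}$ the near-orthogonality of $\{\bw_\ell\}$ renders the $\tilde I_k$'s almost independent Bernoullis, yields $\mathrm{Var}(W) = O({\rm polylog}(d)/d)$. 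Combining these,
\begin{equation*}
p_{\max} \;:=\; \sup_{\bth\in\cC_{\bth}} \E\big[\nu_i(\bth,\gamma_d/2)\big] \;\le\; \frac{4\,\mathrm{Var}_{\bx_i}(\eta_i^2)}{\gamma_d^2} \;=\; O\!\left(\frac{{\rm polylog}(d)}{d\gamma_d^2}\right) \;=\; d^{-(1 - o(1))}
\end{equation*}
by Chebyshev's inequality, under the hypothesis $1/\gamma_d = e^{o(\sqrt{\log d})}$.

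Next I would lift to a uniform bound via a Lipschitz $\epsilon$-net and Bernstein. Since $\eta_i(\bth) = \twonorm{\bW^{\sT}\diag{\ind(\bW\bx_i>0)}\bth}$ is $\opnorm{\bW}$-Lipschitz in $\bth$ and $\opnorm{\bW} = O(1)$ on $\event_{\bW}$, both $\eta_i^2$ and $\E[\eta_i^2]$ are $L_0$-Lipschitz on $\cC_{\bth}$ for a universal $L_0$, so $\nu_i(\bth,\gamma_d) \le \nu_i(\bth^{\ast},\gamma_d/2)$ whenever $\twonorm{\bth - \bth^{\ast}} \le \gamma_d/(4L_0)$. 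Taking a Euclidean $(\gamma_d/(4L_0))$-net $\Nn$ of $\cC_{\bth}$ with $\log|\Nn| \le CN\log(1/\gamma_d) = O(d\log(1/\gamma_d))$, for each fixed $\bth^{\ast} \in \Nn$ the variables $\{\nu_i(\bth^{\ast},\gamma_d/2)\}_{i \le n}$ are i.i.d.\ Bernoulli of mean at most $p_{\max}$ conditionally on $\bW$, and the binomial Chernoff bound gives
\begin{equation*}
\Pr\!\left[\frac{1}{n}\sum_{i=1}^n \nu_i(\bth^{\ast},\gamma_d/2) \ge p_{\max} + t\right] \;\le\; \exp\!\Big(-n\,t\log(t/p_{\max})\,(1+o(1))\Big).
\end{equation*}
Taking $t = C_1/\sqrt{\log d}$ and using $p_{\max} \le d^{-(1-o(1))}$, one computes $t\log(t/p_{\max}) = \sqrt{\log d}\,(1+o(1))$, whence the union bound over $\Nn$ has exceptional probability at most $\exp(Cd\log(1/\gamma_d) - cn\sqrt{\log d}) = o(1)$, using $\log(1/\gamma_d) = o(\sqrt{\log d})$ and $n = \Theta(d)$. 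Combining with the Lipschitz reduction and the identity $\log(d\gamma_d^2) = \log d\cdot(1+o(1))$ yields the claim.

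The main technical obstacle is the variance bound for $W$: one must control four-point joint expectations of indicators of jointly Gaussian projections whose covariance matrix (the relevant principal submatrix of $\bK$) is close to but not exactly the identity. This can be handled by a Slepian-type interpolation that compares the true covariance to the identity, at a cost of polylog factors absorbed into $p_{\max}$. A second essential point is that a naive Hoeffding bound in the final union bound would give a concentration radius $\Omega(\sqrt{\log(1/\gamma_d)/\psi_2})$ which is not $o(1)$ for admissible $\gamma_d$; the Bernstein refinement exploiting the smallness of $p_{\max}$ is exactly what delivers the $1/\sqrt{\log d}$ rate, and it is the reason the hypothesis $1/\gamma_d = e^{o(\sqrt{\log d})}$ is precisely the right one (it is exactly what keeps $\log|\Nn|$ dominated by the Bernstein gain $nt\log(t/p_{\max})$).
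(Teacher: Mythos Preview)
Your overall architecture matches the paper's proof exactly: a pointwise variance bound yielding $p_{\max}=O({\rm polylog}(d)/(d\gamma_d^2))$ via Chebyshev, then a binomial/Chernoff tail exploiting the smallness of $p_{\max}$, then an $\epsilon$-net of resolution $\Theta(\gamma_d)$ on the $\bth$-ball and a union bound. Your Bernstein calculation and the resulting trade-off $d\log(1/\gamma_d)$ vs.\ $n\sqrt{\log d}$ (which is precisely why the hypothesis $1/\gamma_d=e^{o(\sqrt{\log d})}$ is the right one) are the same as in the paper. The Lipschitz step (your $\nu_i(\bth,\gamma_d)\le \nu_i(\bth^\ast,\gamma_d/2)$) is what the paper proves as a separate lemma via $|\eta_i(\bth)^2-\eta_i(\tbth)^2|\le 2\opnorm{\bW}^2\twonorm{\bth-\tbth}$.

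The one genuine difference is how the pointwise bound is obtained. You center the \emph{indicators} $\tilde I_k$ and split $\eta_i^2-\E[\eta_i^2]$ into a linear piece $V$ and a centered quadratic form $W=\tilde I^\sT M\tilde I-\E[\tilde I^\sT M\tilde I]$ with $M=\diag{\bth}\bW\bW^\sT\diag{\bth}$; the ``technical obstacle'' you flag is real, because bounding $\mathrm{Var}(W)$ requires controlling four-point orthant probabilities of jointly Gaussian projections with $O(\log d/\sqrt d)$ pairwise correlations. The paper sidesteps this entirely by decomposing the indicator itself as $\ind(z>0)=\tfrac12+\tfrac{1}{\sqrt{2\pi}}z+\mu_*\varphi(z)$, i.e.\ separating the component that is \emph{linear in the Gaussian} $\bx_i$ from a nonlinear residual. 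This buys two things: (i) the ``hard'' quadratic term becomes a genuine Gaussian quadratic form $\twonorm{\bW^\sT\diag{\bth}\bW\bx_i}^2$, to which the standard Hanson--Wright inequality applies directly; and (ii) the nonlinear residuals $u_{\ell i}=\varphi(\langle\bw_\ell,\bx_i\rangle)$ now have pairwise correlations $O(d^{-3/2}\log^3 d)$ rather than $O(d^{-1/2}\log d)$, so the remaining variance bound is a one-line Chebyshev and no Slepian interpolation or bespoke Hanson--Wright for correlated Bernoullis is needed. Your route is valid but leaves exactly the step you identify as the heaviest one; the paper's decomposition trades that for an off-the-shelf Gaussian tool.
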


\begin{corollary}\label{coro:nui}
By choosing sequence $\gamma_d = 1/\log(d)$ we obtain
\begin{align}\label{eq:propo-eq-cor}
\sup_{\bth\in \cC_\bth}\,\, \frac{1}{n}\; \sum_{i=1}^n \nu_i(\bth;\tfrac{1}{\log (d)}) = o_{d,\prob}(1)\,.
\end{align} 
\end{corollary}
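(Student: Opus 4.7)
The plan is to derive the corollary as a direct specialization of Proposition~\ref{propo:concentration}. The only thing to check is that the specific choice $\gamma_d = 1/\log(d)$ satisfies the hypothesis $1/\gamma_d = e^{o(\sqrt{\log d})}$, and that the resulting bound $O_{d,\prob}(\log(d\gamma_d^2)^{-0.5})$ tends to zero.

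First I would verify the admissibility of the sequence. With $\gamma_d = 1/\log(d)$ we have $1/\gamma_d = \log(d) = e^{\log\log(d)}$, and since $\log\log(d) = o(\sqrt{\log d})$ as $d\to\infty$, the hypothesis of Proposition~\ref{propo:concentration} is met. Next I would compute the bound. We have
\begin{equation*}
\log(d\gamma_d^2) \;=\; \log(d) - 2\log\log(d)\,,
\end{equation*}
which tends to $+\infty$ as $d\to\infty$. Consequently $\log(d\gamma_d^2)^{-1/2} \to 0$, so the $O_{d,\prob}$ bound in~\eqref{eq:propo-eq} is itself $o_{d,\prob}(1)$.

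Combining these two observations, Proposition~\ref{propo:concentration} immediately yields
\begin{equation*}
\sup_{\bth\in \cC_\bth}\,\, \frac{1}{n}\sum_{i=1}^n \nu_i(\bth;\tfrac{1}{\log d}) \;=\; O_{d,\prob}\bigl((\log d - 2\log\log d)^{-1/2}\bigr) \;=\; o_{d,\prob}(1)\,,
\end{equation*}
which is the claim. Since the corollary is a one-line substitution, there is no real obstacle; the only mild pitfall to avoid is choosing a $\gamma_d$ decaying too quickly (e.g.\ polynomial in $1/d$), which would violate the tail condition on $1/\gamma_d$. The logarithmic rate is the natural sweet spot between making $\gamma_d$ small enough to be useful in later applications and keeping the concentration bound effective.
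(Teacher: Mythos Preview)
Your proposal is correct and matches the paper's approach: the corollary is stated immediately after Proposition~\ref{propo:concentration} as a direct specialization, with no separate proof given, since it follows by the one-line substitution you describe. Your verification that $1/\gamma_d = \log(d) = e^{\log\log d}$ satisfies the tail condition and that $\log(d\gamma_d^2)^{-1/2}\to 0$ is exactly the intended reasoning.
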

By Corollary~\ref{coro:nui}, other than at most an $o_d(1)$ fraction of data points $i\in[n]$, the terms $\eta_i(\bth)^2$ concentrate, in the sense $|\eta_i(\bth)^2-\E[\eta_i(\bth)^2]| \le 1/\log(d)$, uniformly over $\bth\in\cC_{\bth}$.  This suggests that in the loss function we can replace the terms $\eta_i(\bth)$ by $\sqrt{\E[\eta_i(\bth)^2]}$. This observation will be formally stated in the next lemma. Before proceeding to it, let us compute the expectation of terms $\eta_i(\bth)^2$. We write
\begin{align}\label{eq:tokhmi}
\E[\eta_i(\bth)^2] = \E\Big[\twonorm{\bW^\sT\diag{\ind(\bW\bx_i>0)}\bth}^2\Big] =  \bth^\sT \E\Big[(\bW\bW^\sT) \odot \left(\ind(\bW\bx_i>0)\ind(\bW\bx_i>0)^\sT \right) \Big]\bth\,,
\end{align}
where the expectation is with respect to $\bx_i$ and $\W$ is fixed. Hence, this can be written as
$\E[\eta_i(\bth)^2] = \twonorm{\bJ\bth}^2$ with 
\[
\bJ := \left((\bW\bW^\sT) \odot \E[\ind(\bW\bx_i>0)\ind(\bW\bx_i>0)^\sT]\right)^{1/2}\,.
\]
Note that the $\bJ$ is well-defined since the matrix under the square root is positive semidefinite. (This follows from the observation that the expression~\eqref{eq:tokhmi} is positive for all $\bth$.)

Since $\twonorm{\bw_\ell} = 1$ and $\bx_i\sim \normal(0,\Iden_d)$, we have that $\<\bw_\ell,\bx_i\>$ and $\<\bw_k,\bx_i\>$ are jointly Gaussian with 
\[\E(\<\bw_\ell,\bx_i\>^2) = \E(\<\bw_k,\bx_i\>^2) = 1,\quad  \E(\<\bw_\ell,\bx_i\>\<\bw_k,\bx_i\>) = \<\bw_k,\bw_\ell\>\,,\]
and by using \cite[Table 1]{daniely2016toward} we get
\[
\E[\ind(\bW\bx_i>0)\ind(\bW\bx_i>0)^\sT] = \frac{\pi-\cos^{-1}(\bW\bW^\sT)}{2\pi}\,.
\]
Therefore, we  obtain the following explicit formulation for $\bJ_{\bW}$:
\begin{align}\label{eq:J}
\bJ = \left( (\bW\bW^\sT) \odot \Big(\frac{\pi-\cos^{-1}(\bW\bW^\sT)}{2\pi}\Big) \right)^{1/2}\,.
\end{align} 
Motivated by Corollary~\ref{coro:nui} and the interpretation after it we define the loss function
\begin{align} \label{L_2}
\cLoo(\bth): =  \frac{1}{2n} \sum_{i=1}^n \left( |y_i - \bth^\sT\sigma(\bW\bx_i)|+  \eps \twonorm{\bJ \bth}\right)^2
+ \frac{\zeta}{2} \bth^\sT \bOmega \bth\,.
\end{align}
By our next lemma, the minimizer of $\cLoo(\bth)$ converges to the minimizer of the original loss $\cL(\bth)$ and therefore we can work with $\cLoo(\bth)$ for our asymptotic analysis.

\begin{lemma}\label{lem:Loss-oo}
We have
\[
\sup_{\bth\in\cC_{\bth}} \frac{|\cLoo(\bth)-\cL(\bth)|}{1+ \min(\cL(\bth),\cLoo(\bth))} = o_{d,\prob}(1)\,.
\]
Also, by denoting by \rev{$\hth^*$} and $\hth$ the minimizers of $\cLoo(\bth)$ and $\cL(\bth)$, we have $\twonorm{\hth^* - \hth}\to 0$, in probability.
\end{lemma}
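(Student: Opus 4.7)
The plan is to split $|\cLoo-\cL|\le|\cLoo-\cLo|+|\cLo-\cL|$, where Proposition~\ref{propo:L-L1} already provides a uniform $o_{d,\prob}(1)$ bound on $\cC_{\bth}$ for the second summand. For the first, set $a_i(\bth):=|y_i-\bth^\sT\sigma(\bW\bx_i)|$, $b_i(\bth):=\eps\eta_i(\bth)$, $b(\bth):=\eps\twonorm{\bJ\bth}$; then the difference of the $i$th squared summands factors as $\tfrac12(a_i+b_i)^2-\tfrac12(a_i+b)^2=\tfrac12(b_i-b)^2+(b_i-b)(a_i+b)$. Applying Cauchy--Schwarz to the sum over $i$ and using $\tfrac{1}{2n}\sum_i(a_i+b)^2\le\cLoo(\bth)$ yields
\[
|\cLo(\bth)-\cLoo(\bth)| \;\le\; \tfrac12\, T(\bth)+\sqrt{T(\bth)}\cdot\sqrt{2\,\cLoo(\bth)}, \qquad T(\bth):=\tfrac{1}{n}\sum_{i=1}^n(b_i-b)^2.
\]
Combined with the elementary inequality $\sqrt{x}/(1+x)\le 1/2$, this will produce the normalized $o_{d,\prob}(1)$ bound as soon as I show $\sup_{\bth\in\cC_{\bth}} T(\bth)=o_{d,\prob}(1)$.

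The heart of the argument is this uniform control of $T(\bth)$. Using $|\eta_i-\twonorm{\bJ\bth}|^2\le|\eta_i^2-\twonorm{\bJ\bth}^2|$ and $\twonorm{\bJ\bth}^2=\E[\eta_i(\bth)^2]$, I partition the $n$ summands by the indicator $\nu_i(\bth;1/\log d)$ of Corollary~\ref{coro:nui}. On the good indices $(b_i-b)^2\le \eps^2/\log d$, contributing $O(1/\log d)$ uniformly. On the bad indices I use the crude bound $(b_i-b)^2\le 2\eps^2(\eta_i^2+\twonorm{\bJ\bth}^2)$; on the event $\event_{\bW}$, $\eta_i(\bth)^2\le \opnorm{\bW}^2\twonorm{\bth}^2$ and $\twonorm{\bJ\bth}^2\le\opnorm{\bJ}^2C_0^2=O(1)$ (the latter from the Schur product theorem applied to the PSD factorization $\bJ^2=(\bW\bW^\sT)\odot M$, $M_{ii}=\tfrac12$), both uniformly in $i$ and $\bth\in\cC_{\bth}$. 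Hence the bad-index contribution is bounded by $O(1)$ times the fraction of bad indices, which is itself $o_{d,\prob}(1)$ uniformly by Corollary~\ref{coro:nui}. This gives $\sup_{\bth\in\cC_{\bth}} T(\bth)=o_{d,\prob}(1)$, hence $\sup_{\bth\in\cC_{\bth}}|\cLo-\cLoo|/(1+\cLoo)=o_{d,\prob}(1)$. The asymmetric normalization $1+\min(\cL,\cLoo)$ in the statement follows by adding the $o_{d,\prob}(1)$ bound from Proposition~\ref{propo:L-L1} and using that $\cLoo\le \cL+o(1)+T/2+\sqrt{2T\cLoo}$ implies $\sqrt{\cLoo}\le O(\sqrt{T})+\sqrt{\cL+o(1)}$ on the ``$\cL<\cLoo$'' branch.

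For the convergence of minimizers I exploit strong convexity: the quadratic regularizer $\tfrac{\zeta}{2}\bth^\sT\bOmega\bth$ with $\bOmega\succeq\Iden$ makes both $\cL$ and $\cLoo$ $\zeta$-strongly convex, while the remaining pieces are convex (a supremum of convex functions in $\cL$, a sum of squares of nonnegative convex functions in $\cLoo$). Proposition~\ref{propo:Cth-L0} places $\hth\in\cC_{\bth}$ with probability $1-o_d(1)$, and an identical argument---based only on boundedness of the loss at $\bo$ and strong convexity, both of which transfer verbatim---places $\hthoo\in\cC_{\bth}$ too. Then $\cL(\hthoo)\ge\cL(\hth)+\tfrac{\zeta}{2}\twonorm{\hthoo-\hth}^2$, and combining with $\cLoo(\hthoo)\le\cLoo(\hth)$, the uniform closeness established above, and $\cL(\hth)\le\cL(\bo)=O_{d,\prob}(1)$, yields $\twonorm{\hthoo-\hth}^2=o_{d,\prob}(1)$. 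The main technical obstacle is the uniform control of $T(\bth)$: Corollary~\ref{coro:nui} only controls the \emph{number} of exceptional indices, and converting this into a bound on their \emph{contribution} to the sum requires the uniform-in-$i$ deterministic control of $\eta_i(\bth)^2$ that the event $\event_{\bW}$ provides through the bound on $\opnorm{\bW}$.
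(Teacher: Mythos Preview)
Your approach is correct and mirrors the paper's proof closely: both reduce to $|\cLoo-\cLo|$ via Proposition~\ref{propo:L-L1}, control $T(\bth)=\tfrac{1}{n}\sum_i(b_i-b)^2$ by splitting indices with Corollary~\ref{coro:nui} and using the uniform per-entry bounds $\eta_i(\bth)^2,\twonorm{\bJ\bth}^2=O(1)$ available on $\event_{\bW}$, and then invoke $\zeta$-strong convexity for the convergence of minimizers. One small imprecision: your claim that $\hthoo\in\cC_{\bth}$ follows ``only from boundedness of the loss at $\bo$ and strong convexity'' justifies the $\ell_2$ constraint of $\cC_{\bth}$ but not the $\ell_\infty$ constraint, which requires the full leave-one-coordinate-out argument of Proposition~\ref{propo:Cth-L0}; the paper handles this by appealing to Lemma~\ref{th_inf_lemma}.
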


\rev{
Motivated by the result of Lemma~\ref{lem:Loss-oo}, we define a notion of adversarial risk based on the modified loss $\cLoo(\bth)$. Specifically, we define
\begin{align}
\ARoo(\bth)&:= \E_{\bx,y}\left[\left(|y-\bth^\sT\sigma(\bW\bx)|+\etest \twonorm{\bJ \bth} \right)^2\right]\,.\label{eq:AR*}
\end{align}
In the next lemma, we show that $\ARoo(\bth)$ converges to $\AR(\bth)$ uniformly over $\cC_{\bth}$.
\begin{lemma}\label{lem:AR2-AR}
Recall the adversarial risk of a model $\bth$, denoted by $\AR(\bth)$ and given by~\eqref{eq:AR}. Let $\ARoo(\bth)$ be defined as \eqref{eq:AR*}. We then have,
\[
\sup_{\bth\in\cC_{\bth}} \frac{|\ARoo(\bth) -\AR(\bth)|}{\sqrt{\AR(\bth)}} = o_{d,\prob}(1)\,.
\]
\end{lemma}
}

\subsection{The Gaussian equivalence property and the noisy linear model}

  
 In this section, we will show that in order to characterize the robust generalization error of the random features model, \rev{we can} equivalently consider the so-called Gaussian features model (a.k.a. the noisy linear model).  This equivalency is often termed as the \emph{Gaussian Equivalence Property (GEP)}, and has recently been proven in several contexts \cite{montanari2019generalization, hu2020universality, gerace2020generalisation,  dhifallah2020precise}. We prove this equivalency for adversarially-trained random features models in this section.
 
We begin with decomposing the nonlinear activation function $\sigma(z)$ as
\begin{equation} \label{shifted_relu_decompose}
\sigma(z) = \mu_0 + \mu_1 z+ \mu_2 \sigma_\perp(z)\,,
\end{equation}
where for $G\sim\normal(0,1)$,  
\[\mu_0 := \E[\sigma(G)], \quad\mu_1 = \E[G\sigma(G)],\quad  \mu_2 := \sqrt{\E[\sigma^2(G)] - \mu_0^2-\mu_1^2}\,.\]
For the case of shifted ReLU activation, defined in \eqref{shifted_relu},  we have $\mu_0 = 0 $, 
$\mu_1 = \tfrac{1}{2}$ and $\mu_2 = \sqrt{\tfrac{1}{4}- \tfrac{1}{2\pi}}$.
Also, $\sigma_\perp(z)$ is the nonlinear component of the activation function which is orthogonal to the constant and linear components in the following sense: $\E[\sigma_\perp(G)] = 0$ and $\E[G\sigma_\perp(G)] = 0$. We can then write the random features  $\sigma(\bW\bx)$ as follows 
\begin{align}\label{eq:relu-decompose}
\sigma(\bW\x) =   \mu_0 \mathbf{1} + \mu_1 \bW\x + \mu_2  \sigma_\perp(\bW\x),
\end{align}
 Note that the random variables $\sigma(\bw_i^\sT \x)$ have zero mean and unit variance, by construction. Further, $\E_{\x}\{(\bw_i^\sT \x) \sigma_{\perp}(\bw_i^\sT \x)\} = 0$ since by construction $\E[\sigma_{\perp}(G)G] = 0$. This suggests to replace the variables $\sigma_{\perp}(\bw_i^\sT \x)$ by a set of i.i.d standard normal variables and consider the following \emph{noisy linear model}
\begin{equation} \label{noisy_model}
\mbf := \mu_0 \mathbf{1} + \mu_1 \bW\x + \mu_2 \ub,
\end{equation} 
with $\mbf, \ub\in\reals^N$ and $\ub\sim\normal(\bo,\Iden_N)$ is generated independently from $\x$.

Consequently, we define the loss of the noisy linear model as
\begin{align} 
\cL_{\rm nl}(\bth): =  \frac{1}{2n} \sum_{i=1}^n \left( |y_i - \bth^\sT \mbf_i|+  \eps \twonorm{\bJ \bth}\right)^2 + \frac{\zeta}{2} \bth^\sT \bOmega \bth\,,
\end{align}
where $f_i$ are generated i.i.d. according to \eqref{noisy_model}. We note that compared to the loss $\cLoo(\bth)$ defined in \eqref{L_2}, we have only replaced the feature vectors $\sigma(\W \x_i)$ with the noisy linear features $\mbf_i$.
%

Let $\hth^*$ and $\hth^*_{\rm nl}$ respectively denote the minimizers of $\cLoo(\bth)$ and $\cL_{\rm nl}(\bth)$. Roughly speaking, the Gaussian equivalence property (GEP) states that under certain conditions on $\bW$ and the activation function $\sigma$,  we have
\rev{
\begin{align}\label{eq:GEP}
\ARoo(\hth^*) \approx \ARoo_{\rm nl}(\hth^*_{\rm nl}) \quad \text{ as }d\to \infty\,, 
\end{align}
where \rev{$\ARoo(\cdot)$ is defined by~\eqref{eq:AR*} and $\ARoo_{\rm nl}(\cdot)$} is its counterpart defined based on the noisy linear model, as follows: 
}
\begin{align}
\ARoo_{\rm nl}(\bth)&:= \E_{\mbf,y}\left[\left(|y-\bth^\sT\mbf|+\etest \twonorm{\bJ \bth} \right)^2\right]\,.\label{eq:ARnl}
\end{align}
\rev{Therefore, by virtue of Lemma~\ref{lem:AR2-AR} and \eqref{eq:GEP}, we can henceforth focus on characterizing $\ARoo_{\rm nl}(\hth^*_{\rm nl})$.}

In order to prove \eqref{eq:GEP}, we first show  the asymptotic equality of   $\ARoo_{\rm nl}(\bth)$ and $\ARoo(\bth)$. All the proofs of the statements in this section are provided in Appendix~\ref{proofs_of_step_3}.


\begin{propo}\label{pro:risk-equi} 
Consider model \eqref{eq:linearModel} under the asymptotic setting in Assumption~\ref{assumption1} and define the set 
$$\cC'_{\bth} : =  \left\{ \bth: \infnorm{\bth} \leq C_0 \sqrt{(\log (d))/d}\,\,\,\, {\rm and }\,\,\,\,  \twonorm{\bth} \leq C_0 \,\,\,\, {\rm and }\,\,\,\,      \left| \ones^\sT \bth \right| \leq C_0 \sqrt{d/(\log (d))} \right\},$$
 for an arbitrary but fixed constant $C_0>0$. Let $\ARoo(\bth)$ and $\ARoo_{\rm nl}(\bth)$ be defined by~\eqref{eq:AR*}-\eqref{eq:ARnl}. Then, for any $\bth\in \cC'_{\bth}$ we have
\begin{align}\label{eq:risk-equi}
 \ARoo(\bth) =\ARoo_{\rm nl}(\bth) +o_d(1)\,,
\end{align}
In addition, we have the following characterizations for  $\ARoo_{\rm nl}(\bth)$: 
\begin{align}
\ARoo_{\rm nl}(\bth) &=  M(\bth)^2+ \etest^2 \rev{\twonorm{\bJ \bth}^2} + 2 \sqrt{\frac{2}{\pi}} \etest M(\bth)  \twonorm{\bJ \bth} \,,\label{ARnl-char}
\end{align}
with \rev{$M(\bth)$} given by
\begin{align}\label{eq:SR-a}
M(\bth)^2 = \tau^2 + \twonorm{\frac{1}{2}\bW^\sT\bth - \bbeta}^2 + \Big(\frac{1}{4}-\frac{1}{2\pi}\Big) \twonorm{\bth}^2\,.
\end{align}
\end{propo}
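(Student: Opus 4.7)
The argument has two pieces: a Gaussian-equivalence step at the level of the risk, and then a direct closed-form computation for the noisy linear model.

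\emph{Equivalence step.} The plan is to expand the squares and note that the term involving $\eps^2\twonorm{\bJ\bth}^2$ is common to both $\ARoo(\bth)$ and $\ARoo_{\rm nl}(\bth)$, so
\begin{align*}
\ARoo(\bth) &= \E[(y-\bth^\sT\sigma(\bW\bx))^2] + 2\eps\twonorm{\bJ\bth}\,\E|y-\bth^\sT\sigma(\bW\bx)| + \eps^2\twonorm{\bJ\bth}^2,\\
\ARoo_{\rm nl}(\bth) &= \E[(y-\bth^\sT\mbf)^2] + 2\eps\twonorm{\bJ\bth}\,\E|y-\bth^\sT\mbf| + \eps^2\twonorm{\bJ\bth}^2.
\end{align*}
Hence it suffices to match the first two absolute moments of $r := y - \bth^\sT\sigma(\bW\bx)$ and $r_{\rm nl}:= y-\bth^\sT\mbf$ up to $o_d(1)$, uniformly over $\bth\in\cC'_{\bth}$. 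Using decomposition~\eqref{shifted_relu_decompose}, the two residuals share their constant and linear parts, so $r-r_{\rm nl} = -\mu_2\bth^\sT(\sigma_\perp(\bW\bx)-\ub)$. The second-moment match will follow from a direct computation: $\E[\sigma_\perp(\bw_i^\sT\bx)\sigma_\perp(\bw_j^\sT\bx)]$ depends only on $\<\bw_i,\bw_j\>$ and equals $\delta_{ij}$ plus an error of order $\<\bw_i,\bw_j\>^2$, which on the event $\event_{\bW}$ and with $\twonorm{\bth}\le C_0$ contributes at most $o_d(1)$.

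The first-moment equivalence $\E|r| = \E|r_{\rm nl}|+o_d(1)$ is the main obstacle, because the absolute value is non-smooth. The plan here is to invoke the Lindeberg-swap machinery tailored to random features in~\cite{hu2020universality}: mollify $|\cdot|$ by a smooth surrogate, replace the coordinates of $\sigma_\perp(\bW\bx)$ one at a time by independent standard Gaussians via a third-order Taylor expansion, and sum the swap errors. The three constraints cutting out $\cC'_{\bth}$ are tailored to control these errors: $\twonorm{\bth}\le C_0$ bounds the variance of the Gaussian surrogate, $\infnorm{\bth} = O(\sqrt{\log d/d})$ plays the role of a Lyapunov condition preventing any single coordinate from dominating the third-order Lindeberg correction, and $|\ones^\sT\bth|\le C_0\sqrt{d/\log d}$ controls the constant-mode contribution (needed in the general argument even though here $\mu_0=0$). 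Letting the mollification parameter vanish after $d\to\infty$ then yields the desired first-moment match.

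\emph{Closed-form step.} Under the noisy linear model,
\begin{align*}
r_{\rm nl} = \bx^\sT\bbeta + \xi - \mu_0\ones^\sT\bth - \mu_1\bth^\sT\bW\bx - \mu_2\bth^\sT\ub,
\end{align*}
which for the shifted ReLU ($\mu_0=0,\,\mu_1 = 1/2,\,\mu_2 = \sqrt{1/4-1/(2\pi)}$) is a zero-mean Gaussian. Using independence of $\bx,\xi,\ub$, its variance equals
\begin{align*}
\Var(r_{\rm nl}) = \tau^2 + \twonorm{\bbeta - \tfrac{1}{2}\bW^\sT\bth}^2 + \bigl(\tfrac{1}{4}-\tfrac{1}{2\pi}\bigr)\twonorm{\bth}^2 = M(\bth)^2.
\end{align*}
Consequently $\E r_{\rm nl}^2 = M(\bth)^2$ and $\E|r_{\rm nl}| = M(\bth)\sqrt{2/\pi}$, and since $\bJ$ depends only on $\bW$ (hence is outside the expectation), substituting into the expansion above yields~\eqref{ARnl-char}. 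The hard part throughout is the uniform-in-$\bth$ first-moment comparison; the rest reduces to moment computations for Gaussians.
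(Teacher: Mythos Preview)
Your closed-form step for $\ARoo_{\rm nl}$ is exactly what the paper does: recognize that $y-\bth^\sT\mbf$ is centered Gaussian with variance $M(\bth)^2$ and read off the folded-normal moments. Nothing to add there.

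For the equivalence step, the paper takes a different and shorter path. Rather than any direct moment matching, it simply invokes the CLT for weakly correlated random-features sums from \cite{goldt2020gaussian}, which gives convergence in distribution of the pair $(\bth^\sT\sigma(\bW\bx),\,\bbeta^\sT\bx)$ to $(\bth^\sT\mbf,\,\bbeta^\sT\bx)$ under exactly the three constraints defining $\cC'_{\bth}$. That alone yields convergence in distribution of $|y-\bth^\sT\sigma(\bW\bx)|$ to $|y-\bth^\sT\mbf|$; to upgrade this to convergence of first and second moments the paper just checks uniform integrability via a third-moment bound (H\"older plus boundedness of $\twonorm{\bth}$ and $\twonorm{\bW^\sT\bth}$ on $\event_{\bW}$), and then appeals to \cite[Theorem 25.12]{billingsley1995probability}. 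So the only analytical work is the cubic-moment bound; the CLT itself is a black box.

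Your Lindeberg plan is a legitimate alternative in spirit, but the specific description --- ``replace the coordinates of $\sigma_\perp(\bW\bx)$ one at a time'' --- glosses over a real difficulty: the coordinates $\sigma_\perp(\bw_\ell^\sT\bx)$ all depend on the \emph{same} $\bx$, so they are not independent, and a naive one-coordinate-at-a-time swap does not apply. Handling this weak dependence is precisely the content of the CLT in \cite{goldt2020gaussian}, so carrying out your program rigorously would essentially reprove that result. The paper also remarks that the Lindeberg machinery of \cite{hu2020universality} as stated assumes an odd, smooth activation, which the shifted ReLU is not, so you would need a genuine adaptation rather than a citation. In short, your route is not wrong, but it is substantially more work than the two-line argument the paper uses (cite the CLT, bound a third moment), and the hardest piece you flag --- the first-moment comparison --- is trivial once you accept convergence in distribution plus uniform integrability.
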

Proof of Proposition~\ref{pro:risk-equi}, i.e. equation \eqref{eq:risk-equi}, follows from a Central limit theorem (CLT) for weakly correlated variables proved in~\cite{goldt2020gaussian}. Specifically,~\cite{goldt2020gaussian} shows that $(\bth^\sT\sigma(\bW\x), \bbeta^\sT \x)$ converges in distribution to $(\bth^\sT \mbf, \bbeta^\sT\x)$, where $\bbeta$ is a fixed vector with bounded norm. In~\cite{hu2020universality}, the authors provide an alternative proof of this CLT using Stein's method and the Lindeberg approach. Their analysis assumes that the activation function $\sigma(z)$ is an odd function with bounded first derivatives. In addition, their analysis gives the convergence rate in terms of $\infnorm{\bth}$ (a Berry-Esseen type result).    
By the characterizations~\eqref{ARnl-char}, we know that, provided that $\bth \in \cC'_{\bth}$, the quantity  $\ARoo_{\rm nl}(\bth)$ depends on $\bth$ through \rev{the quantities $M(\bth)$ and $\twonorm{\bJ\bth}$. As we show in Lemma~\ref{pro:spectral}, $\|\bJ^2-\bK\|\to 0$, in probability with $\bK = (\bW\bW^\sT+\mtx{I})/4$. Since by definition, for $\bth\in \cC'_{\bth}$ we have $\twonorm{\bth}\le C_0$, therefore $\twonorm{\bJ\bth}^2\to (\twonorm{\bW^\sT \bth}^2+ \twonorm{\bth}^2)/4$.} 
So, in order to show the GEP relation of the form \eqref{eq:GEP},  it suffices to show that the quantities \rev{$\twonorm{\bth}$, $\twonorm{\tfrac{1}{2}\bW^\sT\bth - \bbeta}$ and $\twonorm{\bW^\sT\bth}$}, evaluated at $\hth^*$  converge to the corresponding quantities evaluated at $\hth^*_{\rm nl}$, and also $\hth^*, \hth^*_{\rm nl} \in  \cC'_{\bth}$. 

 \begin{theorem} \label{main_theorem:GEP}
Consider the quantities $\Phi_A$ and $\Phi_B$ defined as
\begin{align*}
\Phi_A &:= \min_{\bth} \frac{1}{n}\sum_{i=1}^n  \left( |y_i - \bth^\sT\sigma(\bW\bx_i)|+  \eps \twonorm{\bJ \bth} \right)^2   + \lambda\twonorm{\bth}^2 + \lambda_w \twonorm{\frac{1}{2}\bW^\sT\bth - \bbeta}^2 +  \lambda_s\frac{\log (d)}{d} (\ones^\sT \bth)^2, \\
\Phi_B &:= \min_{\bth} \frac{1}{n}\sum_{i=1}^n  \left(y_i \, - \, \bth^\sT\mbf_i + \eps \,\twonorm{\bJ \bth} \right )^2   +  \lambda\twonorm{\bth}^2 + \lambda_w \twonorm{\frac{1}{2}\bW^\sT\bth - \bbeta}^2 +   \lambda_s\frac{\log (d)}{d} (\ones^\sT \bth)^2,
\end{align*}
where $\lambda, \lambda_s, \lambda_w > 0$, and  $(\bx_i,y_i)$ is generated i.i.d. according to \eqref{eq:linearModel}. We further assume that the event $\event_{\W}$ holds. Then, we have
\begin{equation} \label{main_equivalence} 
\Phi_A \stackrel{\mathcal{P}}{\longrightarrow} c \,\,\,\,\text{ if and only if} \,\,\,\,  \Phi_B \stackrel{\mathcal{P}}{\longrightarrow} c,
\end{equation}
where $\stackrel{\mathcal{P}}{\longrightarrow}$ denotes convergence in probability. 
\end{theorem}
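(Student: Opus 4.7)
The plan is to reduce Theorem~\ref{main_theorem:GEP} to the Gaussian equivalence universality results for regularized random-features ERM of Hu-Lu~\cite{hu2020universality} and Goldt et al.~\cite{goldt2020gaussian}, which say that the value of a regularized ERM is asymptotically unchanged when the nonlinear features $\sigma(\bW\x_i)$ are replaced by their linearized counterparts $\mbf_i$. The first step is to put $\Phi_A,\Phi_B$ into a common template. Writing $c(\bth):=\eps\twonorm{\bJ\bth}$ and $\ell_c(y,z):=(|y-z|+c)^2$, both objectives take the form
\[
\Phi_\star \;=\; \min_{\bth}\frac{1}{n}\sum_{i=1}^n\ell_{c(\bth)}\bigl(y_i,\bth^\sT f_i^{(\star)}\bigr) + r(\bth),\quad r(\bth):=\lambda\twonorm{\bth}^2+\lambda_w\twonorm{\tfrac{1}{2}\bW^\sT\bth-\bbeta}^2+\lambda_s\tfrac{\log d}{d}(\ones^\sT\bth)^2,
\]
with $f_i^{(A)}=\sigma(\bW\x_i)$ and $f_i^{(B)}=\mbf_i$. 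Because $c(\bth)$ depends only on $(\bth,\bW)$ and not on the features or labels, the two problems share exactly the same structure except for the feature family.

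The second step is an a priori localization. Evaluating either objective at $\bth=\bo$ yields $\frac{1}{n}\sum y_i^2=\tau^2+\twonorm{\bbeta}^2+o_{d,\prob}(1)$, and $\lambda$-strong convexity from $\lambda\twonorm{\bth}^2$ forces $\twonorm{\hth_A},\twonorm{\hth_B}=O_{d,\prob}(1)$. The regularizer $\lambda_s\tfrac{\log d}{d}(\ones^\sT\bth)^2$ then controls $|\ones^\sT\hth|=O_{d,\prob}(\sqrt{d/\log d})$, while near-orthogonality of the rows of $\bW$ on $\event_{\bW}$ combined with $\lambda_w\twonorm{\tfrac{1}{2}\bW^\sT\bth-\bbeta}^2$ gives $\infnorm{\hth}=O_{d,\prob}(\sqrt{\log d/d})$. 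Hence both minimizers lie in $\cC'_{\bth}$ with probability tending to one, and we may restrict the minimizations to $\cC'_{\bth}$; in particular, $c(\bth)\le\eps\opnorm{\bJ}\twonorm{\bth}=O(1)$ throughout.

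The third step applies universality. Expanding the loss as $\ell_c(y,z)=(y-z)^2+2c|y-z|+c^2$ exposes a quadratic part, an $\ell_1$-type Lipschitz part with bounded multiplier $2c(\bth)=O(1)$, and a feature-independent tail. The kink of $|\cdot|$ at $z=y$ is smoothed by a Huber approximant $\phi_\delta$ with $\||\cdot|-\phi_\delta\|_\infty\le\delta$, producing smoothed values $\Phi_\star^{(\delta)}$ with $|\Phi_\star-\Phi_\star^{(\delta)}|\le C\delta$ uniformly on $\cC'_{\bth}$. To decouple the $\bth$-dependence of $c(\bth)$ from the per-sample loss, slice on the scalar $c$:
\[
\Phi_\star^{(\delta)} \;=\; \inf_{c\in[0,C_1]}\;\min_{\bth\in\cC'_{\bth}:\,c(\bth)=c}\frac{1}{n}\sum_{i=1}^n\Bigl((y_i-\bth^\sT f_i^{(\star)})^2+2c\,\phi_\delta(y_i-\bth^\sT f_i^{(\star)})\Bigr)+c^2+r(\bth).
\]
For each fixed $c$, the inner problem is a standard regularized smooth convex ERM, to which the leave-one-out and Lindeberg swap of~\cite{hu2020universality} applies: the shifted ReLU has $\mu_0=\E[\sigma(G)]=0$ and bounded derivative a.e., so the hypotheses are met and the inner values built from $\sigma(\bW\x_i)$ versus $\mbf_i$ differ by $o_{d,\prob}(1)$, uniformly in $c\in[0,C_1]$. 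Taking infimum over $c$ and then sending $\delta\to 0$ gives $\Phi_A-\Phi_B\stackrel{\mathcal{P}}{\longrightarrow}0$, which is exactly the biconditional~\eqref{main_equivalence}.

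The main obstacle is the coupling of the scalar multiplier $c(\bth)$ with $\bth$ inside the per-sample loss, which lies outside the canonical regularized-ERM format of existing universality theorems. The slicing-on-$c$ device above (enabled by the a priori bound $c(\bth)=O(1)$ on $\cC'_{\bth}$), together with Huber-smoothing of the $|\cdot|$ kink, are the essential technical moves; each demands careful uniform control, namely Lindeberg error that is uniform in $c$, Moreau-envelope stability as $\delta\to 0$, and verification that the slice constraint $c(\bth)=c$ intersected with $\cC'_{\bth}$ yields a nonempty, well-conditioned feasible set for every $c$ in the relevant range. Once these uniform estimates are in place, universality of the inner smooth ERM propagates through the outer infimum and delivers the theorem.
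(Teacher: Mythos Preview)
Your slicing step is where the argument breaks. After fixing $c$ and imposing $c(\bth)=\eps\twonorm{\bJ\bth}=c$, the inner minimization is over an ellipsoidal \emph{shell}, a nonconvex set. It is therefore not ``a standard regularized smooth convex ERM,'' and the leave-one-out stability estimates of Hu--Lu---which hinge on strong convexity of the full problem to control $\twonorm{\bth_k^*-\bth_{-k}^*}$---do not apply. Relaxing to $c(\bth)\le c$ does not help either: since the loss is increasing in $c$, the relaxed inner minimizer will sit at a strictly smaller value of $c(\bth)$, so the outer identity $\Phi_\star=\inf_c[\cdots]$ is lost. Nor can you absorb $c$ as an extra coordinate of $\bth$, because the per-sample loss would then depend on a coordinate of $\bth$ that is not read through the linear functional $\bth^\sT f_i$, again leaving the Hu--Lu template. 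A secondary point: Hu--Lu's theorem as stated requires an odd, thrice differentiable activation with bounded derivatives; shifted ReLU is neither, so ``the hypotheses are met'' is too quick.

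The paper does not decouple $c(\bth)$ from $\bth$ at all. It smooths the square root via $g(x)=\sqrt{x+\gamma}$, keeps the \emph{coupled} per-sample loss
\[
\ell(\bth;\rb,y)=(\bth^\sT\rb-y)^2+\twonorm{\bJ\bth}^2+2g\bigl((\bth^\sT\rb-y)^2\twonorm{\bJ\bth}^2\bigr),
\]
and runs the Lindeberg swap directly on this loss. The crucial observation (their Lemma on $\Psi_k$) is that after one leave-one-out step the change $\Psi_k(\rb)-\Phi_{-k}$ reduces, up to $O(d^{-3/2})$ errors, to a one-dimensional minimization whose coefficients depend only on the three scalars $\rb^\sT\bth^*_{-k}$, $\twonorm{\bJ\bth^*_{-k}}$, and $\tfrac{1}{n}\rb^\sT\bH_{-k}^{-1}\rb$. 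The second of these does not involve $\rb$; the third concentrates at the same value for $\rb=\sigma(\bW\x)$ and $\rb=\mbf$ because their second moments agree up to $O((\log d)^{3/2}/\sqrt d)$ on $\event_{\bW}$; and the first is handled not by Hu--Lu's CLT but by the weakly-correlated CLT of Goldt et al., which needs exactly the localization $\infnorm{\bth^*_{-k}}=O(\sqrt{\log d/d})$ and $|\ones^\sT\bth^*_{-k}|=O(\sqrt{d/\log d})$ that the regularizer supplies. In short, the coupling you tried to dissolve by slicing is instead carried through the entire Lindeberg argument and only disappears at the very end, when the one-step change is seen to depend on $\rb$ solely through quantities with matching limits.
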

Using this theorem, we can then prove the following proposition. 
\begin{propo}\label{pro:opt-equiz}
Recall $\hth^*$ and $\hth^*_{\rm nl}$ given by
\begin{align}
\hth^* &= \arg\min_{\bth\in\reals^N} \cLoo(\bth) = \arg\min_{\bth\in\reals^N} \frac{1}{2n} \sum_{i=1}^n \left( |y_i - \bth^\sT\sigma(\bW\bx_i)|+  \eps \twonorm{\bJ \bth} \right)^2  + \frac{\zeta}{2} \bth^\sT \bOmega \bth\,,\nn\\
\hth^*_{\rm nl} &= \arg\min_{\bth\in\reals^N} \cL_{\rm nl}(\bth) = \arg\min_{\bth\in\reals^N} \frac{1}{2n} \sum_{i=1}^n \left( |y_i - \bth^\sT\mbf_i|+  \eps \twonorm{\bJ \bth} \right)^2+ \frac{\zeta}{2} \bth^\sT \bOmega \bth\,.\label{eq:nl-opt}
\end{align}
Then, under the asymptotic regime of Assumption~\ref{assumption1} we have
$$\hth^* ,  \hth^*_{\rm nl}  \in   \cC'_{\bth}, \\  $$
with probability $1-o_d(1)$, and
\begin{eqnarray}
M(\hth^*) - M(\hth^*_{\rm nl})
\stackrel{\mathcal{P}}{\longrightarrow} 0\,,
\rev{\quad \twonorm{\bJ\hth^*} - \twonorm{\bJ \hth^*_{\rm nl}}
\stackrel{\mathcal{P}}{\longrightarrow} 0\,,}
\end{eqnarray}
where $\stackrel{\mathcal{P}}{\longrightarrow}$ denotes convergence in probability. 

\end{propo}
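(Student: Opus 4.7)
The proof splits into two parts: (I) the membership statement $\hth^*,\hth^*_{\rm nl}\in\cC'_{\bth}$, and (II) the convergence $M(\hth^*)-M(\hth^*_{\rm nl})\stackrel{\mathcal{P}}{\to}0$.

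\textbf{Part I.} The first two constraints defining $\cC'_{\bth}$ coincide with those of $\cC_{\bth}$. For $\hth^*$ I would adapt the proof of Proposition~\ref{propo:Cth-L0} (which handles $\arg\min\cL$) to $\arg\min\cLoo$; the estimates depend only on boundedness and Lipschitz properties of the losses, which transfer via Lemma~\ref{lem:Loss-oo}. The third constraint, $|\ones^\sT\hth^*|\le C_0\sqrt{d/\log d}$, I would extract from the KKT stationarity of $\cLoo$ tested against $\ones$: because the shift in \eqref{shifted_relu} ensures $\E[\sigma(G)]=0$, the data-fitting term and the $\bJ\bth$-penalty contribute only $o_{d,\prob}(\sqrt{d/\log d})$ to $\langle\ones,\nabla_\bth\cLoo(\hth^*)\rangle$, leaving the $\bOmega$-regularizer contribution $(\zeta\sqrt{\log d}/d)\,\ones^\sT\hth^*$ to balance it. For $\hth^*_{\rm nl}$ the same argument works: by construction \eqref{noisy_model}, $\mbf_i$ shares its first and second moments (with respect to $\bx_i$) with $\sigma(\bW\bx_i)$.

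\textbf{Part II.} The strategy is to use Theorem~\ref{main_theorem:GEP} to compare the $A$- and $B$-side optima across a family of extra regularizations, then extract convergence of the relevant quadratic functionals via convex sensitivity analysis. For parameters $\mu_1,\mu_2\ge 0$ define
\begin{align*}
\Phi_A(\mu_1,\mu_2) &:= \min_\bth\Big\{\tfrac{1}{n}\sum_{i=1}^n\bigl(|y_i-\bth^\sT\sigma(\bW\bx_i)|+\eps\twonorm{\bJ\bth}\bigr)^2 + \zeta\bth^\sT\bOmega\bth + \mu_1\twonorm{\bth}^2 + \mu_2\twonorm{\tfrac{1}{2}\bW^\sT\bth-\bbeta}^2\Big\}, \\
\Phi_B(\mu_1,\mu_2) &:= \min_\bth\Big\{\tfrac{1}{n}\sum_{i=1}^n\bigl(y_i-\bth^\sT\mbf_i+\eps\twonorm{\bJ\bth}\bigr)^2 + \zeta\bth^\sT\bOmega\bth + \mu_1\twonorm{\bth}^2 + \mu_2\twonorm{\tfrac{1}{2}\bW^\sT\bth-\bbeta}^2\Big\},
\end{align*}
and let $\hth^*_A(\mu_1,\mu_2),\hth^*_B(\mu_1,\mu_2)$ denote the respective minimizers, so $\hth^*_A(0,0)=\hth^*$ and $\hth^*_B(0,0)=\hth^*_{\rm nl}$ up to harmlessly absorbing the factor $1/2$ into $\zeta$. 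Identifying $(\lambda,\lambda_s,\lambda_w)$ in Theorem~\ref{main_theorem:GEP} with $(\zeta+\mu_1,\,\zeta/\sqrt{\log d},\,\mu_2)$ (so that the $\zeta$-regularizer is reproduced), the theorem yields $\Phi_A(\mu_1,\mu_2)-\Phi_B(\mu_1,\mu_2)\stackrel{\mathcal{P}}{\to}0$ for each $(\mu_1,\mu_2)\in(0,\infty)^2$. Both $\Phi_A,\Phi_B$ are jointly concave\footnote{Concave rather than convex since they are infima; the argument below works identically.} in $(\mu_1,\mu_2)$ as infima of affine families, and the envelope theorem identifies
\begin{equation*}
\partial_{\mu_1}\Phi_A=\twonorm{\hth^*_A(\mu_1,\mu_2)}^2, \qquad \partial_{\mu_2}\Phi_A=\twonorm{\tfrac{1}{2}\bW^\sT\hth^*_A(\mu_1,\mu_2)-\bbeta}^2,
\end{equation*}
and analogously for $B$. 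A Rockafellar-type convergence-of-subgradients theorem (in its in-probability version) then gives, at Lebesgue-almost every $(\mu_1,\mu_2)$,
\begin{equation*}
\twonorm{\hth^*_A}^2-\twonorm{\hth^*_B}^2\stackrel{\mathcal{P}}{\to}0, \qquad \twonorm{\tfrac{1}{2}\bW^\sT\hth^*_A-\bbeta}^2-\twonorm{\tfrac{1}{2}\bW^\sT\hth^*_B-\bbeta}^2\stackrel{\mathcal{P}}{\to}0.
\end{equation*}
Substituting into $M(\bth)^2=\tau^2+\twonorm{\tfrac{1}{2}\bW^\sT\bth-\bbeta}^2+(\tfrac{1}{4}-\tfrac{1}{2\pi})\twonorm{\bth}^2$ from Proposition~\ref{pro:risk-equi} gives $M(\hth^*_A)^2-M(\hth^*_B)^2\stackrel{\mathcal{P}}{\to}0$, and the elementary inequality $|\sqrt{a}-\sqrt{b}|\le\sqrt{|a-b|}$ converts this into convergence of $M$ itself. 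Finally I would send $(\mu_1,\mu_2)\to(0^+,0^+)$; the strong convexity furnished by $\zeta\bth^\sT\bOmega\bth$ makes $\hth^*_A(\mu_1,\mu_2)\to\hth^*$ and $\hth^*_B(\mu_1,\mu_2)\to\hth^*_{\rm nl}$ continuously in the two seminorms that determine $M$, extending the convergence to $(\mu_1,\mu_2)=(0,0)$.

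\textbf{Main obstacle.} The technically delicate step is the passage $(\mu_1,\mu_2)\to(0^+,0^+)$: Theorem~\ref{main_theorem:GEP} delivers equivalence only at strictly positive regularization and only on a merely Lebesgue-dense set of parameters, while the object of interest lives at the boundary $(0,0)$. The continuity of $\hth^*_A(\mu_1,\mu_2)$ in the two relevant seminorms must be made uniform in the ambient dimension $d$ and combined carefully with the probabilistic (rather than almost-sure) convergence of convex subgradients. This is precisely where the $\zeta$-strong convexity of $\cLoo$ becomes essential, as it provides the $d$-uniform modulus of continuity needed to close the argument.
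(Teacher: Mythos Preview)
Your proposal is workable in outline but differs from the paper's proof in both parts, and in each case the paper's route is shorter.

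\textbf{Part I.} Your KKT argument for the bound $|\ones^\sT\hth^*|\le C_0\sqrt{d/\log d}$ is unnecessary. The regularizer $\tfrac{\zeta}{2}\bth^\sT\bOmega\bth$ already contains the term $\tfrac{\zeta}{2}\tfrac{\sqrt{\log d}}{d}(\ones^\sT\bth)^2$, so comparing the objective at $\hth^*$ with its value at $\mathbf{0}$ gives $(\ones^\sT\hth^*)^2=O(d/\sqrt{\log d})$ directly --- no stationarity analysis, no cancellation coming from $\E[\sigma(G)]=0$. The paper does exactly this. Your stationarity test against $\ones$ would require controlling $\langle\ones,\nabla(\text{data term})\rangle$, which involves $\sum_\ell\sigma(\bw_\ell^\sT\bx_i)$; this is $O_{d,\prob}(\sqrt{N})$, not obviously $o_{d,\prob}(\sqrt{d/\log d})$, so that route is both longer and more delicate than you suggest.

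\textbf{Part II.} The paper avoids your two--parameter envelope/subgradient argument (and therefore the boundary obstacle you correctly flag) by penalizing $M(\bth)$ as a single unit via a contradiction argument. Suppose $M(\hth^*)\to M_A$ and $M(\hth^*_{\rm nl})\to M_B$ with $M_A\neq M_B$; set $M=(M_A+M_B)/2$ and compare the constrained problems $\bar\Phi_A,\bar\Phi_B$ obtained by adding the constraint $M(\bth)\le M$. By Lagrangian duality,
\[
\bar\Phi_A=\sup_{\lambda>0}\Big\{-\lambda M+\min_\bth\big[\cLoo(\bth)+\lambda M(\bth)\big]\Big\},
\]
and since $M(\bth)^2=\tau^2+\twonorm{\tfrac12\bW^\sT\bth-\bbeta}^2+(\tfrac14-\tfrac1{2\pi})\twonorm{\bth}^2$, the inner minimization is precisely of the form in Theorem~\ref{main_theorem:GEP} with all three regularization parameters strictly positive for every $\lambda>0$. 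Hence the inner minima match for each $\lambda$, so $\bar\Phi_A$ and $\bar\Phi_B$ have the same limit, contradicting $M_A\neq M_B$. This single--multiplier trick is the key difference: it keeps you strictly inside the regime where Theorem~\ref{main_theorem:GEP} applies, so there is no limiting step $(\mu_1,\mu_2)\to(0^+,0^+)$, no Rockafellar subgradient convergence, and no need for the $d$--uniform continuity argument you sketch. Your approach can be completed, but it creates and then resolves a difficulty that the paper's argument never encounters.
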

Therefore, the GEP~\eqref{eq:GEP} follows from combining Propositions~\ref{pro:risk-equi} and \ref{pro:opt-equiz}. 

Finally, the result of Theorem~\ref{thm:main} follows by computing $\ARoo_{\rm nl}(\hth^*_{\rm nl})$ when we send $\zeta \to 0$ \emph{after} $d \to \infty$. This characterization will be carried out in Step 4 of the proof which will be described in the next section.  

\vspace{.2cm}
In non-adversarial contexts and for the standard risk  (a.k.a. the generalization error) GEP \rev{has been observed} by several previous works (see, e.g. \cite{hastie2019surprises, montanari2019generalization, abbasi2019universality, mei2019generalization, hu2020universality, goldt2020modeling, gerace2020generalisation, goldt2020gaussian} and also \cite{louart2018random, cheng2013spectrum, pennington2019nonlinear} in the context of random kernel matrices). In~\cite{mei2019generalization} the authors provide a precise characterization of the standard risk for the random features model (in non-adversarial setting) and observed that it corresponds to that of its noisy linear counterpart model. A similar GEP phenomena was conjectured for maximum-margin linear classifiers in binary classification~\cite{montanari2019generalization}. Subsequently, GEP has been proved for more general settings by~\cite{goldt2020modeling} and \cite{hu2020universality} for a teacher-student framework. In \cite{goldt2020modeling} the authors show GEP for learning
with one-pass stochastic gradient descent (SGD). The work~\cite{hu2020universality} considers the empirical risk minimization (with all data), which results in complicated correlations between the estimator and the samples, and proves the GEP for these settings. However, we cannot directly apply the result of~\cite{hu2020universality} since it assumes that the activation function is an odd function, thrice continuously differentiable with bounded first three derivatives, which are violated for the ReLU activation. Also, our adversarial loss function has additional terms that are beyond the setting considered in \cite{hu2020universality}. Nevertheless, our proof of Theorem~\ref{main_theorem:GEP} is based on the machinery developed in \cite{hu2020universality}.  Here we use a   
central limit theorem for weakly correlated random variables proved by~\cite{goldt2020gaussian} to show GEP in the context of adversarial training.

\subsection{Analysis of the Gaussian noisy linear model via convex Gaussian
minimax framework}
In our final step, we provide a sharp characterization of the adversarial risk $\ARoo_{\rm nl}(\hth^*_{\rm nl})$ using the Convex Gaussian Minimax Theorem (CGMT), which is a powerful and tight extension of Gordon’s Gaussian process
inequality \cite{gordon1988milman} with the presence of convexity. The underlying idea of the CGMT framework dates back to~\cite{StojLAS,stojnic2013meshes,stojnic2013upper} where the constrained LASSO was analyzed in the high signal-to-noise ratio regimes. The seminal work~\cite{thrampoulidis2015regularized,thrampoulidis2018precise} significantly extended these ideas and developed the CGMT framework to precisely characterize the mean-squared errors of regularized M- estimators in high-dimensional linear models. 

At a more technical level, the CGMT provides a principled machinery to characterize
the asymptotic behavior of certain mini-max optimization problems that are affine in a Gaussian
matrix $\bX$, namely problems of the form
\begin{align}\label{eq:sketch-PO}
\min_{\bth} \max_{\bu} \quad \bu^\sT \bX\bth + \phi(\bth,\bu)
\end{align}
where $\phi(\bth,\bu)$is convex in $\bth$ and concave in $\bu$. The CGMT decouples the above objective into a much simpler Gaussian process with essentially the same limit, yet much easier to analyze:
\begin{align}\label{eq:sketch-AO}
\min_{\bth} \max_{\bu} \quad \twonorm{\bth}\bg^\sT\bu + \twonorm{\bu}\bh^\sT\bth + \phi(\bth,\bu)\,,
\end{align}
where $\bg$ and $\bh$ are independent Gaussian vectors with i.i.d $\normal(0,1)$ entries. We refer to \cite[Theorem 3]{thrampoulidis2015regularized} for a precise statement on the relation between the optimization~\eqref{eq:sketch-PO}, often referred to as \emph{Primary Optimization (PO)} and~\eqref{eq:sketch-AO}, called \emph{Auxiliary Optimization (AO)}. The next step is to derive the point-wise limit of the AO objective in the large dimension limit and showing that it concentrates around a deterministic function with a small number of scalar variables (called \emph{scalarization} step). By showing that this convergence is uniform over a neighborhood of solution and using convexity-concavity of the function, we obtain a precise characterization of the adversarial risk in terms of the solutions of the corresponding convex-concave (deterministic) optimization~\eqref{eq:AO-final00}. 

Note that although the CGMT is a general machinery, the derivation and the study of the AO problem is entirely problem-specific and is usually rather challenging, often requiring the development of non-trivial probabilistic analysis. In relation to \emph{Approximate message passing (AMP)}, which is another powerful tool for deriving asymptotically exact characterization of high-dimensional estimators (see e.g.~\cite{AMPmain}), it is worth noting that both of these techniques provide a deterministic equation (called state evolution in the AMP parlance) which describes the large limit behavior of a random system. 

The CGMT has been recently used in several contexts, e.g., to characterize the performance of high-dimensional regularized logistic regression~\cite{NEURIPS2019_ab49ef78}, SLOPE estimator in sparse linear regression~\cite{hu2019asymptotics}, boosting and min $\ell_1$ norm classifier~\cite{liang2020precise}, multi-class classification~\cite{thrampoulidis2020theoretical}, and phase retrieval~\cite{dhifallah2018phase}.  More closely to our work, the CGMT has been used to study the effect of adversarial training in the context of linear regression~\cite{javanmard2020precise} and linear classifiers~\cite{javanmard2020precisec, taheri2020asymptotic}. On a technical side, the CGMT analysis for our current problem is more involved and intricate than the analysis carried out in~\cite{javanmard2020precise} for linear regression due to: (i) features $\mbf_i$ in \eqref{eq:nl-opt} being correlated; (ii) the presence of the matrix $\bJ$ in the loss which introduces more interactions among the model parameters.

\begin{acks}[Acknowledgments]
The authors thank Alexander Robey for interesting discussions and feedback on an early draft.
\end{acks}

\begin{funding}
The research of H. Hassani is supported by the NSF CAREER award, AFOSR YIP, the Intel Rising Star award, as well as the  AI Institute for Learning-Enabled Optimization at Scale (TILOS).  A. Javanmard is partially supported by the Sloan Research Fellowship in mathematics, an Adobe Data Science
Faculty Research Award, the NSF CAREER Award DMS-1844481 and NSF Award DMS-2311024.
\end{funding}


\begin{supplement}[id=suppA]
\sname{}
  \sname{Supplement to}
  \stitle{``Precise Statistical Analysis of Classification Accuracies for Adversarial Training''}
 \sdescription{Due to space constraints, proofs of theorems and some of the technical details are provided in the Supplementary Material~\cite{supp:displ}.}
\end{supplement}

\bibliographystyle{imsart-number} 
\bibliography{Bibfiles.bib}       

\clearpage
\setcounter{page}{1}
\appendix

\begin{frontmatter}
\title{Supplementary Material to ``Precise Statistical Analysis of Classification Accuracies for Adversarial Training"}
\runtitle{Precise Statistical Analysis of Adversarial Training}

\begin{aug}
\author[A]{\fnms{Hamed}~\snm{Hassani}},
\author[B]{\fnms{Adel}~\snm{Javanmard}}
\address[A]{Department of Electrical and Systems Engineering, University
of Pennsylvania\printead[presep={,\ }]{e1}}

\address[B]{Data Sciences and Operations Department, University
of Southern California\printead[presep={,\ }]{e2}}
\end{aug}

\end{frontmatter}

\appendix
\newcommand{\changelocaltocdepth}[1]{%
  \addtocontents{toc}{\protect\setcounter{tocdepth}{#1}}%
  \setcounter{tocdepth}{#1}%
}

The supplementary materials contain the proofs of theorems and technical lemmas.
It is structured around the main four steps outlined in Section~\ref{sec:sketch}.
\medskip

For the sake of completeness, we reintroduce the notation used throughout the proofs. 

\noindent\textbf{Notations.} Throughout the paper, we use $O_d(\cdot)$, $o_d(\cdot)$ to denote the standard big-O and little-o notation, where we stress the asymptotic variable $d$. Likewise, we denote by $O_{d,\prob}$ and $o_{d,\prob}$ to indicate asymptotic behavior in probability. Specifically, $f(d) = O_{d,\prob}(g(d))$ if for any $\eps>0$, there exists $C_\eps>0$ and large enough $d_\eps$ such that $\prob(|f(d)/g(d)| > C_\eps)\le \eps$, for all $d\ge d_\eps$. Similarly, $f(d) = o_{d,\prob}(g(d))$ if $f(d)/g(d)$ converges to zero in probability. We write $f(d)\approx g(d)$ as $d\to\infty$, when $ f(d)-g(d) \to 0$, in probability.
Note that we consider the asymptotic regime where $n,d,N$ grow at the same scale, ($\lim N/d\to \psi_1$ and $\lim n/d\to \psi_2$ for some positive constants $\psi_1$ and $\psi_2$), the expression $d\to\infty$ implies that $n, N\to\infty$, as well. 

For a matrix $\bA$, we denote by $\opnorm{\bA}$ its operator norm, $\fronorm{\bA} = (\sum_{ij} A_{ij}^2)^{1/2}$ the Frobenius norm of $\bA$.
For two matrices $\bA$ and $\bB$ of same size, we let $\bA\odot \bB$ be the element-wise product of $\bA$ and $\bB$. In addition, $[\bA; \bB]$ concatenates the two matrices row-wise and $[\bA, \bB]$ denotes the column-wise concatenation.  For an integer $n$, we use the shorthand $[n]= \{1,\dotsc, n\}$.

\tableofcontents

\changelocaltocdepth{2}


\rev{\section{Interchanging the limits of $d\to\infty$ and $\zeta\to 0$}\label{sec:interchange}

Consider the loss function (5.1) given by
\[
\cL(\bth,\zeta,d) = \max_{\twonorm{\bdelta_i}\le \eps} \frac{1}{2n} \sum_{i=1}^n \left(y_i - \bth^\sT\sigma(\bW(\bx_i+\bdelta_i))\right)^2+\frac{\zeta}{2}\bth^\sT \bOmega \bth\,,
\]
where with a slight abuse of notation, we made the dependence on $\zeta$ and $d$ explicit.

In the next lemma we show that the order of the two limits $d\to\infty$ and $\zeta\to 0$ can be interchanged.

\begin{lemma}\label{lem:interchange}
Under the assumptions of Theorem~\ref{thm:main}, we have
\[
\lim_{d\to\infty} \min_{\bth} \cL(\bth,0,d) =  \lim_{\zeta\to 0} \lim_{d\to\infty} \min_{\bth} \cL(\bth,\zeta,d)\,.
\]
\end{lemma} 

\begin{proof}(Proof of Lemma~\ref{lem:interchange})
First note that
\begin{align}\label{eq:tmp1}
\min_{\bth} \cL(\bth, 0, d) = \min_{\bth,\zeta\ge0} \cL(\bth,\zeta, d) = \min_{\zeta\ge0} \min_{\bth} \cL(\bth,\zeta, d) = \lim_{\zeta\to 0}  \min_{\bth} \cL(\bth,\zeta, d)\,.
\end{align}
The last step holds since $\cL(\bth,\zeta,d)$ is increasing in $\zeta$ for all $\bth$:
\[
\cL(\bth,\zeta_1,d) \le \cL(\bth,\zeta_2,d), \quad \text{ if }\zeta_1\le \zeta_2\,.
\]
Minimizing both sides over $\bth$, we get that $\min_{\bth}\cL(\bth,\zeta,d)$ is increasing in $\zeta$.

We next show that
\begin{align}\label{eq:interchange}
\lim_{d\to\infty} \lim_{\zeta\to 0} \min_{\bth} \cL(\bth,\zeta,d) =  \lim_{\zeta\to 0} \lim_{d\to\infty} \min_{\bth} \cL(\bth,\zeta,d)\,,
\end{align}
where the limits are in probability. Without loss of generality we restrict the domain of $\zeta$ to $[0,\zeta_*]$, for an arbitrary but fixed $\zeta_*$. The reason is that
in our proofs provided in the paper we allow $\zeta$ to be an arbitrarily small fixed value (i.e. we need $\zeta$ to be arbitrarily small, but fixed). We next use the Moore-Osgood theorem on exchanging limits, by which we need to verify that
\begin{align}
\lim_{d\to\infty} \min_{\bth} \cL(\bth,\zeta,d) &= f(\zeta), \quad \text{uniformly on }\zeta\in(0,\zeta_*],\label{eq:first-limit}\\
\lim_{\zeta\to 0} \min_{\bth} \cL(\bth,\zeta,d) & = A_d, \quad \text{pointwise over } d\in\mathbb{N}\,.
\end{align} 
The second identity follows from~\eqref{eq:tmp1}. To prove the first identity, note that $\cL(\bth,\zeta,d)$  is convex in $(\bth, \zeta)$. Now, since 
partial minimization preserves convexity~\cite[Section 3.2.5]{Boyd}, $\min_{\bth} \cL(\bth,\zeta,d)$ is convex in $\zeta$. The point-wise limit of \eqref{eq:first-limit} is already established in the paper,
and we obtain uniform convergence using the convexity lemma~\cite[Lemma 7.75]{liese2008statistical}. In words, the lemma states that pointwise convergence of convex
functions implies uniform convergence in compact subsets.

Combining~\eqref{eq:tmp1} and~\eqref{eq:interchange} we obtain
\begin{align}\label{eq:com}
\lim_{d\to\infty} \min_{\bth} \cL(\bth,0,d) =  \lim_{\zeta\to 0} \lim_{d\to\infty} \min_{\bth} \cL(\bth,\zeta,d)\,.
\end{align} 
\end{proof}

Recall that our main goal in the paper is to characterize the in-probability limit of $\AR(\hth^\eps)$, with 
\begin{align}\label{eq:hth_eps}
\hth^\eps = \arg\min_{\bth}\cL(\bth,0,d).
\end{align}

Define 
\begin{align}\label{eq:hth_eps_zeta}
\hth^\eps_\zeta = \arg\min_{\bth}\cL(\bth,\zeta, d).
\end{align}

Our next lemma relates $\AR(\hth^\eps)$ to $\AR(\hth^\eps_\zeta)$.

\begin{propo}\label{pro:AR_inter}
Let $\hth^\eps$ and $\hth^\eps_\zeta$ be respectively given by~\eqref{eq:hth_eps} and \eqref{eq:hth_eps_zeta}. Under assumptions of Theorem~\ref{thm:main}, we have
\[
\lim_{\zeta\to 0}\lim_{d\to\infty}\AR(\hth^\eps_\zeta) = \lim_{d\to\infty}\AR(\hth^\eps)\,.
\]
\end{propo}
\begin{proof}(Proof of Proposition~\ref{pro:AR_inter})
To proof the claim, we use a standard trick to translate the question on the optimal
solution of the minimization problem (i.e. $\hth^\eps$, $\hth^\eps_\zeta$) to one regarding the optimal costs.  

Let $B =  \lim_{\zeta\to 0}\lim_{d\to\infty}\AR(\hth^\eps_\zeta)$ (which exists and is calculated in Theorem \ref{thm:main}). We need to show that $\hth^\eps$ belongs to the following set
$S_\delta: = \{\bth: \; |\AR(\bth)- B|\le \delta\}$, with probability converging to one (as $d\to \infty$) for all $\delta>0$. Let $S_\delta^c$ denotes the complement set. If we show that
\begin{align}\label{eq:compare}
\min_{\bth\in S_\delta^c} \cL(\bth,0,d) > \cL(\hth^\eps, 0,d), 
\end{align}
then $\hth^\eps$ must lie in $S_{\delta}$. We formalize it in the next lemma.

\begin{lemma}\label{lem:inter-step}
Suppose that there exist constants $\ell$, $\tilde{\ell}$ and $\eta>0$ such that
\begin{itemize}
\item $\tilde{\ell} \ge \ell+ 2\eta$,
\item $\cL(\hth^\eps, 0,d)<\ell+\eta$ with probability at least $1-p$,
\item $\min_{\bth\in S_\delta^c} \cL(\bth,0,d) > \tilde{\ell}-\eta$ with probability at least $1-p$.
\end{itemize}
Then, $\prob(\hth^\eps\in S_\delta)\ge 1-2p$.
\end{lemma}
We then have the following corollary.
\begin{corollary}
Suppose that there exist constants $\ell < \tilde{\ell}$ such that $ \cL(\hth^\eps, 0,d)\stackrel{p}{\to} \ell$
and $\min_{\bth\in S_\delta^c} \cL(\bth,0,d) \stackrel{p}{\to} \tilde{\ell}$. Then, $\lim_{d\to\infty}\prob(\hth^\eps\in S_\delta) = 1$, for every $\delta>0$.
\end{corollary} 

In light of the above corollary, we compare the converging limits: Let $\ell:= \lim_{d\to\infty} \cL(\hth^\eps, 0,d)$ and $\tilde{\ell}:= \lim_{d\to\infty} \min_{\bth\in S_\delta^c} \cL(\bth,0,d)$.  
We need to show $\ell< \tilde{\ell}$. A similar trick has been used in~\cite[Theorem 6.1 (iii)]{thrampoulidis2015precise}. We next use~\eqref{eq:com}, by which
$$\ell =  \lim_{\zeta\to 0} \lim_{d\to\infty} \min_{\bth} \cL(\bth,\zeta,d).$$ 
By a similar argument, 
$$\tilde{\ell}= \lim_{\zeta\to 0}\lim_{d\to \infty}\min_{\bth\in S_\delta^c} \cL(\bth,\zeta,d),$$ 
where the difference is the domain over which we optimize. Note that in Theorem~\ref{thm:main} we calculate $\ell$ as the optimal value of a deterministic convex-concave optimization problem and show that it has a unique solution. Likewise, one can obtain a similar optimization for $\tilde{\ell}$ with the difference that its variables come from a restricted domain, which excludes the optimal solution of the former. By uniqueness of the solution, we conclude that $\ell<\tilde{\ell}$, which completes the proof of proposition.
\end{proof}
\subsection{Proof of Lemma~\ref{lem:inter-step}}
Define the event 
\[
\mathcal{E}: = \{\min_{\bth\in S_\delta^c} \cL(\bth,0,d) > \tilde{\ell}-\eta, \cL(\hth^\eps, 0,d)<\ell+\eta\}\,.
\]
On this event, using the first condition, we see that~\eqref{eq:compare} holds and so $\hth^\eps\in S_\delta$.
So we need to show that $\prob(\mathcal{E})\ge 1-2p$, which follows easily from union bounding and using the second and third conditions.
}
\revv{\begin{remark} In~\cite{mei2019generalization} the authors derive a precise characterization of the generalization of random features regression in a non-adversarial setting. This work makes a conjecture (see Remark 1 therein) that the generalization error of ridgeless estimator is the same as the min-norm least square estimator. This conjecture amounts to showing that the limits $\lambda\to 0$ ($\lambda$ the ridge penalty parameter) and $d\to\infty$ can be exchanged. We believe that this conjecture can be proved by following a similar argument of the proof of Lemma~\ref{lem:interchange}.
\end{remark}}
\section{Proofs of step 1: Asymptotically-exact closed form of adversarial examples} \label{proofs_of_step_1}


Recall that $\bx\sim\normal(0,\Iden_d)$. In the following we will show that conditioned on the event $\event_{\bW}$, defined in  \eqref{eq:eventbW_main}, with probability at least $1 - c/(\log (d))^2 - N^2 d^{-C}$ over the choice of $\bx$, we have 
\begin{equation} \label{adv_pert_linear}
\sup_{\bth\in\cC_{\bth}, \twonorm{\bdelta}\le \eps}\;\; \biggl | \bth^\sT \sigma(\bW(\bx + \bdelta))  -   \bth^\sT \sigma(\bW \bx)    -   \bth^\sT   \diag{\ind(\bW \bx > 0)} \bW\bdelta   \biggr| = C \frac{\log (d)}{ d^{\frac{1}{6}}}.
\end{equation}
As a result, we can write
\begin{align} \label{adv_pert_linear2}
\max_{\twonorm{\bdelta}\le \eps}  \left| y - \bth^\sT \sigma(\bW(\bx + \bdelta) \right| 
& = \max_{\twonorm{\bdelta}\le \eps}  \left| y - \bth^\sT\sigma(\bW\bx) - \< \bW^\sT  {{\rm diag}}({\ind(\bW \bx > 0)}) \bth,  \bdelta \> \right| + C  \frac{\log (d)}{ d^{\frac{1}{6}}},
\end{align}
for an absolute constant $C>0$, uniformly over $\bth\in\cC_{\bth}$. 
The maximization problem in the right-hand side of the above relation has a closed-form solution: 
\begin{equation} \label{adv_pert_linear3}
\bdelta = \eps \; \sign( y - \bth^\sT \sigma(\bW\bx) ) \; \frac{\bW^\sT  {{\rm diag}}{\ind(\bW \bx > 0)} \bth}{\twonorm{\bW^\sT  {{\rm diag}}{\ind(\bW \bx > 0)} \bth} },    
\end{equation}
which gives us the desired result~\eqref{eq:adv1}. It thus remains to prove \eqref{adv_pert_linear}. 

Denote the rows of matrix $\bW$ by $\{\bw_1, \dotsc, \bw_N\}$ with $\bw_\ell\in\reals^d$. Given $\x$,  we define the three sets
 $$ A (\x)= \biggl\{ \ell: \<\w_\ell, \x \> >  d^{-\frac13} \biggr\}, $$
$$ B(\x) = \biggl\{ \ell: \<\w_\ell, \x \>  <  -d^{-\frac13} \biggr\}, $$
$$ C(\x) = \biggl\{ \ell: \<\w_\ell, \x \>  \in [-d^{-\frac13}, d^{-\frac13}] \biggr\}. $$
We first need to bound the cardinality of the set $C(\x)$. 
\begin{lemma} \label{size_of_C}
With probability $1 - c/(\log (d))^2 - N^2 d^{-C}$ we have
$$ |C(\x)| \leq C d^\frac{2}{3} \log(d). $$
\end{lemma}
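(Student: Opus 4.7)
The plan is to condition on $\bx$ and exploit the mutual independence of the rows $\bw_\ell$ of $\bW$. First, restrict to the event $\{\twonorm{\bx} \in [\sqrt{d}/2, 2\sqrt{d}]\}$, which has probability at least $1 - 2 e^{-cd}$ by standard Gaussian concentration. Conditional on any such $\bx$, the vectors $\bw_\ell \sim \Unif(\mathbb{S}^{d-1})$ remain i.i.d., so the indicators $\ind_\ell := \ind(|\<\bw_\ell, \bx\>| \le d^{-1/3})$ are conditionally i.i.d.\ Bernoullis, reducing the problem to a tail bound for a sum of i.i.d.\ Bernoullis.

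Next I would bound the success probability $p$ of a single indicator. By rotation invariance, $\<\bw_\ell, \bx\>/\twonorm{\bx}$ is distributed as the first coordinate of a uniform point on $\mathbb{S}^{d-1}$, whose density equals $c_d (1-u^2)^{(d-3)/2}$ with $c_d = \Gamma(d/2)/(\sqrt{\pi}\,\Gamma((d-1)/2))$. By Stirling's formula $c_d = \Theta(\sqrt{d})$, so the density is bounded near zero by $C\sqrt{d}$, giving
\[
p := \Pro\bigl(|\<\bw_\ell, \bx\>| \le d^{-1/3} \bgl \bx\bigr) \;\le\; \frac{2\,d^{-1/3}}{\twonorm{\bx}} \cdot C\sqrt{d} \;\le\; C'\, d^{-1/3},
\]
uniformly over $\bx$ in the typical set. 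Consequently $\E[|C(\bx)| \mid \bx] = N p \le C d^{2/3}$ and, using the conditional independence of the $\ind_\ell$, $\E[|C(\bx)|^2 \mid \bx] \le Np + (Np)^2 \le C d^{4/3}$.

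The final step is Markov's inequality applied to $|C(\bx)|^2$: conditional on $\bx$,
\[
\Pro\bigl(|C(\bx)| > C'' d^{2/3} \log(d) \bgl \bx\bigr) \;\le\; \frac{\E[|C(\bx)|^2 \mid \bx]}{(C'')^2 d^{4/3} \log^2(d)} \;\le\; \frac{C}{(C'')^2 \log^2(d)}.
\]
Choosing $C''$ large and integrating out $\bx$ yields the $c/(\log(d))^2$ tail; the $e^{-cd}$ contribution from the event on $\twonorm{\bx}$ is absorbed into it. The $N^2 d^{-C}$ term in the statement accounts for the failure probability of the event $\event_{\bW}$ (which controls the operator norm and the pairwise row correlations of $\bW$) via a union bound over $N^2$ pairs of rows combined with Hoeffding-type tail bounds for spherical inner products. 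The only delicate point is the quantitative $\Theta(\sqrt{d})$ scaling of $c_d$; once that is in hand, the remainder is a routine second-moment estimate for a sum of i.i.d.\ Bernoullis followed by Markov's inequality on the squared sum.
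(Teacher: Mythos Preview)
Your proof is correct for the lemma as stated, but it swaps the roles of $\bW$ and $\bx$ relative to the paper's argument. The paper conditions on $\bW$ (specifically on the event that all pairwise inner products satisfy $|\bw_\ell^\sT \bw_k| \le d^{-1/2}\sqrt{C\log d}$, which is exactly where the $N^2 d^{-C}$ term originates) and then uses only the randomness of $\bx$: each $\mu_\ell = \bw_\ell^\sT \bx \sim \normal(0,1)$, pairs $(\mu_\ell,\mu_k)$ are jointly Gaussian with correlation $\bw_\ell^\sT\bw_k$, and on the good $\bW$-event this weak correlation yields $\E[z_\ell z_k] \le c\,d^{-2/3}$, after which Chebyshev on $\bar z = N^{-1}\sum_\ell z_\ell$ gives the bound. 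You instead condition on a typical $\bx$ and exploit that the rows $\bw_\ell$ are then i.i.d., so the indicators are conditionally independent Bernoullis and the second-moment bound is immediate---a cleaner argument that in fact does not need the $N^2 d^{-C}$ term at all (your final paragraph misattributes its source; it is genuinely required in the paper's correlation bound, not in your i.i.d.\ argument). The trade-off is that the paper's route delivers a statement that holds \emph{conditionally on $\bW$}: for any fixed $\bW$ in the good event, the bound on $|C(\bx)|$ holds with high probability over $\bx$. This is what Proposition~\ref{propo:simple1} explicitly requires (``$\bW$ is fixed and event $\event_{\bW}$ is assumed to hold''). Your route gives the same joint-probability bound with the conditioning reversed, so feeding it into the proof of Proposition~\ref{propo:simple1} would require an additional Fubini/Markov step to extract a $\bW$-conditional statement.
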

The proof of this lemma is given in Section~\ref{proof:size_of_C}.

Now, for a vector $\bdelta$ we define: 
 $$ \Delta A (\x,\bdelta)= \biggl\{ \ell: \ell \in A(\x)  \text{ and } \<\w_\ell, \x + \bdelta \> <   0 \biggr\}, $$
$$\Delta B(\x,\bdelta) = \biggl\{ \ell: \ell \in B(\x) \text{ and }  \<\w_\ell, \x + \bdelta \>  >  0  \biggr\}. $$
In other words, the set  $\Delta A (\x,\bdelta)$ (respectively $\Delta B (\x,\bdelta)$) contains all the indices $\ell$ in $A(\x)$ (respectively $B(\x)$) in which the sign of $\<w_\ell, \x\>$ and $\<w_\ell, \x+\bdelta\>$ are  different. We now prove that when $\twonorm{\bdelta} \leq \eps$  we have 
\begin{equation} 
|\Delta A (\x,\bdelta)| ,\,  |\Delta B (\x,\bdelta)| \leq C d^{\frac 23},
\end{equation}
for an absolute constant $C> 0$. By definition, on the event $\event_{\bW}$, $\bW$ has bounded operator norm, say at most $C$ for some constant $C>0$. In addition, $\twonorm{\bdelta}\le\eps$. Therefore, 
$\twonorm{\bW \bdelta}\le \eps C$. As a result, the number of entries of the vector $\W \bdelta$ whose absolute value is larger than $d^{-\frac 13}$ is bounded by $\eps^2 C^2 d^{\frac 23}$. But from the definitions of the sets $A(\x)$ and $\Delta A (\x,\bdelta)$ it is immediate that for $\ell \in \Delta A (\x,\bdelta)$ we have $| \< \w_\ell, \bdelta\> | > d^{-\frac 13}$. And this results in the fact that $|\Delta A (\x,\bdelta)| \leq C' d^{\frac 23}$ with $C' = \eps^2 C^2$. The same argument holds for $|\Delta B (\x,\bdelta)|$.

Let us now consider the two vectors $\sigma(\bW \x)$ and $\sigma(\W (\x + \bdelta))$. We would like to find out how these vectors are different on the indices that belong to the set $A(\x)$ or $B(\x)$. Let us start with the indices in $B(\x)$. Note that for any $\ell \in B(\x)$ we have $\<\w_\ell, \x \> < 0$. For this entry, it is easy to see that the two vectors $\sigma(\bW \x)$ and $\sigma(\W (\x + \bdelta))$ take different values only if we also have $\ell \in \Delta B(\x,\bdelta)$. As a result, we can conclude that the two vectors $\sigma(\bW \x)$ and $\sigma(\W (\x + \bdelta))$ are the same on all the indices belonging to the set $B(\x)$ except at most $C d^{\frac 23}$ indices. 
In other words, the difference $ \sigma(\W (\x + \bdelta)) - \sigma(\W \x)$ takes zero value on all the indices belonging to the set $B(\x)$ except at most $C d^{\frac 23}$ indices.

For the indices in the set $A(\x)$ the situation is different as we are operating in the non-constant part of the ReLU function (note that for any $\ell \in A(\x)$ we have $\<\w_\ell, \x \> > 0 $). We first claim the following:  The two vectors $\sigma(\bW (\x +  \bdelta)) $ and  $\bW (\x +  \bdelta) - 1/\sqrt{2\pi} $ are the same on all the entries in the set $A(\x)$ except the indices in the set $\Delta A(\x, \bdelta)$. The justification is as follows: Consider an index $\ell \in A(\x)$ such that $\sigma(\<\w_\ell, \x + \bdelta \>) \neq \<\w_\ell, \x + \bdelta \> - 1/\sqrt{2\pi}$. Since $\ell \in A(\x)$, we have $\sigma(\<\w_\ell, \x \>) = \<\w_\ell, \x \> - 1/\sqrt{2\pi}$. Now, since  $\sigma(\<\w_\ell, \x + \bdelta \>) \neq \<\w_\ell, \x + \bdelta \> - 1/\sqrt{2\pi}$ only if $\<\w_\ell, \x + \bdelta \> < 0$, we obtain that $\sigma(\<\w_\ell, \x + \bdelta \>) \neq \<\w_\ell, \x + \bdelta \> - 1/\sqrt{2\pi}$ only if  $\ell \in \Delta A(\x, \bdelta)$.

In summary, we have shown that (i) On indices belonging to the set $A(\x) \backslash \Delta A(\x, \bdelta)$: the two vectors $\sigma(\bW (\x +  \bdelta)) $ and  $\bW (\x +  \bdelta) - 1/\sqrt{2\pi} $  are the same, and (ii) on the indices belonging to the set 
$B(\x)  \backslash \Delta B(\x, \bdelta) $ the vector $\sigma(\W(\x + \bdelta)) - \sigma(\W\x)$ takes value $0$. Also, (iii) both sets  $\Delta A(\x, \bdelta)$  and $\Delta B(\x, \bdelta)$ have cardinality at most $C d^{\frac 23}$. We can thus write: 
\begin{align}  \label{sum-decompose}
& \bth^\sT \sigma(\W (\x + \bdelta))  -  \bth^\sT \sigma(\W \x)  \nonumber\\ 
&=  \sum_{\ell \in A(\x)} \theta_\ell (\sigma(\<\w_\ell, \x + \bdelta \> )  -  \sigma(\<\w_\ell ,\x \>)) +  \sum_{\ell \in B(\x)} \theta_\ell (\sigma(\<\w_\ell, \x + \bdelta \> )  -  \sigma(\<\w_\ell ,\x \>))  \\
&\quad\quad+  \sum_{\ell \in C(\x)}  \theta_\ell (\sigma(\<\w_\ell, \x + \bdelta \> )  -  \sigma(\<\w_\ell ,\x \>)). \nonumber
\end{align}
We first bound the second and third terms. Using the fact that $|\sigma(a+ b ) - \sigma(b) | \leq |a|$ we obtain for the third term that: 
\begin{align} \nonumber
\Big|  \sum_{\ell \in C(\x)} \theta_\ell (\sigma(\<\w_\ell, \x + \bdelta \> )  -  \sigma(\<\w_\ell ,\x \>)) \Big| & \leq  \sum_{\rev{\ell} \in C(\x)} | \theta_\ell|   | \<\w_\ell,  \bdelta \> |  \\ \nonumber
& \leq ||\bth||_\infty \sqrt{| C(\x)| } \twonorm{ \bW \bdelta} \\ \nonumber
& \leq \frac{C}{d^{\frac 12}} (C d^{\frac 23} \log (d))^\frac{1}{2} C \eps \\
& \leq C' d^{-\frac{1}{6}} \log (d) , \label{C-bound}
\end{align}
where we have use the result of Lemma~\ref{size_of_C} to bound the size of the set $C(\x)$ (and hence the above holds with the probability given in that lemma). For the second term we have
 \begin{equation}  \label{B-bound}
\Big| \sum_{\ell \in B(\x)} \theta_\ell (\sigma(\<\w_\ell, \x + \bdelta \> )  -  \sigma(\<\w_\ell ,\x \>)) \Big|  \leq   \sum_{\ell \in \Delta B(\x, \bdelta)} |\theta_\ell |\; |\<\w_\ell, \bdelta\>| \leq  ||\bth||_\infty \sqrt{| \Delta B(\x,\bdelta)| } \twonorm{ \bW \bdelta} \leq C' d^{-\frac 16}.
 \end{equation}
 Finally, in a similar manner as above we can bound
  \begin{equation}  \label{A-bound}
\Big|   \sum_{\ell \in \Delta A(\x, \bdelta)} \theta_\ell(  \sigma(\<\w_\ell, \x + \bdelta \> ) - \rev{ \sigma(\<\w_\ell ,\x \>) } \Big| \leq  ||\bth||_\infty \sqrt{| \Delta A(\x,\bdelta)| } \twonorm{ \bW \bdelta} \leq C' d^{-\frac 16}.
 \end{equation}
 
 By plugging\eqref{C-bound}, \eqref{B-bound}, and \eqref{A-bound} into \eqref{sum-decompose} we have shown that 
 \begin{align}
 \bth^\sT \sigma(\W (\x + \bdelta))  -   \bth^\sT \sigma(\W \x)  \nonumber
& =  \sum_{\ell \in A(\x) \backslash \Delta A(\x,\bdelta)} \theta_\ell (\sigma(\<\w_\ell, \x + \bdelta \> )  -  \sigma(\<\w_\ell ,\x \>))  + C' d^{-\frac{1}{6}} \log (d) \\ \nonumber
& =  \sum_{\ell \in A(\x) \backslash \Delta A(\x,\bdelta)}  \theta_\ell ( \<\w_\ell, \x + \bdelta \>   -  \< w_\ell ,\x \>) +  C' d^{-\frac{1}{6}} \log (d) \\
&  =  \sum_{\rev{\ell} \in A(\x)  \backslash \Delta A(\x,\bdelta)} \theta_\ell \<\w_\ell, \bdelta \> +  C' d^{-\frac{1}{6}} \log (d),  \label{dota-be-akhar}
 \end{align}
 where the second equality follows form the definition of the set $ \Delta A(\x,\bdelta)$. 
 
 As a final step, we define the set $A^+(\x) = \{ \ell: \< \w_\ell, \x \> >0 \}$. Note that $A(\x) \subseteq A^+(\x)$ and $A^+(\x) \backslash A(\x) \subseteq C(\x)$. As a result $|A^+(\x) \backslash A(\x)| \leq C d^{\frac{2}{3}} \log (d)$. We thus obtain
 \begin{align} \nonumber
 \sum_{\ell \in A(\x) \backslash\Delta A(\x,\bdelta)} \theta_\ell \rev{\<\w_\ell, \bdelta \>}  & =  \sum_{\ell \in A^+(\x) } \theta_\ell \<\w_\ell, \bdelta \> -  \sum_{\ell \in (A^+(\x) \backslash A(\x)) \cup \Delta A(\x,\bdelta)}   \theta_\ell \<\w_\ell, \bdelta \>  \\
 & =  \sum_{\ell \in A^+(\x) } \theta_\ell \<\w_\ell, \bdelta \>  +  C' d^{-\frac{1}{6}} \log (d), \label{yeki-be-akhar}
   \end{align}
   where the last relations follows from the fact that $| (A^+(\x) \backslash A(\x)) \cup \Delta A(\x,\bdelta) | \leq | C(\x)| + | \Delta A(\x,\bdelta)| = O(d^{\frac{2}{3}} \log (d)) $. By plugging \eqref{yeki-be-akhar} into \eqref{dota-be-akhar} we have
\begin{align*}   \bth^\sT \sigma(\W (\x + \bdelta)) -  \bth^\sT \sigma(\W \x)  &=  \sum_{\ell \in A^+(\x) } \theta_\ell \<\w_\ell, \bdelta \>   + C d^{-\frac{1}{6}} \log (d) \\
& = \sum_{\ell: \< \w_\ell, \x\> >0}  \theta_\ell \<\w_\ell, \bdelta \>   + C' d^{-\frac{1}{6}} \log (d) \\
 &=   \bth^\sT  {{\rm diag}}{\ind(\bW \bx > 0)} \W \bdelta +  C' d^{-\frac{1}{6}} \log (d), 
\end{align*}
which is the result of \eqref{adv_pert_linear}.  

\subsection{Proof of Lemma~\ref{size_of_C}}\label{proof:size_of_C}
Define the random variables $\mu_\ell:= \w_\ell^\sT \x $ for $\ell=1, \dotsc, N$. Note that since $\x$ is gaussian and $\twonorm{\w_\ell} = 1$, then $\mu_\ell \sim \normal (0, 1)$. Also, note that $\mu_\ell$'s are correlated with each other and each pair $(\mu_\ell, \mu_k)$ is a jointly-gaussian random variable with  correlation $\rho_{\ell,k}:=\mathbb{E}[\mu_\ell, \mu_k] = \w_\ell^\sT \w_k$. Define $z_\ell = \mathbf{1} \{\mu_\ell \in [-d^{-\frac13} , d^{-\frac13}] \}$. Note that $\mathbb{E}[z_\ell]   \leq c d^{-\frac13} $. 

Define the event $\event:= \{|\bw_\ell^\sT\bw_k|\le d^{-1/2}\sqrt{C \log (d)},\; \forall \ell,k\in[N]\}$. Since $\bw_\ell\sim_{i.i.d} \Unif(\mathbb{S}^{d-1})$, it is easy to see that $\prob(\event) \ge 1 - N^2 d^{-C}$. We also have 
\begin{align*}
\prob(\mu_\ell = 1, \mu_k = 1) = \int_{\mu \in  [-d^{-\frac13} , d^{-\frac13}] } f(\mu_\ell = \mu) \prob (\mu_k \in  [-d^{-\frac13} , d^{-\frac13}] \mid \mu_\ell = \mu) \text{d}\mu,
\end{align*}
where $f$ denotes pdf of $\mu_\ell$. Now, given $\mu_\ell = \mu$, the distribution of $\mu_k$ is $\normal(\mu \rho_{\ell,k}, (1-\rho_{\ell,k}^2) )$. It is easy to see that on the event $\event$, for $|\mu| \leq d^{-\frac 13} $ we have 
$$ \prob(\mu_k \in  [-d^{-\frac13} , d^{-\frac13}] \mid \mu_\ell = \mu)  \leq c d^{-\frac 13}, $$
and thus
$$\mathbb{E}[z_\ell z_k] = \prob(\mu_\ell = 1, \mu_k = 1) \leq c d^{-\frac 13} \int_{\mu \in  [-d^{-\frac13} , d^{-\frac13}] } f(\mu_\ell = \mu)  \text{d}\mu \leq c d^{-\frac23}.$$

 Let us now consider the average $\bar{z} = \frac{1}{N} \sum_{\ell=1}^N z_\ell$. We have
\begin{equation*}
\mathbb{E}[ \left | \bar{z} - \mathbb{E}[\bar{z}] \right|^2]= \E[\bar{z}^2] - \E[\bar{z}]^2\le \E[\bar{z}^2] \leq \frac{1}{N^2} \sum_{\ell,k =1}^N \mathbb{E}[z_\ell z_k] \leq c d^{-\frac 23},
\end{equation*}
and thus we obtain via the Chebyshev's inequality that
\begin{align*}
\prob \left \{ | \bar{z} - \mathbb{E}[\bar{z}] | \geq d^{-\frac{1}{3}} \log (d);\; \event  \right \} \leq \frac{c}{(\log (d))^2}\,.
\end{align*}
Now, by noticing that $|C(\x)|/N = \bar{z}$, and $\mathbb{E}[\bar{z}] \leq c d^{-\frac 13}$, along with the assumption that $N,d$ grows proportionally we obtain
$$ \prob\left\{ |C(\x)| \geq C d\times d^{-\frac{1}{3}} \log (d);\,\event  \right \} \leq \frac{c}{(\log (d))^2}. $$
 Finally, we have
 $$ \prob\left\{ |C(\x)| \geq C  d^{\frac{2}{3}} \log (d)  \right \} \leq \frac{c}{\log(d)^2} + \prob(\event^c) \le \frac{c}{(\log (d))^2} + N^2 d^{-C}\,. $$

\subsection{Proof of Proposition~\ref{propo:Cth-L0}}
Recall the loss $\cL(\bth)$ given by
\[
\cL(\bth):=\max_{\twonorm{\bdelta_i}\le \eps}  \frac{1}{2n} \sum_{i=1}^n \left(y_i - \bth^\sT \sigma(\bW(\bx_i + \bdelta_i)\right)^2 + \frac{\zeta}{2} \bth^\sT \bOmega \bth\,,
\]
and $\hth=\arg\min \cL(\bth)$. 
Bounding $\twonorm{\hth}$ is straightforward. By optimality of $\hth$ and comparing the loss at $\hth$ and $\boldsymbol{0}$
we get
\[
\frac{\zeta}{2} \hth^\sT\bOmega \hth \le \cL(\boldsymbol{0}) = \frac{1}{2n} \sum_{i=1}^n y_i^2< C,
\]
with probability at least $1-e^{-cn}$, for absolute constants $c,C>0$. Since $\bOmega\succeq \Iden$, this implies that
$\twonorm{\bth} \le C_0$ for sufficiently large $C_0$. 

To bound $\infnorm{\hth}$ we just need to bound any given entry of $\hth$, e.g. its last entry, with high probability.  By symmetry, all the entries
have the same marginal distribution. Consequently, each entry of $\hth$ can be analyzed in the same way
and $\infnorm{\hth}$ can then be controlled by using the union bound.

With a slight abuse of notation, we consider a $(N+1)$ dimensional version of the above optimization over $[\bth; u]$ and denote the last coordinate of the optimal solution by $\hat{u}$. Let $\lambda: = \frac{\sqrt{\log(d)}}{d}$ and
\begin{align*}
\bOmega = \begin{pmatrix}
\widetilde{\bOmega}& \lambda \ones\\
\lambda \ones^\sT & \lambda+1\,
 \end{pmatrix},
\end{align*}
 where $\widetilde{\bOmega}$ is of size $N$ and $\ones = (1, 1, \dotsc, 1)\in \reals^{N}$.
The last coordinate $\hat{u}$ can be expressed as
\begin{align}\label{eq:fu}
\hat{u} = \arg\min_u \min_{\bth} &\bigg[\frac{1}{2n} \sum_{i=1}^n \max_{\twonorm{\bdelta_i}\le \eps} \left(y_i - \bth^\sT \sigma(\bW(\bx_i + \bdelta^*_i) - {u} \sigma(\bw_{N+1}^\sT(\bx_i+\bdelta^*_i)) \right)^2\nonumber\\
& + \frac{\zeta}{2} (\bth^\sT \widetilde{\bOmega}\bth +2 \lambda (\ones^\sT \bth) u +(\lambda+1) u^2)\bigg]\,.
\end{align}
We next define $f(u)$ as the objective function of $u$ in \eqref{eq:fu}. We proceed by deriving a lower bound for $f(u)$. 

Let $\bth_*$ be the optimal $\bth$ if we set $u = 0$ and denote by $\delu_i$ the maximizing $\delta_i$, when $u=0$. Note that $\delu_i$ is in general a function of $\bth$. In addition, define
\begin{align*}
\ell([\bth, u]) &=  \frac{1}{2n} \sum_{i=1}^n \left(y_i - \bth^\sT \sigma(\bW(\bx_i + \delu_i) - {u} \sigma(\bw_{N+1}^\sT(\bx_i+\delu_i)) \right)^2\,,\\
Q([\bth, u]) &=  \frac{\zeta}{2} (\bth^\sT \widetilde{\bOmega}\bth +2 \lambda (\ones^\sT \bth) u + (\lambda+1) u^2)\,.
\end{align*}
Since the pointwise maximum of convex functions is convex, the function $\ell(\cdot)$ is convex and hence we have
\begin{align}\label{eq:ell1}
\ell([\bth;u]) \ge \ell([\bth_*;0])+ \<\nabla_{\bth}\ell([\bth,u]) |_{[\bth_*;0]}, \bth-\bth_*\> +  \nabla_{u}\ell([\bth,u])|_{[\bth_*;0]} u\,.
\end{align}

For quadratic function $Q([\bth;u])$ we have
\begin{align}\label{eq:Q1}
Q([\bth;u]) &=  \frac{\zeta}{2}\bth_*^\sT\widetilde{\bOmega} \bth_*  + \zeta \bth_*^\sT\widetilde{\bOmega}(\bth-\bth_*) + \frac{\zeta}{2} (\bth-\bth_*)^\sT \widetilde{\bOmega} (\bth-\bth_*) + \frac{\zeta}{2}(2\lambda u \ones^\sT \bth + (\lambda+1)u^2 )\nonumber\\
&=Q([\bth_*;0]) + \zeta \left( \bth_*^\sT \widetilde{\bOmega} (\bth - \bth_*) + \lambda (\ones^\sT \bth_*) u\right)+ \frac{\zeta}{2}
\bigl\{(\bth-\bth_*)^\sT \widetilde{\bOmega} (\bth-\bth_*) \\
& \quad  + (\lambda+1)u^2+ 2\lambda u \ones^\sT(\bth-\bth_*)\bigr \}\,.
\end{align}
Combining \eqref{eq:ell1} and \eqref{eq:Q1} we get
\begin{align}
\cL([\bth;u]) &\ge \ell([\bth;u])+ Q([\bth;u])\nonumber\\
&\ge \cL([\bth_*;0]) + \<\nabla_{\bth}\ell([\bth,u]) |_{[\bth_*;0]} + \zeta \bth_*^\sT\widetilde{\bOmega}, \bth-\bth_* \>
+ ( \nabla_{u}\ell([\bth,u])|_{[\bth_*;0]} + \zeta \lambda \ones^\sT \bth_*) u\nonumber\\
&+ \frac{\zeta}{2}
\left\{(\bth-\bth_*)^\sT \widetilde{\bOmega} (\bth-\bth_*) + (\lambda+1) u^2+ 2\lambda u \ones^\sT(\bth-\bth_*)\right\}\label{eq:L1}.
\end{align}
Here, the first inequality holds since $\cL(\cdot)$ involves maximization over $\bdelta_i$, while in definition of $\ell(\cdot)$ we consider $\delu_i$. Though, note that $\cL([\bth_*,0]) = \ell([\bth_*;0])+ Q([\bth_*;0])$ because when $u=0$, $\delu_i$ are the maximizing perturbations by definition. We used this observation in the second inequality above.
 
We argue that the second term in the right-hand side is zero. To see this, first write the partial derivative $\nabla_\bth \ell$ as
\begin{align*}
&\nabla_{\bth}\ell([\bth,u]) \\
&= -\frac{1}{n} \sum_{i=1}^n \left(y_i - \bth^\sT \sigma(\bW(\bx_i + \delu_i) - {u} \sigma(\bw_{N+1}^\sT(\bx_i+\delu_i)) \right)\times \\
&\quad \quad\quad\quad\quad\quad\quad\quad\quad\quad\quad\quad\quad\quad
 \left(\frac{\partial}{\partial \bth}[\bth^\sT \sigma(\bW(\bx_i + \delu_i)] + u \frac{\partial }{\partial \bth}\sigma(\bw_{N+1}^\sT(\bx_i+\delu_i)) \right)\,.
\end{align*}
(Note that $\delu_i$ is a function of $\bth$.) Therefore,
\begin{align}\label{eq:dev-ell-th}
\nabla_{\bth}\ell([\bth,u])|_{[\bth^*;0]} = -\frac{1}{n} \sum_{i=1}^n \left(y_i - \bth_*^\sT \sigma(\bW(\bx_i + \delu_i) \right) \left(\frac{\partial}{\partial \bth}[\bth^\sT \sigma(\bW(\bx_i + \delu_i)] \Big|_{[\bth_*;0]} \right)\,.
\end{align}
By using the first-order optimality condition for $\bth_*$ we have
\begin{align*}
&\nabla_{\bth}\ell([\bth,u]) |_{[\bth_*;0]} + \zeta \bth_*^\sT\widetilde{\bOmega}\\
&= -\frac{1}{n} \sum_{i=1}^n \left(y_i - \bth_*^\sT \sigma(\bW(\bx_i + \delu_i) \right) \left(\frac{\partial}{\partial \bth}[\bth^\sT \sigma(\bW(\bx_i + \delu_i)] \Big|_{[\bth_*;0]} \right) + \zeta \bth_*^\sT\widetilde{\bOmega} = 0\,.
\end{align*}
Using the above relation in~\eqref{eq:L1} we arrive at
\begin{align*}
\cL([\bth;u]) 
&\ge \cL([\bth_*;0])  
+ ( \nabla_{u}\ell([\bth,u])|_{[\bth_*;0]} + \zeta \lambda \ones^\sT \bth_*) u \\
&\quad\quad\quad \quad\quad\quad + \frac{\zeta}{2}
\left\{(\bth-\bth_*)^\sT \widetilde{\bOmega} (\bth-\bth_*) + (\lambda+1) u^2+ 2\lambda u \ones^\sT(\bth-\bth_*)\right\}\,.
\end{align*}
Therefore, by minimizing the both sides over $\bth$ we obtain
\begin{align}
f(u) &= \min_{\bth} \cL([\bth;u])\nonumber\\
& \ge f(0) + ( \nabla_{u}\ell([\bth,u])|_{[\bth_*;0]} + \zeta \lambda \ones^\sT \bth_*) u\nonumber\\
&+ \min_{\bth} \frac{\zeta}{2}
\left\{(\bth-\bth_*)^\sT \widetilde{\bOmega} (\bth-\bth_*) + (\lambda+1) u^2+ 2\lambda u \ones^\sT(\bth-\bth_*)\right\}\nonumber\\
&=  f(0) + ( \nabla_{u}\ell([\bth,u])|_{[\bth_*;0]} + \lambda \ones^\sT \bth_*) u + \frac{\zeta}{2}u^2(1+\lambda - \lambda^2\ones^\sT \widetilde{\bOmega}^{-1}\ones)\label{eq:fu_0}.
\end{align}
By definition of $\widetilde{\bOmega}$, it has $\ones$ as an eigenvector with eigenvalue $1+\lambda d$. So, 
\begin{align}\label{eq:lambda_LB}
1 +\lambda- \lambda^2\ones^\sT \widetilde{\bOmega}^{-1}\ones \ge 1 +\lambda - \frac{\lambda^2d}{1+\lambda d} > 1\,.
\end{align}
By optimality of $\hat{u}$, we have $f(\hat{u})\le f(0)$, which together with~\eqref{eq:fu_0} and \eqref{eq:lambda_LB} imply that  
\begin{align}\label{eq:uB0}
|\hat{u}| \le \frac{2}{\zeta} \left| \nabla_{u}\ell([\bth,u])|_{[\bth_*;0]} + \zeta \lambda \ones^\sT \bth_* \right|.
\end{align}
We next bound the terms on the right-hand side separately. We have
\[
\lambda \ones^\sT \bth_* \le \lambda \twonorm{\ones}\twonorm{\bth_*} \le \sqrt{\frac{\log (d)}{d}} \twonorm{\bth_*}\,.
\]
By optimality of $\bth_*$ (when we set $u=0$) and comparing it with $\boldsymbol{0}$ we get 
\[
\frac{\zeta}{2} \bth_*^\sT \widetilde{\bOmega}\bth_* \le \frac{1}{2n}\sum_{i=1}^n y_i^2 < C\,,
\]
with probability at least $1 - e^{-cn}$.

Sine $\widetilde{\bOmega}\succeq \Iden$, this implies that $\lambda \ones^\sT \bth_*  = O_{\prob}(\sqrt{{\log (d)}/{d}})$.

To bound the other term, recall that by definition $\frac{\partial}{\partial u}\delu_i = 0$ and so  $\nabla_{u}\ell([\bth,u])|_{[\bth_*;0]}$ is given by
\begin{align}
\nabla_{u}\ell([\bth,u])|_{[\bth_*;0]} =  \frac{1}{n} \sum_{i=1}^n \left(y_i - \bth_*^\sT \sigma(\bW(\bx_i + \delu_i) \right) \sigma(\bw_{N+1}^\sT(\bx_i+\delu_i))\,.
\end{align}
To simplify the notation define $m_i: = \frac{1}{\sqrt{n}} (y_i - \bth_*^\sT \sigma(\bW(\bx_i + \delu_i) )$ and $\bX = [\bx_1|\dotsc|\bx_n]^\sT$. Consider the following event:
\[
\event: = \left\{\twonorm{\boldsymbol{m}} \le C, \; \frac{1}{\sqrt{d}} \opnorm{\bX}\le C \right\}\,,
\]
where $\boldsymbol{m}= (m_1,\dotsc, m_n)^\sT$ and $C>0$ is a sufficiently large constant. We show that $\event$ is a high probability event. To see this, first observe that
\begin{align}\label{eq:m-B}
\twonorm{\boldsymbol{m}}^2 = \frac{1}{n}  \sum_{i=1}^n \left(y_i - \bth_*^\sT \sigma(\bW(\bx_i + \delu_i) \right)^2\le \frac{1}{n}
\sum_{i=1}^n y_i^2 \,,
\end{align}
where the inequality follows from optimality of $\bth_*$ and comparing the loss value $\cL([\bth_*; 0])$ with $\cL([\boldsymbol{0};0])$. Therefore,
\[
\prob(\twonorm{\boldsymbol{m}} > C) \le \prob(\frac{1}{\sqrt{n}} \twonorm{\by}> C)\le e^{-C' n}\,,
\]
for absolute constants $C, C'$ (depending on the noise variance $\tau^2$). Also, given that $\bX$ has i.i.d standard normal entries we have
\[
\prob(\frac{1}{\sqrt{d}} \opnorm{\bX} > C) \le 2e^{-c n}\,.
\]
 Putting the last two bounds together we obtain $\prob(\event^c)\le 3 e^{-cn}$. 
 
Let $\cF$ be the $\sigma$-algebra generated by an arbitrary $\bX,\bW,\by$ in $\event$. 
 Clearly, $m_i$ are measurable with respect to $\cF$. Also, $\bw_{N+1}$ is drawn independently from $\cF$ and hence conditioned on that $\bw_{N+1}^\sT \bx_i \sim\normal(0,1)$. Since $\E[\sigma(G)] = 0$ for $G\sim \normal(0,1)$, we have $\E[\sigma(\bw_{N+1}^\sT \bx_i )|\cF] = 0$. To bound $\nabla_{u}\ell([\bth,u])|_{[\bth_*;0]}$ we view that as a function of $\bw_{N+1}$ and condition on $\cF$. We then have
 \[
 \E\left[\nabla_{u}\ell([\bth,u])|_{[\bth_*;0]} | \cF \right] = \frac{1}{\sqrt{n}}\sum_{i=1}^n m_i \E[\sigma(\bw_{N+1}^\sT \bx_i )|\cF] = 0\,.
 \]
 Also this is a Lipschitz continuous function of $\bw_{N+1}$ with a Lipschitz factor at most $\frac{C}{\sqrt{d}} \twonorm{\bX\boldsymbol{m}} \le \frac{1}{\sqrt{\psi_2}} \frac{1}{\sqrt{d}}\opnorm{\bX} \twonorm{\boldsymbol{m}} \le \frac{C^2}{\sqrt{\psi_2}}: = C_0$.
Since $\bw_{N+1}$ is chosen uniformly at random from the unit sphere, we can apply the concentration bound for Lipschitz function (see e.g.~\cite[Theorem 5.1.4]{vershynin2018high}), which implies that
 \begin{align}
 \prob\left(\nabla_{u}\ell([\bth,u])|_{[\bth_*;0]} > t \right)\le 2e^{-c' dt^2}\,.
 \end{align}
 Choosing $t= C\sqrt{\frac{\log(d)}{d}}$ and using this bound in~\eqref{eq:uB0} we get 
 \[
 |\hat{u}|\le C'\sqrt{\frac{\log(d)}{d}}\,, 
 \]
 with probability at least $1 - 4e^{-cn} - 2d^{-c'C^2}$. The result follows by choosing $C>0$ large enough so that $c'C^2>1$ and union bounding over the $N$ coordinates of $\hth$, along with the assumption that $N, n, d$ grow at the same order.

\subsection{Proof of Proposition~\ref{propo:L-L1}}
We know from the result of Proposition~\ref{propo:simple1}, or more precisely the equations \eqref{adv_pert_linear}-\eqref{adv_pert_linear2} in its proof, that with probability $1 - o_d(1)$ we have

$$ \sup_{\bth \in \cC_{\bth} } \left|  \max_{\twonorm{\bdelta}\le \eps}  \left| y - \bth^\sT \sigma(\bW(\bx + \bdelta))\right| - \left(  |y - \bth^\sT \sigma(\bW\bx)| + \eps \twonorm{\bW^\sT  {{\rm diag}}\left({\ind(\bW \bx > 0)}\right)\; \bth} \right)    \right| = \alpha_d, $$
where $\alpha_d =   O\left(\frac{\log (d) }{d^{1/6}}\right) $.

One can thus write from \eqref{R-ERM} and \eqref{eq: clo} that for any $\bth \in \mathcal{C}_{\bth}$
\begin{align*}
\left|\cL(\bth)-\cLo(\bth) \right| & \leq \alpha_d^2 + \alpha_d \frac{1}{n} \sum_{i=1}^n \max_{\twonorm{\bdelta_i} \leq \varepsilon} \left |y_i - \bth^\sT\sigma(\W (\x_i + \bdelta_i)) \right|  \\
&\leq \alpha_d^2 + \alpha_d \frac{1}{n} \sum_{i=1}^n   \left |y_i - \bth^\sT\sigma(\W \x_i ) \right|  + \alpha_d \twonorm{\bth} || \W|| \, \varepsilon \\
&\leq \alpha_d^2 + \alpha_d \frac{1}{n} \sum_{i=1}^n   \left |y_i - \bth^\sT\sigma(\W \x_i) \right|  + c_1 \alpha_d,
\end{align*}
where $c_1 > 0$ is an absolute constant.  The second inequality follows from the fact that the ReLU function is 1-Lipschitz, and the third inequality follows from $\bth \in \mathcal{C}_{\bth}$ as well as the fact that $||\W||$ is bounded. 

We will now show that 
\begin{equation} \label{sum_abs_diff_Ls}
\sup_{\bth \in \mathcal{C}_{\bth}} \frac{1}{n} \sum_{i=1}^n   \left |y_i - \bth^\sT\sigma(\W \x_i) \right| = O_{d,\prob}(d^{{\frac{1}{12}}}).
\end{equation}
It is easy to see that proving the above relation will finish the proof as $\alpha_d =   O\left(\frac{\log (d) }{d^{1/6}}\right) $.

Fix a $\bth$ such that $\twonorm{\bth} \leq C$. Recall that $(\x_i,y_i)$ are generated i.i.d. according to the the distribution~\eqref{eq:linearModel}. Since the random variables $|y_i - \bth^\sT\sigma(\W \x_i)|$ are sub-gaussian (see e.g. \eqref{sub-gauss-a} in Lemma~\ref{concentration_feature}), we can write 
\begin{equation} \label{eps-net-diff-L-1}
 \prob\left( \frac{1}{n} \sum_{i=1}^n   \left |y_i - \bth^\sT\sigma(\W \x_i) \right| \geq d^{\frac{1}{12}}\right) \leq c_2 e^{-c_2 \, d^{\frac{7}{6}}},
 \end{equation}
for an absolute constant $c_2>0$. 

Now, to prove \eqref{sum_abs_diff_Ls}, we use an $\epsilon$-net argument. Consider a $1$-net of the set $\{\bth: \twonorm{\bth} \leq C\}$. We know that such a $1$-net $\mathcal{S}$ exists with size at most $|\mathcal{S}| \leq 2^{c_3 d}$ where $c_3 >0$ is an absolute constant. Let $\bth_1 \in \mathcal{S}$ be a vector in this net, and consider another vector $\bth_2$ in the $1$-neighborhood of $\bth_1$ -- i.e. $\twonorm{\bth_1 - \bth_2} \leq 1$. We can write
\begin{align} \nonumber
\left| \frac{1}{n} \sum_{i=1}^n   \left |y_i - \bth_1^\sT\sigma(\W \x_i) \right|  -  \frac{1}{n} \sum_{i=1}^n   \left |y_i - \bth_2^\sT\sigma(\W \x_i) \right| \right| \nonumber
&\leq \frac{1}{n} \sum_{i=1}^n   \left| (\bth_1 - \bth_2)^\sT\sigma(\W \x_i)  \right| \\\nonumber
&= \frac{1}{n} \onenorm{(\bth_1 - \bth_2)^\sT \bM} \\\nonumber
 &\leq \frac{1}{\sqrt{n}} ||M|| \twonorm{\bth_1 - \bth_2}\\ \label{eps-net-diff-L-2}
 & \leq \frac{1}{\sqrt{n}} ||M||,
\end{align}
where the matrix $\bM$ is defined as $\bM = \left[\sigma(\W \x_1) \,|\, \sigma(\W \x_2) \,| \, \cdots \, | \, \sigma(\W \x_n) \right]$. Now, since the random vectors $\sigma(\W \x_i)$ are independently generated and  sub-gaussian (see \eqref{sub-gauss-a}), we can conclude that
\begin{equation} \label{eps-net-diff-L-3}
\prob \left( ||M|| \geq c_4 \sqrt{n} \right) \leq c_5 e^{- c_5 d},
\end{equation}
 for absolute constants $c_4, c_5 > 0$ (recall that  $d$, $n$, and $N$ grow proportionally as per Assumption~\ref{assumption1}).  As a result, from \eqref{eps-net-diff-L-2} and \eqref{eps-net-diff-L-3}, we have
  \begin{align}  \label{eps-net-diff-L-4}
\prob \left( \sup_{\bth_1, \bth_2: \twonorm{\bth_1 - \bth_2} \leq 1} \left| \frac{1}{n} \sum_{i=1}^n   \left |y_i - \bth_1^\sT\sigma(\W \x_i) \right|  -  \frac{1}{n} \sum_{i=1}^n   \left |y_i - \bth_2^\sT\sigma(\W \x_i) \right| \right| \geq c_4 \right) \leq c_5 e^{- c_5 d}. 
\end{align}

 Now, by using \eqref{eps-net-diff-L-1} and \eqref{eps-net-diff-L-4}, and a union bound argument over $\mathcal{S}$, we obtain:
 \begin{align*} 
 \prob\left(\sup_{\bth: \twonorm{\bth} \leq C} \frac{1}{n} \sum_{i=1}^n   \left |y_i - \bth^\sT\sigma(\W \x_i) \right| \geq d^{\frac{1}{12}} + c_4 \right)  &\leq  \prob\left(\sup_{\bth \in \mathcal{S} }  \frac{1}{n} \sum_{i=1}^n   \left |y_i - \bth^\sT\sigma(\W \x_i) \right| \geq d^{\frac{1}{12}}\right)  +  c_5 e^{- c_5 d}\\
 &\leq c_2 e^{c_3 d -c_2 d^{\frac{7}{6}}}+  c_5 e^{- c_5 d} = O(e^{- c_5 d}). 
 \end{align*}
 The claim \eqref{sum_abs_diff_Ls} now follows because $\mathcal{C}_{\bth} \subseteq \{\bth: \twonorm{\bth} \leq C\}$.

\section{Proofs of step 2: Concentration of the adversarial effects} \label{proofs_in_step_2}
\subsection{Proof of Proposition~\ref{propo:concentration}}
Recall the high probability event \rev{$\event_{\bW}$} given by~\eqref{eq:eventbW_main}.
We also define the event $\event_{\bx}:= \{\twonorm{\bx_i}\le \sqrt{5d}, \; \forall i\in[n]\}$. Since $\bx_i\sim\normal(0,\Iden_d)$, $\twonorm{\bx_i}^2\sim\chi^2_d$ is a chi-squared distribution with $d$ degrees of freedom. Using chi-squared distribution tail bound (see e.g.~\cite[lemma 1]{laurent2000adaptive}) along with a union bound over $i\in[n]$, we obtain
$\prob(\event_{\bx}) \ge 1- ne^{-d}$. Since $d$ and $n$ grow proportionally as per Assumption~\ref{assumption1}, both of the events $\event_{\bW}$ and $\event_{\bx}$ are high probability events, and so it suffices to prove the claim~\ref{eq:propo-eq} on the event $\event_{\bW}\cap \event_{\bx}$. 

To prove the proposition, we first state the following lemma which establishes a deviation bound for a fixed $\bth\in \cC_\bth$ and fixed $i\in[n]$.
\begin{lemma}\label{lem:HW}
For any fixed $\bth\in\cC_\bth$ and fixed $i\in[n]$, the following holds :
\[
\prob_{\bx_i}\left\{|\eta_i(\bth)^2-\E[\eta_i(\bth)^2] |\ge \gamma;\;\event_{\bW}\cap\event_{\bx} \right\}\le \frac{c \log^6(d)}{d\gamma^2}\,, 
\]
for some absolute constant $c>0$.
\end{lemma}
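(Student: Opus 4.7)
The plan is to bound $\text{Var}_{\bx_i}(\eta_i(\bth)^2)$ and apply Chebyshev's inequality, conditioning throughout on the high-probability event $\event_\bW$ (so that $\bW$ is treated as fixed with bounded operator norm and nearly-orthogonal rows). Writing $s_\ell := \ind(\<\w_\ell,\bx_i\>>0)$ and $M := \bW\bW^\sT$, the normalization $\twonorm{\w_\ell}=1$ gives $M_{\ell\ell}=1$, and a direct expansion yields
\[
\eta_i(\bth)^2 \;=\; \bth^\sT \diag{\ind(\bW\bx_i>0)}\,\bW\bW^\sT \diag{\ind(\bW\bx_i>0)}\,\bth \;=\; \sum_{\ell,k\in[N]} M_{\ell k}\,\theta_\ell\theta_k\, s_\ell s_k.
\]
Splitting this into its diagonal part $A=\sum_\ell \theta_\ell^2 s_\ell$ and off-diagonal part $B=\sum_{\ell\ne k} M_{\ell k}\theta_\ell\theta_k s_\ell s_k$, we get $\text{Var}(\eta_i^2)\le 2\text{Var}(A)+2\text{Var}(B)$. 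The lemma will then follow once I establish $\text{Var}_{\bx_i}(\eta_i^2)\le C\log^6(d)/d$ on $\event_\bW$, since Chebyshev produces the claimed rate $c\log^6(d)/(d\gamma^2)$ (restricting to $\event_\bx$ only reduces the probability).

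Since each $\<\w_\ell,\bx_i\>$ is a standard Gaussian, every $s_\ell$ is Bernoulli$(1/2)$ and the joint law of $(s_\ell)_\ell$ is encoded in the Gaussian orthant probabilities. I will use the classical identity $\E[s_\ell s_k]=\tfrac14+\tfrac{1}{2\pi}\arcsin(M_{\ell k})$ (this is precisely what underlies equation~\eqref{eq:J}), so that $\text{Cov}(s_\ell,s_k)=\arcsin(M_{\ell k})/(2\pi)$. On $\event_\bW$ we have $|M_{\ell k}|\le \log(d)/\sqrt d$ for $\ell\ne k$, so the Taylor expansion $\arcsin(M_{\ell k})=M_{\ell k}+O(M_{\ell k}^3)$ has a tiny remainder. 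Combining this with the operator-norm bound $\opnorm{M}=\opnorm{\bW}^2=O(1)$ and the definition of $\cC_{\bth}$ (i.e.\ $\infnorm{\bth}\le C_0\sqrt{\log(d)/d}$, $\twonorm{\bth}\le C_0$), I obtain
\[
\text{Var}(A)\;\le\; \tfrac14\sum_\ell \theta_\ell^4 + C\bigl|(\bth^{\odot 2})^\sT M\,\bth^{\odot 2}\bigr| + \text{(cubic remainder)},
\]
with $(\bth^{\odot 2})^\sT M\,\bth^{\odot 2}\le \opnorm{M}\sum_\ell \theta_\ell^4 \le C\infnorm{\bth}^2\twonorm{\bth}^2 = O(\log(d)/d)$ and the cubic remainder even smaller; hence $\text{Var}(A)=O(\log(d)/d)$.

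The delicate step is $\text{Var}(B)$, which involves 4-way covariances $\text{Cov}(s_\ell s_k,s_{\ell'}s_{k'})$ arising from 4-dimensional Gaussian orthant probabilities. I will bound these by polynomial expressions in the pairwise correlations within $\{\ell,k,\ell',k'\}$: when $\{\ell,k\}$ and $\{\ell',k'\}$ are disjoint, near-independence of the corresponding Gaussians forces the covariance to be $O(\log^2(d)/d)$, and combined with the weights $M_{\ell k}M_{\ell' k'}=O(\log^2(d)/d)$ and the four $\theta$-factors, this yields a small contribution despite the $O(N^4)$ index terms. When the two pairs share one or two indices, there are only $O(N^3)$ or $O(N^2)$ terms and the extra combinatorial factor is compensated by one or two fewer small off-diagonal $M$-factors. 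Assembling these estimates via Cauchy--Schwarz, operator- and Frobenius-norm bounds on $M$ and $M^{\odot 2}$, and the $\bth$-bounds from $\cC_{\bth}$, yields $\text{Var}(B)\le C\log^6(d)/d$ and closes the proof.

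The main obstacle I anticipate is the combinatorial bookkeeping for $\text{Var}(B)$: the sum has $O(N^4)=O(d^4)$ terms, and extracting the $1/d$ rate requires simultaneously exploiting the off-diagonal magnitude bound $|M_{\ell k}|\le \log(d)/\sqrt d$, the near-independence of $s_\ell$'s indexed by nearly-orthogonal $\w_\ell$'s, and the smallness $\infnorm{\bth}=O(\sqrt{\log(d)/d})$. A cleaner but conceptually heavier alternative is to smooth each indicator $s_\ell$ to a Lipschitz surrogate $\phi_\kappa(\<\w_\ell,\bx_i\>)$, apply Gaussian (Borell--TIS) concentration to the smoothed version of $\eta_i^2$ with a carefully optimized $\kappa$, and control the smoothing error separately. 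Either route produces the polylogarithmic factor $\log^6(d)$ from the combined effect of $|M_{\ell k}|\le \log(d)/\sqrt d$ on $\event_\bW$ and $\infnorm{\bth}\le C_0\sqrt{\log(d)/d}$.
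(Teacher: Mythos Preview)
Your Chebyshev-via-variance strategy is the right overall shape, but the treatment of $\text{Var}(B)$ has a genuine gap. For disjoint index pairs $\{\ell,k\}\cap\{\ell',k'\}=\emptyset$, the covariance $\text{Cov}(s_\ell s_k,s_{\ell'}s_{k'})$ is \emph{not} $O(\log^2(d)/d)$; it is only $O(\log(d)/\sqrt d)$. To see this, use the centered indicators $\tilde s_\ell=s_\ell-\tfrac12$ so that $s_\ell s_k-\E[s_\ell s_k]=\tfrac12(\tilde s_\ell+\tilde s_k)+(\tilde s_\ell\tilde s_k-\E[\tilde s_\ell\tilde s_k])$. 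The first summand already contributes
\[
\tfrac14\,\text{Cov}(\tilde s_\ell+\tilde s_k,\;\tilde s_{\ell'}+\tilde s_{k'})
=\tfrac{1}{8\pi}\bigl(\arcsin M_{\ell\ell'}+\arcsin M_{\ell k'}+\arcsin M_{k\ell'}+\arcsin M_{kk'}\bigr),
\]
which on $\event_\bW$ is of order $\log(d)/\sqrt d$, not $\log^2(d)/d$. With the correct magnitude, your naive ``$O(N^4)$ terms $\times$ weights $\times$ covariance'' count blows up like $d^{1/2}$ (and in fact even your stated count with the incorrect $O(\log^2(d)/d)$ only gives $O(\log^6(d))$, not $O(\log^6(d)/d)$). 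The needed $1/d$ cannot come from term-by-term magnitude bounds; it comes from the \emph{algebraic structure} of the leading covariance, e.g.\ $\sum_{\ell,k,\ell',k'}M_{\ell k}M_{\ell'k'}\theta_\ell\theta_k\theta_{\ell'}\theta_{k'}M_{\ell\ell'}=(\bth\odot M\bth)^\sT M(\bth\odot M\bth)\le\opnorm{M}^3\infnorm{\bth}^2\twonorm{\bth}^2=O(\log(d)/d)$. So to make your route work you must retain the exact first-order form of the four-orthant covariance, organize every resulting sum as a matrix expression, and separately bound the higher-order residuals---which is essentially a disguised Hermite decomposition of the indicator.

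The paper sidesteps this bookkeeping by doing that decomposition up front: writing $\ind(z>0)=\tfrac12+\tfrac{1}{\sqrt{2\pi}}z+\mu_*\varphi(z)$ with $\E[\varphi(G)]=\E[G\varphi(G)]=0$, it obtains
\[
\eta_i(\bth)^2=\tfrac{1}{2\pi}\twonorm{\bW^\sT\diag{\bth}\bW\bx_i}^2+\twonorm{\bh_i}^2+\sqrt{\tfrac{2}{\pi}}\,\langle\bh_i,\bW^\sT\diag{\bth}\bW\bx_i\rangle,
\]
with $\bh_i=\bW^\sT\diag{\bth}(\tfrac12\ones+\mu_*\bu_i)$ and $\bu_i=\varphi(\bW\bx_i)$. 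The first term is a Gaussian quadratic form handled by Hanson--Wright; the cross term is Lipschitz in $\bx_i$ and concentrates by Gaussian isoperimetry; the residual $\bh_i$ is controlled in $L^2$ using the key fact $\E[u_{\ell i}u_{ki}]=O((\bw_\ell^\sT\bw_k)^3)=O(d^{-3/2}\log^3(d))$, followed by Markov. This three-way split is exactly what captures the cancellation your direct variance computation is missing, and it is what you would effectively rediscover if you pushed your combinatorics through correctly. Your alternative smoothing idea is closer in spirit, but the paper's decomposition is cleaner because it does not require tuning a smoothing scale.
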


Proof of Lemma~\ref{lem:HW} is given in Section~\ref{proof:lem:HW}.

Fix $\bth\in \cC_\bth$ and recall our notation $\nu_i(\bth;\gamma):= \ind(|\eta_i(\bth)^2 -\E[\eta_i(\bth)^2]|>\gamma)$. Given that $\bx_i$ are i.i.d,  the random variables $\nu_i\in\{0,1\}$ are also i.i.d. Bernoulli random variables. Therefore, 
\begin{align}
\prob\left(\frac{1}{n}\sum_{i=1}^n \nu_i(\bth;\gamma) \ge \frac{1}{\sqrt{\log(d\gamma^2)}} \right) 
&=\prob\left(\sum_{i=1}^n \nu_i(\bth;\gamma) \ge \frac{n}{\sqrt{\log(d\gamma^2)}} \right) \nonumber\\ 
&\le\sum_{\ell = \frac{n}{\sqrt{\log(d\gamma^2)}}}^n {n \choose \ell} \E[\nu_1(\bth;\gamma)]^\ell (1-\E[\nu_1(\bth;\gamma)])^{n-\ell} \nonumber\\ 
&\le \E[\nu_1(\bth;\gamma)]^{\frac{n}{\sqrt{\log(d\gamma^2)}}} \sum_{\ell = \frac{n}{\sqrt{\log(d\gamma^2)}}}^n  {n \choose \ell}\nonumber\\
&\le 2^n \left(\frac{c \log^6(d)}{d\gamma^2}\right)^{\frac{n}{\sqrt{\log(d\gamma^2)}}} \le \left(2 c \log^6(d) \right)^n e^{-n\sqrt{\log(d\gamma^2)}}\,, \label{eq:nu_i}
\end{align}
where the last step follows from Lemma~\ref{lem:HW} by which  $\E[\nu_1(\bth;\gamma)] \le c\log^6(d)/(d\gamma^2)$ on the event $\event_{\bW}\cap \event_{\bx}$. 

Note that the above bound was for a fixed $\bth\in\cC_{\bth}$. In order to prove claim~\ref{eq:propo-eq} we use an $\eps$-net argument. 
We write
\begin{align}
\sup_{\bth\in \cC_\bth}\,\, \frac{1}{n}\sum_{i=1}^n \nu_i(\bth;\gamma)
&\le \sup_{\twonorm{\bth}\le C_0}\,\, \frac{1}{n}\sum_{i=1}^n \nu_i(\bth;\gamma) \nonumber\\
&= \sup_{\twonorm{\bth}= C_0}\,\, \frac{1}{n}\;\sum_{i=1}^n \nu_i(\bth;\gamma)  \nonumber\\
&= \sup_{\bth\in \mathbb{S}^{d-1}}\,\, \frac{1}{n}\;\sum_{i=1}^n \nu_i(\bth;\tfrac{\gamma}{C_0^2}) \label{eq:Sphere1}\,,
\end{align}
where the first step follows from definition of $\cC_\bth$; the second step follows from a simple scaling argument, and the third step follows from definition of $\eta_i^2(\bth)$ and $\nu_i(\bth;\gamma)$. We recall that $\mathbb{S}^{d-1}$ denotes the unit $(d-1)$-dimensional sphere. 

Next consider a $\eps$-net $\mathcal{N}$ of $\mathbb{S}^{d-1}$ for $\eps = c_0 \gamma$. By~\cite[Lemma 5.2]{Vers} we can choose the net $\mathcal{N}$ so that $|\mathcal{N}|\le (1+\frac{2}{c_0\gamma})^d$.  We use the lemma below to relate the quantity $\nu_i(\bth;\gamma)$ for $\bth\in \mathbb{S}^{d-1}$ to a $\bth\in \mathcal{N}$.
\begin{lemma}\label{lem:net}
For $\bth\in  \mathbb{S}^{d-1}$ choose $\tbth\in\mathcal{N}$ which approximates $\bth$ as $\twonorm{\bth-\tbth}\le c_0\gamma$.
On the event $\event_{\bW}$ we have the following for all $i\in[n]$:
\begin{align}
\nu_i(\bth;\gamma) = 1 \implies \nu_i(\tbth;\gamma(1 -2c_0\sqrt{\psi_{1,d}}-2c_0C)) = 1\,.
\end{align}
\end{lemma}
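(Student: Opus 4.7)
The plan is to establish Lemma~\ref{lem:net} via a standard Lipschitz-plus-triangle-inequality argument: on the event $\event_{\bW}$ the maps $\bth\mapsto\eta_i(\bth)^2$ and $\bth\mapsto \E\eta_i(\bth)^2$ are both Lipschitz on the unit sphere with a constant controlled by $\opnorm{\bW}^2$, so a large deviation of $\eta_i^2$ from its mean at $\bth$ will transfer, up to a small multiplicative slack, to a large deviation at any nearby net point $\tbth$.

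Concretely, I will introduce the matrix $\bA_i := \bW^\sT \diag{\ind(\bW\bx_i > 0)}$, so that $\eta_i(\bth) = \twonorm{\bA_i\bth}$ and $\eta_i(\bth)^2 = \bth^\sT \bA_i^\sT\bA_i \bth$. Left-multiplication by the diagonal $\{0,1\}$-projection cannot increase the operator norm, so on $\event_{\bW}$ one has the pointwise (in $\bx_i$) bound $\opnorm{\bA_i}\le \opnorm{\bW}\le \sqrt{\psi_{1,d}}+C$. For unit vectors $\bth,\tbth\in\mathbb{S}^{d-1}$ with $\twonorm{\bth-\tbth}\le c_0\gamma$, I will decompose
\begin{align*}
\eta_i(\bth)^2 - \eta_i(\tbth)^2 = (\bth-\tbth)^\sT \bA_i^\sT \bA_i \bth + \tbth^\sT \bA_i^\sT \bA_i (\bth-\tbth),
\end{align*}
and apply Cauchy--Schwarz to obtain $|\eta_i(\bth)^2-\eta_i(\tbth)^2| \le 2\opnorm{\bA_i}^2 \twonorm{\bth-\tbth} \le 2c_0\gamma(\sqrt{\psi_{1,d}}+C)^2$. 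Because this bound is deterministic on $\event_{\bW}$, the same estimate transfers to the expectations, giving $|\E\eta_i(\bth)^2 - \E\eta_i(\tbth)^2| \le 2c_0\gamma(\sqrt{\psi_{1,d}}+C)^2$ by integrating over $\bx_i$.

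Finally, assuming $\nu_i(\bth;\gamma)=1$, i.e.\ $|\eta_i(\bth)^2-\E\eta_i(\bth)^2|>\gamma$, the triangle inequality combined with the two Lipschitz estimates will give
\begin{align*}
|\eta_i(\tbth)^2-\E\eta_i(\tbth)^2| \;\ge\; \gamma - 4c_0\gamma(\sqrt{\psi_{1,d}}+C)^2,
\end{align*}
which matches the stated threshold $\gamma(1-2c_0\sqrt{\psi_{1,d}}-2c_0C)$ after absorbing the bounded factor $(\sqrt{\psi_{1,d}}+C)$ into the constant $C$ (which is fixed and finite on $\event_{\bW}$). The argument poses no serious obstacle; the only point requiring care is verifying that the Lipschitz bound on $\eta_i(\bth)^2$ survives taking expectations, which is immediate because the estimate depends on $\bx_i$ only through $\opnorm{\bA_i}$, and this is uniformly controlled on $\event_{\bW}$.
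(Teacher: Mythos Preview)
Your proposal is correct and follows essentially the same route as the paper's proof: define the quadratic form via the matrix $\diag{\ind(\bW\bx_i>0)}\bW\bW^\sT\diag{\ind(\bW\bx_i>0)}$, bound $|\eta_i(\bth)^2-\eta_i(\tbth)^2|$ by $2\opnorm{\bW}^2\twonorm{\bth-\tbth}$ via the bilinear decomposition, observe the same bound passes to the expectation since it is uniform in $\bx_i$ on $\event_{\bW}$, and finish with the triangle inequality. The only cosmetic discrepancy is that the paper writes the operator-norm bound as $\sqrt{\psi_{1,d}}+C$ (linear in $\opnorm{\bW}$) rather than your $(\sqrt{\psi_{1,d}}+C)^2$; your version is in fact the correct one, and since $c_0$ is later taken small enough to make the resulting threshold nonnegative, the precise constant is immaterial to the downstream argument.
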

We refer to Section~\ref{proof:lem:net} for the proof of Lemma~\ref{lem:net}.

Continuing from~\eqref{eq:Sphere1} and using Lemma~\ref{lem:net} we get
\begin{align}
\sup_{\bth\in \cC_\bth}\,\, \frac{1}{n}\; \sum_{i=1}^n \nu_i(\bth;\gamma)  
&\le  \sup_{\bth\in \mathbb{S}^{d-1}}\,\, \frac{1}{n}\; \sum_{i=1}^n \nu_i(\bth;\tfrac{\gamma}{C_0^2}) \nonumber\\
&\le  \sup_{\tbth\in \mathcal{N}}\,\, \frac{1}{n}\; \sum_{i=1}^n \nu_i(\tbth;\tfrac{\gamma}{C_0^2}(1 -2c_0\sqrt{\psi_{1,d}}-2c_0C))\,.\label{eq:net}
\end{align}
Let $\tgamma: = \tfrac{\gamma}{C_0^2}(1 -2c_0\sqrt{\psi_{1,d}}-2c_0C)$. By choosing the constant $c_0$ small enough we have $\tgamma \ge 0$. Using~\eqref{eq:nu_i} along with union-bounding over the net $\mathcal{N}$ we get
\[
\prob\left(\sup_{\tbth\in \mathcal{N}}\,\, \frac{1}{n}\; \sum_{i=1}^n \nu_i(\tbth;\tgamma)\ge \frac{1}{\sqrt{\log(d\tgamma^2)}}  \right)\le\left(1+\frac{2}{c_0\gamma}\right)^d  \left(2c \log^6(d) \right)^n e^{-n\sqrt{\log(d\tgamma^2)}}\,.
\]
Since $n$ and $d$ grow proportionally and also $\gamma,\tgamma$ are of same order,  the above event is a high probability event if $\log(1/\gamma) = o(\sqrt{\log (d)})$ or equivalently if $\tfrac{1}{\gamma}= e^{o(\sqrt{\log (d)})}$. 
The result follows by combining the above bound with~\eqref{eq:net}.
%
\subsection{Proof of Lemma~\ref{lem:HW}}\label{proof:lem:HW}
We decompose the step function as
\[
\ind(z>0) = \mu_0 + \mu_1 z+ \mu_* \varphi(z)\,,
\]
where for $G\sim\normal(0,1)$,  
\[\mu_0 := \E[\ind(G>0)] = \frac{1}{2}, \quad\mu_1 = \E[G\ind(G>0)] = \frac{1}{\sqrt{2\pi}},\quad  \mu_*^2 := \E[\ind(G>0)] - \mu_0^2-\mu_1^2 = \frac{1}{4}- \frac{1}{2\pi}\,.\]
Here, $\varphi(z)$ is the nonlinear component of the step function which is orthogonal to the constant and linear components in the following sense: $\E[\varphi(G)] = 0$ and $\E[G\varphi(G)] = 0$.  We write
\begin{align}\label{eq:sign-decompose}
\ind(\<\bw_\ell,\bx_i\> >0 ) = \frac{1}{2} + \frac{1}{\sqrt{2\pi}} \<\bw_\ell,\bx_i\> + \mu_* u_{\ell i}, \quad \text{where: }  u_{\ell i} :=  \varphi(\<\bw_\ell,\bx_i\>)\,,
\end{align}
noting that $\<\bw_\ell,\bx_i\>$ and $\<\bw_k,\bx_i\>$ are jointly Gaussian with 
\[\E(\<\bw_\ell,\bx_i\>^2) = \E(\<\bw_k,\bx_i\>^2) = 1,\quad  \E(\<\bw_\ell,\bx_i\>\<\bw_k,\bx_i\>) = \<\bw_k,\bw_\ell\>\,.\] 
Therefore, we have (see e.g.,~\cite[Table 1]{daniely2016toward})
\begin{align}
\E[\ind(\<\bw_\ell,\bx_i\>>0)\ind(\<\bw_k,\bx_i\>>0)] &=  \frac{\pi-\cos^{-1}(\<\bw_k,\bw_\ell\>)}{2\pi} \nonumber\\
&= \frac{1}{4}+\frac{1}{2\pi}\<\bw_\ell,\bw_k\> + O(\<\bw_\ell,\bw_k\>^3)\nonumber\\
& =\frac{1}{4}+\frac{1}{2\pi}\<\bw_\ell,\bw_k\> +O\left( d^{-3/2} \log^3(d) \right) \,. \label{eq:correlation1}
\end{align}
To bound the correlation of variables $u_{\ell i}, u_{k i}$, we write
\begin{align}
\mu_*^2 \E[u_{\ell i} u_{k i}] &= \E\Big[\Big\{\ind(\<\bw_\ell,\bx_i\>>0) - \frac{1}{2} - \frac{1}{\sqrt{2\pi}} \<\bw_\ell,\bx_i\> \Big\} \Big\{\ind(\<\bw_k,\bx_i\>>0) - \frac{1}{2} - \frac{1}{\sqrt{2\pi}} \<\bw_k,\bx_i\> \Big\}\Big] \nonumber\\
&= \E[\ind(\<\bw_\ell,\bx_i\>>0)\ind(\<\bw_k,\bx_i\>>0)] + \E\Big[\Big(\frac{1}{2} + \frac{1}{\sqrt{2\pi}} \<\bw_\ell,\bx_i\>\Big)\Big(\frac{1}{2} + \frac{1}{\sqrt{2\pi}} \<\bw_k,\bx_i\>\Big)\Big] \nonumber\\
&\quad- \E\Big[\ind(\<\bw_\ell,\bx_i\> > 0) \Big(\frac{1}{2} + \frac{1}{\sqrt{2\pi}} \<\bw_k,\bx_i\>\Big)\Big]
- \E\Big[\ind(\<\bw_k,\bx_i\> > 0)\Big(\frac{1}{2} + \frac{1}{\sqrt{2\pi}} \<\bw_\ell,\bx_i\>\Big)\Big].\label{eq:ulk}
%
\end{align}
The first term above is calculated in~\eqref{eq:correlation1}. For the second term, we have
\begin{align}\label{eq:correlation2}
\E\Big[\Big(\frac{1}{2} + \frac{1}{\sqrt{2\pi}} \<\bw_\ell,\bx_i\>\Big)\Big(\frac{1}{2} + \frac{1}{\sqrt{2\pi}} \<\bw_k,\bx_i\>\Big)\Big] = \frac{1}{4} + \frac{\<\bw_\ell,\bw_k\>}{2\pi}\,.
\end{align}
For the third term we write
\begin{align}
\E\Big[\ind(\<\bw_\ell,\bx_i\> > 0) \Big(\frac{1}{2} + \frac{1}{\sqrt{2\pi}} \<\bw_k,\bx_i\>\Big)\Big] &= 
\frac{1}{4} + \frac{1}{\sqrt{2\pi}} \E\Big[\ind(\<\bw_\ell,\bx_i\> > 0)\<\bw_k,\bx_i\>\Big]\nonumber\\
&=\frac{1}{4} + \frac{1}{\sqrt{2\pi}} \E\Big[\<\bw_k,\bx_i\>\Big| \<\bw_\ell,\bx_i\> > 0\Big]\, \prob(\<\bw_\ell,\bx_i\> > 0)\nonumber\\
&\stackrel{(a)}{=} \frac{1}{4} + \frac{1}{2\sqrt{2\pi}} \<\bw_\ell,\bw_k\> \frac{\phi(0)}{1-\Phi(0)} \nonumber\\
&= \frac{1}{4} + \frac{1}{2\pi} \<\bw_\ell,\bw_k\>\,. \label{eq:correlation3}
\end{align} 
Here $(a)$ follows from lemma below.
\begin{lemma}\label{lem:aux1}
For $Z_1$, $Z_2\sim\normal(0,1)$ with $\E[Z_1Z_2]=\rho$ we have
\[
\E[Z_1|\;Z_2 > z] = \rho \frac{\phi(z)}{(1-\Phi(z))}\,,
\]
where $\phi(z) = \tfrac{1}{\sqrt{2\pi}} e^{-z^2/2}$is the density of standard normal and $\Phi(z) = \int_{-\infty}^z \phi(t) \de t$ is its CDF.
\end{lemma}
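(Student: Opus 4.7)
The plan is to use the standard Gaussian decomposition to reduce the problem to a one-dimensional calculation involving only $Z_2$. Since $(Z_1, Z_2)$ are jointly Gaussian with correlation $\rho$, I can write
\[
Z_1 = \rho Z_2 + \sqrt{1-\rho^2}\, W,
\]
where $W \sim \normal(0,1)$ is independent of $Z_2$. Taking conditional expectations and using that $\{Z_2 > z\}$ is measurable with respect to $Z_2$ alone,
\[
\E[Z_1 \mid Z_2 > z] = \rho\, \E[Z_2 \mid Z_2 > z] + \sqrt{1-\rho^2}\, \E[W \mid Z_2 > z] = \rho\, \E[Z_2 \mid Z_2 > z],
\]
since the independence of $W$ from $Z_2$ makes the second conditional expectation equal to $\E[W] = 0$.

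It then remains to evaluate $\E[Z_2 \mid Z_2 > z]$. By the definition of conditional expectation,
\[
\E[Z_2 \mid Z_2 > z] = \frac{\E[Z_2 \,\ind(Z_2 > z)]}{\P(Z_2 > z)} = \frac{\int_z^\infty t\, \phi(t)\, \de t}{1-\Phi(z)}.
\]
Using the identity $\phi'(t) = -t\,\phi(t)$, the integral in the numerator equals $-\int_z^\infty \phi'(t)\,\de t = \phi(z)$, which yields
\[
\E[Z_2 \mid Z_2 > z] = \frac{\phi(z)}{1-\Phi(z)}.
\]
Combining the two displays gives the claimed formula $\E[Z_1 \mid Z_2 > z] = \rho\, \phi(z)/(1-\Phi(z))$.

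This is a very short, standard calculation and there is no real obstacle; the only point to flag is that the independence of $W$ and $Z_2$ (rather than mere uncorrelatedness) is what allows us to kill the cross term, and this is legitimate precisely because $(Z_1,Z_2)$ are \emph{jointly} Gaussian, which is already assumed.
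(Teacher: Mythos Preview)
Your proof is correct and takes essentially the same approach as the paper: both use the joint Gaussianity to reduce $\E[Z_1\mid Z_2>z]$ to $\rho\,\E[Z_2\mid Z_2>z]$ (the paper via the conditional distribution $Z_1\mid Z_2=z_2\sim\normal(\rho z_2,1-\rho^2)$, you via the equivalent decomposition $Z_1=\rho Z_2+\sqrt{1-\rho^2}\,W$), and then invoke the truncated-normal mean formula. Your version is slightly more self-contained in that you actually compute the integral $\int_z^\infty t\phi(t)\,\de t=\phi(z)$ rather than citing it.
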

Using Equations~\eqref{eq:correlation1}, \eqref{eq:correlation2} and \eqref{eq:correlation3} in~\eqref{eq:ulk} we obtain 
\begin{align}\label{eq:u-correlation}
\E[u_{\ell i} u_{k i}] = O\left(d^{-3/2} \log^3(d) \right)\,.
 \end{align}
 Substituting for the sign function $\ind(\<\bw_\ell,\bx_i\> >0 )$ from~\eqref{eq:sign-decompose} we get
 \begin{align}
\bW^\sT  \diag{\ind(\bW \bx_i > 0)}\; \bth &= \bW^\sT  \diag{\bth} {\ind(\bW \bx_i > 0)} \nonumber\\
&=\bW^\sT  \diag{\bth} \left(\frac{1}{2} \ones+  \mu_* \bu_{i}+ \frac{1}{\sqrt{2\pi}} \bW \bx_i \right)\,,
 \end{align}
 with $\bu_i = (u_{\ell i})_{\ell=1}^N $. \rev{To lighten} the notation, we use the shorthand $\bh_i := \bW^\sT  \diag{\bth} (\frac{1}{2} \ones+  \mu_* \bu_{i})$. 
 We next decompose $\eta_i(\bth)^2$ into three terms as follows:
 \begin{align}\label{eq:eta-dec}
 \eta_i(\bth)^2 = \frac{1}{2\pi}\twonorm{\bW^\sT\diag{\bth}\bW\bx_i}^2+\twonorm{\bh_i}^2+\sqrt{\frac{2}{\pi}}\<\bh_i, \bW^\sT  \diag{\bth}\bW \bx_i\>.
 \end{align}
 We next provide deviation bounds for each of these terms by putting which together we obtain the desired claim.
 

We start by the first term in~\eqref{eq:eta-dec}. Note that since $\bth\in \cC_{\bth}$, we have the following bounds conditioned on the event $\event_{\bW}$:
 \begin{align*}
 &\opnorm{\bW^\sT\diag{\bth} \bW \bW^\sT\diag{\bth}\bW} \le \opnorm{\bW}^4 \infnorm{\bth}^2 = O\left(d^{-1} \log(d) \right)\,,\\
 &\quad  \fronorm{\bW^\sT\diag{\bth} \bW \bW^\sT\diag{\bth}\bW}\le \sqrt{\min(d,N)}  \opnorm{\bW^\sT\diag{\bth} \bW \bW^\sT\diag{\bth}\bW} =O(d^{-0.5})\,.
\end{align*}
Therefore,  by applying Hanson-Wright's inequality~\cite{rudelson2013hanson} we get
\begin{align}\label{eq:term1}
\prob \left\{\left|\twonorm{\bW^\sT\diag{\bth}\bW\bx_i}^2 - \E\left[\twonorm{\bW^\sT\diag{\bth}\bW\bx_i}^2\right] \right| > {\gamma \log(d)};\,\event_{\bW }\right\} 
\le 2e^{-c \gamma^2 d}\,,
\end{align}
for an absolute constant $c>0$.

For the second term we bound variation in the vector $\bh_i$ itself from which we obtain a deviation bound on its norm $\twonorm{\bh_i}$.
We write
\begin{align}\label{eq:expec-B}
\E\Big[\twonorm{\bh_i-\E[\bh_i]}^2\Big] &= \mu_*^2 \E[\twonorm{\bW^\sT \diag{\bth} \bu_i}^2]\nonumber\\
&= \sum_{\ell,k} \<\bw_\ell,\bw_k\> \theta_\ell\theta_k \E[u_{\ell i} u_{ki}] \le \sum_{i,j} C \frac{1}{\sqrt{d}} \infnorm{\bth}^2 d^{-1.5} \log^3(d) = O\left(d^{-1} \log^4(d) \right)\,,
\end{align}
where we used the assumption $\bth\in\cC_\bth$ along with \eqref{eq:u-correlation}. 
We write $\bh_i = \E[\bh_i] + \bdelta$ and define the event $\event_{\bdelta}:= \{\twonorm{\bdelta} \leq \gamma\}$.
Therefore by using Markov's inequality along with~\eqref{eq:expec-B} we obtain $\prob(\event_{\bdelta})\ge 1-  c\frac{\log^4(d)}{d\gamma^2}$.
Furthermore,
\begin{align}
\twonorm{\bh_i}^2 = \twonorm{\E[\bh_i]}^2+ \twonorm{\bdelta}^2+ 2\<\bdelta,\E[\bh_i] \>.
\end{align}
On the event $\event_{\bW}$, we have $\twonorm{\E[\bh_i]} = \frac{1}{2}\twonorm{\bW^\sT \diag{\bth} \ones} \le \frac{1}{2} \opnorm{\bW} \|\bth\|_\infty \sqrt{N} \le C \sqrt{ \log(d)}$, and so  
$|\<\bdelta,\E[\bh_i] \>|\le C \twonorm{\bdelta}$. Hence, on the event $\event_{\bW}\cap \event_{\bdelta}$,
\[
\Big|\twonorm{\bh_i}^2- \twonorm{\E[\bh_i]}^2\Big| \le \gamma^2 + 2C  \sqrt{ \log(d)} \gamma = O\left(\gamma   \sqrt{ \log(d)} \right)\,.
\]
This implies that $\twonorm{\E[\bh_i]}^2 = \E[\twonorm{\bh_i}^2] +O(\gamma \log(d))$, and therefore
\begin{align}\label{eq:term2}
\rev{\Big|\twonorm{\bh_i}^2 - \E[\twonorm{\bh_i}^2] \Big|} = O\left(\gamma  \sqrt{ \log(d)} \right)\,.
\end{align}
We next proceed to the third term in~\eqref{eq:eta-dec}. 
\begin{align}\label{eq:term3}
\<\bh_i,\bW^\sT\diag{\bth} \bW\bx_i\> &= \<\E[\bh_i],\bW^\sT\diag{\bth} \bW\bx_i\> + \<\bdelta,\bW^\sT\diag{\bth} \bW\bx_i\>\nonumber\\
&=  \frac{1}{2}\<\bW^\sT\diag{\bth}\ones,\bW^\sT\diag{\bth} \bW\bx_i\> + \<\bdelta,\bW^\sT\diag{\bth} \bW\bx_i\>\,.
\end{align}
Note that on the event $\event_{\bW}$ the first term above is a Lipschitz continuous function of the Gaussian vector $\bx_i$ with Lipschitz constant 
\[
L= \twonorm{\ones^\sT\diag{\bth}\bW\bW^\sT \diag{\bth}\bW}\le \sqrt{N} \infnorm{\bth}^2 \opnorm{\bW}^3 = O\left(\log(d)/\sqrt{d} \right)\,.
\] 
By Gaussian isoperimetry~\cite{ledoux}, we have 
\begin{align}\label{eq:term3-1}
\prob\left(\bigg|\<\bW^\sT\diag{\bth}\ones,\bW^\sT\diag{\bth} \bW\bx_i\> - \E[\<\bW^\sT\diag{\bth}\ones,\bW^\sT\diag{\bth} \bW\bx_i\>]\bigg| \ge \gamma \log(d);\,\event_{\bW} \right)\le 2e^{-c\gamma^2 d}\,,
\end{align}
for some constant $c>0$. For the second term of~\eqref{eq:term3}, note that on the event $\event_{\bdelta}\cap \event_{\bW}\cap \event_{\bx}$,
\begin{align}\label{eq:term3-2}
|\<\bdelta,\bW^\sT\diag{\bth} \bW\bx_i\>|\le \twonorm{\bdelta} \twonorm{\bW^\sT\diag{\bth} \bW\bx_i} \le \twonorm{\bdelta} \opnorm{\bW}^2 \infnorm{\bth} \twonorm{\bx_i}
= O(\gamma \log (d))\,.
\end{align}
Combining~\eqref{eq:term3-1} and \eqref{eq:term3-2} with the decomposition~\eqref{eq:term3} we get that on the event $\event_{\bdelta}\cap \event_{\bW}\cap \event_{\bx}$,
\begin{align}\label{eq:term3*}
\bigg|\<\bh_i,\bW^\sT\diag{\bth} \bW\bx_i\> - \E\Big[\<\bh_i,\bW^\sT\diag{\bth} \bW\bx_i\>\Big] \bigg| =O(\gamma \log(d))\,,
\end{align}
with probability at least $1-2e^{-c\gamma^2d}$.
Putting together the deviation bounds for the three terms, given by~\eqref{eq:term1}, \eqref{eq:term2} and \eqref{eq:term3*}, we arrive at
\[
\prob\left(|\eta_i(\bth)^2 - \E[\eta_i(\bth)^2]| > C\gamma \log(d);\;  \event_{\bW}\cap \event_{\bx} \right) \le 4e^{-c\gamma^2d} + \frac{c \log^4(d)}{d\gamma^2} = O\left(\frac{\log^4(d)}{d\gamma^2}\right)  \,.
\]
Note that the above relation holds for any $\gamma > 0$. The result of the lemma now follows by letting $\gamma \leftarrow C\gamma \log(d)$.

%
\subsection{Proof of Lemma~\ref{lem:net}}\label{proof:lem:net}

Define the matrix
$$\bA :=  \diag{\ind(\bW \bx_i > 0)} \bW\bW^\sT \diag{\ind(\bW \bx_i > 0)} \,.$$ 
By triangle inequality we have 
\begin{align*}
|\eta_i(\bth)^2-\eta_i(\tbth)^2| &= |\<\bA\bth,\bth\> - \<\bA\tbth,\tbth\>|\\
&=|\<\bA\bth,\bth-\tbth\> + \<\bA(\bth-\tbth),\tbth\>|\\
&\le \opnorm{\bA} \twonorm{\bth}\twonorm{\bth-\tbth} + \opnorm{\bA} \twonorm{\bth-\tbth}\twonorm{\tbth} \le 2c_0\gamma \opnorm{\bA}\,, 
\end{align*}
where in the last step we used the fact that $\bth,\tbth\in\mathcal{S}^{d-1}$ and $\twonorm{\bth-\tbth}\le c_0\gamma$. So it suffices to bound $\opnorm{\bA}$. By definition, the matrix $\bA$ is obtained by selecting a subset of rows and columns of $\bW\bW^\sT$ and replacing them with zeros. Therefore, $\opnorm{\bA}\le \opnorm{\bW\bW^\sT}\le \sqrt{\psi_{1,d}}+C$, on the event $\event_{\bW}$.

Since the above bound in Lemma~\ref{lem:net} holds for any vector $\bx_i$, a similar bound also holds if the terms are replaced by their expectation with respect to $\bx_i$, whence we obtain  $|\E[\eta_i(\bth)^2]-\E[\eta_i(\tbth)^2]| \le2c_0\gamma(\sqrt{\psi_{1,d}}+C)$.

By definition of $\nu_i(\bth;\gamma)$ we have
\begin{align}
\nu_i(\bth;\gamma) = 1 &\implies |\eta_i(\bth)^2- \E[\eta_i(\bth)^2]| \ge \gamma\nonumber\\
&\implies |\eta_i(\tbth)^2- \E[\eta_i(\tbth)^2]| + \Big|(\eta_i(\bth)^2- \E[\eta_i(\bth)^2]) - (\eta_i(\tbth)^2- \E[\eta_i(\tbth)^2]) \Big|  \ge \gamma\nonumber\\
&\implies |\eta_i(\tbth)^2- \E[\eta_i(\tbth)^2]| \ge \gamma - 2c_0\gamma(\sqrt{\psi_{1,d}}+C)\nonumber\\
&\implies \nu_i(\tbth; \gamma(1 -2c_0\sqrt{\psi_{1,d}}-2c_0C)) = 1\,.
\end{align}

\subsubsection{Proof of Lemma~\ref{lem:aux1}}
The conditional distribution of $Z_1$ given $Z_2$ is 
\[
Z_1|Z_2 = z_2 \sim\normal(\rho z_2,(1-\rho^2))\,.
\]
Therefore,  $\E[Z_1|\;Z_2 = z_2] = \rho z_2$ and so 
\[
\E[Z_1|\; Z_2 >z] = \E[Z_1|\; Z_2 = z_2] \prob(Z_2=z_2 | Z_2>z) \de z_2 = \rho \E[Z_2| Z_2>z]. \]
Using the properties of the expectation of a truncated normal distribution, we have
\[
 \E[Z_2| Z_2>z] =  \frac{\phi(z)}{(1-\Phi(z))}\,,
\]
which completes the proof.

\subsection{Proof of Lemma~\ref{lem:Loss-oo}}

By~\eqref{eq:L-L1} it suffices to show that 
\[
\sup_{\bth\in\cC_{\bth}} \frac{|\cLoo(\bth)-\cLo(\bth)|}{1+\min(\cLo(\bth),\cLoo(\bth))} = o_{d,\prob}(1)\,.
\]
To lighten the notation we define the shorthand $\alpha_i: =  |y_i - \bth^\sT\sigma(\bW\bx_i)|+ \eps \twonorm{\bW^\sT  {{\rm diag}}{\ind(\bW \bx_i > 0)}\; \bth}$ and so
$\cLo(\bth) = 1/(2n) \sum_{i=1}^n \alpha_i^2 +\frac{\zeta}{2}\bth^\sT \bOmega \bth$. We write $\cLoo(\bth) = 1/(2n)\sum_{i=1}^n (\alpha_i+\beta_i)^2+\frac{\zeta}{2}\bth^\sT \bOmega \bth$ with 
\[
\beta_i:= \eps\rev{\twonorm{\bJ \bth}} - \eps \twonorm{\bW^\sT  {{\rm diag}}{\ind(\bW \bx_i > 0)}\; \bth}\,.
\]
Since for any two positive values $a,b$ we have $|a-b|\le\sqrt{|a^2-b^2|}$, we can write
\begin{align}\label{eq:beta_i-B}
|\beta_i|\le \eps \Big(\twonorm{\bW^\sT  {{\rm diag}}{\ind(\bW \bx_i > 0)}\; \bth}^2 - \rev{\twonorm{\bJ \bth}^2}\Big)^{1/2}
=[\eta_i(\bth)^2 - \E[\eta_i(\bth)^2]]^{1/2}\,.
\end{align}

Note that on the event $\event_{\W}$, $\opnorm{\bW}$ is bounded and so $\opnorm{\bW^\sT  {{\rm diag}}{\ind(\bW \bx_i > 0)}}$ is also bounded. Since $\bth\in\cC_{\bth}$, we have $\twonorm{\bth} = O(1)$, which along with Lemma~\ref{pro:spectral} imply that $\max_{i\in [n]} |\eta_i(\bth)|$ and $\max_{i\in [n]} |\E[\eta_i(\bth)]|$ are both $O_{d,\prob}(1)$. Therefore: $(i)$ defining $\bbeta = (\beta_1,\dotsc,\beta_n)$,  we have $\twonorm{\bbeta} = O_{d,\prob}(1)$; $(ii)$ Using~\eqref{eq:beta_i-B} along with Corollary~\ref{coro:nui}, we get $\frac 1n |\{i:\, |\beta_i|> \tfrac{1}{\sqrt{\log (d)}}\}| = o_{d,\prob}(1)$.

From the above we can deduce that $\frac1n \twonorm{\bbeta}^2 = o_{d,\prob}(1)$. We also define $\balpha = (\alpha_1,\dotsc, \alpha_n)$. For any $\bth\in\cC_{\bth}$ we have
\begin{align*}
| \cLoo(\bth) - \cLo(\bth) | &= \bigg| \frac{1}{2n} \sum_{i=1}^n  (\alpha_i + \beta_i)^2 - \sum_{i=1}^n \alpha_i^2\bigg|\\
& =  \frac{\twonorm{\bbeta}^2}{2n} +   \frac{1}{n} | \sum_{i=1}^n \alpha_i \beta_i | \\
& \leq  \frac{\twonorm{\bbeta}^2}{2n}  + \frac{1}{n} \twonorm{\bbeta} \twonorm{\balpha} \\
&  \leq  \frac{\twonorm{\bbeta}^2}{2n}  +  \frac{\twonorm{\bbeta}}{\sqrt{n}}  \frac{\twonorm{\balpha}}{\sqrt{n}} \\
& \leq  \frac{\twonorm{\bbeta}^2}{2n}  +   \frac{\twonorm{\mathbf{\bbeta}}}{2\sqrt{n}} \Big( 1 +    \frac{\twonorm{\balpha}^2}{n}  \Big) \\
& \leq   \frac{\twonorm{\bbeta}^2}{2n}  +   \frac{\twonorm{\bbeta}}{2\sqrt{n}} +   \frac{\twonorm{\bbeta}}{\sqrt{n}} \cLo(\bth)\\
& = o_{d,\prob}(1) (1+ \cLo(\bth)),
\end{align*}
where the last step holds because $\frac{\twonorm{\bbeta}}{\sqrt{n}} = o_{d,\prob}(1)$.

By a similar argument, we also get
\[
| \cLoo(\bth) - \cLo(\bth) | \le o_{d,\prob}(1) (1+\cLoo(\bth))\,.
\]
Combining these two bounds we get $| \cLoo(\bth) - \cLo(\bth) | \le o_{d,\prob}(1) \left(1+\min(\cLo(\bth),\cLoo(\bth)) \right)$ .

We next proceed to the second part. By optimality of $\hth$ and $\hthoo$ we have
\begin{align}
\cLoo(\hth) < \cLoo(\boldsymbol{0}) = \frac{1}{n}\sum_{i=1}^n y_i^2 = O_{d,\prob}(1),
\quad 
 \cL(\hth) < \cL(\boldsymbol{0}) = \frac{1}{n}\sum_{i=1}^n y_i^2= O_{d,\prob}(1).
\end{align}
As shown in Proposition~\ref{propo:Cth-L0}, $\hth\in \cC_{\bth}$ with high probability. Likewise we have \rev{$\hth^*\in \cC_{\bth}$}, with high probability (this follows from Lemma~\ref{th_inf_lemma} for the special case of $k = n$ in that lemma.)  

Therefore using the first part of the current lemma, 
\begin{align*}
|\cLoo(\hth)-\cL(\hth)| = o_{d,\prob}(1), \quad \rev{|\cLoo(\hth^*)-\cL(\hth^*)| = o_{d,\prob}(1)\,.}
\end{align*}
We therefore obtain
\[
0\le \cL(\hth^*) - \cL(\hth) < (\cL(\hth^*) - \cLoo(\hth^*)) + \underbrace{(\cLoo(\hth^*) - \cLoo(\hth))}_{\le 0} + (\cLoo(\hth) - \cL(\hth)) \le o_{d,\prob}(1)\,.
\]
Since $\cL(\bth)$ is $\frac{\zeta}{2}$-strongly convex we have
\[
\twonorm{\hth^*-\hth} \le o_{d,\prob}(1)/\zeta \to 0, \text{ as } d\to \infty\,.
\]

\rev{\subsection{Proof of Lemma~\ref{lem:AR2-AR}}
We define
\[
\ARo(\bth):= \E\left[ \left(|y - \bth^\sT \sigma(\bW\bx)| + \etest \twonorm{\bW^\sT  {{\rm diag}}{\ind(\bW \bx > 0)}\; \bth}\right)^2\right]\,.
\]
As an immediate result of Proposition~\ref{propo:simple1}, we have $\sup_{\bth\in \cC_{\bth}} |\ARo(\bth) - \AR(\bth)| = o_d(1)$. Therefore, it suffices to show that
\begin{align}\label{eq:inter-ARs}
\sup_{\bth\in \cC_{\bth}} \frac{|\ARoo(\bth)-\ARo(\bth)|}{\sqrt{\ARo(\bth)}} = o_{d,\prob}(1)\,.
\end{align}
By expanding the terms in $\ARo(\bth)$ and invoking our notation $\eta_i(\bth) =\twonorm{\bW^\sT  {{\rm diag}}{\ind(\bW \bx > 0)}\; \bth}$, we have
\[
\ARo(\bth) = \E[(y - \bth^\sT \sigma(\bW\bx))^2] + \etest^2 \E[ \eta_i(\bth)^2] + 
2\etest \E\left[ |y - \bth^\sT \sigma(\bW\bx)|\;  \eta_i(\bth) \right]\,.
\]
Likewise we have
\[
\ARoo(\bth) = \E[(y - \bth^\sT \sigma(\bW\bx))^2] + \etest^2  \twonorm{\bJ\bth}^2 + 
2\etest\E[ |y - \bth^\sT \sigma(\bW\bx)|] \;  \twonorm{ \bJ \bth }\,.
\]
Recall that by definition of $\bJ$ we have $\E[ \eta_i(\bth)^2] = \twonorm{\bJ\bth}^2$. Hence,
\begin{align}\label{eq:AR1-AR2}
\Big|\ARo(\bth)-\ARoo(\bth)\Big| &= 2\etest\; \Big|\E\left[ |y - \bth^\sT \sigma(\bW\bx)|\; ( \eta_i(\bth) - \sqrt{\E[\eta_i(\bth)^2]}) \right]\Big|\nonumber\\
&\le 2\etest \E\left[(y - \bth^\sT \sigma(\bW\bx))^2\right]^{1/2}  \E\left[(\eta_i(\bth)- \sqrt{\E[\eta_i(\bth)^2]})^2\right]^{1/2}\nonumber\\
&\le 2\etest \sqrt{\ARo(\bth)}  \E\left[(\eta_i(\bth)- \sqrt{\E[\eta_i(\bth)^2]})^2\right]^{1/2}\,.
\end{align}
To bound the right-hand side, note that for any two positive values $a,b$ we have $(a-b)^2\le |a^2-b^2|$. Therefore, 
\begin{align}\label{eq:eta-chain1}
\E\left[(\eta_i(\bth)- \sqrt{\E[\eta_i(\bth)^2]})^2\right] \le \E\left[\Big|\eta_i(\bth)^2- {\E[\eta_i(\bth)^2]}\Big|\right] \,.
\end{align} 
Also recall that for any non-negative random variable $Z$, we have $\E[Z] = \int_0^\infty \prob(Z\ge z)\de z$. Therefore, 
\begin{align}\label{eq:eta-chain2}
\E\left[\Big|\eta_i(\bth)^2- {\E[\eta_i(\bth)^2]}\Big|\right]  &= \E\left[\Big|\eta_i(\bth)^2- {\E[\eta_i(\bth)^2]}\Big|; \event_{\bW}\cap \event_{\bx}\right] +  \prob((\event_{\bW}\cap \event_{\bx})^c)\nonumber\\  
&\int_0^\infty \prob\left(\Big|\eta_i(\bth)^2- {\E[\eta_i(\bth)^2]}\Big| \ge \gamma \right)\de \gamma+ \prob((\event_{\bW}\cap \event_{\bx})^c)\nonumber\\
&\le  \int_0^\infty \min\left(\frac{c}{d\gamma^2},1\right) \de \gamma + c \exp(- \log^2(d)/c) + n e^{-d}
\end{align}
where the inequality follows from Lemma~\ref{lem:HW}. Next, we have
\begin{align}\label{eq:eta-chain3}
 \int_0^\infty \min\left(\frac{c\log^6(d)}{d\gamma^2},1\right) \de \gamma &=
\int_0^{\sqrt{c\log^6(d)/d}} \de \gamma + \int_{\sqrt{c\log^6(d)/d}}^\infty \frac{c\log^6(d)}{d\gamma^2} \de \gamma\nonumber\\
&= \sqrt{\frac{c\log^6(d)}{d}} + \frac{c\log^6(d)}{d} \sqrt{\frac{d}{c\log^6(d)}}  = 2\sqrt{\frac{c\log^6(d)}{d}}\,.
\end{align} 
Combining Eqs.~\eqref{eq:eta-chain1}, \eqref{eq:eta-chain2} and \eqref{eq:eta-chain3} we arrive at
\[
\E\left[(\eta_i(\bth)- \sqrt{\E[\eta_i(\bth)^2]})^2\right] \le 2\sqrt{\frac{c\log^6(d)}{d}} +  c \exp(- \log^2(d)/c) + n e^{-d} = o_d(\log^3(d) d^{-1/2})\,.
\]
Using the above bound in~\eqref{eq:AR1-AR2} we get that uniformly over $\bth\in \cC_{\bth}$,
\[
|\ARo(\bth)-\ARoo(\bth)| \le \sqrt{\ARo(\bth)} \;o_{d,\prob}(1)\,.
\]
This completes the proof of claim~\eqref{eq:inter-ARs}.
}
\section{Proofs of step 3: The Gaussian equivalence property} \label{proofs_of_step_3}
\subsection{Proof of Proposition~\ref{pro:risk-equi}}
As proved in~\cite[Theorem 2]{goldt2020gaussian},  under the assumptions of Proposition~\ref{pro:risk-equi},
$(\bth^\sT\sigma(\bW\x), \bbeta^\sT \x_0)$ converges in distribution to $(\bth^\sT \mbf, \bbeta^\sT\x)$. We first show that \rev{$\ARoo(\bth)= \ARoo_{\rm nl}(\bth)+o_d(1)$}. Recalling the definition of $\ARoo_{\rm nl}(\bth)$ given by~\eqref{eq:ARnl}, and plugging for $y = \beta^\sT\x+\xi$, we write
\begin{align}
\ARoo_{\rm nl}(\bth) &= \E\left[\left(|\bbeta^\sT\x-\bth^\sT\mbf+\xi| + \eps \twonorm{\bJ \bth}\right)^2\right]\nn\\
&= \E\left[\left(\bbeta^\sT\x-\bth^\sT\mbf+\xi\right)^2\right]
+ \eps\twonorm{\bJ \bth} \E\left[\left|\bbeta^\sT\x-\bth^\sT\mbf+\xi\right|\right] + \eps^2 \twonorm{\bJ \bth}^2
\end{align}
Therefore, $\ARoo_{\rm nl}(\bth)$ can be written in terms of the first and second moment of random variable $\left|\bbeta^\sT\x-\bth^\sT\mbf+\xi\right|$ which converges in distribution to  $\left|\bbeta^\sT\x-\bth^\sT\sigma(\bW\x)+\xi\right|$. To show that $\ARoo_{\rm nl}(\bth)- \ARoo(\bth)\to 0$ as $d\to\infty$, we need to show that the first and second moments of $\left|\bbeta^\sT\x-\bth^\sT\mbf+\xi\right|$ converge respectively to the first and second moments of $\left|\bbeta^\sT\x-\bth^\sT\sigma(\bW\x)+\xi\right|$. As an application of \cite[Corollary of Theorem 25.12]{billingsley1995probability}, it suffices to show that $\left|\bbeta^\sT\x-\bth^\sT\mbf+\xi\right|$ has bounded third moment. To show this, note that by Holder's inequality, $|a+b+c|^3\le 3(|a|^3+|b|^3+|c|^3)$. Furthermore, $\E[|\xi|^3] = 2$, $\E[|\bbeta^\sT\x|^3] =2\twonorm{\bbeta}^3 = 2$. Hence,
\begin{align}\label{eq:3rd-mom}
\E\left[\left|\bbeta^\sT\x-\bth^\sT\mbf+\xi\right|^3\right]\le 3(16+\E[|\bth^\sT \mbf|^3])\,.
\end{align}
By using the Holder's inequality again we have
\begin{align}
E[|\bth^\sT \mbf|^3] &= \E\left[\left|\frac{1}{\sqrt{2\pi}}\mathbf{1}^\sT \bth+\frac{1}{2} \bth^\sT\bW\x + \sqrt{\frac{1}{4}-\frac{1}{2\pi}}\; \bth^\sT\bu\right |^3\right]\nn\\
&\le 3\left(\frac{1}{\sqrt{2\pi}^3} (\mathbf{1}^\sT\bth)^3 + \frac{1}{4} \twonorm{\bW^\sT\bth}^3 + 2(\frac{1}{4}-\frac{1}{2\pi})^{3/2} \twonorm{\bth}^3 \right)\,.
\end{align}
Now note that by our assumption $\ARoo_{\rm nl}(\bth)$ is bounded, which in conjunction with characterization~\eqref{ARnl-char} implies that $\twonorm{\bth}$, $\mathbf{1}^\sT\bth$ and $\twonorm{\bW^\sT\bth - 2\bbeta}^2$ are bounded as $d\to \infty$. This also implies that $\twonorm{\bW^\sT\bth}^2\le (\twonorm{\bW^\sT\bth - \bbeta} + \twonorm{\bbeta})^2\le (\twonorm{\bW^\sT\bth - \bbeta} + 1)^2$ is bounded. Putting these together, we obtain that 
$\E\left[\left|\bbeta^\sT\x-\bth^\sT\mbf+\xi\right|^3\right]$ is bounded, which completes the argument for showing that $\ARoo_{\rm nl}(\bth) = \ARoo(\bth)+o_d(1)$.

We next prove the characterization~\eqref{ARnl-char}. 
Note that 
\begin{align}
y - \bth^\sT\mbf &= \xi + \bbeta^\sT\x - \frac{1}{2} \bth^\sT\bW\x - \sqrt{\frac{1}{4}-\frac{1}{2\pi}}\; \bth^\sT\bu \nn\\
&= \xi + \<\bbeta - \frac{1}{2} \bW^\sT\bth, \x\> - \sqrt{\frac{1}{4}-\frac{1}{2\pi}}\; \bth^\sT\bu\nn \\
& \sim  \normal(0,  M(\bth)^2) \,,
\end{align}
with
\[
 M(\bth)^2 = \tau^2 + \twonorm{\frac{1}{2}\bW^\sT\bth - \bbeta}^2 + \Big(\frac{1}{4}-\frac{1}{2\pi}\Big) \twonorm{\bth}^2\,.
\]
We then write
\begin{align}
\ARoo_{\rm nl}(\bth) &= \E\left[\left(|y-\bth^\sT\mbf| + \rev{\etest} \twonorm{\bJ \bth} \right)^2\right]\nn\\
&= \E\left[\left(y-\bth^\sT\mbf\right)^2\right]
+ 2\rev{\etest} \twonorm{\bJ \bth} \E\left[\left|y-\bth^\sT\mbf\right|\right] + \rev{\etest^2} \twonorm{\bJ \bth}^2\nn\\
&=  \rev{M(\bth)^2+ \etest^2 \twonorm{\bJ \bth}^2 + 2\sqrt{\frac{2}{\pi}}\etest M(\bth) \twonorm{\bJ \bth}} \,, 
\end{align}
using the first and second moment of folded normal distribution.
\subsection{Proof of Theorem~\ref{main_theorem:GEP}}
Before stating the proof, we remark that our proof is an adaptation of the  powerful machinery developed \cite{hu2020universality};  however, since our individual adversarial losses have an additional term $ \varepsilon \twonorm{\bJ \bth}$, there are some additional details in the proofs which we provide in the following. Also, since our activation function is not odd, we will use the CLT-type result of \cite{goldt2020gaussian} instead of the one provided in  \cite{hu2020universality}. 

Recall that we are seeking to analyze the asymptotic values of the following quantities:
\begin{align*}
\Phi_A &:= \min_{\bth} \frac{1}{n}\sum_{i=1}^n  \left( |y_i - \bth^\sT\sigma(\bW\bx_i)|+  \eps \twonorm{\bJ \bth} \right)^2   + \lambda\twonorm{\bth}^2 + \lambda_w \twonorm{\frac{1}{2}\bW^\sT\bth - \bbeta}^2 +  \lambda_s\frac{\log (d)}{d} (\ones^\sT \bth)^2, \\
\Phi_B &:= \min_{\bth} \frac{1}{n}\sum_{i=1}^n  \left(\,\, |\,y_i \, - \, \bth^\sT\mbf_i \,|\,+\,  \eps \,\twonorm{\bJ \bth} \,\, \right )^2   \,\, \,\,\,\,\,+  \,\,\lambda\twonorm{\bth}^2 + \lambda_w \twonorm{\frac{1}{2}\bW^\sT\bth - \bbeta}^2 +   \lambda_s\frac{\log (d)}{d} (\ones^\sT \bth)^2,
\end{align*}
 To simplify our notation, and without loss of generality, we absorb the value $\epsilon$ into $\bJ$ and, with a slight abuse of notation, consider $\bJ \longleftarrow \epsilon \bJ$ (and hence the eigenvalues of the matrix $\bJ$ depend on $\epsilon$). Hence, above the quantities of interest become
\begin{align*}
\Phi_A &:= \min_{\bth} \frac{1}{n}\sum_{i=1}^n  \left( |y_i - \bth^\sT\sigma(\bW\bx_i)|+   \twonorm{\bJ \bth} \right)^2   + \lambda\twonorm{\bth}^2 + \lambda_w \twonorm{\frac{1}{2}\bW^\sT\bth - \bbeta}^2 +  \lambda_s\frac{\log (d)}{d} (\ones^\sT \bth)^2, \\
\Phi_B &:= \min_{\bth} \frac{1}{n}\sum_{i=1}^n  \left(\,\, |\,y_i \, - \, \bth^\sT\mbf_i \,|\,+\,  \,\twonorm{\bJ \bth} \,\, \right )^2   \,\, \,\,\,\,\,+  \,\,\lambda\twonorm{\bth}^2 + \lambda_w \twonorm{\frac{1}{2}\bW^\sT\bth - \bbeta}^2 +   \lambda_s\frac{\log (d)}{d} (\ones^\sT \bth)^2,
\end{align*}

For technical reasons, we first need to make the objectives smooth. We thus define $g(x) = \sqrt{x + \gamma}$, and define the smoothed loss 
\begin{equation} \label{ell_def}
\ell(\bth; \rb,  y) = (\bth^\sT \rb - y)^2 + \twonorm{\bJ \bth}^2 + 2g\left((\bth^\sT \rb - y)^2 \twonorm{\bJ \bth}^2 \right),
\end{equation}
Note that when $\gamma =0$,  we have $\ell(\bth; \sigma(\W \x_i), y_i) =  \left( |y_i - \bth^\sT\sigma(\bW\bx_i)|+   \twonorm{\bJ \bth} \right)^2$ and $\ell(\bth; \mbf_i, y_i) =  \left( |y_i - \bth^\sT\mbf_i +   \twonorm{\bJ \bth} \right)^2$.  

In the following, we consider an arbitrary but \emph{fixed} value $\gamma >0$, and with some abuse of notation, we let
\begin{align*}
\Phi_A &:= \min_{\bth} \frac{1}{n}\sum_{i=1}^n  \ell \left(\bth; \sigma(\W \x_i), y_i \right)   + \lambda\twonorm{\bth}^2 + \lambda_w \twonorm{\frac{1}{2}\bW^\sT\bth - \bbeta}^2 +  \lambda_s\frac{\log (d)}{d} (\ones^\sT \bth)^2, \\
\Phi_B &:= \min_{\bth} \frac{1}{n}\sum_{i=1}^n \ell\left(\bth; \mbf_i, y_i \right)   \,\, \,\,+  \,\,\lambda\twonorm{\bth}^2 + \lambda_w \twonorm{\frac{1}{2}\bW^\sT\bth - \bbeta}^2 +   \lambda_s\frac{\log (d)}{d} (\ones^\sT \bth)^2,
\end{align*}
For these quantities $\Phi_A, \Phi_B$, we show then show in the following that the statement of the Theorem is true.  Then, the result of the Theorem for the original losses -- i.e. when $\gamma = 0$ -- follows simply by taking  the limit $\gamma \to 0$ (note that this limit is taken {\emph{after}} the limit $d \to \infty$; also, note that $\sup_{x\geq 0}\{g(x) - x\} = \sqrt{\gamma}$).

We will use the Lindeberg's leave-one-out technique. In a nutshell,  we start with the quantity $\Phi_B$, and through $n$ consecutive steps, we reach to the quantity $\Phi_A$. In the $k$-th step, we will replace the feature vector $\mbf_k$ with $\sigma(\W \x_k)$. We will then show that each of these replacements has a negligible effect (i.e. $o_n(1)/n$) on our quantities of interest, leading to the proof of the theorem.

Let us now proceed with the details. The proof has multiple steps which will be put together in Section~\ref{putting_things_together_GEP} to obtain the proof of the theorem. 

We begin by defining  
 \begin{equation} \label{Phi_k}
\Phi_k = \min_{\bth} \frac{1}{n} \sum_{i=1}^k \ell(\bth; \sigma(\W \x_i)_i, y_i)  +  \frac{1}{n} \sum_{i=k+1}^n \ell(\bth; \mbf_i, y_i) + \lambda\twonorm{\bth}^2 + \lambda_w \twonorm{\frac{1}{2}\bW^\sT\bth - \bbeta}^2 +   \lambda_s\frac{\log (d)}{d} (\ones^\sT \bth)^2,
\end{equation}
Roughly speaking, our goal is to show that for all $k \in [n]$, we have $\Phi_k \approx \Phi_{k-1} + o_n(1)/n$. To make this entirely rigorous, we need to define several new quantities and understand their relations. For $k \in [n]$ let

\begin{equation} \label{R_-k}
R_{-k}(\bth) = \frac{1}{n}\sum_{i=1}^{k-1} \ell \left( \bth; \sigma(\W \x_i), y_i \right) +  \frac{1}{n} \sum_{i=k+1}^n \ell \left(\bth; \mbf_i, y_i \right)  
+\lambda\twonorm{\bth}^2 + \lambda_w \twonorm{\frac{1}{2}\bW^\sT\bth - \bbeta}^2 +   \lambda_s\frac{\log (d)}{d} (\ones^\sT \bth)^2,
\end{equation}
and 
\begin{equation} \label{relation_Rs}
R_{k}(\bth, \rb) =  \frac{1}{n} \ell(\bth; \rb, y_k)  + R_{-k}(\bth).
\end{equation}
Let us denote the minimizers of the above two objectives by
\begin{equation} \label{minimizers_LOO}
\bth^*_{-k} = \arg \min_\bth R_{-k}(\bth),\text{ and } \bth^*_{k}(\rb) = \arg \min_\bth R_{k}(\bth, \rb)
\end{equation}
and 
\begin{equation}
\Phi_{-k} = \min_{\bth} R_{-k}(\bth),
\end{equation}
and

\begin{equation} \label{Phi_def}
\Phi_{k}(\rb) = \min_{\bth} R_{k}(\bth, \rb).
\end{equation}

It will also be convenient to work with approximate versions of the term $R_k(\bth, \rb)$ in \eqref{relation_Rs}. Hence, we define below we define $\R_k(\bth, \rb)$ which is essentially obtained by Taylor-expanding the term $R_{-k}(\bth)$ in \eqref{relation_Rs}.

\begin{equation} \label{S_k}
S_k(\bth, \rb) = \Phi_{-k} + \frac12(\bth - \bth^*_{-k})^\sT \bH_{-k} (\bth - \bth^*_{-k}) + \frac{1}{n}\ell(\bth; \rb, y_k),
\end{equation}
where $\bH_{-k}$ is the Hessian of $R_{-k}(\bth)$ at $\bth^*_{-k}$, i.e.
\begin{equation} \label{H_def}
\bH_{-k} = \nabla^2 R_{-k}(\bth) \mid_{\bth = \bth^*_{-k}}.
\end{equation}
Finally, we denote the minimizer of $S(\bth, \rb)$ by
\begin{equation}
 \tilde{\bth}_{k}(\rb) = \arg \min_\bth S_{k}(\bth, \rb),
\end{equation}
and
\begin{equation} \label{Psi_def}
 \Psi_{k}(\rb) = \min_{\bth} S_{k}(\bth, \rb).
\end{equation}

\noindent\textbf{Simplification of Notation.}
We note from \eqref{Phi_k}  that in our analysis the feature vectors are either $\rb_i = \ab_i$ or $\rb_i = \bb_i$; i.e. we can write $\Phi_k$ as
\begin{equation} \label{Phi_k_r}
\Phi_k = \min_{\bth} \frac{1}{n} \sum_{i=1}^n \ell(\bth; \rb_i, y_i) +\lambda\twonorm{\bth}^2 + \lambda_w \twonorm{\frac{1}{2}\bW^\sT\bth - \bbeta}^2 +   \lambda_s\frac{\log (d)}{d} (\ones^\sT \bth)^2,
\end{equation}
where the feature vectors $\rb_i$ they are generated according to one of the following distributions
\begin{equation} \label{rb_distrbutions}
\rb_i = \sigma(\W \x _i) \,\,\,\,\,\,\,\,\text{ or } \,\,\,\,\,\,\,\, \rb_i = \mbf_i := \mu_1 \W \x_i + \mu_2 \ub_i,
\end{equation}
It will be sometimes easier in our analysis to use \eqref{Phi_k_r}, i.e. use $\rb_i$ for both $\sigma(\W \x_i)$ and $\mbf_i$, but we will keep in mind that for $i \leq k$ we have $\rb_i = \mbf_i$ and for $i > k$ we have $\rb_i = \sigma(\W \x_i)$.

\noindent\textbf{Details of the Gradient and Hessian of $\ell$.} In the following, we will need to work out the first and second derivatives of the loss function $\ell$, given in \eqref{ell_def}, at multiple points. In order to present the derivations more compactly, let us denote 
\begin{equation} \label{h_def}
h(\bth; \rb, y) := (\bth^\sT \rb - y )^2 \twonorm{\bJ \bth}^2,
\end{equation}
and provide the details for the derivatives of the loss function $\ell$ here. Given how the function $h$ is defined, we can write 
$$  \ell( \bth; \rb, y) = (\bth^\sT \rb - y)^2 + \twonorm{\bJ \bth}^2 + 2g\left(h(\bth; \rb, y)   \right)  $$
Using the notation $\nabla$ for gradient w.r.t. $\bth$, and $\nabla^2$ for hessian w.r.t. $\bth$, we can  write
\begin{equation} \label{grad_ell}
\nabla  \ell(\bth; \rb, y) =  2(\bth^\sT \rb- y) \rb   + 2 \bJ^\sT \bJ \bth  + 2\nabla h(\bth; \rb, y)   g'\left(h(\bth; \rb, y)   \right) 
\end{equation}
and
\begin{equation} \label{hessian_ell}
\nabla^2  \ell( \bth; \rb, y) = 2\left( \rb \rb^\sT + \bJ^\sT \bJ + \nabla^2 h(\bth; \rb, y)   g'\left(h(\bth; \rb, y)\right)  + \nabla h(\bth; \rb, y) \left(\nabla h(\bth; \rb, y) \right)^\sT   g''\left(h(\bth; \rb, y \right) \right),
\end{equation}
where
\begin{equation} \label{gradient_h}
\nabla  h(\bth; \rb, y) = 2\left( (\bth^\sT \rb - y) \twonorm{\bJ \bth}^2 \rb + (\bth^\sT \rb - y)^2  \bJ^\sT \bJ \bth \right),
\end{equation}
and 
\begin{equation}
\nabla^2  h(\bth; \rb, y) = 2 \left(  \twonorm{\bJ \bth}^2 \rb \rb^\sT + 2(\bth^\sT \rb - y) \rb \bth^\sT  \bJ^\sT \bJ +    2(\bth^\sT \rb - y) \bJ^\sT \bJ \bth \rb^\sT + (\bth^\sT \rb - y)^2 \bJ^\sT \bJ   \right). 
\end{equation}

%

\subsubsection{Some properties of the minimizers in \eqref{minimizers_LOO}}
In this section, we will analyze some of the properties of the vector $\bth_{-k}^*$ and its relation with $\tilde{\bth}_{-k}(\rb)$, for $k \in [n]$. We first show some basic properties of the vectors $\bth_{-k}^*$ and $\bth_{k}^*(\rb)$.

\begin{lemma} \label{properties_of_bth_star}
Fix $k \in [n]$. The following hold with  absolute constants $c,C > 0$: 
\begin{itemize}
\item[(a)] The vector $\bth_{-k}^*$ is bounded in the $\ell_2$ norm: 
\begin{equation}
\mathbb{P} \left( \twonorm{\bth_{-k}^*} \geq v + C \right) \leq c \exp\left(- nv^2/c\right).
\end{equation} 
\item[(b)] The vector $\bth_{k}^*(\rb)$ is bounded in the $\ell_2$ norm: 
\begin{equation} \label{theta_star_norm_bound}
\mathbb{P}\left( \twonorm{\bth_{k}^*(\rb)} \geq v + C  \right) \leq c \exp\left(- nv^2/c\right).
\end{equation} 
\item [(c)] For an independently generated vector $\rb$ we have 
\begin{equation}
\mathbb{P}\left(| \rb^\sT \bth_{-k}^*(\rb) | \geq v \right) \leq c \exp(-v/c).
\end{equation}
\item [(d)] We also have
\begin{equation}
\left| \ones^\sT \bth^*_{-k} \right| \leq C\sqrt{\frac{d}{\log (d)}},
\end{equation}
with probability at least $1-ce^{-c n}$.
\end{itemize}
\end{lemma}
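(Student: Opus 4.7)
The four claims share a common backbone: $R_{-k}$ and $R_k(\cdot,\rb)$ are $2\lambda$-strongly convex in $\bth$ because of the regularizer $\lambda\twonorm{\bth}^2$, so I can bound the optimizers in terms of the value of the objective at the convenient reference point $\bth=\bo$. Noting $\ell(\bo;\rb,y) = y^2 + 2g(0) = y^2 + 2\sqrt{\gamma}$ and $\lambda_w\twonorm{\bbeta}^2 = O(1)$ by Assumption~\ref{assumption1}(b), one computes
\begin{equation*}
R_{-k}(\bo) = \frac{1}{n}\sum_{i\ne k} y_i^2 + 2\sqrt{\gamma} + \lambda_w\twonorm{\bbeta}^2.
\end{equation*}
Since $y_i \sim \normal(0,\twonorm{\bbeta}^2+\tau^2)$ is sub-Gaussian, $y_i^2$ is sub-exponential, so Bernstein's inequality yields $\P\bigl(\tfrac{1}{n}\sum_{i\ne k} y_i^2 \ge (1+\tau^2) + t\bigr) \le c\exp(-c'n\min(t,t^2))$.

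For \textbf{part (a)}, I combine $\lambda\twonorm{\bth^*_{-k}}^2 \le R_{-k}(\bth^*_{-k}) \le R_{-k}(\bo)$ with the above concentration, take square roots, and reparameterize $t \mapsto v$ to obtain the stated form $\twonorm{\bth^*_{-k}} \le C + v$ with probability at least $1-c\exp(-nv^2/c)$. \textbf{Part (b)} proceeds identically after observing $R_k(\bo,\rb) = R_{-k}(\bo) + \tfrac{1}{n}(y_k^2 + 2\sqrt{\gamma})$, which is dominated by the same sub-exponential concentration. \textbf{Part (d)} exploits the other regularizer: by optimality,
\begin{equation*}
\lambda_s \frac{\log(d)}{d}(\ones^\sT\bth^*_{-k})^2 \le R_{-k}(\bth^*_{-k}) \le R_{-k}(\bo) \le C
\end{equation*}
with probability $1-ce^{-cn}$, so $|\ones^\sT\bth^*_{-k}| \le C\sqrt{d/\log(d)}$ after rescaling the constant.

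For \textbf{part (c)} I condition on the high-probability event $\mathcal{E} := \{\twonorm{\bth^*_{-k}} \le C\} \cap \event_\bW$, which by part (a) and \eqref{eq:eventbW_main} holds with probability $1-o_d(1)$. The key structural observation is that $\bth^*_{-k}$ is measurable with respect to the data with index $k$ removed, hence is independent of the freshly drawn $\rb$. I split by cases on the distribution of $\rb$. If $\rb = \mbf = \mu_1\bW\x + \mu_2\ub$, then conditionally on $\bth^*_{-k}$ the quantity $\rb^\sT\bth^*_{-k}$ is centered Gaussian with variance $\mu_1^2\twonorm{\bW^\sT\bth^*_{-k}}^2 + \mu_2^2\twonorm{\bth^*_{-k}}^2 = O(1)$ on $\mathcal{E}$; if $\rb = \sigma(\bW\x)$, then $\x \mapsto \bth^{*\sT}_{-k}\sigma(\bW\x)$ is a zero-mean (since $\E[\sigma(G)]=0$ for shifted ReLU) Lipschitz function of the standard Gaussian $\x$ with Lipschitz constant at most $\opnorm{\bW}\twonorm{\bth^*_{-k}} = O(1)$ on $\mathcal{E}$. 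In both cases Gaussian concentration gives a sub-Gaussian tail $2\exp(-cv^2)$, which is strictly stronger than the asserted $c\exp(-v/c)$, and absorbing the $o_d(1)$ complement of $\mathcal{E}$ into the constant $c$ completes the claim.

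None of the individual steps is technically deep; the only subtlety is bookkeeping the randomness of $\bth^*_{-k}$ in part (c) and making sure the concentration argument for $\rb^\sT\bth^*_{-k}$ is carried out on the event where $\twonorm{\bth^*_{-k}}$ and $\opnorm{\bW}$ are both bounded. The independence of $\rb$ from $\bth^*_{-k}$ is what makes this clean; without it one would have to unfold a leave-one-out argument, which is precisely why the lemma is stated for $\bth^*_{-k}$ rather than $\bth^*_k(\rb)$.
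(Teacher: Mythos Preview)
Your treatment of parts (a), (b), and (d) matches the paper's proof: compare the minimizer to the reference point $\bo$, use that the objective is $\lambda$-strongly convex and nonnegative to extract $\lambda\twonorm{\bth^*}^2\le R(\bo)$ (respectively $\lambda_s\tfrac{\log d}{d}(\ones^\sT\bth^*)^2\le R(\bo)$), and control $R(\bo)$ by sub-exponential concentration of $\tfrac{1}{n}\sum y_i^2$. These arguments are essentially identical to the paper's.

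Part (c), however, has a genuine gap. You condition on the \emph{fixed} event $\mathcal{E}=\{\twonorm{\bth^*_{-k}}\le C\}\cap\event_\bW$ and then invoke sub-Gaussian concentration to get $2\exp(-cv^2)$ on $\mathcal{E}$, proposing to ``absorb the $o_d(1)$ complement of $\mathcal{E}$ into the constant $c$''. This does not work: for any fixed $d$, the floor $\P(\mathcal{E}^c)>0$ is a constant in $v$, whereas the target bound $c\exp(-v/c)$ decays to zero. For $v$ large (specifically $v>c\log(c/\P(\mathcal{E}^c))$) your bound exceeds the target, and no absolute choice of $c$ can repair this uniformly in $d$ and $v$. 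The claimed sub-Gaussian tail $\exp(-cv^2)$ is therefore not achievable here; the randomness in $\twonorm{\bth^*_{-k}}$ genuinely limits you to sub-exponential decay.

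The paper fixes this by using a \emph{$v$-dependent} threshold: split on $\{\twonorm{\bth^*_{-k}}\ge\sqrt{v}\}$ versus its complement. The first event has probability at most $c\exp(-n(\sqrt{v}-C)_+^2/c)\le c'\exp(-v/c')$ by part (a) (using $n\ge 1$). On the second event the sub-Gaussian norm of $\rb^\sT\bth^*_{-k}$ is $O(\sqrt{v})$, so Gaussian concentration yields $\P(|\rb^\sT\bth^*_{-k}|\ge v)\le 2\exp(-v^2/(c'v))=2\exp(-v/c')$. Both pieces now decay in $v$, giving the stated sub-exponential bound. Your independence and Lipschitz observations are correct and are exactly what is used on the second event; only the splitting threshold needs to scale with $v$.
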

The proof of this lemma is provided in Section~\ref{leave-one-out-auxilliary-lemmas}. 
We now show that the distance between the two minimizers $\bth_{-k}^*$ and $\tilde{\bth}_k(\rb)$ is of order $O(1/\sqrt{d})$.  

\begin{lemma} \label{ell_2_diff_first}
Fix $k \in [n]$. Assuming that $\rb$ is generated independently from $\bth_{-k}^*$ and according to one of the distributions in \eqref{rb_distrbutions}. Then, there exist absolute constants $c, c' > 0$ such that
\begin{equation}
\mathbb{P}\left(\twonorm{\tilde{\bth}_{-k}(\rb) - \bth_{-k}^*} \geq \frac{v}{\sqrt{ d}} \right) \leq c \exp\left( -  v^{c'}/c \right).
\end{equation}
\end{lemma}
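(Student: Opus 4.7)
The starting point is the first-order optimality condition for $\tilde{\bth}_{-k}(\rb)$ as the minimizer of $S_k(\bth,\rb)$ in \eqref{S_k}, namely
\[
\bH_{-k}\bigl(\tilde{\bth}_{-k}(\rb) - \bth^*_{-k}\bigr) + \tfrac{1}{n}\nabla\ell\bigl(\tilde{\bth}_{-k}(\rb);\rb,y_k\bigr) = 0.
\]
Since $R_{-k}(\bth)$ contains the strongly convex regularizer $\lambda\twonorm{\bth}^2$, the Hessian $\bH_{-k}$ satisfies $\bH_{-k}\succeq 2\lambda \Iden$. The reduced objective $S_k(\cdot,\rb)$ is therefore at least $\lambda$-strongly convex, because the extra summand $\tfrac{1}{n}\ell(\bth;\rb,y_k)$ has Hessian of operator norm $O(1/n)$ on the compact set of interest (the $g$-terms are smooth since $\gamma>0$). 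A one-line strong-convexity argument then yields
\[
\lambda\,\twonorm{\tilde{\bth}_{-k}(\rb) - \bth^*_{-k}} \;\le\; \twonorm{\nabla S_k(\bth^*_{-k},\rb)} \;=\; \tfrac{1}{n}\twonorm{\nabla\ell(\bth^*_{-k};\rb,y_k)},
\]
since the gradient of the quadratic part of $S_k$ vanishes at $\bth=\bth^*_{-k}$. This reduces the lemma to establishing a subexponential upper bound of the form $\twonorm{\nabla\ell(\bth^*_{-k};\rb,y_k)}\le C\sqrt{d}\cdot v^{c_0}$ with probability $1-c\exp(-v^{c_1}/c)$.

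Next, I would expand the gradient using \eqref{grad_ell}, which gives three summands: $2(\bth^{*\sT}_{-k}\rb - y_k)\rb$, $2\bJ^\sT\bJ\,\bth^*_{-k}$, and $2\,\nabla h(\bth^*_{-k};\rb,y_k)\,g'(h(\bth^*_{-k};\rb,y_k))$. The dominant contribution is the first term, whose norm factorizes as $\twonorm{\rb}\cdot|\bth^{*\sT}_{-k}\rb - y_k|$. Crucially, $\bth^*_{-k}$ depends only on the data with index $\neq k$ and is therefore independent of $(\rb,y_k)$; this permits conditioning on $\bth^*_{-k}$ and treating the remaining randomness as a fresh draw. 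With that in mind:
\begin{itemize}
\item $|\bth^{*\sT}_{-k}\rb|\le v$ with probability at least $1-c\exp(-v/c)$, by Lemma~\ref{properties_of_bth_star}(c);
\item $|y_k|\le v$ with subgaussian probability, since $y_k=\bbeta^\sT\bx_k+\xi_k$ and $\twonorm{\bbeta}=O(1)$;
\item $\twonorm{\rb}^2$ concentrates around $N\,\E[r_1^2]=\Theta(d)$ with subexponential deviations, because each entry of $\rb$ is a Lipschitz (in the case $\rb=\mbf$, affine) function of the standard Gaussian inputs $(\bx_k,\bu_k)$; hence $\twonorm{\rb}\le \sqrt{d}+v$ with probability $1-c\exp(-v^2/c)$.
\end{itemize}
The second summand $2\bJ^\sT\bJ\,\bth^*_{-k}$ has norm at most $\opnorm{\bJ}^2\twonorm{\bth^*_{-k}}=O(1+v)$ on $\event_{\bW}$ (using Lemma~\ref{properties_of_bth_star}(a) and the fact, immediate from \eqref{eq:J} and the definition of $\event_{\bW}$, that $\opnorm{\bJ}=O(1)$). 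For the third summand, substituting \eqref{gradient_h} gives an expression that is a polynomial in $|\bth^{*\sT}_{-k}\rb - y_k|$, $\twonorm{\rb}$, $\twonorm{\bJ\bth^*_{-k}}$, and $\opnorm{\bJ}$, multiplied by $g'(h)\le (2\sqrt{\gamma})^{-1}$; the same three ingredients above propagate a subexponential tail through this polynomial. A union bound over the three summands then delivers the desired bound $\twonorm{\nabla\ell(\bth^*_{-k};\rb,y_k)}\le C\sqrt{d}\cdot v^{c_0}$ with the right tail, and dividing by $\lambda n=\Theta(d)$ produces the claim with constants absorbed via the substitution $v\leftarrow v^{1/c_0}$.

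The main obstacle I anticipate is the third step: verifying that the local curvature contribution of $\tfrac{1}{n}\ell(\cdot;\rb,y_k)$ to the Hessian of $S_k$ is benign. Because $\ell$ itself need not be convex when $\gamma>0$ (one has to analyze the Hessian of $(a^2+b^2) + 2\sqrt{a^2b^2+\gamma}$ composed with the convex maps $a=|\bth^\sT\rb-y|$, $b=\twonorm{\bJ\bth}$), one should show that its operator norm is $O(1)$ uniformly over the compact region $\twonorm{\bth}\le C$ where the minimizers live (by Lemma~\ref{properties_of_bth_star}(b)), so the negative curvature—if any—is $O(1/n)$ and is absorbed into the $2\lambda\Iden$ floor of $\bH_{-k}$. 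A secondary technical point is to make sure the probability estimates on $\twonorm{\rb}$ and $\bth^{*\sT}_{-k}\rb$ hold jointly, which is handled by the leave-one-out independence structure together with a union bound over the individual subexponential events.
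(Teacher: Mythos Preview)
Your approach is correct, but the paper takes a shorter route that sidesteps both technical obstacles you flag. Instead of bounding the gradient $\twonorm{\nabla S_k(\bth^*_{-k},\rb)}=\tfrac{1}{n}\twonorm{\nabla\ell(\bth^*_{-k};\rb,y_k)}$, the paper bounds the \emph{function value} $S_k(\bth^*_{-k},\rb)=\tfrac{1}{n}\ell(\bth^*_{-k};\rb,y_k)$. Since $\ell\ge 0$ and $\tilde{\bth}_k$ is the global minimizer of $S_k$, one has directly
\[
\tfrac{\lambda}{2}\twonorm{\tilde{\bth}_k(\rb)-\bth^*_{-k}}^2 \;\le\; \tfrac{1}{2}(\tilde{\bth}_k-\bth^*_{-k})^\sT\bH_{-k}(\tilde{\bth}_k-\bth^*_{-k}) \;\le\; S_k(\tilde{\bth}_k,\rb) \;\le\; S_k(\bth^*_{-k},\rb)=\tfrac{1}{n}\,\ell(\bth^*_{-k};\rb,y_k),
\]
using only $\bH_{-k}\succeq \lambda\,\Iden$ and $\ell\ge 0$. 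This completely bypasses the question of whether the smoothed $\ell$ is convex: your worry about controlling the Hessian of $\tfrac{1}{n}\ell$ never arises. Moreover, the loss value $\ell(\bth^*_{-k};\rb,y_k)\le C\max\{1,(\rb^\sT\bth^*_{-k}-y_k)^2,\twonorm{\bJ\bth^*_{-k}}^2\}$ depends on $\rb$ only through the scalar $\rb^\sT\bth^*_{-k}$, so one needs just Lemma~\ref{properties_of_bth_star}(a,c), subgaussianity of $y_k$, and $\opnorm{\bJ}=O(1)$; there is no need to control $\twonorm{\rb}\asymp\sqrt{d}$ or to expand and bound the three summands of $\nabla\ell$ in \eqref{grad_ell}. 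Your gradient route arrives at the same $1/\sqrt{d}$ scale (the extra $\sqrt{d}$ from $\twonorm{\rb}$ cancels against the $1/n$ prefactor), but at the cost of the additional analysis you outline.
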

\begin{proof}
We start by noting that since  $\tilde{\bth}_k(\rb)$ is the minimizer of \eqref{S_k}:
\begin{equation}
\tilde{\bth}_k(\rb) = \arg\min_\bth\left\{S(\bth): = \frac12(\bth - \bth^*_{-k})^\sT \bH_{-k} (\bth - \bth^*_{-k}) + \frac{1}{n}\ell(\bth; \rb, y_k) \right\}
\end{equation}
Observe that  (i) the function $S$ is $\lambda$-strongly convex due to the fact that $R_{-k}(\bth)$ is strongly-convex, and thus its Hessian $\bH_{-k}$ is a  PSD matrix with smallest eigenvalue lower-bounded by $\lambda$; (ii) $S(\bth) \geq 0$ for any $\bth$, as $\bH_{-k} $ is a PSD matrix and $\ell$ is  always positive-valued.  As a result, we can write
\begin{equation}
\twonorm{\tilde{\bth}_k(\rb) - \bth_{-k}^*}^2 \leq \frac{1}{\lambda} S(\bth_{-k}^*) 
\end{equation}
We can then write from \eqref{ell_def} that
\begin{align*}
S(\bth_{-k}^*)  &= \frac{1}{n} \ell(\bth_{-k}^*; \rb, y_k) \\
& \leq \frac{1}{n} C\max\{ 1, ({\bth_{-k}^*}^\sT \rb - y)^2, \twonorm{\bJ \bth_{-k}^*}^2 \},
\end{align*}
where $C > 0$ is an absolute constant. The proof now follows from the result of Lemma~\ref{properties_of_bth_star} and the fact that $||\bJ||$ is bounded, as well as the fact that $d$ and $n$ grow in proportion to each other. 
\end{proof}
Given the above lemma, we can analyze the behavior of $\Psi_k(\rb)$, defined in \eqref{Psi_def}, in more detail. 
 \begin{lemma} \label{simpler_min_theta_taus_lemma}
Fix $k \in [n]$.  We have 
\begin{equation}  \label{simpler_min_theta_taus}
 \Psi_k(\rb) = \Phi_{-k} +  \frac{1}{n} \min_{\tau_1} 
  \left\{ \frac{1}{2n}\left(  \frac{\partial \tilde{\ell}(\tau_1, 0) }{\partial \tau_1} \rb + \frac{\partial \tilde{\ell}(\tau_1,0) }{\partial \tau_2} \pb\right)^\sT  \bH_{-k}^{-1} 
  \left(  \frac{\partial \tilde{\ell}(\tau_1, 0) }{\partial \tau_1} \rb + \frac{\partial \tilde{\ell}(\tau_1,0) }{\partial \tau_2} \pb\right) 
+ \tilde{\ell}(\tau_1, 0) \right\} + e, 
 \end{equation}
 where (i) we have $\pb^\sT =  2{\bth_{-k}^*}^\sT \bJ^\sT \bJ$; (ii) the function $\tilde{\ell}(\tau_1, \tau_2)$ is defined as
 \begin{equation} \label{ell_tilde_GEP}
\tilde{\ell}(\tau_1, \tau_2) :=  \rho_1 + \rho_2 + \rho_3 \tau_1 + \tau_1^2 + \tau_2+
g\left( (\rho_1 + \rho_3 \tau_1 + \tau_1^2) (\rho_2 + \tau_2) \right),
 \end{equation}
 with  $ \rho_1 = ({\bth_{-k}^*}^\sT \rb - y_k )^2$, $\rho_2 = \twonorm{\bJ \bth_{-k}^*}^2$, and $ \rho_3 = 2 ({\bth_{-k}^*}^\sT \rb - y_k)$; and (iii) 
 the value $e$ satisfies
 $$ \mathbb{P}( |e| \geq \frac{v}{d^{\frac 32}}) \leq c \exp\left( - v^{c'}/c \right), $$
 for absolute constants $c, c' > 0$.
 
 
Furthermore, assuming that $\tau_1^*$ is the minimizer of the optimization problem in \eqref{simpler_min_theta_taus}, we have

\begin{equation} \label{refined_bth_diff_lemma}
\tilde{\bth} - \bth_{-k}^* =  \frac{1}{n}\left(  \beta_1 \bH_{-k}^{-1}\rb + \beta_2 \bH_{-k}^{-1}\pb\right) + \eb,
\end{equation}
where $\beta_1, \beta_2$ depend only on $\tau_1^*$, as well as $\rb^\sT \bth_{-k}^*$, and $\twonorm{\bJ \bth_{-k}^*}$; and
\begin{equation} \label{bounded_diff_constants}
\mathbb{P} \left( \max\{d^{\frac32} \twonorm{\eb}, |\beta_1|, |\beta_2|, \twonorm{\pb}\} \geq  v \right) \leq  c \exp( -v^{c'}/c).
\end{equation}

\end{lemma}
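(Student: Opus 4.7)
The plan is to exploit that the loss $\ell(\bth;\rb,y_k)$ depends on $\bth$ only through the two scalars $\bth^{\sT}\rb$ and $\twonorm{\bJ\bth}^{2}$. Writing $\bth=\bth_{-k}^{*}+\bdelta$, I introduce
\[
\tau_{1}:=\rb^{\sT}\bdelta,\qquad \tau_{2}:=\pb^{\sT}\bdelta+\twonorm{\bJ\bdelta}^{2},
\]
so that $\ell(\bth;\rb,y_{k})=\tilde{\ell}(\tau_{1},\tau_{2})$ for $\tilde{\ell}$ as in \eqref{ell_tilde_GEP}. By Lemma~\ref{ell_2_diff_first}, at the minimizer $\tilde{\bth}_{k}(\rb)$ we have $\twonorm{\bdelta}=O_{d,\prob}(1/\sqrt{d})$, so the nonlinear piece $\twonorm{\bJ\bdelta}^{2}=O(1/d)$ can be absorbed into an error and the second constraint effectively linearizes to $\pb^{\sT}\bdelta\approx\tau_{2}$. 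Minimizing $S_{k}(\bth,\rb)$ then splits into two stages: first minimize $\tfrac{1}{2}\bdelta^{\sT}\bH_{-k}\bdelta$ over $\bdelta$ subject to $\rb^{\sT}\bdelta=\tau_{1}$ and $\pb^{\sT}\bdelta=\tau_{2}$, then optimize over the pair $(\tau_{1},\tau_{2})$.

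For the inner problem a Lagrangian yields $\bdelta=\bH_{-k}^{-1}(\mu_{1}\rb+\mu_{2}\pb)$ with $(\mu_{1},\mu_{2})$ determined by $\bQ[\mu_{1};\mu_{2}]=[\tau_{1};\tau_{2}]$, where $\bQ=[\rb,\pb]^{\sT}\bH_{-k}^{-1}[\rb,\pb]$ is well-conditioned (uniformly in $d$) thanks to the strong convexity of $R_{-k}$, which makes $\bH_{-k}\succeq \lambda\Iden$. Substituting back collapses the problem to a two-dimensional program
\[
\min_{\tau_{1},\tau_{2}}\;\tfrac{1}{2}[\tau_{1};\tau_{2}]^{\sT}\bQ^{-1}[\tau_{1};\tau_{2}]+\tfrac{1}{n}\tilde{\ell}(\tau_{1},\tau_{2}),
\]
and the stationarity condition $[\tau_{1};\tau_{2}]=-\tfrac{1}{n}\bQ[\partial_{\tau_{1}}\tilde{\ell};\partial_{\tau_{2}}\tilde{\ell}]$ gives $\mu_{j}=-\tfrac{1}{n}\partial_{\tau_{j}}\tilde{\ell}$ at the optimum.

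A scaling analysis now shows $\tau_{1}=O_{d,\prob}(1)$ but $\tau_{2}=O_{d,\prob}(1/\sqrt{d})$: from $\twonorm{\rb}^{2}=O(d)$ and $\opnorm{\bH_{-k}^{-1}}\le 1/\lambda$ we get $\rb^{\sT}\bH_{-k}^{-1}\rb=O(d)$, hence $\tau_{1}=O(1)$; whereas $\twonorm{\pb}=O(1)$ (since $\pb=2\bJ^{\sT}\bJ\bth_{-k}^{*}$ with $\opnorm{\bJ}$ and $\twonorm{\bth_{-k}^{*}}$ bounded on $\event_{\bW}$ and by Lemma~\ref{properties_of_bth_star}) makes $\pb^{\sT}\bH_{-k}^{-1}\rb=O(\sqrt{d})$, giving $\tau_{2}=O(1/\sqrt{d})$. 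Smoothness of $g$ for fixed $\gamma>0$ then allows replacing $\partial_{\tau_{j}}\tilde{\ell}(\tau_{1},\tau_{2})$ by $\partial_{\tau_{j}}\tilde{\ell}(\tau_{1},0)$ with a perturbation of size $O(|\tau_{2}|)=O(1/\sqrt{d})$; inserted into the dominant quadratic contribution $\tfrac{1}{2n^{2}}(\partial_{\tau_{1}}\tilde\ell)^{2}\rb^{\sT}\bH_{-k}^{-1}\rb$, this shifts the minimum by at most $O(1/n^{2})\cdot O(1/\sqrt{d})\cdot O(d)=O(d^{-3/2})$, which forms the error $e$ in \eqref{simpler_min_theta_taus}. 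The outer optimization thus reduces to the one-dimensional problem in $\tau_{1}$, and the representation \eqref{refined_bth_diff_lemma} follows by reading off $\beta_{j}=-\partial_{\tau_{j}}\tilde{\ell}(\tau_{1}^{*},0)$ in $\bdelta=\tfrac{1}{n}\bH_{-k}^{-1}(\beta_{1}\rb+\beta_{2}\pb)+\eb$, with $\eb$ absorbing the $\twonorm{\bJ\bdelta}^{2}$ contribution together with the $\tau_{2}\neq 0$ correction.

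The main obstacle will be upgrading these order-of-magnitude estimates to the stretched-exponential tail bounds claimed for $e$ in \eqref{simpler_min_theta_taus} and for $\beta_{1},\beta_{2},\twonorm{\eb},\twonorm{\pb}$ in \eqref{bounded_diff_constants}. For this I would combine: (i) the subexponential bounds of Lemma~\ref{properties_of_bth_star} on $\twonorm{\bth_{-k}^{*}}$, $|\rb^{\sT}\bth_{-k}^{*}|$, and $|\ones^{\sT}\bth_{-k}^{*}|$; (ii) the independence of $(\rb,y_{k})$ from $(\bth_{-k}^{*},\bH_{-k})$ built into the leave-one-out construction, which delivers Hanson--Wright type concentration for the quadratic forms $\rb^{\sT}\bH_{-k}^{-1}\rb$, $\rb^{\sT}\bH_{-k}^{-1}\pb$, and $\pb^{\sT}\bH_{-k}^{-1}\pb$; and (iii) subexponential concentration of $\rho_{1},\rho_{3}$ (via that of $(\bth_{-k}^{*})^{\sT}\rb$ and $y_{k}$) plus boundedness of $\rho_{2}=\twonorm{\bJ\bth_{-k}^{*}}^{2}$. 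Keeping $\gamma>0$ fixed throughout this step is crucial: the Lipschitz constants of $g'$ and $g''$ blow up as $\gamma\to0$, which is why the overall strategy in Theorem~\ref{main_theorem:GEP} first lets $d\to\infty$ and only \emph{afterwards} takes $\gamma\to 0$.
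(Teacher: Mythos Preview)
Your approach is correct and essentially the same as the paper's: both parametrize $\ell(\bth;\rb,y_k)$ through low-dimensional scalars $\tau_j$, use first-order stationarity to express $\tilde{\bth}-\bth_{-k}^*$ as $-\tfrac{1}{n}\bH_{-k}^{-1}(\partial_{\tau_1}\tilde\ell\cdot\rb+\partial_{\tau_2}\tilde\ell\cdot\pb)$ plus a small remainder, and then invoke the bound $\twonorm{\bdelta}=O(1/\sqrt{d})$ from Lemma~\ref{ell_2_diff_first} together with the smoothness of $g$ (for fixed $\gamma>0$) to replace the evaluation point $(\tau_1,\tau_2)$ by $(\tau_1,0)$. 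The only cosmetic difference is that the paper carries three separate variables $(\tau_1,\tau_2,\tau_3)$ with $\tau_3=\twonorm{\bJ(\bth-\bth_{-k}^*)}^2$ and writes the gradient equation for $\tilde{\bth}$ directly, whereas you merge $\tau_2+\tau_3$ into a single coordinate and pass through a constrained/Lagrangian reformulation before arriving at the same stationarity relation $\mu_j=-\tfrac{1}{n}\partial_{\tau_j}\tilde\ell$.
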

\begin{proof}
In the following, to simplify notation, we use $\tilde{\bth}$ instead of $\tilde{\bth}_k(\rb)$.
We have from \eqref{S_k} and \eqref{Psi_def} that
\begin{equation} \label{Psi_min}
 \Psi_k(\rb) = \Phi_{-k} +  \min_{\bth} \left\{\frac12(\bth - \bth^*_{-k})^\sT \bH_{-k} (\bth - \bth^*_{-k}) + \frac{1}{n}\ell(\bth; \rb, y_k) \right\}. 
 \end{equation}
We now decompose the term $\frac{1}{n}\ell(\bth; \rb, y_k)$ (see \eqref{ell_def}) according to the following set of simple relations:
\begin{align*}
 \bth^\sT \rb - y &=  {\bth_{-k}^*}^\sT \rb - y + ( \bth - \bth_{-k}^*)^\sT \rb, \\
 \twonorm{\bJ \bth}^2 &= \twonorm{\bJ \bth_{-k}^*}^2 +  2{\bth_{-k}^*}^\sT \bJ^\sT \bJ (  \bth -\bth_{-k}^*) + (  \bth-\bth_{-k}^*)^\sT \bJ^\sT \bJ (  \bth - \bth_{-k}^*) ,\\
 (\bth^\sT \rb - y)^2 &=  ({\bth_{-k}^*}^\sT \rb - y)^2 + 2 ({\bth_{-k}^*}^\sT \rb - y) \rb^\sT ( \bth- \bth_{-k}^*) + (\bth -\bth_{-k}^*)^\sT \rb \rb^\sT (\bth- \bth_{-k}^*),
\end{align*} 
As a result, it is easy to obtain the following: 
\begin{align*}
\frac{1}{n}\ell(\bth; \rb, y_k) =&  \frac{1}{n} \left\{ \rho_1 + \rho_2 + \rho_3 (\rb^\sT(\bth - \bth_{-k}^*)) + (\rb^\sT(\bth - \bth_{-k}^*))^2 + \pb^\sT (\bth- \bth_{-k}^*) \right\} \\
&+ \frac{1}{n} \twonorm{\bJ (\bth - \bth_{-k}^*)}^2 \\
&+ \frac{1}{n}g\left( ( \rho_1 + \rho_3 (\rb^\sT(\bth - \bth_{-k}^*)) +  (\rb^\sT(\bth - \bth_{-k}^*))^2) (\rho_2 + \pb^\sT (\bth - \bth_{-k}^*) +\twonorm{\bJ (\bth - \bth_{-k}^*)}^2 )  \right),
\end{align*}
where the parameters $\rho_j$, $j=1,2,3$, and the vector $\pb$ are defined in the following:
\rev{
\begin{align} \label{alphas_and_p}
\rho_1 &= ({\bth_{-k}^*}^\sT \rb - y_k )^2, \\ \nonumber
\rho_2 &= \twonorm{\bJ \bth_{-k}^*}^2, \\ \nonumber
\rho_3 &= 2 ({\bth_{-k}^*}^\sT \rb - y_k), \\ \nonumber
\pb^\sT &=  2{\bth_{-k}^*}^\sT \bJ^\sT \bJ.
\end{align}
}
We note that none of these quantities depend on the optimization variable $\bth$ and hence can be considered as constants w.r.t. the minimization procedure in \eqref{Psi_min}.

It will be  convenient to consider the following variables: 
\begin{equation} \label{tau_def}
\tau_1 = \rb^\sT(\bth - \bth_{-k}^*) , \,\,\,\,\,\ \text{ and } \,\,\,\,\,\,  \tau_2 = \pb^\sT (\bth - \bth_{-k}^*),  \,\,\,\,\,\ \text{ and } \,\,\,\,\,\,  \tau_3 = \twonorm{\bJ (\bth - \bth_{-k}^*)}^2.
\end{equation}
As a result,  we can write
\begin{align} \nonumber
\frac{1}{n}\ell(\bth; \rb, y_k) &=  \frac{1}{n} \left\{ \rho_1 + \rho_2 + \rho_3 \tau_1 + \tau_1^2 + \tau_2 + \tau_3+
g\left( (\rho_1 + \rho_3 \tau_1 + \tau_1^2) (\rho_2 + \tau_2+ \tau_3) \right) \right\} \\
& := \frac{1}{n}\tilde{\ell}(\tau_1, \tau_2 , \tau_3). \label{l_tilde_proof_diff}
\end{align}
Now, from \eqref{Psi_min}, we can write the following equation for $\tilde{\bth}$ (as it is the minimizer):
\begin{equation}
\bH_{-k}(\tilde{\bth} - \bth_{-k}^*) = - \frac{1}{n}\left(  \frac{\partial \tilde{\ell} }{\partial \tau_1} \rb + \frac{\partial \tilde{\ell} }{\partial \tau_2} \pb + \frac{\partial \tilde{\ell}}{\partial \tau_3} \bJ^\sT \bJ (\tilde{\bth} - \bth_{-k}^*)  \right),
\end{equation}
where the partial derivatives are evaluated at $\tau_1, \tau_2,\tau_3$  when  $\bth = \tilde{\bth}$. Consequently, we have
\begin{equation} \label{tilde_bth_eq}
\tilde{\bth} - \bth_{-k}^* = - \frac{1}{n}\left(  \frac{\partial \tilde{\ell} }{\partial \tau_1} \bH_{-k}^{-1}\rb + \frac{\partial \tilde{\ell} }{\partial \tau_2} \bH_{-k}^{-1}\pb + \frac{\partial \tilde{\ell}}{\partial \tau_3} \bH_{-k}^{-1}  \bJ^\sT \bJ (\tilde{\bth} - \bth_{-k}^*)  \right),
\end{equation}
Using the above relation, we can derive a few useful properties. First, given how $\rho_j$'s are defined in \eqref{alphas_and_p}, and by using Lemma~\ref{properties_of_bth_star}, and since $g(x) = \sqrt{x + \gamma}$ has uniformly bounded first and second derivative, and by using the fact that the operator norm of $\bJ$ is bounded, it is easy to show that for absolute constants $c,c'$ we have  
\begin{align}
\mathbb{P}\left( \max \left\{ \left|  \frac{\partial \tilde{\ell} }{\partial \tau_1} \right|,  \left| \frac{\partial \tilde{\ell} }{\partial \tau_2}\right|, \left| \frac{\partial \tilde{\ell} }{\partial \tau_3}\right| \right\} \geq v \right) \leq c \exp(-v^{c'}/c),
\end{align}
where in the above relation the partial derivatives are evaluated at $\tau_1, \tau_2,\tau_3$, when $\bth = \tilde{\bth}$.

Second, by using Lemma~\ref{ell_2_diff_first}, and the fact that the norm of the matrix $\bJ$ is bounded, as well as the fact that the operator norm of $\bH_{-k}^{-1}$ is upper-bounded by $1/\lambda$, we have  for absolute constants $c,c'$ that
\begin{equation}
\mathbb{P}\left( \twonorm{   \frac{1}{n}\bH_{-k}^{-1} \bJ^\sT  \bJ (\tilde{\bth} - \bth_{-k}^*)    }  \geq \frac{v}{d^{\frac{3}{2}}}\right) \leq   c \exp( -v^{c'}/c). 
\end{equation}

Third, from the definition of $\pb$ in \eqref{alphas_and_p}, and by using Lemma~\ref{properties_of_bth_star} as well as \eqref{tilde_bth_eq} we obtain  for absolute constants $c,c'$ that

\begin{equation}
\mathbb{P}\left( \pb^\sT(\tilde{\bth} - \bth_{-k}^*)      \geq \frac{v}{d^{\frac{1}{2}}}\right) \leq   c \exp( -v^{c'}/c). 
\end{equation}
The above relation shows that the value of $\tau_2$, evaluated at $\bth = \tilde{\bth}$, is of order $O(d^{-\frac 12})$. 

Fourth, we can write using Lemma~\ref{ell_2_diff_first} that
\begin{equation}
\mathbb{P}\left( \twonorm{\bJ (\tilde{\bth} - \bth_{-k}^*)  }     \geq \frac{v}{d^{\frac 12}}\right) \leq   c \exp( -v^{c'}/c),
\end{equation}
which essentially results in $\tau_3$, evaluated at $\bth = \tilde{\bth}$, to be  of the order $O(d^{-1})$. 

Finally, we note that for each of the partial derivatives we can write
\begin{equation} 
\mathbb{P}\left( \left|  \frac{\partial \tilde{\ell} (\tau_1, \tau_2, \tau_3) }{\partial \tau_j} -  \frac{\partial \tilde{\ell} (\tau_1, 0, 0) }{\partial \tau_j}    \right| \geq \frac{v}{d^{\frac 12}} \right) \leq c \exp( -v^{c'}/c),
\end{equation}
for $j=1,2,3$ and for $\tau_j$'s that are evaluated at $\bth = \tilde{\bth}$.

Using the above five properties, we can conclude that
\begin{equation} \label{refined_bth_diff}
\tilde{\bth} - \bth_{-k}^* = - \frac{1}{n}\left(  \frac{\partial \tilde{\ell}(\tau_1, 0, 0) }{\partial \tau_1} \bH_{-k}^{-1}\rb + \frac{\partial \tilde{\ell}(\tau_1,0,0) }{\partial \tau_2} \bH_{-k}^{-1}\pb\right) + \eb,
\end{equation}
and 
\begin{equation}
\frac{1}{n}\tilde{\ell} (\tau_1, \tau_2, \tau_3) = \frac{1}{n} \tilde{\ell} (\tau_1, 0, 0) + e', 
\end{equation}
where $\tau_1, \tau_2, \tau_3$ are computed from \eqref{tau_def} at $\bth = \tilde{\bth}$, and  
$$\mathbb{P} \left( \max\{\twonorm{\eb}, |e'|\} \geq \frac{v}{d^{\frac32}} \right) \leq  c \exp( -v^{c'}/c).$$

As a result, by defining 
$$\tilde{\ell}(\tau_1, \tau_2) := \tilde{\ell}(\tau_1, \tau_2, 0), $$
where  $\tilde{\ell}$ is given in  \eqref{l_tilde_proof_diff}, and by plugging the solution \eqref{refined_bth_diff} into the optimization in \eqref{Psi_min}, we obtain
\begin{equation}
 \Psi_k(\rb) = \Phi_{-k} +  \frac{1}{n} \min_{\tau_1} 
  \left\{ \frac{1}{2n}\left(  \frac{\partial \tilde{\ell}(\tau_1, 0) }{\partial \tau_1} \rb + \frac{\partial \tilde{\ell}(\tau_1,0) }{\partial \tau_2} \pb\right)^\sT  \bH_{-k}^{-1} 
  \left(  \frac{\partial \tilde{\ell}(\tau_1, 0) }{\partial \tau_1} \rb + \frac{\partial \tilde{\ell}(\tau_1,0) }{\partial \tau_2} \pb\right) 
+ \tilde{\ell}(\tau_1, 0) \right\} + e, 
 \end{equation}
 where 
 $$\mathbb{P} \left( |e| \geq \frac{v}{d^{\frac32}} \right) \leq  c \exp( -v^{c'}/c).$$

\end{proof}

\subsubsection{Bounding \texorpdfstring{$\twonorm{ \tilde{\bth}(\rb) - \bth^*(\rb)}$}{TEXT}} 
\begin{lemma} \label{ell_2_diff}
Fix $k \in [n]$. There exist absolute constants $b, c, c'>0$ such that for any $v \geq 0$
\begin{equation}
\mathbb{P} \left( \twonorm{ \tilde{\bth}_k(\rb) - \bth_k^*(\rb) } \geq v\frac{\left(\log (d) \right)^{b}}{d} \right) 
\leq c  \exp\left( -\frac{ v^{c'}}{c} \right) + c \exp\left(- (\log (d))^2 / c \right)
\end{equation}
\end{lemma}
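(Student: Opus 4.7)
The plan is to pass from the distance between the two minimizers to a bound on $\twonorm{\nabla R_k(\tilde{\bth}_k(\rb),\rb)}$ via strong convexity, identify that gradient as a third-order Taylor remainder of $R_{-k}$ about $\bth^*_{-k}$, and then estimate the remainder using the refined structural formula for $\tilde{\bth}_k(\rb)-\bth^*_{-k}$ provided by Lemma~\ref{simpler_min_theta_taus_lemma}. Because $R_k(\cdot,\rb)$ is at least $2\lambda$-strongly convex (the ridge term $\lambda\twonorm{\bth}^2$ alone provides this) and $\nabla R_k(\bth^*_k(\rb),\rb)=0$, we will have
\[
\twonorm{\tilde{\bth}_k(\rb)-\bth^*_k(\rb)}\;\le\;\frac{1}{2\lambda}\,\twonorm{\nabla R_k(\tilde{\bth}_k(\rb),\rb)},
\]
so it suffices to bound the right-hand side by a quantity of order $v(\log d)^{b}/d$.

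Next, the functions $R_k$ and $S_k$ share the adversarial term $\frac{1}{n}\ell(\bth;\rb,y_k)$ and differ only in that $S_k$ replaces $R_{-k}(\bth)$ by its second-order Taylor expansion at $\bth^*_{-k}$. Exploiting $\nabla S_k(\tilde{\bth}_k(\rb),\rb)=0$ together with $\nabla R_{-k}(\bth^*_{-k})=0$ gives the Taylor remainder identity
\[
\nabla R_k(\tilde{\bth}_k(\rb),\rb)
=\nabla R_{-k}(\tilde{\bth}_k(\rb))-\bH_{-k}\Delta
=\int_0^1\bigl[\nabla^2 R_{-k}(\bth^*_{-k}+t\Delta)-\bH_{-k}\bigr]\Delta\,dt,
\]
where $\Delta:=\tilde{\bth}_k(\rb)-\bth^*_{-k}$. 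Inside the integrand, the constant-in-$\bth$ pieces $2\rb_i\rb_i^\sT$ and $2\bJ^\sT\bJ$ from the Hessian formula \eqref{hessian_ell} cancel, leaving a sum of rank-at-most-three tensors of the form $\rb_i^{\otimes 3}$ or $\rb_i^{\otimes 2}\otimes(\bJ^\sT\bJ\bth)$ (and symmetrizations thereof) acted on by $\Delta$ twice, weighted by uniformly bounded derivatives of $g(x)=\sqrt{x+\gamma}$ together with factors in $\bth^\sT\rb_i-y_i$ and $\twonorm{\bJ\bth}$ which are $O_{d,\prob}(1)$ on the segment $[\bth^*_{-k},\tilde{\bth}_k(\rb)]$ by Lemma~\ref{properties_of_bth_star} and Lemma~\ref{ell_2_diff_first}.

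The main obstacle will be that the crude bound $|\rb_i^\sT\Delta|\le\twonorm{\rb_i}\twonorm{\Delta}=O(1)$ combined with $\twonorm{\Delta}^2=O_{d,\prob}(1/d)$ only delivers a rate of $d^{-1/2}$, whereas we need $d^{-1}$ up to logarithms. To recover the missing factor of $d^{-1/2}$, the plan is to substitute the explicit decomposition
\[
\Delta=\frac{1}{n}\bigl(\beta_1\bH_{-k}^{-1}\rb+\beta_2\bH_{-k}^{-1}\pb\bigr)+\eb
\]
from \eqref{refined_bth_diff_lemma} and exploit the crucial independence of $\rb=\rb_k$ from $\{(\rb_j,y_j)\}_{j\ne k}$, and hence from $\bH_{-k}$, $\bth^*_{-k}$, $\pb$, and the remaining $\rb_i$. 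Conditionally on everything but $\x_k$, the quantity $\rb_i^\sT\bH_{-k}^{-1}\rb_k$ is an $O(\sqrt{d})$-Lipschitz function of the Gaussian $\x_k$ (using $\opnorm{\bH_{-k}^{-1}}\le 1/(2\lambda)$, $\twonorm{\rb_i}=O(\sqrt{d})$, the 1-Lipschitzness of the shifted ReLU, and $\opnorm{\bW}=O(1)$ on $\event_{\bW}$), so Gaussian isoperimetry combined with a union bound over $i\in[n]$ yields $\max_{i\ne k}|\rb_i^\sT\bH_{-k}^{-1}\rb_k|=O(\sqrt{d\log d})$ outside an event of probability $c\exp(-(\log d)^{2}/c)$; this in turn gives $\max_i|\rb_i^\sT\Delta|=O(\sqrt{(\log d)/d})$. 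Combining this with $\opnorm{\frac{1}{n}\sum_i\rb_i\rb_i^\sT}=O(1)$, $\twonorm{\eb}=O_{d,\prob}(d^{-3/2})$ from \eqref{bounded_diff_constants}, and the $O_{d,\prob}(1)$ scalar factors $\beta_1,\beta_2$ and the bounded derivatives of $g$, a careful $\ell_2$-estimate of the integrand in the Taylor remainder (using $\twonorm{\frac{1}{n}\sum_i a_i\rb_i}^2\le\frac{1}{n}\opnorm{\frac{1}{n}\sum_i\rb_i\rb_i^\sT}\sum_i a_i^2$ with $a_i$ scalar) produces precisely the $(\log d)^{b}/d$ rate, with compound failure probability $c\exp(-v^{c'}/c)+c\exp(-(\log d)^{2}/c)$ absorbing the sub-Gaussian tails for $\beta_1,\beta_2,\twonorm{\eb}$ and the Gaussian concentration step.
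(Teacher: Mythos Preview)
Your proposal is correct and follows essentially the same route as the paper's proof: reduce via strong convexity to bounding $\twonorm{\nabla R_k(\tilde{\bth}_k(\rb),\rb)}$, identify this gradient as the second-order Taylor remainder of $R_{-k}$ at $\bth^*_{-k}$ (the quadratic regularizers and the constant Hessian pieces $\rb_i\rb_i^\sT$, $\bJ^\sT\bJ$ cancel), and then exploit the refined decomposition $\Delta=\tfrac{1}{n}(\beta_1\bH_{-k}^{-1}\rb+\beta_2\bH_{-k}^{-1}\pb)+\eb$ together with the independence of $\rb=\rb_k$ from $\{(\rb_t,y_t)\}_{t\ne k}$ to gain the crucial extra $d^{-1/2}$. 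The paper carries this out by explicitly cataloguing the finitely many ``forms'' the remainder terms can take (its displays \eqref{forms1}--\eqref{forms2}) and bounding each via an auxiliary lemma (Lemma~\ref{all_the_bounds}), whereas you package the same estimates more compactly through the integral remainder and the inequality $\twonorm{\tfrac{1}{n}\sum_i a_i\rb_i}^2\le\tfrac{1}{n}\opnorm{\tfrac{1}{n}\sum_i\rb_i\rb_i^\sT}\sum_i a_i^2$; the underlying concentration step for $\rb_t^\sT\bH_{-k}^{-1}\rb_k$ (your Gaussian isoperimetry on $\bx_k$, the paper's sub-Gaussianity of $\rb$) is the same in substance.
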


\begin{proof}
To simplify notation, in this lemma we use $\bth^*$ instead of $\bth_k^*(\rb)$ and $\tilde{\bth}$ instead of $\tilde{\bth}_k(\rb)$. Also, we define
$$q(\bth) = \lambda\twonorm{\bth}^2 + \lambda_w \twonorm{\frac{1}{2}\bW^\sT\bth - \bbeta}^2 +  \lambda_s\frac{\log (d)}{d} (\ones^\sT \bth)^2$$

Due to the the fact that $R(\bth, \rb)$ is $\lambda$-strongly convex, we can write
\begin{equation}
\twonorm{\bth^* - \tilde{\bth}} \leq \frac{1}{\lambda} \twonorm{\nabla_\bth R(\tilde{\bth}, \rb) }
\end{equation}

Consequently, we will bound the right-hand-side in the above relation. By using the fact that $\bth^*_{-k}$ is the minimizer of the function $R_{-k}$, we can write
\begin{equation} \label{gradient_bth_tilde}
 \nabla_\bth R(\tilde{\bth}, \rb)  =  \nabla_\bth R(\tilde{\bth}, \rb)  -  \nabla_\bth R_{-k}(\bth_{-k}^*).
 \end{equation}

 From \eqref{relation_Rs} and \eqref{gradient_bth_tilde} we have 
 \begin{align*}
  \nabla_\bth R(\tilde{\bth}, \rb)  &= \nabla R_{-k}(\tilde{\bth}) + \frac{1}{n} \nabla \ell( \tilde{\bth}; \rb, y_k) - \nabla R_{-k}(\bth_{-k}^*)  \\
  & = \frac{1}{n} \sum_{t \neq k} ( \nabla \ell( \tilde{\bth}; \rb_t, y_t) -  \nabla \ell( \bth_{-k}^*; \rb_t, y_t) ) +  \nabla q(\tilde{\bth}) - \nabla q(\bth_{-k}^*)  + \frac{1}{n}\nabla \ell(\tilde{\bth};\rb, y_k), 
 \end{align*}

By using the fact that $\tilde{\bth}$ is the minimizer of \eqref{S_k}, we can write
  \begin{align*}
  \nabla_\bth R(\tilde{\bth}, \rb) 
  & = \frac{1}{n} \sum_{t \neq k} ( \nabla \ell( \tilde{\bth}; \rb_t, y_t) - \nabla \ell(\bth_{-k}^*;\rb_t, y_t) )  - \bH_{-k} (\tilde{\bth} - \bth_{-k}^*) + \nabla q(\tilde{\bth}) - \nabla q(\bth_{-k}^*) 
 \end{align*}
 Now, from \eqref{H_def}, we obtain 
  \begin{align}    \label{bound_gradient_f_part}
  \nabla_\bth R(\tilde{\bth}, \rb) =
 \frac{1}{n}  \sum_{t \neq k}  \nabla \ell(\tilde{\bth}; \rb_t, y_t) - \nabla \ell( \bth_{-k}^*; \rb_t, y_t)  - \nabla^2 \ell( & \bth_{-k}^*;\rb_t, y_t)(\tilde{\bth} - \bth_{-k}^*),    
 \end{align}
where in the above we have used the fact that, since $q$ is a quadratic function, we have $\nabla q(\tilde{\bth}) - \nabla q(\bth_{-k}^*) - \nabla^2 q(\bth_{-k}^*) (\tilde{\bth} - \bth_{-k}^*) = 0$. We will now analyze each of the terms above.  We first bound the first term (i.e. the sum involving the derivatives of $\ell$). We will use the following simple relations for any choice of $\rb, y$:

\begin{align*}
 \tilde{\bth}^\sT \rb - y &=  {\bth_{-k}^*}^\sT \rb - y + ( \tilde{\bth} - \bth_{-k}^*)^\sT \rb, \\
 \twonorm{J \tilde{\bth}}^2 &= \twonorm{\bJ \bth_{-k}^*}^2 +  2{\bth_{-k}^*}^\sT \bJ^\sT \bJ ( \tilde{\bth} -\bth_{-k}^*) + ( \tilde{\bth} -\bth_{-k}^*)^\sT \bJ^\sT \bJ ( \tilde{\bth} - \bth_{-k}^*) ,\\
 (\tilde{\bth}^\sT \rb - y)^2 &=  ({\bth_{-k}^*}^\sT \rb - y)^2 + 2 ({\bth_{-k}^*}^\sT \rb - y) \rb^\sT ( \tilde{\bth} - \bth_{-k}^*) + ( \tilde{\bth} -\bth_{-k}^*)^\sT \rb \rb^\sT ( \tilde{\bth} - \bth_{-k}^*),
\end{align*}
and 
\begin{align*}
 h(\tilde{\bth}; \rb, y)  - h(\bth_{-k}^*; \rb, y ) & = (\tilde{\bth}^\sT \rb - y)^2  \twonorm{J \tilde{\bth}}^2  -   ({\bth_{-k}^*}^\sT \rb - y)^2 \twonorm{\bJ \bth_{-k}^*}^2 \\
 & = \nabla h(\bth_{-k}^*; \rb, y )^\sT ( \tilde{\bth} - \bth_{-k}^*) + e(\rb, y)\\
 & =  2( \twonorm{\bJ \bth_{-k}^*}^2 \rb^\sT + ({\bth_{-k}^*}^\sT \rb - y)^2 \bJ^\sT \bJ {\bth_{-k}^*}^\sT ) ( \tilde{\bth} - \bth_{-k}^*) + e(\rb, y),
\end{align*}
where the error term $e(\rb, y)$ can be written as 
$$e(\rb, y) = ( \tilde{\bth} - \bth_{-k}^*)^\sT \nabla_\bth^2 h(\bth; \rb, y)  \mid_{\bth = \bth(\rb,y)} ( \tilde{\bth} - \bth_{-k}^*),$$
and $\bth(\rb,y) =  \zeta \bth_{-k}^*  + (1-\zeta) \tilde{\bth}  $ for some $\zeta \in [0,1]$ which depends on $\rb$ and $y$. 

We will also use  the Taylor expansion:
\begin{align*}
g'\left(  h(\tilde{\bth}; \rb, y_t ) \right) = 
g'\left( h(\bth_{-k}^*; \rb_t, y_t )  \right) &
+ g''\left( h(\bth_{-k}^*; \rb_t, y_t ) \right) (h(\tilde{\bth}; \rb_t, y_t)  - h(\bth_{-k}^*; \rb_t, y_t )) \\
&+ \frac 12 g'''(v_t) \left( h(\tilde{\bth}; \rb_t, y_t)  - h(\bth_{-k}^*; \rb_t, y_t ) \right)^2, 
\end{align*}
where $v_t$ is a number between $h(\tilde{\bth}; \rb_t, y_t) $ and $h(\bth_{-k}^*; \rb_t, y_t )$.

From \eqref{grad_ell} and \eqref{hessian_ell} we will decompose:
\begin{align*}
 \nabla \ell(\tilde{\bth}; \rb_t, y_t) - \nabla \ell( \bth_{-k}^*; \rb_t, y_t)  - \nabla^2 \ell( & \bth_{-k}^*;\rb_t, y_t)(\tilde{\bth} - \bth_{-k}^*)  = \text{Term}_1 +  \text{Term}_2 -   \text{Term}_3,
\end{align*}
where the terms are given in \eqref{diff_theta_term_1}, \eqref{diff_theta_term_2}, and \eqref{diff_theta_term_3}. We will bound each of these terms in the following. We have
\begin{align}\label{diff_theta_term_1}
& \text{Term}_1 = 2  \left( \nabla h(\tilde{\bth}; \rb_t, y_t) -  \nabla h(\bth_{-k}^*; \rb_t, y_t)  \right) g'( h(\bth_{-k}^*; \rb_t, y_t ) )  
\end{align}
where 
\begin{align*}
&\nabla h(\tilde{\bth}; \rb_t, y_t) -  \nabla h(\bth_{-k}^*; \rb_t, y_t)    \\
&=  2\left( (\tilde{\bth}^\sT \rb_t - y_t) \twonorm{\bJ \tilde{\bth}}^2 \rb_t + (\tilde{\bth}^\sT \rb_t - y_t)^2  \bJ^\sT \bJ \tilde{\bth} - ({\bth_{-k}^*}^\sT \rb_t - y_t) \twonorm{\bJ \bth_{-k}^*}^2 \rb_t + ({\bth_{-k}^*}^\sT \rb_t - y_t)^2  \bJ^\sT \bJ \bth_{-k}^*  \right)
\end{align*}
And thus,
\begin{align}
&\nabla h(\tilde{\bth}; \rb_t, y_t) -  \nabla h(\bth_{-k}^*; \rb_t, y_t)  \label{diff_gradients}   \\
&=2 \left( (\tilde{\bth} - \bth_{-k}^*  )^\sT \rb_t \twonorm{\bJ \bth_{-k}^* }^2 + (\tilde{\bth}^\sT \rb_t - y_t) \left(2 {\bth_{-k}^*}^\sT  \bJ^\sT \bJ   (\tilde{\bth} - \bth_{-k}^*  )    + (\tilde{\bth} - \bth_{-k}^*  )^\sT \bJ^\sT \bJ   (\tilde{\bth} - \bth_{-k}^*  )   \right)   \right)  \rb_t  \nonumber \\
&\,\,\,+  2\left((\tilde{\bth}^\sT \rb_t - y)^2 \bJ^\sT \bJ (\tilde{\bth} - \bth_{-k}^*  ) + \left(2 (\tilde{\bth}^\sT \rb_t- y_t) \rb^\sT  (\tilde{\bth} - \bth_{-k}^*  ) + (\tilde{\bth} - \bth_{-k}^*  )^\sT \rb_t\rb_t^\sT  (\tilde{\bth} - \bth_{-k}^*  ) \nonumber\right) \bJ^\sT  \bJ \tilde{\bth}  \right)
\end{align}
We also have
\begin{align} \label{diff_theta_term_2}
 \text{Term}_2 & =   2\nabla h(\tilde{\bth}; \rb_t, y_t)   g''\left( h(\bth_{-k}^*; \rb_t, y_t )\right) \left(h(\tilde{\bth}; \rb_t, y_t)  - h(\bth_{-k}^*; \rb_t, y_t ) \right)  \\
 & = 2\nabla h(\tilde{\bth}; \rb_t, y_t) g''\left( h(\bth_{-k}^*; \rb_t, y_t ) \right) \left( \nabla  h(\bth_{-k}^*; \rb_t, y_t )^\sT ( \tilde{\bth} - \bth_{-k}^*) + e(\rb_t, y_t) \right), \nonumber
\end{align}
and
\begin{align} \nonumber 
 \text{Term}_3  = & 2\left(\nabla^2 h(\bth_{-k}^*; \rb_t, y_t)   g'\left(h(\bth_{-k}^*; \rb_t, y_t)\right)  
 + \nabla h(\bth_{-k}^*; \rb_t, y_t) (\nabla h(\bth_{-k}^*; \rb_t, y_t))^\sT   g''\left(h(\bth_{-k}^*; \rb_t, y_t ) \right) \right)(\tilde{\bth} - \bth_{-k}^*)\\ 
 &\!-\frac 12 g'''(v_t) \left( h(\tilde{\bth}; \rb_t, y_t)  - h(\bth_{-k}^*; \rb_t, y_t ) \right)^2, \label{diff_theta_term_3}
\end{align}
After some straight-forward steps, we can write
\begin{align} \label{three_terms_together}
  &\text{Term}_1 +  \text{Term}_2 -   \text{Term}_3 \\
 &=  4 g'\left( h(\bth_{-k}^*; \rb_t, y_t ) \right) \biggl(2 \rb_t^\sT(\tilde{\bth} - \bth_{-k}^*  ) {\bth_{-k}^*}^\sT \bJ^\sT \bJ (\tilde{\bth} - \bth_{-k}^*  ) \rb_t    \nonumber\\
 &  \quad\quad\quad \quad\quad\quad\quad\quad\quad +  2\rb_t^\sT(\tilde{\bth} - \bth_{-k}^*  ) \left( ({\bth_{-k}^*}^\sT \rb_t - y_t) \bJ^\sT \bJ (\tilde{\bth} - \bth_{-k}^*  ) + \rb_t^\sT  (\tilde{\bth} - \bth_{-k}^*  ) \bJ^\sT \bJ  \tilde{\bth}   \right)   \nonumber\\
&   \quad\quad\quad \quad\quad\quad\quad\quad\quad+ \left(2 ({\bth_{-k}^*}^\sT \rb_t - y_t) \rb_t^\sT ( \tilde{\bth} - \bth_{-k}^*) + ( \tilde{\bth} -\bth_{-k}^*)^\sT \rb_t \rb_t^\sT ( \tilde{\bth} - \bth_{-k}^*) ) \bJ^\sT \bJ (\tilde{\bth} - \bth_{-k}^*)  \right)  \nonumber\\
& \quad\quad\quad \quad\quad\quad\quad\quad\quad + (\rb_t^\sT(\tilde{\bth} - \bth_{-k}^*))^2 \bJ^\sT \bJ \tilde{\bth} + \twonorm{\bJ(\tilde{\bth} - \bth_{-k}^*)}^2 \rb_t    \biggr)  \nonumber\\
  &+    2g''\left(h(\bth_{-k}^*; \rb_t, y_t) \right)   \biggl( \left(\nabla  h(\tilde{\bth}; \rb_t, y_t )  - \nabla  h(\bth_{-k}^*; \rb_t, y_t )\right) (\tilde{\bth} - \bth_{-k}^*)^\sT \nabla  h(\bth_{-k}^*; \rb_t, y_t )  +   \nabla h(\tilde{\bth}; \rb_t, y_t) e(\rb_t, y_t)  \biggr) \nonumber\\
  & \,\, -\frac 12 g'''(v_t)  \bigl( h(\tilde{\bth}; \rb_t, y_t) \, - h(\bth_{-k}^*; \rb_t, y_t ) \bigr)^2.  \nonumber
\end{align}
The relation \eqref{three_terms_together} has itself three different terms. We will now simplify and bound each of the terms above. However, we remark that all the three terms will be bounded in a similar way. Let's consider the first term in the right-hand-side of \eqref{three_terms_together}.  The first part of this term is: 
$$ 4 g'\left( h(\bth_{-k}^*; \rb_t, y_t ) \right) \times 2 \rb_t^\sT(\tilde{\bth} - \bth_{-k}^*  ) {\bth_{-k}^*}^\sT \bJ^\sT \bJ (\tilde{\bth} - \bth_{-k}^*  ) \rb_t,  $$
which can be rewritten as 
$$ 8 g'\left( h(\bth_{-k}^*; \rb_t, y_t \right)  {\bth_{-k}^*}^\sT \bJ^\sT \bJ (\tilde{\bth} - \bth_{-k}^*  ) \rb_t \rb_t^\sT (\tilde{\bth} - \bth_{-k}^*  ). $$
Now, by using the fact that the first derivatives of the function $g$ is uniformly bounded, and by some straight-forward usages of the Cauchy-Schwartz inequality, we can easily rewrite the above part as 
$$ \alpha_{1,t} \pb^\sT \,   (\tilde{\bth} - \bth_{-k}^*  ) \rb_t \rb_t^\sT (\tilde{\bth} - \bth_{-k}^*  ),$$
where the vector $\pb = {\bth_{-k}^*}^\sT \bJ^\sT \bJ$ \emph{does not depend} on $t$, and $\alpha_{1,t}$ is a constant. We can further write:  
$$ | \alpha_{1,t}| \leq C \text{ and } \twonorm{\pb} \leq \twonorm{J}^2 \twonorm{ \bth_{-k}^*} \leq  \twonorm{\bJ}^4 + \twonorm{ \bth_{-k}^*}^2, $$
where $C$ is an absolute constant.  

Let us now consider the second part of the first term in \eqref{three_terms_together}, which is: 
$$ 4 g'\left( h(\bth_{-k}^*; \rb_t, y_t ) \right)  \times 2\rb_t^\sT(\tilde{\bth} - \bth_{-k}^*  )  ({\bth_{-k}^*}^\sT \rb_t - y_t) \bJ^\sT \bJ (\tilde{\bth} - \bth_{-k}^*  ). $$
We can rewrite this part as 
$$ 8 g'\left( h(\bth_{-k}^*; \rb_t, y_t ) \right)  \times  ({\bth_{-k}^*}^\sT \rb_t - y_t) \bJ^\sT \bJ  (\tilde{\bth} - \bth_{-k}^*  )^\sT    \rb_t (\tilde{\bth} - \bth_{-k}^*  ) =  \alpha_{2,t} \A \, (\tilde{\bth} - \bth_{-k}^*  )^\sT    \rb_t (\tilde{\bth} - \bth_{-k}^*  ),   $$
where, the matrix $\A$ is the same for all  $t$, and 
$$ |\alpha_{2,t}| \leq C |{\bth_{-k}^*}^\sT \rb_t - y_t | \text{ and } \twonorm{\A} \leq \twonorm{\bJ}^2.$$
In a similar way, one can inspect all the parts of the first term in \eqref{three_terms_together} and show that they take the form of one of the following: 
\begin{align} \label{forms1} \nonumber
& \bullet \alpha_{1,t}\, \pb_1^\sT \,   (\tilde{\bth} - \bth_{-k}^*  ) \rb_t \rb_t^\sT (\tilde{\bth} - \bth_{-k}^*  ), \\
\nonumber
& \bullet \alpha_{2,t}\, \A_1 \, (\tilde{\bth} - \bth_{-k}^*  )^\sT    \rb_t (\tilde{\bth} - \bth_{-k}^*  ), \\
\nonumber
& \bullet \alpha_{3,t}\, \pb_2 (\tilde{\bth} - \bth_{-k}^*  )^\sT \rb_t \rb_t^\sT (\tilde{\bth} - \bth_{-k}^*  ), \\
\nonumber
& \bullet \alpha_{4,t}\, (\tilde{\bth} - \bth_{-k}^*  )^\sT \A_2 (\tilde{\bth} - \bth_{-k}^*  )   \rb_t ,\\
& \bullet \alpha_{5,t}\, ((\tilde{\bth} - \bth_{-k}^*  )^\sT \rb_t)^2 \pb_3,
\end{align}
where \rev{$\pb_1, \pb_2, \pb_3, \A_1,\A_2$} do not depend on $t$, and we have
\begin{equation}
|\alpha_{j,t}| \leq C(1 + ({\bth_{-k}^*}^\sT \rb_t - y)^2) \text{ and } \max\{|\gamma|, \twonorm{\pb_1}, \twonorm{\pb_2},\twonorm{\pb_3}, \twonorm{\A_1}, \twonorm{\A_2} \} \leq 
C(1 + \twonorm{\bJ}^2). 
\end{equation}
We will now consider the second term in the right-hand-side of \eqref{three_terms_together} which can be expanded using  \eqref{gradient_h} and \eqref{diff_gradients}. Again, one can inspect all the parts and show that they take  one of the forms in the following (in addition to the forms presented in \eqref{forms1}): 
\begin{align} \label{forms2} 
& \bullet \alpha_{3,t} \,   (\tilde{\bth} - \bth_{-k}^*  ) \rb_t \rb_t^\sT  (\tilde{\bth} - \bth_{-k}^*  )  (\tilde{\bth} - \bth_{-k}^*  )^\sT \, \pb_3, \\
\nonumber
& \bullet \alpha_{4,t} \,  (\rb_t^\sT (\tilde{\bth} - \bth_{-k}^*  ))^2 \rb_t, \\
\nonumber
& \bullet  \alpha_{5,t}\, \pb_4^\sT \,   (\tilde{\bth} - \bth_{-k}^*  ) \rb_t \rb_t^\sT (\tilde{\bth} - \bth_{-k}^*  ),
\end{align}
where, for some positive constant $C$ and even integer $D$ we have
\begin{equation} \label{bounds_on_unimportants}
|\alpha_{j,t}| \leq C(1 + ({\bth_{-k}^*}^\sT \rb - y)^D + (\tilde{{\bth}}^\sT \rb - y)^D) \text{ and } \max \{ \twonorm{\pb_3}, \twonorm{\pb_4} \} \leq 
C(1 + \twonorm{\bJ}^D + \twonorm{\bth_{-k}^*}^D). 
\end{equation}

A similar bounding can be done for the third term in \eqref{three_terms_together}.  

We now claim that the sum of each of the terms in \eqref{forms1} and \eqref{forms2} over $t$ is at most  of order $O(\text{polylog }(n) / n )$. Let's consider the first term in \eqref{forms1}. We can write
\begin{align*} 
\frac{1}{n}\twonorm{ \sum_t  \alpha_{1,t}\, \pb_1^\sT \,   (\tilde{\bth} - \bth_{-k}^*  ) \rb_t \rb_t^\sT (\tilde{\bth} - \bth_{-k}^*  ) } & \leq \frac{1}{n} \sup_t\{|\alpha_{1,t}| \} \left| \pb_1^\sT (\tilde{\bth} - \bth_{-k}^*) \right|   \twonorm{ \sum_{t \neq k} \rb_t \rb_t^\sT} \twonorm{\tilde{\bth} - \bth_{-k}^*} \\
& = \sup_t\{|\alpha_{1,t}| \} \left|  \pb_1^\sT (\tilde{\bth} - \bth_{-k}^*) \right|   \twonorm{ \frac{1}{n}\sum_{t \neq k} \rb_t \rb_t^\sT} \twonorm{\tilde{\bth} - \bth_{-k}^*} 
\end{align*}
Now, from Lemma~\ref{all_the_bounds} it should be clear why the above quantity is small. Informally, and neglecting the polylog factors,  the lemma asserts that with high probability the terms $\sup_t\{|\alpha_{1,t}|\}$, and $\twonorm{ \frac{1}{n}\sum_{t \neq k} \rb_t \rb_t^\sT}$ are all $O(1)$; but the term $\left|  \pb_1^\sT (\tilde{\bth} - \bth_{-k}^*) \right|$, is $O(1/n)$ as $\p_1$ is a fixed vector (independent of $t$), and $\twonorm{\tilde{\bth} - \bth_{-k}^*} $ is $O(n^{-\frac12})$. As a result, the whole expression is $O(n^{-\frac32})$. Formally, it is easy to conclude from Lemma~\ref{all_the_bounds} that for a given $k \in [d]$:
\begin{align} \label{probabilistic_diff_thetas}
\mathbb{P}\left( \frac{1}{n}\twonorm{ \sum_t  \alpha_{1,t}\, \pb_1^\sT \,   (\tilde{\bth} - \bth_{-k}^*  ) \rb \rb^\sT (\tilde{\bth} - \bth_{-k}^*  ) } \geq v \frac{(\log (d))^b}{d^{\frac 32}}\right) \leq c \exp(- v^{c'}/c) + c \exp\left(- (\log(d))^2/c\right),
\end{align}
for some absolute constants $c,c', c'' > 0$.  

Let's now consider the second term in \eqref{forms1}. We can write
\begin{align*} 
 \twonorm{ \frac{1}{n} \sum_t  \alpha_{2,t}\, \A_1 \, (\tilde{\bth} - \bth_{-k}^*  )^\sT    \rb_t (\tilde{\bth} - \bth_{-k}^*  )} 
& =\twonorm{\A_1} \twonorm{\tilde{\bth} - \bth_{-k}^*}  \twonorm{ (\tilde{\bth} - \bth_{-k}^*)^\sT \frac{1}{n}  \sum_{t \neq k}  \alpha_{2,t} \rb_t} \\
& =\twonorm{\A_1} \twonorm{\tilde{\bth} - \bth_{-k}^*}  \twonorm{ (\tilde{\bth} - \bth_{-k}^*)^\sT  \times \frac{1}{n}\sum_{t \neq k}  \alpha_{2,t} \rb_t}. 
\end{align*}
Now, note from the first part of Lemma~\ref{all_the_bounds} that the norm of the vector $\frac{1}{n}\sum_{t \neq k}  \alpha_{2,t} \rb_t^\sT$  is w.h.p. $O(1)$. Also, this vector is independent from $\rb$, and hence from the second part of Lemma~\ref{all_the_bounds} we obtain that w.h.p. $\twonorm{ (\tilde{\bth} - \bth_{-k}^*)^\sT  \times \frac{1}{n}\sum_{t \neq k}  \alpha_{2,t} \rb_t} $ is $O(n^{-\frac12})$. Consequently, by noting that $\twonorm{\tilde{\bth} - \bth_{-k}^*}$ is w.h.p $O(n^{-\frac12})$ we obtain that the whole expression is w.h.p. $O(n^{-\frac32})$. The formal expression would be like the probabilistic expression given in \eqref{probabilistic_diff_thetas}. 

Similarly, for the term $ \alpha_{4,t} \,  (\rb_t^\sT (\tilde{\bth} - \bth_{-k}^*  ))^2 \rb_t$ we can write
\begin{align} \label{important_term_diff_thetas}
\twonorm{\frac{1}{n} \sum_{t \neq k}  \alpha_{4,t} \,  (\rb_t^\sT (\tilde{\bth} - \bth_{-k}^*  ))^2 \rb_t} \leq 
\sup_{t} \left\{ (\rb_t^\sT (\tilde{\bth} - \bth_{-k}^*  ))^2 \right\} \twonorm{\frac{1}{n} \sum_{t \neq k}  \alpha_{4,t} \, \rb_t}.  
\end{align}
Now, by part (e) of Lemma~\ref{all_the_bounds} we can easily conclude that 
$$\mathbb{P}\left(  \sup_{t} \left\{ (\rb_t^\sT (\tilde{\bth} - \bth_{-k}^*  ))^2 \right\}  \geq (\log (d) )^b \right) \leq c \exp\left(- (\log(d))^2/c\right), $$
for absolute constants $b,c > 0$ that are suitably chosen. A similar conclusion can be made for $\sup_t \{|\alpha_{4,t}|\}$ from \eqref{bounds_on_unimportants} and part (g) of Lemma~\ref{all_the_bounds}--i.e.  
$$\mathbb{P}\left(  \sup_{t} \left\{  | \alpha_{4,t}|  \right\}  \geq (\log (d) )^b \right) \leq c \exp\left(- (\log(d))^2/c\right). $$
As a result, by using the above bounds, as well as \eqref{important_term_diff_thetas}, and part (a) of Lemma~\ref{all_the_bounds} we obtain
 \begin{align*}
\mathbb{P}\left( \twonorm{\frac{1}{n} \sum_{t \neq k}  \alpha_{4,t} \,  (\rb_t^\sT (\tilde{\bth} - \bth_{-k}^*  ))^2 \rb_t}  \geq v \frac{(\log(d))^b}{d}\right) \leq c \exp(- v^{c'}/c) + c \exp\left(- (\log(d))^2/c\right),
\end{align*}
In a similar way as the above, we can show that the sum of all the terms in \eqref{forms1} and \eqref{forms2} over $t$ have similar bounds. As a result, going back to \eqref{bound_gradient_f_part}, we have shown that the sum can be bounded to give the desired result as in the lemma.  

\end{proof}
\begin{lemma} \label{all_the_bounds}
For some absolute constants $b, c,c' > 0$ we have:
\begin{itemize}
\item[(a)] $$\mathbb{P} \left(\twonorm{\frac{1}{n} \sum_{t \neq k} \rb_t \rb_t^\sT} \geq v \right) \leq c\exp(-v^{c'}/c).  $$ 
 
\item[(b)] Given any sequence of numbers $\{\alpha_t\}_{t=1}^n$ such that $|\alpha_t| \leq 1$, we have
$$\mathbb{P}\left(\twonorm{\frac{1}{n} \sum_{t \neq k} \alpha_t \rb_t} \geq v \right) \leq ce^{-v^{c'}/c}.  $$
 
 \item[(c)] Given a vector $\ub$, with $\twonorm{\ub} = 1$, we can write
 $$ \mathbb{P} \left( | \ub^\sT \rb| \geq v  \right) \leq c e^{-v/c}. $$
 
\item[(d)] Given a vector $\ub$, with $\twonorm{\ub} = 1$, which is independent from 
 $\rb$, we have $$\mathbb{P}\left( \left| \ub^\sT (\tilde{\bth} - \bth_{-k}^*) \right| \geq \frac{v}{n}  \right) \leq ce^{- v/c}.$$ 
 
 \item[(e)] For $\rb$ generated according to either of the distributions in \eqref{rb_distrbutions}, we have
 $$\mathbb{P} \left( \twonorm{\rb} \geq v \sqrt{n} \right) \leq c \exp(- v^2/c).  $$
 
 \item[(f)] We further have
  $$\mathbb{P}\left( \left| \rb_t^\sT (\tilde{\bth} - \bth_{-k}^*) \right| \geq \frac{v}{\sqrt{n}}  \right) \leq ce^{- v/c}.$$ 
 
\item[(g)] For a given even integer $D>0$ we have: 
$$\mathbb{P}\left(\max\left\{{(\bth_{-k}^*}^\sT \rb - y)^D , (\tilde{{\bth}}^\sT \rb - y)^D , \twonorm{\bth_{-k}^*}^D \right\} \geq v \right) \leq ce^{-v^{c'}/c}.$$ 
As a corollary, we have
$$\mathbb{P}\left(\max\left\{{(\bth_{-k}^*}^\sT \rb - y)^D , (\tilde{{\bth}}^\sT \rb - y)^D , \twonorm{\bth_{-k}^*}^D \right\} \geq (\log (d))^b \right) \leq c\exp \left\{-(\log (d))^2/c \right\}.$$

 \end{itemize}
\end{lemma}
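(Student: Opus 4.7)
The plan is to prove the seven bounds by first establishing that both feature models in \eqref{rb_distrbutions} give rise to sub-Gaussian vectors conditionally on the event $\event_{\bW}$, and then to route each item (a)--(g) through a standard concentration inequality combined with the leave-one-out structure that is already in place. Throughout, all bounds are conditioned on the high probability event $\event_{\bW}$ (and, when needed, on the event $\twonorm{\bth^*_{-k}}\le C$ from Lemma~\ref{properties_of_bth_star}), so the random matrix $\bW$ may be treated as a fixed matrix with $\opnorm{\bW}=O(1)$ and nearly orthonormal rows.

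The first block (a), (b), (c), (e) is essentially about the random vector $\rb$. Using the shifted-ReLU Lipschitzness of $\sigma$ together with $\opnorm{\bW}=O(1)$, Gaussian concentration for Lipschitz functions gives that every linear form $\ub^\sT\sigma(\bW\x)$ with $\twonorm{\ub}=1$ is sub-Gaussian with an absolute constant; the same is trivially true for $\ub^\sT\mbf$ since $\mbf=\mu_1\bW\x+\mu_2\ub$ is a Gaussian vector with covariance of bounded operator norm. This immediately yields (c) and, via the Hanson--Wright inequality (or $\chi^2$-type tail bounds for $\mbf$), also (e), since $\twonorm{\rb}^2$ is a quadratic form in a Gaussian vector with expectation $\Theta(n)$ and sub-exponential deviations. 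For (a), I will invoke Vershynin's standard sample-covariance bound for i.i.d.\ sub-Gaussian vectors (\cite[Thm.~5.39]{Vers}) to get $\opnorm{\tfrac1n\sum_{t\ne k}\rb_t\rb_t^\sT}\le C(1+v)^2$ with the claimed exponential tail. Item (b) is obtained from a standard sub-Gaussian vector Bernstein inequality: since $\alpha_t\rb_t$ are independent, mean zero (after centering; the centering constant is itself $O(1)$) and sub-Gaussian with constant $O(1)$, the norm of their normalized sum enjoys the desired tail $\exp(-v^{c'}/c)$.

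Item (g) is a direct consequence of the sub-Gaussianity established above combined with Lemma~\ref{properties_of_bth_star}. On the event $\{\twonorm{\bth^*_{-k}}\le C\}$ we have that $\bth^{*\top}_{-k}\rb$ is sub-Gaussian with parameter $O(1)$, and $y$ is sub-Gaussian by \eqref{eq:linearModel}, so $\bth^{*\top}_{-k}\rb-y$ is sub-Gaussian; its $D$-th power is therefore sub-Weibull with tail $\exp(-v^{c'}/c)$, giving both the generic bound and the poly-log corollary after a union with the event $\{\twonorm{\bth^*_{-k}}\ge C\}$ from Lemma~\ref{properties_of_bth_star}(a). The same reasoning applied to $\tilde\bth$ uses that $\twonorm{\tilde\bth-\bth^*_{-k}}=O(d^{-1/2})$ from Lemma~\ref{ell_2_diff_first}, so $\tilde\bth^\sT\rb-y$ differs from $\bth^{*\top}_{-k}\rb-y$ by an $O(1)$ random quantity with tail $\exp(-v^{c'}/c)$.

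The only genuinely non-trivial items are (d) and (f), and they both hinge on the refined leave-one-out expansion \eqref{refined_bth_diff_lemma} that is proved in Lemma~\ref{simpler_min_theta_taus_lemma}. The key formula is
\begin{equation*}
\tilde{\bth}-\bth^*_{-k}=\tfrac{1}{n}\bigl(\beta_1\bH_{-k}^{-1}\rb+\beta_2\bH_{-k}^{-1}\pb\bigr)+\eb,
\end{equation*}
with $|\beta_1|,|\beta_2|,\twonorm{\pb}$ having sub-Weibull tails and $\twonorm{\eb}=O(d^{-3/2})$ up to such tails. For (d), $\ub$ is independent of $\rb$, so by (c) applied to the fixed unit vector $\bH_{-k}^{-1}\ub/\twonorm{\bH_{-k}^{-1}\ub}$ (note $\opnorm{\bH_{-k}^{-1}}\le 1/\lambda$) we have $|\ub^\sT \bH_{-k}^{-1}\rb|=O(1)$ with exponential tails; the deterministic term $\ub^\sT\bH_{-k}^{-1}\pb$ is bounded in the same way using $\twonorm{\pb}$; dividing by $n$ and absorbing $\eb$ yields the $v/n$ rate. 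For (f), the same expansion reduces the problem to controlling $\rb_t^\sT\bH_{-k}^{-1}\rb$ and $\rb_t^\sT\bH_{-k}^{-1}\pb$. When $t=k$ (i.e.\ $\rb_t=\rb$), the first is a quadratic form $\rb^\sT\bH_{-k}^{-1}\rb$ with $\bH_{-k}^{-1}$ independent of $\rb$, and Hanson--Wright gives a sub-exponential tail around its expectation $O(1)$, so dividing by $n$ gives $O(1/n)$ which is within the $v/\sqrt n$ target; when $t\ne k$, $\rb_t$ is independent of $\rb$ but not of $\bH_{-k}^{-1}$, so I will use a second leave-one-out step, replacing $\bH_{-k}^{-1}$ by $\bH_{-\{k,t\}}^{-1}$ via the Sherman--Morrison identity and absorbing the rank-one correction into an $O(1/n)$ term, after which $\rb_t^\sT\bH_{-\{k,t\}}^{-1}\rb$ is again a bilinear form in two independent sub-Gaussian vectors and a fixed matrix, yielding $O(1)$ with the right tail; the $\pb$-term is handled identically. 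The main obstacle throughout is this double leave-one-out handling in (f), because $\bH_{-k}^{-1}$ couples to every $\rb_t$ through the loss derivatives, and one must verify that the rank-one update does not spoil the $\lambda$-strong-convexity lower bound used to bound $\opnorm{\bH^{-1}}$.
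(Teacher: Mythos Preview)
Your treatment of parts (a), (b), (c), (e), (g), and (d) is essentially the paper's own argument: reduce everything to the sub-Gaussianity of $\rb$ (via Lipschitz concentration for $\sigma(\bW\x)$ and Gaussianity for $\mbf$), then plug into standard matrix/vector concentration and the leave-one-out expansion~\eqref{refined_bth_diff_lemma}. Nothing to add there.

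For part (f) your plan is more complicated than necessary, and it contains two inaccuracies. First, your ``$t=k$'' case: $\rb^\sT\bH_{-k}^{-1}\rb$ is a quadratic form in a sub-Gaussian vector with $\tr(\bH_{-k}^{-1})=\Theta(N)=\Theta(n)$, so its size is $\Theta(n)$, not $O(1)$; after dividing by $n$ you get $O(1)$, not $O(1/n)$. This does not sit inside the $v/\sqrt{n}$ target, and indeed (f) is only used (and only claimed) for $t\neq k$. Second, for $t\neq k$ you propose a double leave-one-out via Sherman--Morrison to decouple $\rb_t$ from $\bH_{-k}^{-1}$. But the perturbation $\tfrac{1}{n}\nabla^2\ell(\bth_{-k}^*;\rb_t,y_t)$ is \emph{not} rank-one: by \eqref{hessian_ell} it contains a full-rank $\bJ^\sT\bJ$ block, so Sherman--Morrison does not apply as stated. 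One could salvage this with a general resolvent perturbation (the removed block has operator norm $O(1/n)$), but it is unnecessary.

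The paper's route for (f) avoids all of this by observing that the decoupling should go through $\rb$, not through $\rb_t$. Since $\rb=\rb_k$ is precisely the left-out sample, it is independent of \emph{both} $\rb_t$ (for $t\neq k$) and $\bH_{-k}^{-1}$. Hence one simply conditions on $\{\twonorm{\rb_t}\le \sqrt{vn}\}$ and $\{|\beta_1|\le v^{1/4}\}$, treats $\rb_t^\sT\bH_{-k}^{-1}$ as a fixed vector of norm at most $\lambda^{-1}\sqrt{vn}$, and applies the sub-Gaussian bound (c) directly to its inner product with $\rb$. This gives $|\rb_t^\sT\bH_{-k}^{-1}\rb|/n=O(v^{3/4}/\sqrt{n})$ with the desired tail, and the $\pb$- and $\eb$-terms are handled by the same trick. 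No second leave-one-out, no resolvent identity.
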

Proof of this lemma is provided in Section~\ref{leave-one-out-auxilliary-lemmas}.

\subsubsection{Bounding \texorpdfstring{$|\rb^\sT \tilde{\bth}(\rb)|$}{TEXT}}
\begin{lemma} \label{bound_product_r_bth}
There exist absolute constants $c, c'> 0$ such that for every $k \in [n]$ and $v \geq 0$ we have
\begin{equation}
\mathbb{P}\left(| \rb^\sT \tilde{\bth}(\rb) | \geq v  \right) \leq c \exp(-v^{c'}/c),
\end{equation}
and 
\begin{equation}
\mathbb{P}\left(| \rb^\sT \bth^*_{k}(\rb) | \geq v \right) \leq cd \exp(-v^{c'}/c) + c \exp(-(\log(d))^2/c).
\end{equation}
\end{lemma}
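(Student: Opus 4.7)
\noindent\textbf{Proposal for proof of Lemma~\ref{bound_product_r_bth}.}

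The natural starting point is the explicit representation of $\tilde{\bth}_k(\rb)$ provided by Lemma~\ref{simpler_min_theta_taus_lemma}, namely
\[
\tilde{\bth}_k(\rb) - \bth^*_{-k} \;=\; \frac{1}{n}\left(\beta_1 \bH_{-k}^{-1}\rb + \beta_2 \bH_{-k}^{-1}\pb\right) + \eb,
\]
with $\pb = 2\bJ^\sT\bJ\bth^*_{-k}$, and where $|\beta_1|,|\beta_2|,\twonorm{\pb}$ and $d^{3/2}\twonorm{\eb}$ all have sub-exponential tails (by \eqref{bounded_diff_constants}). I would take the inner product with $\rb$ and treat the resulting four summands separately:
\[
\rb^\sT\tilde{\bth}_k(\rb) \;=\; \rb^\sT\bth^*_{-k} \;+\; \frac{\beta_1}{n}\,\rb^\sT \bH_{-k}^{-1}\rb \;+\; \frac{\beta_2}{n}\,\rb^\sT\bH_{-k}^{-1}\pb \;+\; \rb^\sT\eb.
\]
The crucial observation is that $\bth^*_{-k}$, $\bH_{-k}$, and $\pb$ are all measurable with respect to the samples $\{(\bx_i,y_i)\}_{i\neq k}$ and the first layer $\bW$, hence are independent of the ``fresh'' feature $\rb=\rb_k$.

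The four terms would then be handled as follows. For $\rb^\sT \bth^*_{-k}$, I invoke Lemma~\ref{properties_of_bth_star}(c) directly to obtain a sub-exponential bound. For the quadratic form $\rb^\sT \bH_{-k}^{-1}\rb$, I use $\bH_{-k}\succeq \lambda\Iden$ to bound it by $\twonorm{\rb}^2/\lambda$, and apply Lemma~\ref{all_the_bounds}(e) so that $\twonorm{\rb}^2/n$ has a sub-exponential tail; combined with the sub-exponential tail of $|\beta_1|$, the product $\frac{|\beta_1|}{n}\rb^\sT\bH_{-k}^{-1}\rb$ has a tail of the desired form $c\exp(-v^{c'}/c)$. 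For the linear form $\rb^\sT\bH_{-k}^{-1}\pb$, I factor it as $\twonorm{\bH_{-k}^{-1}\pb}\cdot |\ub^\sT\rb|$ where $\ub$ is a unit vector independent of $\rb$; then Lemma~\ref{all_the_bounds}(c) gives sub-exponential control on $|\ub^\sT\rb|$, while $\twonorm{\bH_{-k}^{-1}\pb}\le \twonorm{\pb}/\lambda$ is sub-exponential by Lemma~\ref{simpler_min_theta_taus_lemma}, and after dividing by $n$ and multiplying by $\beta_2$ the resulting tail is strictly sharper than required. For the remainder $\rb^\sT\eb$, Cauchy--Schwarz combined with $\twonorm{\eb}\le v/d^{3/2}$ and $\twonorm{\rb}\le v'\sqrt{n}$ (with $n\asymp d$) yields $|\rb^\sT\eb|=O_{d,\prob}(1/d)$, which is negligible. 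A union bound over these four contributions yields the first claim.

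For the second claim, I would use the triangle inequality together with Cauchy--Schwarz:
\[
|\rb^\sT\bth^*_k(\rb)| \;\le\; |\rb^\sT\tilde{\bth}_k(\rb)| + \twonorm{\rb}\,\twonorm{\bth^*_k(\rb)-\tilde{\bth}_k(\rb)}.
\]
The first summand is controlled by the part already proved. For the second summand, Lemma~\ref{ell_2_diff} gives $\twonorm{\bth^*_k(\rb)-\tilde{\bth}_k(\rb)}\le u(\log d)^b/d$ up to a probability of $c\exp(-u^{c'}/c)+c\exp(-(\log d)^2/c)$, while $\twonorm{\rb}\le u'\sqrt{n}$ up to $c\exp(-u'^2/c)$. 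Choosing $u,u'$ so that $uu'\sqrt{n}(\log d)^b/d \ge v/2$ and optimizing the split produces the stated form $cd\exp(-v^{c'}/c)+c\exp(-(\log d)^2/c)$, the extra $d$ in the prefactor being absorbed into the poly-log loss from Lemma~\ref{ell_2_diff}.

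The main technical obstacle is the quadratic-form term $\rb^\sT\bH_{-k}^{-1}\rb$: although $\bH_{-k}$ is independent of $\rb$, one needs sub-exponential control on $\twonorm{\rb}^2$ which holds for both feature distributions in \eqref{rb_distrbutions}; for $\rb=\sigma(\bW\x)$ this requires the sub-Gaussian concentration of $\sigma(\bW\x)$ conditional on the event $\event_\bW$, which is exactly where Lemma~\ref{all_the_bounds}(e) (and the concentration results for Lipschitz functions of Gaussians used to prove it) is essential. A secondary subtlety is to verify that the cross term involving $\pb$ does not couple $\rb$ with itself through $\bth^*_{-k}$---this is precisely the content of the leave-one-out construction and must be invoked cleanly.
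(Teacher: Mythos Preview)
Your proposal is correct, and for the second inequality it matches the paper's argument essentially line for line. For the first inequality, however, you take a more elaborate route than the paper. You expand $\rb^\sT(\tilde{\bth}_k(\rb)-\bth^*_{-k})$ via the explicit representation of Lemma~\ref{simpler_min_theta_taus_lemma} into three pieces (quadratic form, linear form in $\pb$, and remainder $\eb$) and bound each individually. The paper instead bypasses this decomposition entirely: it writes $|\rb^\sT(\tilde{\bth}_k(\rb)-\bth^*_{-k})| \le \twonorm{\rb}\,\twonorm{\tilde{\bth}_k(\rb)-\bth^*_{-k}}$, then splits on the events $\{\twonorm{\tilde{\bth}_k(\rb)-\bth^*_{-k}}\ge \sqrt{v/d}\}$ and $\{\twonorm{\rb}\ge\sqrt{vd}\}$, invoking Lemma~\ref{ell_2_diff_first} and Lemma~\ref{all_the_bounds}(e) directly. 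This is shorter and avoids any discussion of $\bH_{-k}^{-1}$, $\beta_1$, $\beta_2$, $\pb$ or the quadratic form $\rb^\sT\bH_{-k}^{-1}\rb$. Your approach works and is perhaps more transparent about where each contribution comes from, but the paper's Cauchy--Schwarz shortcut is the more economical argument here; the finer decomposition you use is not needed until later results (such as Lemma~\ref{gholoom_main_lemma_1}).
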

\begin{proof}
This lemma can also be proven similarly to \cite{hu2020universality} (see Section D.3 in \cite{hu2020universality}). There are, however, some small differences that we will mention the details here. 

For the first part, the proof proceeds in two steps. In this first step, we use Lemma~\ref{properties_of_bth_star} to obtain
\begin{equation} \label{r_bth_star}
\mathbb{P}\left(| \rb^\sT \bth_{-k}^* | \geq v \right) \leq c \exp(-v/c).
\end{equation}

We will now bound the term $| \rb^\sT (\tilde{\bth}(\rb) - \bth_{-k}^*(\rb) )|$. We can write:
\begin{align*}
\mathbb{P}\left(| \rb^\sT (\tilde{\bth}(\rb) - \bth_{-k}^* )| \geq v  \right) & = \mathbb{P}\left(| \frac{1}{\sqrt{d}}\rb^\sT \times \sqrt{d}(\tilde{\bth}(\rb) - \bth_{-k}^*(\rb) )| \geq v  \right)\\
& \leq \mathbb{P}\left( \twonorm{\tilde{\bth}(\rb) - \bth^*_{-k} } \geq \sqrt{\frac{v}{d }}  \right) +  \mathbb{P}\left( \twonorm{\rb } \geq \sqrt{v d}  \right)  
\end{align*}
Now, the first term above can be bounded using Lemma~\ref{ell_2_diff_first}, and the second can be bounded  from Lemma~\ref{all_the_bounds}, and thus the proof of the lemma follows. 

The proof of the second part is similar to the first part (and we use Lemma~\ref{ell_2_diff}).
\end{proof}

\subsubsection{Bounding \texorpdfstring{$\infnorm{\bth_{-k}^*}$}{TEXT}} 
\begin{lemma} \label{th_inf_lemma}
Let $\bth_{-k}^*$ be the minimizer of $R_{-k}$ defined in \eqref{R_-k}. 
There exist absolute constants $c_0,c_1,c_\infty > 0$ such that for any $k\in[n]$:
 \begin{equation} 
 \mathbb{P}\left\{ \infnorm{\bth_{-k}^*} \geq c_\infty \sqrt{\frac{\log(d)}{d}} \right\} \leq  5d^{-c_0}+ 3e^{-c_1n}
 \end{equation}
\end{lemma}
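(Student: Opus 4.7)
The plan is to adapt the argument of Proposition~\ref{propo:Cth-L0} to the modified loss $R_{-k}$. The key symmetry is that the rows $\{\bw_\ell\}_{\ell=1}^N$ of $\bW$ are i.i.d.\ $\Unif(\mathbb{S}^{d-1})$, and the rest of the data (training samples, noise vectors $\bu_i$ in the noisy linear features) is exchangeable across the coordinates of $\bth$. Consequently, every coordinate of $\bth_{-k}^*$ has the same marginal distribution, and it suffices to control $|\theta_N^*|$ (the last coordinate) with probability at least $1 - 5d^{-c_0'}/N - 3e^{-c_1 n}/N$ and union-bound over all $N = \psi_{1,d} d$ coordinates. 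So I fix attention on $\theta_N^*$ throughout.

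Next, I would split $\bth = [\tilde{\bth}; u]$ with $u=\theta_N$ and let $\bth_* := \arg\min_{\tilde{\bth}} R_{-k}([\tilde{\bth}; 0])$ be the optimizer with the last coordinate pinned to zero. The function $R_{-k}$ is $2\lambda$-strongly convex in $\bth$ (from the $\lambda\|\bth\|_2^2$ term), while the remaining quadratic regularizers $\lambda_w\|\tfrac12\bW^\sT\bth-\bbeta\|_2^2 + \lambda_s\tfrac{\log d}{d}(\ones^\sT\bth)^2$ are handled exactly as the $\bOmega$-regularizer is handled in Proposition~\ref{propo:Cth-L0}: expanding them around $\bth_*$ and using the first-order optimality conditions for $\bth_*$ shows that, at the sub-level of $f(u):=\min_{\tilde{\bth}} R_{-k}([\tilde{\bth};u])$, the linear-in-$u$ cross-terms reduce to harmless quantities of size $O_{\mathbb{P}}(\sqrt{\log d/d})$ (using $\twonorm{\bth_*}=O_{\mathbb{P}}(1)$, which follows from comparing $R_{-k}(\bth_*)\le R_{-k}(\boldsymbol{0})=O_{\mathbb{P}}(1)$). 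The same argument yields
\[
|\theta_N^*| \;\le\; \frac{2}{\lambda}\Bigl|\partial_u\,\tfrac{1}{n}\!\!\sum_{i\neq k}\ell([\tilde\bth;u];\rb_i,y_i)\big|_{[\bth_*;0]}\Bigr|
\;+\; O_{\mathbb{P}}\!\Big(\sqrt{\tfrac{\log d}{d}}\Big).
\]

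The main probabilistic step is to bound the partial derivative above via concentration on the sphere. Write $\bw_N$ for the last row of $\bW$; then every piece of $\partial_u\ell([\bth_*;0];\rb_i,y_i)$ depends on $\bw_N$ only through the inner products $\bw_N^\sT\bx_i$ and through the last row/column of $\bW\bW^\sT$ that appears in $\bJ$. Crucially, $\bw_N$ is drawn independently of $\{(\bx_i,y_i)\}_{i\le n}$, of the other rows $\{\bw_\ell\}_{\ell<N}$, and (for the indices using noisy-linear features) of the auxiliary $\bu_i$. Conditioning on all of these, I would then view the partial derivative as a function $F(\bw_N)$ on the sphere and show two things: (i) its conditional mean is $O(\sqrt{\log d/d})$ because $\E[\sigma(\bw_N^\sT\bx_i)\mid\cF]=0$ for shifted ReLU (and $\E[\bw_N^\sT\bx_i\mid\cF]=0$ always), so the only nonzero contributions come from the $\bJ$-dependent pieces and the $\lambda_w$ regularizer, which yield $O(1/d)$-size means after using $\twonorm{\bth_*}=O_{\mathbb{P}}(1)$ and $\opnorm{\bJ}=O_{\mathbb{P}}(1)$; (ii) $F$ is Lipschitz in $\bw_N$ with constant $L = O_{\mathbb{P}}(1/\sqrt{d})$, obtained by bounding $\tfrac{1}{\sqrt{d}}\opnorm{\bX}$ and $\|\bth_*\|_2$ on a high-probability event together with $\opnorm{\bW}=O_{\mathbb{P}}(1)$. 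Applying the standard concentration inequality for Lipschitz functions on the sphere (e.g.\ \cite[Thm.~5.1.4]{vershynin2018high}) yields
\[
\Pr\!\big(|F(\bw_N)-\E F|>t\,\big|\,\cF\big)\;\le\;2\exp(-c'\,d\,t^2/L^2d)\;=\;2\exp(-c''\,d\,t^2),
\]
so choosing $t=C\sqrt{\log d/d}$ with $C$ large enough gives $|F(\bw_N)|=O(\sqrt{\log d/d})$ with probability $\ge 1-2d^{-c''C^2}$.

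Combining the above with the strong-convexity bound gives $|\theta_N^*|\le c_\infty'\sqrt{\log d/d}$ with probability at least $1-2d^{-c''C^2}-3e^{-c_1 n}$; picking $C$ large enough that $c''C^2>c_0+1$, and then union-bounding over the $N=\Theta(d)$ coordinates, yields the claimed $\infnorm{\bth_{-k}^*}\le c_\infty\sqrt{\log d/d}$ with probability at least $1-5d^{-c_0}-3e^{-c_1 n}$. The main obstacle is carefully disentangling the $\bw_N$-dependence of $\bJ$ and of the noisy-linear features $\mbf_i=\mu_1\bW\bx_i+\mu_2\bu_i$ from that of the pinned minimizer $\bth_*$, which is addressed by conditioning on the $\sigma$-algebra generated by all randomness other than $\bw_N$ and recomputing $\bth_*$ afterwards (it is $\cF$-measurable since it is defined with $u=0$, hence does not involve $\bw_N$).
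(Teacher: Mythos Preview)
Your proposal is correct and mirrors the paper's proof: use coordinate symmetry, define $\bth_*$ with the target coordinate pinned to zero (so that $\bth_*$ is measurable with respect to everything except the corresponding row of $\bW$, since the quadratic form $\|\bJ[\tilde\bth;0]\|^2=\tilde\bth^\sT M_{N-1}\tilde\bth$ only involves the $(N{-}1)\times(N{-}1)$ principal block), bound the coordinate via strong convexity by $|\nabla_u R_{-k}|_{[\bth_*;0]}|$, and control this gradient by Lipschitz concentration on the sphere; the paper merely frames this as adding a fresh $(N{+}1)$-th row rather than isolating the existing $N$-th, and handles the $\bJ$-dependent piece of the gradient by linearizing the extra kernel column as $\tilde\bh\approx\tfrac14\bW\bw_{N+1}$ (which has exactly zero conditional mean) rather than bounding the mean directly. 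One small slip: the Lipschitz constant of $F(\bw_N)$ is $O_\prob(1)$, not $O_\prob(1/\sqrt d)$ (it is $\tfrac{1}{n}\|\bX\|\,\|\boldsymbol m\|\asymp\tfrac{1}{n}\sqrt d\sqrt n=O(1)$ for the feature part, and similarly $O(\|\bW\|\,\|\bth_*\|)=O(1)$ for the $\bJ$-part), but with the correct sphere inequality $\prob(|F-\E F|>t)\le 2\exp(-c\,d\,t^2/L^2)$ and $L=O(1)$ you still get the $2\exp(-c''dt^2)$ bound you wrote.
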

\begin{proof}
For convenience we remind the definition of $\bth_{-k}^*$ given by \rev{$\bth_{-k}^* = \arg\min_{\bth} R_{-k}(\bth)$} where
\begin{equation} \label{eq:Rk-0}
R_{-k}(\bth) = \frac{1}{n}\sum_{i=1}^{k-1} \ell(\bth; \ab_i, y_i) +  \frac{1}{n} \sum_{i=k+1}^n \ell(\bth; \bb_i, y_i)  +\lambda \twonorm{\bth}^2 + \lambda_w \twonorm{\frac12 \W^\sT \bth - \bbeta}^2 + \lambda_s \frac{\sqrt{\log (d)}}{d} (\ones^\sT \bth)^2,
\end{equation}
and
\[\ell(\bth; \rb,  y) = (\bth^\sT \rb - y)^2 + \twonorm{\bJ \bth}^2 + 2g\left((\bth^\sT \rb - y)^2 \twonorm{\bJ \bth}^2 \right)\,,\]
with $g(x) = \sqrt{x+\gamma}$.

We use a similar strategy as in the proof of Proposition \ref{propo:Cth-L0}. Specifically we first bound the last coordinate of $\bth^*_{-k}$. Next, by symmetry we conclude that the same bound holds for all of its coordinates and control its $\ell_\infty$ norm by union bounding.  

With a slight abuse of notation, we consider a $(N+1)$ dimensional version of the above optimization over $[\bth; u]$ and denote the last coordinate of the optimal solution by $\hat{u}$.  We define 
\[
\br_i = \begin{cases}
\sigma(\W \x_i)   & \text{ if }i\le k-1,\\
\boldsymbol{0} &\text{ if } i = k,\\
\mbf_i &\text{ if }k\le i\le n\,.
\end{cases}
\]
Also define $\be = [a_1, \dotsc, a_{k-1}, b_{k+1}, \dotsc, b_n]$ with $a_i = \sigma(\bw_{N+1}^\sT \bx_i)$ and $b_i = \mu_1 \bw_{N+1}^\sT \bx_i + \mu_2 z_i$ (with $z_i\sim\normal(0,1)$ independent of $\bx_i$).
From \eqref{eq:Rk-0}, $\hat{u}$ can be expressed as
\begin{align*} 
\hat{u} = \arg\min_u\; \min_{\bth} R_{-k}([\bth;u])
\end{align*}
where 
\begin{align}
R_{-k}([\bth;u]) = &\frac{1}{n}\sum_{i=1}^{n}  (\bth^\sT \rb_i + u e_i - y_i)^2 + \twonorm{\bJ \bth + u \bh}^2 + 2g\left((\bth^\sT \rb_i + u e_i - y_i)^2 \twonorm{\bJ \bth + u\bh}^2 \right) \nonumber\\
&+\lambda \twonorm{\bth}^2 +\lambda u^2 + \lambda_w \twonorm{\frac12 \W^\sT \bth + \bw_{N+1} u - \bbeta}^2 + \lambda_s \frac{\sqrt{\log (d)}}{d} (\ones^\sT \bth + u)^2,
\end{align}
Here $\bh\in \reals^{N+1}$ is the last column of the $(N+1)$-dimensional matrix $\bJ$. Namely,
\begin{align}\label{eq:def-h}
\bh:= \begin{bmatrix}(\bW\bw_{N+1}) \odot \Big(\frac{\pi-\cos^{-1}(\bW\bw_{N+1})}{2\pi}\Big)\\\frac{1}{2}\end{bmatrix}\,.
\end{align}

Let $f(u)$ denote the objective function of $u$ above, i.e., $f(u) = \min_{\bth} R_{-k}([\bth;u])$. We also let $\bth_*$ be the minimizing $\bth$ in this objective if we set $u=0$, i.e., $\bth_*= \min_{\bth} R_{-k}([\bth;0])$.

Following a similar argument in the proof of Proposition~\ref{propo:Cth-L0}, we can obtain a lower bound on $f(u)$ by considering a second-order Taylor expansion of $f(u)$ around $[\bth_*, 0]$ and using the strong-convexity of the loss function to arrive at the following upper bound on $\hat{u}$:
\begin{align}\label{eq:uhat-00}
|\hat{u}|\le \frac{1}{\lambda+\lambda_w} \bigg|\nabla_u R_{-k}([\bth;u])|_{[\bth_*;0]}\bigg|\,.
\end{align}
Calculating $\nabla_u R_{-k}([\bth;u])|_{[\bth_*;0]}$ we have
\begin{align}
\nabla_u R_{-k}([\bth;u])|_{[\bth_*;0]} = & \frac{1}{n}\sum_{i=1}^{n} \bigg[ 2(\bth_*^\sT \rb_i - y_i) e_i + 2 \bh^\sT \bJ \bth_*\nonumber\\
& \quad\quad + \frac{2}{\sqrt{(\bth_*^\sT \rb_i  - y_i)^2 \twonorm{\bJ \bth_* }^2 +\gamma}} \left((\bth_*^\sT \rb_i - y_i) \twonorm{\bJ \bth_*}^2 e_i +(\bth_*^\sT \rb_i - y_i)^2 \bh^\sT\bJ\bth_* \right)\bigg]\nonumber\\
&+ 2\lambda_w \bw_{N+1}^\sT \Big(\frac{1}{2}\bW^\sT\bth_*-\bbeta\Big) + 2\lambda_s \frac{\sqrt{\log (d)}}{d} \ones^\sT \bth_*\,.\label{eq:Rk-01}
\end{align}
We treat each of these terms separately.

Define the event
\[
\event: = \left\{\frac{1}{n}\sum_{i=1}^n y_i^2<C , \; \frac{1}{\sqrt{d}} \opnorm{\bX}\le C,\; \opnorm{\bW}\le C \right\}\,.
\]
Using the concentration bounds \rev{for the} operator norm of Gaussian matrices and also the tail bound for chi-square random variables, we have that $\prob(\event)\ge 1- 3e^{-cn}$ for some constant $c>0$.

Note that by optimality of $\bth_*$, we have $R_{-k}([\bth_*;0])\le R_{-k}([\boldsymbol{0};0]) = \frac{1}{n}\sum_{i=1}^n y_i^2$ from which we get $\twonorm{\bth_*} \le C$ and $\twonorm{\tfrac{1}{2}\bW^\sT\bth_*-\bbeta}\le C$, on the event $\event$.
Therefore,
\begin{align}\label{eq:term-inf-3}
2\lambda_s \frac{\sqrt{\log (d)}}{d}  |\ones^\sT \bth_*| \le 2\lambda_s \frac{\sqrt{\log (d)}}{d} \twonorm{\bth_*}\twonorm{\ones}
\le C \sqrt{\frac{\log (d)}{d}}\,.
\end{align} 
Also note that $\bw_{N+1}$ is drawn independently from $\bW$, $\bth_*$ and $\bbeta$. 
Since $\bw_{N+1}\sim \Unif(\mathbb{S}^{d-1})$, given $\tfrac{1}{2}\bW^\sT\bth_*-\bbeta$ the conditional distribution of $\bw_{N+1}^\sT(\tfrac{1}{2}\bW^\sT\bth_*-\bbeta)$ converges to $\normal(0, \frac{1}{d}\twonorm{\tfrac{1}{2}\bW^\sT\bth_*-\bbeta}^2)$ form which we obtain
\begin{align}\label{eq:term-inf-4}
\Big|\bw_{N+1}^\sT \Big(\frac{1}{2}\bW^\sT\bth_*-\bbeta\Big) \Big| \le \sqrt{2c' \frac{\log (d)}{d}} \twonorm{\tfrac{1}{2}\bW^\sT\bth_*-\bbeta} < C\sqrt{2c'\frac{\log (d)}{d}}\,,
\end{align}
with probability at least $1 - d^{-c'}$.

We next focus on the terms in the right-hand side of~\eqref{eq:Rk-01}, which involve $e_i$. This part can be written as $\frac{1}{\sqrt{n}} \sum_{i=1}^n m_i e_i$ with
\[
m_i = \frac{1}{\sqrt{n}} \left[  2(\bth_*^\sT \rb_i - y_i) + \frac{2}{\sqrt{(\bth_*^\sT \rb_i  - y_i)^2 \twonorm{\bJ \bth_* }^2 +\gamma}} (\bth_*^\sT \rb_i - y_i) \twonorm{\bJ \bth_*}^2\right]\,.
\]
Note that 
\begin{align*}
|m_i|\le \frac{2}{\sqrt{n}} \left(|\bth_*^\sT\br_i - y_i|+ \twonorm{\bJ\bth_*} \right)\,.
\end{align*}
By optimality of $\bth_*$, on the event $\event$ we have 
\[
\twonorm{\boldsymbol{m}}^2 \le \R_{-k}([\bth_*;0])\le \R_{-k}([\boldsymbol{0};0]) = \frac{1}{n}\sum_{i=1}^n y_i^2<C\,.
\]

Observe that $\bw_{N+1}$ is independent from $\{m_i\}_{i\in[n]}$ (recall that $\bth_*$ does not depend on $\bw_{N+1}$ by its definition.) Following the same strategy in the proof of Proposition~\ref{propo:Cth-L0}, we only consider the randomness in $\bw_{N+1}$ and condition on everything else.  Write $\frac{1}{\sqrt{n}} \sum_{i=1}^n m_i e_i$ as a function of $\bw_{N+1}$ as follows:
\begin{align}
V(\bw_{N+1}): = \frac{1}{\sqrt{n}} \sum_{i=1}^{k-1} m_i \sigma(\bw_{N+1}^\sT \bx_i) + \frac{1}{\sqrt{n}} \sum_{i=k+1}^{n} m_i 
(\mu_1 \bw_{N+1}^\sT \bx_i + \mu_2 z_i)\,.
\end{align}
Observe that the (conditional) expectation $\E[V(\bw_{N+1})| \bW,\bX] = 0$. In addition, $V(\cdot)$ is a Lipschitz function with Lipschitz factor at most $\frac{C}{\sqrt{d}} \twonorm{\bX\boldsymbol{m}}$. Therefore, using the concentration bound for Lipschitz function on unit sphere (see e.g.~\cite[Theorem 5.1.4]{vershynin2018high}), we obtain
\begin{align*}
\prob\left(|V(\bw_{N+1})| \ge t \right) &\le  \prob\left(|V(\bw_{N+1})| \ge t ;\event\right) + \prob(\event^c)\nonumber\\
&\le 2e^{-c'd t^2} + 3e^{-cn}\,.
\end{align*}
Choosing $t = C\sqrt{\frac{\log (d)}{d}}$, we get
\begin{align}\label{eq:term-inf-5}
\prob\left(|V(\bw_{N+1})| \ge C\sqrt{\frac{\log (d)}{d}} \right) \le 2d^{-c'C^2} + 3e^{-cn}\,.
\end{align}
The remaining terms in~\eqref{eq:Rk-01} can be rearranged and written as $A \bh^\sT \bJ\bth_*$, with
\begin{align*}
A: = 2\left(1+ \frac{1}{n}\sum_{i=1}^n \frac{(\bth_*\br_i-y_i)^2}{\sqrt{(\bth_*^\sT \rb_i  - y_i)^2 \twonorm{\bJ \bth_* }^2 +\gamma}}\right)\,.
\end{align*}
We next bound $A>0$:
\begin{align*}
|A| &< 2\left(1+ \frac{1}{n}\sum_{i=1}^n \frac{|\bth_*\br_i-y_i|}{\twonorm{\bJ \bth_* }}\right) \\
&2\left(1+ \frac{1}{\twonorm{\bJ\bth_*}}\left(\frac{1}{n} \sum_{i=1}^n (\bth_*^\sT \br_i - y_i)^2 \right)^{1/2}\right)\\
&< 2\left(1+ \frac{1}{\twonorm{\bJ\bth_*}}\left(R_{-k}([\bth_*;0]) \right)^{1/2}\right)\\
&\le 2\left(1+ \frac{1}{\twonorm{\bJ\bth_*}}\left(R_{-k}([\boldsymbol{0};0]) \right)^{1/2}\right)\\
& = 2\left(1+ \frac{1}{\twonorm{\bJ\bth_*}}\left(\frac{1}{n}\sum_{i=1}^n y_i^2\right)^{1/2}\right)\,.
\end{align*} 
Using Lemma~\ref{pro:spectral}, on the event $\event$, the right-hand side of the above equation is of order one ($A<C$, for some constant $C>0$). 

We next bound $\bh^\sT \bJ\bth_*$. Using the relation $\frac{1}{2\pi}(\pi- \cos^{-1}(\rho)) = \frac{1}{4} + \frac{\rho}{2\pi}+O(\rho^3)$, we define $\tilde{\bh}$ as follows
\[
\tilde{\bh} : = \begin{bmatrix}\frac{1}{4} \bW\bw_{n+1}^\sT\\ \frac{1}{2} \end{bmatrix}\,.
\]
Recalling $\bh$ given by~\eqref{eq:def-h}, we have $\twonorm{\bh-\tilde{\bh}} = O(1/\sqrt{d})$. On the event $\event$, 
we have $\twonorm{\bth_*} = O(1)$. Also by invoking Lemma~\ref{pro:spectral}, on event $\event$, we have $\opnorm{\bJ} = O(1)$. Hence,
\begin{align}\label{eq:diff-h-k}
|\bh^\sT \bJ\bth_* - \tilde{\bh}^\sT \bJ\bth_*|\le O(1/\sqrt{d})\,.
\end{align}
We henceforth focus on bounding $\tilde{\bh}^\sT \bJ\bth_*$.
Recall that $\bw_{N+1}$ is independent of $\bW$ and $\bth_*$. Viewing $\tilde{\bh}^\sT\bJ\bth_*$ as a function of $\bw_{N+1}$, it has zero expectation (w.r.t $\bw_{N+1}$ conditioned on $\bth_*$ and $\bW$). In addition, it is a Lipschitz function with Lipschitz factor at most $\frac{1}{4} \twonorm{\bW^\sT\bJ\bth_*}$, which is $O(1)$ on the event $\event$. Next, by employing the concentration bound for Lipschitz functions on unit sphere (see e.g.~\cite[Theorem 5.1.4]{vershynin2018high}), we obtain
\begin{align*}
\prob\left(|\tilde{\bh}^\sT\bJ\bth_*| \ge t \right) &\le  \prob\left(|\tilde{\bh}^\sT\bJ\bth_*| \ge t ;\event\right) + \prob(\event^c)\nonumber\\
&\le 2e^{-c'd t^2} + 3e^{-cn}\,.
\end{align*}
Choosing $t = C\sqrt{\frac{\log (d)}{d}}$, and invoking~\eqref{eq:diff-h-k} we get
\begin{align}\label{eq:term-inf-6}
\prob\left(A\; |\bh^\sT \bJ\bth_*| \ge C\sqrt{\frac{\log (d)}{d}} \right) \le 2d^{-c'C^2} + 3e^{-cn}\,.
\end{align}
Combining the bounds~\eqref{eq:term-inf-3} to \eqref{eq:term-inf-6} into \eqref{eq:uhat-00} 
we get
\[
 |\hat{u}|\le C'\frac{\log(d)}{\sqrt{d}}\,, 
 \]
 with probability at least $4d^{-c'C^2}+ 3e^{-cn}+ d^{-c'}$.

The result follows by union bounding over the $N$ coordinates of $\hth$, along with the assumption that $N, n, d$ grow at the same order. (Note that the event $\event$ is common across all these bounds and so we count its complement probability once.)
\end{proof}

\subsubsection{Bounding the Difference of \texorpdfstring{$\Phi_k, \Phi_{k-1} \text{ with } \Psi_k(\rb)$}{TEXT}}
\begin{lemma} \label{gholoom_main_lemma_1}
We have
\begin{equation} \label{Phi-Phi-diff1}
\max\left\{ \E[(\Psi_k(\sigma(\W \x_k)) - \Phi_{-k})^2], (\Psi_k(\mbf_k) - \Phi_{-k})^2 \right\} \leq \frac{{\rm{polylog }}\, (d)}{d^2},
\end{equation}
and
\begin{equation} \label{Phi-Phi-diff2}
\max\left\{ \E[(\Psi_k(\sigma(\W \x_k)) - \Phi_{k-1})^2], (\Psi_k(\mbf_k) - \Phi_{k})^2 \right\} \leq \frac{{\rm{polylog }} \, (d)}{d^3}
\end{equation}
\end{lemma}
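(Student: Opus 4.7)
The two estimates compare $\Psi_k(\rb)$, the minimum of the quadratic surrogate $S_k(\cdot,\rb)$, with $\Phi_{-k}$ on the one hand and with $\Phi_k(\rb)$ on the other (where $\rb = \sigma(\bW\bx_k)$ produces $\Phi_k$ and $\rb = \mbf_k$ produces $\Phi_{k-1}$). Structurally, $S_k(\cdot,\rb)$ is the second-order Taylor expansion of $R_k(\cdot,\rb)$ about $\bth^*_{-k}$, with the fresh $k$-th loss $\tfrac{1}{n}\ell(\cdot;\rb,y_k)$ kept exact. Thus the first bound measures the contribution of one more data point in the quadratic model, while the second measures how well this quadratic model approximates $R_k$ near its minimizer.

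\textbf{First bound.} The plan is to apply Lemma~\ref{simpler_min_theta_taus_lemma} to write $\Psi_k(\rb) - \Phi_{-k} = \tfrac{1}{n}\min_{\tau_1} G_\rb(\tau_1) + e$, with $0 \leq G_\rb(\tau_1^*) \leq G_\rb(0)$. At $\tau_1 = 0$ we have $G_\rb(0) = \tilde\ell(0,0) + \tfrac{1}{2n}(\tilde\ell_1\rb + \tilde\ell_2\pb)^\sT \bH_{-k}^{-1}(\tilde\ell_1\rb + \tilde\ell_2\pb)$, whose ingredients $\rho_1 = ({\bth^*_{-k}}^\sT\rb - y_k)^2$, $\rho_2 = \twonorm{\bJ\bth^*_{-k}}^2$, and $\pb = 2\bJ^\sT\bJ\bth^*_{-k}$ have bounded moments by Lemma~\ref{properties_of_bth_star}, the Gaussianity of $y_k$, and the boundedness of $\opnorm{\bJ}$. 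For $\rb = \sigma(\bW\bx_k)$, taking expectation yields $\E[G_\rb(0)^2] = O(\mathrm{polylog}(d))$ and hence $\E[(\Psi_k - \Phi_{-k})^2] = O(\mathrm{polylog}(d)/d^2)$; for $\rb = \mbf_k$, Gaussian concentration delivers the pointwise (high-probability) version of the same bound. The remainder $e$ is $O(d^{-3/2})$ with exponential tails by Lemma~\ref{simpler_min_theta_taus_lemma} and hence harmless.

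\textbf{Second bound.} The identity $R_k(\bth,\rb) - S_k(\bth,\rb) = (R_{-k}(\bth) - \Phi_{-k}) - \tfrac{1}{2}(\bth - \bth^*_{-k})^\sT \bH_{-k}(\bth - \bth^*_{-k}) =: E_3(\bth)$ exhibits the discrepancy as the third-order Taylor remainder of $R_{-k}$ at its minimizer. Since $\tilde\bth_k(\rb)$ and $\bth^*_k(\rb)$ minimize $S_k$ and $R_k$ respectively, the standard sandwich yields $|\Psi_k(\rb) - \Phi_k(\rb)| \leq \max\{|E_3(\tilde\bth_k(\rb))|,\,|E_3(\bth^*_k(\rb))|\}$. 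To bound these I will \emph{not} use the crude estimate $\tfrac{1}{6}\|\nabla^3 R_{-k}\|_{\mathrm{op}}\twonorm{\bth - \bth^*_{-k}}^3$, which fails because $\twonorm{\rb_i} = O(\sqrt{d})$ inflates $\|\nabla^3 R_{-k}\|_{\mathrm{op}}$. Instead I will exploit the explicit representation \eqref{refined_bth_diff_lemma} of $\tilde\bth_k - \bth^*_{-k}$, together with the closeness of $\bth^*_k$ to $\tilde\bth_k$ from Lemma~\ref{ell_2_diff}, which shows that $\rb_i^\sT(\tilde\bth_k - \bth^*_{-k}) = O(\mathrm{polylog}(d)/\sqrt{d})$ uniformly in $i \neq k$ (via concentration of $\rb_i^\sT\bH_{-k}^{-1}\rb_k$ with a ``fresh'' $\rb_k$, a Cauchy--Schwarz bound on $\rb_i^\sT\bH_{-k}^{-1}\pb$, and the smallness of $\eb$). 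Combined with $\twonorm{\bJ(\tilde\bth_k - \bth^*_{-k})} = O(d^{-1/2})$ and bounded derivatives of $g(x)=\sqrt{x+\gamma}$ on $[0,\infty)$, this makes every contraction in $\nabla^3 \ell(\bth;\rb_i,y_i)[u,u,u]$ equal to $O(\mathrm{polylog}(d)/d^{3/2})$, averaging to the same rate and hence $|E_3| = O(\mathrm{polylog}(d)/d^{3/2})$. Squaring yields the claimed $O(\mathrm{polylog}(d)/d^3)$ bound, in expectation for the $\sigma(\bW\bx_k)$ case and pointwise (high probability) for the $\mbf_k$ case.

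\textbf{Main obstacle.} The hard part is controlling the third-order Taylor remainder: the obvious isotropic estimate $\tfrac{1}{6}\|\nabla^3 R_{-k}\|_{\mathrm{op}}\twonorm{u}^3$ is too weak, and one must use the directional structure of $u = \tilde\bth_k - \bth^*_{-k}$ supplied by Lemma~\ref{simpler_min_theta_taus_lemma} to harvest an extra factor of $d^{-1/2}$ in the direction of each feature $\rb_i$. This is essentially a reapplication, at the third-order level, of the leave-one-out decoupling already deployed in this section, and turning the high-probability directional estimates into clean moment bounds for the expectation version of the claim is where most of the bookkeeping will go.
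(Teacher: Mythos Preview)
Your proposal is correct and, for the second bound, follows the paper's argument essentially step for step: write $R_k - S_k$ as the third-order Taylor remainder of $R_{-k}$ at its minimizer, sandwich $|\Phi_k - \Psi_k|$ by evaluating this remainder at both $\tilde\bth_k(\rb)$ and $\bth^*_k(\rb)$, and then use the structural representation \eqref{refined_bth_diff_lemma} (together with Lemma~\ref{ell_2_diff} for $\bth^*_k$) to show each directional contraction $\rb_t^\sT(\bth - \bth^*_{-k})$ is $O(\mathrm{polylog}(d)/\sqrt{d})$, yielding the cubic factor $d^{-3/2}$. The paper does exactly this, expanding the Hessian difference $\nabla^2\ell(\bth';\rb_t,y_t) - \nabla^2\ell(\bth^*_{-k};\rb_t,y_t)$ explicitly and isolating the dominant term $\alpha_t(\rb_t^\sT(\bth-\bth^*_{-k}))^3$; your diagnosis of the ``main obstacle'' is on the money.

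For the first bound, the paper takes a more direct route than you propose. Since $S_k(\bth,\rb)\ge \Phi_{-k}$ (both the quadratic term and $\ell$ are nonnegative), one has $0\le \Psi_k(\rb)-\Phi_{-k}\le S_k(\bth^*_{-k},\rb)-\Phi_{-k} = \tfrac{1}{n}\ell(\bth^*_{-k};\rb,y_k) \le \tfrac{C}{n}(1+|y_k|+\twonorm{\bJ\bth^*_{-k}}+|\rb^\sT\bth^*_{-k}|)^2$, and squaring plus Lemma~\ref{properties_of_bth_star} finishes. There is no need to invoke Lemma~\ref{simpler_min_theta_taus_lemma} or the scalarized minimization at all. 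Your route via $G_\rb(\tau_1)$ is correct but unnecessarily elaborate here.
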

\begin{proof}
To prove \eqref{Phi-Phi-diff1}, \rev{we note} from \eqref{S_k} that
\begin{align*}
\Psi_k(\rb) - \Phi_{-k} &= \min_\bth S_k(\bth, \rb ) \leq S_k(\bth_{-k}^*, \rb )\\ &= \frac{1}{n} \ell(\bth_{-k}^*; \rb, y_k)   \\
&\leq  \frac{C}{n}\left( 1 + |y_k| + \twonorm{\bJ \bth_{-k}^*} + \rb^\sT\bth_{-k}^*  \right)^2,
\end{align*}
where $C$ is an absolute constant. Consequently, by using Lemma~\ref{properties_of_bth_star} and the fact that $\|\bJ \|$ is \rev{bounded}, we obtain \eqref{Phi-Phi-diff1}.

To prove \eqref{Phi-Phi-diff2}, we adapt the proof of Lemma 1  in \cite{hu2020universality} to our setting. In the following we use 
$$q(\bth) = \lambda\twonorm{\bth}^2 + \lambda_w \twonorm{\frac{1}{2}\bW^\sT\bth - \bbeta}^2 +  \lambda_s\frac{\log (d)}{d} (\ones^\sT \bth)^2$$

We start with writing the Taylor expansion of $R_k(\bth, \rb)$, defined in \eqref{relation_Rs}, around the point $\bth^*_{-k}$. Note that $\bth^*_{-k}$ is the minimizer of $R_{-k}(\bth)$, and hence 
\begin{align*}
R_k(\bth, \rb) = R_{-k} (\bth^*_{-k}) + \frac{1}{n} \ell(\bth; \rb, y_k)+  \frac{1}{2n} \sum_{t \neq k} &(\bth - \bth^*_{-k})^\sT \nabla^2 \ell(\bth'; \rb_t, y_t) (\bth - \bth^*_{-k})  \\
& + \frac 12 (\bth - \bth^*_{-k})^\sT \nabla^2  q(\bth')  (\bth - \bth^*_{-k}) 
\end{align*}
where $\bth'$ can be written as 
\begin{equation} \label{taylor_r}
\bth' = \omega \bth_{-k}^* + (1-\omega) \bth,
\end{equation}
for some $\omega \in [0,1]$. 
As a result, we can write using the definition \eqref{S_k}
\begin{align} \nonumber
R_k(\bth, \rb) - &S_k(\bth, \rb)\\
&= \frac 12 (\bth - \bth^*_{-k})^\sT \left[ \frac{1}{n} \sum_{t \neq k} \nabla^2 \ell(\bth'; \rb_t, y_t) -  \nabla^2 \ell(\bth_{-k}^*; \rb_t, y_t)    \right]  (\bth - \bth^*_{-k}), \label{diff_R_S}
\end{align}
where we have noted that, since $q$ is a quadratic function, we have $\nabla^2 q(\bth_{-k}^*) = \nabla^2 q(\bth')$. 
 
 Let us now consider the sum involving the terms of the form
\begin{align} \label{second_diff_diff}
(\bth - \bth^*_{-k})^\sT \left( \nabla^2 \ell(\bth'; \rb_t, y_t) -  \nabla^2 \ell(\bth_{-k}^*; \rb_t, y_t) \right) (\bth - \bth^*_{-k}). 
\end{align}
We can now use the expansion in \eqref{hessian_ell} to bound the above term. A straight-forward calculation, similar to what was done in the proof of Lemma~\ref{ell_2_diff}, shows that the above term involves several terms, among which the dominant term has the following form:
\begin{align} \label{second_diff_domin}
\alpha_t (\bth - \bth^*_{-k})^\sT \left(\rb_t \rb_t^\sT (\rb_t^\sT(\bth' - \bth^*_{-k})) \right) (\bth - \bth^*_{-k}) = \alpha_t \left( \rb_t^\sT(\bth - \bth^*_{-k})\right)^2 \left( \rb_t^\sT(\bth' - \bth^*_{-k})\right) = \alpha_t \omega \left( \rb_t^\sT(\bth - \bth^*_{-k})\right)^3,
\end{align}
where  $\alpha_t$ satisfies (noting that the derivatives of $g$ are uniformly bounded):
\begin{equation} \label{alpha_diff_R_S_bound}
\mathbb{P}\left( |\alpha_t| \geq v \right) \leq c \exp( -v^{c'}/c), 
\end{equation}
for absolute constants $c,c'>0$.  A straight-forward calculation (similar to what is done in the proof of Lemma~\ref{ell_2_diff}) shows that all the other terms in the expansion of \eqref{second_diff_diff} are in absolute value less than the term given in \eqref{second_diff_domin}.  As a result, one can write
\begin{align*}
 \left| \frac 12 (\bth - \bth^*_{-k})^\sT \left[ \frac{1}{n} \sum_{t \neq k} \nabla^2 \ell(\bth'; \rb_t, y_t) -  \nabla^2 \ell(\bth_{-k}^*; \rb_t, y_t)  \right]  (\bth - \bth^*_{-k}) \right| 
 \leq \frac{1}{n} \sum_{t\neq k} \left|  \alpha_t \right| \left| \rb_t^\sT(\bth - \bth^*_{-k}) \right|^3,  
\end{align*}
Using the above bound, we can now bound \eqref{diff_R_S} as
\begin{align} \label{diff_R_S_1}
\left| R_k(\bth, \rb) - S_k(\bth, \rb) \right| \leq   \frac{1}{n} \sum_{t\neq k} \left|  \alpha_t \right| \left| \rb_t^\sT(\bth - \bth^*_{-k}) \right|^3.  
\end{align}
The rest of the proof follows almost line-by-line according to the proof of Lemma 1 in \cite{hu2020universality}. 
Let $\mathcal{B} = \{\bth_k^*(\rb)\} \cup \{\tilde{\bth}_k(\rb)\} $. By using the definitions \eqref{Phi_def} and \eqref{Psi_def}, we have
\begin{align*}
\left| \Phi_k(\rb) - \Psi_k(\rb) \right| &= \left| \min_{\bth \in \mathcal{B}} R_k(\bth, \rb) - \min_{\bth \in \mathcal{B}}  S_k(\bth, \rb)  \right| \\
&\leq \max_{\bth \in \mathcal{B}} \left| R_k(\bth, \rb) -   S_k(\bth, \rb) \right|
\end{align*}
We thus obtain using \eqref{diff_R_S_1} that
\begin{align*}
\left| \Phi_k(\rb) - \Psi_k(\rb) \right| \leq &
 C \frac{1}{n}\sum_{t \neq k}  |\alpha_t| \left( \left| \rb_t^\sT(\bth^*_k(\rb) - \tilde{\bth}_{k}(\rb)) \right|^3  +  \left| \rb_t^\sT(\tilde{\bth}_k(\rb) - \bth^*_{-k}) \right|^3 \right)
\end{align*}
Let us now bound each of the terms above. 
We have
$$\frac{1}{n}  \sum_{t \neq k}  |\alpha_t|  \left| \rb_t^\sT(\bth^*_k(\rb) - \tilde{\bth}_{k}(\rb)) \right|^3 \leq \twonorm{\bth^*_k(\rb) - \tilde{\bth}_{k}(\rb)}^3 \frac{1}{n} \sum_{t \neq k} |\alpha_t| \twonorm{\rb_t}^3.   $$
Now, from Lemma~\ref{simpler_min_theta_taus_lemma}, up to negligible $O(\frac{\text{polylog}(n)}{n})  $ terms,  we have
$$\rb_t^\sT (\tilde{\bth}_k(\rb) - \bth^*_{-k}) = \frac{1}{n} \beta_1 \rb_t^\sT \bH_{-k}^{-1} \rb + \frac{1}{n} \beta_2 \rb_t^\sT H_{-k}^{-1} \pb. $$
Now, using the above relations, and the  inequality $(\sum_{i=1}^n |a_i|)^2 \leq n \sum_{i=1}^n a_i^2 $, as well as the Holder's inequality, we can write
\begin{align*}
\left| \Phi_k(\rb) - \Psi_k(\rb) \right|^2 \leq & C''  \twonorm{\bth^*_{k}(\rb) - \tilde{\bth}_{k}(\rb)}^6 ( \frac{1}{n} \sum_{t\neq k} |\alpha_t|^2 \twonorm{\rb_t}^6) \\
&+ \frac{C''}{n} \sum_{t\neq k} |\alpha_t|^2 \left( \left| \beta_1 \frac{\rb_t^\sT \bH_{-k}^{-1} \rb }{n}\right|^6 + \left| \beta_2 \frac{\rb^\sT \bH_{-k}^{-1} \pb}{n} \right|^6 \right) 
  \\
&+ O(\frac{\text{polylog}(n)}{n^4}),
\end{align*}
where $C'' > 0$ is an absolute constant. Now, from  Lemma~\ref{ell_2_diff}, parts (c) and (e) of Lemma~\ref{all_the_bounds}, and \eqref{bounded_diff_constants}, and  \eqref{alpha_diff_R_S_bound}, we obtain for any integer $D > 0$ that
\begin{align*}
& \E\left[\twonorm{\bth^*_{k}(\rb) - \tilde{\bth}_{k}(\rb)}^{2D} \right] \leq C_D \frac{\rev{\text{polylog}}(n)}{n^{2D}},\\
& \mathbb{P} \left( \twonorm{\rb_t}^{2D}  \geq n^D (\log(n))^b \right) \leq c \exp\left( -(\log(d))^2/c \right), \\
&  \mathbb{P} \left( |\alpha_t|^{2D} \geq   (\log(n))^b \right) \leq  c \exp\left( -(\log(d))^2/c \right),\\
& \mathbb{P} \left( \max\{|\beta_1|^{2D}, |\beta_2|^{2D}\} \geq   (\log(n))^b \right) \leq  c \exp\left( -(\log(d))^2/c \right),
\end{align*}
for suitably chosen absolute constants $b, c, C_D>0$. Finally, since the matrix $\bH_{-k}^{-1}$ has bounded norm, and $\rb_t$ and $\rb$ are independent sub-gaussian random vectors, we obtain
\begin{align*} \E\left[ \left| \frac{\rb_t \bH_{-k}^{-1} \rb}{n}\right|^{2D} \right] 
& \leq C_D \frac{\rev{\text{polylog}}(n)}{n^{D}} \\
& \,\,\, \text{ and } \,\,\,  \E\left[ \left| \frac{\rb_t \bH_{-k}^{-1} \pb}{n}\right|^{2D} \right] \leq \E\left[ \left| \frac{\twonorm{\rb_t} \twonorm{\bH_{-k}^{-1}} \twonorm{\pb}}{n}\right|^{2D} \right] \leq  C_D \frac{\rev{\text{polylog}}(n)}{n^{D}} .
\end{align*}

By using the above relations, the following result now follows in a straight-forward manner using the Holder's inequality: 
$$\E\left[ \left| \Phi_k(\rb) - \Psi_k(\rb) \right|^2 \right] \leq \frac{\rev{\text{polylog}}(n)}{n^3}. $$  
And the result follows since $d$ and $n$ grow in proportion to each other.
\end{proof}

\subsubsection{Putting things together}
\label{putting_things_together_GEP}
To prove  Theorem~\ref{main_theorem:GEP}, we consider any test function $\phi: \mathbb{R} \to \mathbb{R}$ which is uniformly bounded in terms of its value as well as its first and second derivatives. We will show that 
\begin{equation} \label{Phi_A_diff_Phi_B}
 \left| \E[\varphi(\Phi_A)] - \E[\varphi(\Phi_B)]  \right| = \frac{\rev{\text{polylog}} (d)}{d^{\frac 12}} + o_d(1) . 
 \end{equation}
Using this result, one immediately obtains the theorem (see Sections 2.3 and 2.4 of \cite{hu2020universality}). As a result, in the rest of this section we focus on proving the above relation for any test function $\varphi$.  In order to prove this result, we use the so-called Lindeberg's method: We consider the quantities $\Phi_k$ defined in \eqref{Phi_k}, and show for any $k \in [n]$ that
\begin{equation}\label{Phi_k_diff_Phi_k-1}
 \left| \E[\varphi(\Phi_k)] - \E[\varphi(\Phi_{k-1})]  \right| = \frac{\rev{\text{polylog}}(d)}{d^{\frac 32}} + \frac{o_d(1)}{d} .
\end{equation}
The above bound immediately results in \eqref{Phi_A_diff_Phi_B} via a telescopic sum over $k$. It thus remains to prove \eqref{Phi_k_diff_Phi_k-1}.  

Using the Taylor expansion, we can write
$$ \varphi(\Phi_k) = \varphi(\Phi_{-k}) + \varphi'(\Phi_{-k}) (\Phi_{k} - \Phi_{-k} ) + \frac 12 \varphi''(\alpha)(\Phi_{k} - \Phi_{-k} )^2,  $$
where $\alpha$ is a number between $\Phi_{-k}$ and $\Phi_{k}  $.  Using the above expansion, and a similar expansion for $\Phi_{k-1}$, we obtain
\begin{equation}
 \left| \E[\varphi(\Phi_k)] - \E[\varphi(\Phi_{k-1})] \right| \leq ||\varphi'||_\infty \left| \E[\Phi_k - \Phi_{k-1}] \right| + \frac 12 ||\varphi''||_\infty \left( (\Phi_{k} - \Phi_{-k})^2 + (\Phi_{k-1} - \Phi_{-k})^2  \right),
\end{equation}
where $||\varphi'||_\infty$ and $||\varphi''||_\infty$ are the maximum (absolute) values of the first and second derivative of $\varphi$. 

By using Lemma~\ref{gholoom_main_lemma_1} we obtain that
\begin{align*}
| \E[\Phi_k - \Phi_{k-1}] | &\leq  | \E[\Psi_k(\mbf_k) - \Psi_k(\sigma(\W\x_k)] |  +  \E[|\Phi_k - \Psi_k(\mbf_k)|]   +   \E[ |\Psi_k(\sigma(\W \x_k)) - \Phi_{k-1}|]  \\
&\leq  | \E[\Psi_k(\mbf_k) - \Psi_k(\sigma(\W \x_k))] |  + \frac{{\rm polylog} \,(d)}{d^{\frac 32}},
\end{align*}
where the last step follows simply from \eqref{Phi-Phi-diff2}.  Also, from \eqref{Phi-Phi-diff1} and \eqref{Phi-Phi-diff2}, we can conclude that
$$\E[ (\Phi_k - \Phi_{-k})^2] \leq 2\E[ (\Phi_k - \Psi_k(\mbf_k))^2] +  2\E[ (\Phi_{-k} - \Psi_k(\mbf_k))^2] \leq   \frac{\rev{\text{polylog}}(d)}{d^{2}},$$ 
and similarly
$$\E[ (\Phi_{k-1} - \Phi_{-k})^2] \leq   \frac{\rev{\text{polylog}}(d)}{d^{2}}.$$ 

Finally, the only term that is left to be analyzed is $| \E[\Psi_k(\mbf_k) - \Psi_k(\sigma(\W \x_k))] |  $, for which we use Lemma~\ref{simpler_min_theta_taus_lemma}, \ref{th_inf_lemma}, as well as a CLT-type result from \cite{goldt2020gaussian}. We note the following three facts: 

\begin{itemize}
\item [(i)] The quantity $\rb^\sT \bth_{-k}^*$ converges in distribution to a gaussian with the same mean and variance when $\rb$ is generated according to the distributions in \eqref{rb_distrbutions}.  This is due to the CLT-type theorem given in \cite[Theorem 2]{goldt2020gaussian}. More precisely, we have shown in Lemma~\ref{th_inf_lemma} that with probability $1- c d^{-c}$ we have: $\infnorm{\bth_{-k}^*}$ is at most $C \sqrt{(\log (d))/d}$, where $c, C$ are absolute constants. Also, according to part (e) of Lemma~\ref{properties_of_bth_star}, we have $|\ones^\sT \bth_{-k}^*| \leq C' \sqrt{d/(\log (d))}$, where $C'$ is an absolute constant.    

Now, let us define $\bth' = \bth^*_{-k}/\sqrt{\log (d)}$. Note that, with high probability (as specified above), we have $\infnorm{\bth'} \leq C/\sqrt{d}$ and $|\ones^\sT \bth'| \leq C' \sqrt{d}/(\log (d))$. According to \cite[Theorem 2]{goldt2020gaussian}, for $\ab$ and $\bb$ generated according \eqref{rb_distrbutions}, and fixing $\bth'$, the random variables $\ab^\sT \bth'$ and $\bb^\sT \bth'$ have the same mean and variance, and we have  
$$ d_{\rm MS}\left(\ab^\sT \bth', \bb^\sT \bth' \right) \leq C''\infnorm{\bth'} \left( \frac{\left| \ones^\sT \bth' \right|}{\sqrt{d}} + \frac{1}{\sqrt{d}} \right),$$ 
where $C'' >0$ is an absolute constant, and $d_{\rm MS}$ is the so-called maximum-sliced distance, and $d_{\rm MS}\left(\ab^\sT \bth', \bb^\sT \bth' \right)$ defines the distance between the distributions of $\ab^\sT \bth'$ and $\bb^\sT \bth'$. As a result, since $\bth^*_{-k} = \bth' \times \sqrt{\log (d)}$, we obtain that
$$ d_{\rm MS}\left(\ab^\sT \bth_{-k}^*, \bb^\sT \bth_{-k}^* \right) \leq C'' \sqrt{\log (d)} \infnorm{\bth'} \left( \frac{\left| \ones^\sT \bth' \right|}{\sqrt{d}} + \frac{1}{\sqrt{d}} \right) =   O\left(\frac{1}{\sqrt{\log (d)}} \right).$$


\item[(ii)] Consider the result of Lemma~\ref{simpler_min_theta_taus_lemma}. From Lemma~\ref{properties_of_bth_star}, the norm of the vector $\bth^*_{-k}$ is bounded by an absolute constant with probability at least $1 - \exp(-c n)$. Hence, since the matrix $\bJ$ is also of bounded operator norm, then the norm of the vector $\pb$ given in Lemma~\ref{simpler_min_theta_taus_lemma} is bounded by an absolute constant. Also, the quantity $\twonorm{\bJ \bth^*_{-k}}$ is bounded by an absolute constant. Given fixed matrix $\bH^{-1}$ with bounded norm, and a fixed vector $\pb$ with bounded norm, the quantity $\frac{1}{n} \rb^\sT \bH^{-1} \pb$ is,  with probability at least $1- c \exp(-(\log (d))^2/c)$, of order $O(\rev{\text{polylog}}(d)/d)$ according to Lemma~\ref{concentration_feature}.  Hence, in the formula \eqref{simpler_min_theta_taus}, the overall contribution of the terms which include  
$\frac{1}{n} \rb^\sT \bH_{-k}^{-1} \pb$ or $\frac{1}{n} \pb^\sT \bH_{-k}^{-1} \pb$  is of order $O(\rev{\text{polylog}}(d)/d^2)$. Therefore, neglecting these terms adds an additional error of at most $O(\text{polylog}(d)/d^2)$ in  computing $\E[\Psi(\rb)] - \Phi_{-k}$. Consequently, from the result of Lemma~\ref{simpler_min_theta_taus_lemma} we can write

\begin{equation}  \label{E_diff_phi_psi}
\E[\Psi_k(\rb)] = \Phi_{-k} +  \frac{1}{n} \E\left[ \min_{\tau_1} 
  \left\{ \frac{ \rb^\sT \bH_{-k}^{-1}\rb}{2n} \left(  \frac{\partial \tilde{\ell}(\tau_1, 0) }{\partial \tau_1} \right)^2
+ \tilde{\ell}(\tau_1, 0) \right\}  \right]
+ O\left(\frac{\rev{\text{polylog}(d)}}{d^{\frac{3}{2}}} \right), 
 \end{equation}
where 
$\tilde{\ell}(\tau_1, \tau_2) $ is given in \eqref{ell_tilde_GEP}.

\item [(iii)] Given a matrix $\bH$, the value 
$\frac{1}{n} \rb^\sT \bH_{-k}^{-1} \rb$ concentrates on the same quantity if $\rb$ is generated from either of the distributions in \eqref{rb_distrbutions}. More precisely,  from  \cite[Lemma 13]{hu2020universality} (or \cite[Lemma 1]{louart2018random}) 
we obtain 
$$ \mathbb{P}\left( \left| \frac{1}{n} \rb^\sT \bH_{-k}^{-1} \rb - \E[\frac{1}{n} \rb^\sT \bH_{-k}^{-1} \rb] \right| \geq  c\frac{\log(d)}{\sqrt{d}}  \right)  \leq 1 - c \exp(- (\log(d)^2)),$$
for $\rb$ being generated according to either of the distributions in \eqref{rb_distrbutions}, and $c > 0$ being an absolute constant. Also, we have that 
$$ \left| \E[\frac{1}{n} \sigma(\W \x)^\sT \bH_{-k}^{-1} \sigma(\W \x) ] -  \E[\frac{1}{n} \mbf^\sT \bH_{-k}^{-1} \mbf ] \right| = 
\frac{1}{n} \text{Trace}\left(  \bH_{-k}^{-1} (\Sigma_s - \Sigma_f) \right), $$
where $\Sigma_2 = \mathbb{E}[ \sigma(\W \x)  \sigma(\W \x)^\sT] $ and $\mathbb{E}[ \mbf  \mbf^\sT]$, and the last inequality follows from Lemma~\ref{covariances_are_the_same}.  As a result, we obtain
\begin{align*}
 \mathbb{P}\left( \left|\frac{1}{n}  \sigma(\W \x_k)^\sT \bH_{-k}^{-1} \sigma(\W \x_k) - \frac{1}{n} \E\left[\mbf^\sT \bH_{-k}^{-1} \mbf \right]   \right| \geq  c\frac{(\log(d))^{3/2}}{\sqrt{d}}  \right)  &\leq 1 - c \exp(- (\log(d)^2)), \\
 \mathbb{P}\left( \left| \frac{1}{n}  \mbf^\sT \bH_{-k}^{-1}\mbf - \frac{1}{n} \E\left[\mbf^\sT \bH_{-k}^{-1} \mbf \right]   \right| \geq  c\frac{\log(d)}{\sqrt{d}}  \right) & \leq 1 - c \exp(- (\log(d)^2)), 
\end{align*}
\end{itemize}

Let us now put all the above facts together to bound $| \E[\Psi_k(\mbf_k) - \Psi_k(\sigma(\W \x_k))] |  $. 
Consider the function  $\tilde{\ell}(\tau_1, \tau_2) $ given in \eqref{ell_tilde_GEP}. This function depends on $\rb$ only through 
$ \bth_{-k}^* \rb$. Using fact (i) above, we know that $ \bth_{-k}^* \rb$ will asymptotically have the same (gaussian) distribution for both $\rb \sim \mbf_k$ and $\rb \sim \sigma(\W \x_k)$.  Also, it is easy to conclude using part (c) of Lemma~\ref{properties_of_bth_star} \rev{that all of the moments} of random variable $ \bth_{-k}^* \rb$ are bounded (i.e. $\mathbb{E}[|\bth_{-k}^* \rb|^D ] \leq C_D$ for an absolute constant $C_D > 0$). Further, in fact (ii) we have argued that $\twonorm{J \bth_{-k}^*}$ is bounded with probability $1- e \exp(-cn)$. Also, from fact (iii) above, we know that the term $\frac{1}{n} \rb^\sT \bH_{-k}^{-1} \rb$ concentrates sharply on the same value for $\rb$ being either $\mbf_k$ or $\sigma(\W \x_k)$. Putting all these together, and using \cite[Corollary of Theorem 25.12]{billingsley1995probability}, we obtain that

\begin{align*}
&  \E_{\rb = \mbf_k} \left[ \min_{\tau_1} 
  \left\{ \frac{ \rb^\sT \bH_{-k}^{-1}\rb}{2n} \left(  \frac{\partial \tilde{\ell}(\tau_1, 0) }{\partial \tau_1} \right)^2
+ \tilde{\ell}(\tau_1, 0) \right\}  \right]  \\
&\quad \quad \quad  \,\,- \,\,  
 \E_{\rb = \sigma(\W \x_k)} \left[ \min_{\tau_1} 
  \left\{ \frac{ \rb^\sT \bH_{-k}^{-1}\rb}{2n} \left(  \frac{\partial \tilde{\ell}(\tau_1, 0) }{\partial \tau_1} \right)^2
+ \tilde{\ell}(\tau_1, 0) \right\}  \right] \\
& = o_d(1), 
\end{align*}
and therefore from \eqref{E_diff_phi_psi} we obtain 
$$\left| \E[\Psi_k(\mbf_k) - \Psi_k(\sigma(\W \x_k))] \right|  \leq   \frac{o_d(1)}{d},$$
for an absolute constant $C > 0$, and hence we obtain \eqref{Phi_k_diff_Phi_k-1}.

\subsubsection{Proofs of the Auxiliary Lemmas} \label{leave-one-out-auxilliary-lemmas}
Here we provide the proofs of some of the auxiliary lemmas used in our analysis.
\medskip



%
%
%
%

\noindent \textbf{Proof of Lemma~\ref{properties_of_bth_star}.} 
We will prove part (b) here, but part (a) will have the exact same proof. 
Let $\bth^*$ be the minimizer of $R_k(\bth, \rb)$. We can write
$$ R_k(\bth^*, \rb) \leq R_k(\mathbf{0}, \rb),$$
as $\bth^*$ is the minimizer. 

On the one  hand we have
$$R_k(\mathbf{0},\rb) = \frac{1}{n} \sum_{i=1}^n y_i^2 + \twonorm{\bbeta}^2,$$
and thus for any $v \geq 0$:
\begin{equation} \label{R_0_bound}
\prob\left( R_k(\mathbf{0},\rb)  \geq v +  \twonorm{\bbeta}^2 + 2\E[y_1^2]  \right) \leq c_1 \exp\left( - nv^2/c_1 \right), 
\end{equation}
for an absolute constant $c_1 > 0$.  

On the other hand, since $R(\bth, \rb)$ is $\lambda$-strongly convex, and $R(\bth, \rb) \geq 0$, we can write
$$\twonorm{\bth^*}^2 \leq \frac{1}{\lambda} R(\mathbf{0}, \rb),$$
which together with \eqref{R_0_bound} gives use the result.

To prove part (c), we note that $\rb$ is generated independently from $\bth_{-k}^*$. We can thus write:
\begin{align*}
\mathbb{P}\left( | \rb^\sT \bth_{-k}^* | \geq v  \right)  &\leq  \mathbb{P}\left( \twonorm{\bth_{-k}^*} \geq \sqrt{v} \right) +   \mathbb{P}\left( | \rb^\sT \bth_{-k}^* | \geq v \bigg\mid \twonorm{\bth_{-k}^*} < \sqrt{v}  \right) \\
& \leq c_2 e^{-v/c_2} + \mathbb{P}\left( | \rb^\sT \bth_{-k}^* | \geq v \bigg\mid \twonorm{\bth_{-k}^*} < \sqrt{v}  \right) \\
&\leq c'e^{-v/c'}. 
\end{align*}
where the second step follows from  part (a) of the lemma (with $c_2$ chosen to be sufficiently large); and the last step follows from the independence of $\rb$ and $\bth_{-k}^*$ as well as Lemma~\ref{concentration_feature}.  

Also, the proof of part (d) follows simply because
$$ \lambda_s\frac{d}{\log(d)}( \ones^\sT \bth_{-k}^*)^2 \leq R_{-k}(\bth_{-k}^*) \leq  R_{-k}(\mathbf{0}) = \frac{1}{n} \sum_{i \neq k} y_i^2 + \twonorm{\bbeta}^2, $$ 
where $\mathbf{0}$ is the all-zero vector. By using a bound similar to \eqref{R_0_bound} for $R_{-k}(\mathbf{0})$ we obtain the result.

\vspace{.2cm}
\noindent\textbf{Proof of Lemma~\ref{all_the_bounds}.} Part (a) is exactly Lemma 12 in \cite{hu2020universality}. For part (b), consider the matrix $\vct{R}$ whose columns are $\rb_t$'s, i.e. $\vct{R}= [\rb_1 | \rb_2| \cdots| \rb_n]$. Since $\rb_t$'s are zero-mean sub-gaussian vectors (see Lemma~\ref{concentration_feature}), we know that its operator norm satisfies:  
$$\mathbb{P} \left( ||\vct{R}|| \geq c_1\sqrt{d} + v \right) \leq c \exp(-v^2/c).$$
Also, define the vector $\alpha = [\alpha_t]^\sT_{t \neq k}$. Note that $\twonorm{\alpha} \leq \sqrt{n}$. We have
$$\mathbb{P} \left( \twonorm{\frac{1}{n} \sum_{t \neq k} \alpha_t \rb_t} \geq v + c_1 \right) 
= \mathbb{P} \left( \twonorm{\frac{1}{n} \vct{R} \alpha} \geq v + c_1\right)
= \mathbb{P} \left( || \vct{R}|| \geq  v \sqrt{n} + c_1 \sqrt{n} \right).$$
Given the above relation, and the fact that $d$ and $n$ grow proportionally, the result of the second part of the lemma follows easily. 

Part (c) follows from Lemma~\ref{concentration_feature}. Also, part(d) follows from part (c) as well as  Lemma~\ref{ell_2_diff_first} (specifically \eqref{refined_bth_diff_lemma}) and the fact that the operator norm of the matrix $\bH_{-k}^{-1}$ is upper-bounded by $2/\lambda$. 

Part (e) follows from the fact that $\rb$ is a random sub-gaussian vector (see Lemma~\ref{concentration_feature}). We refer to \cite{Vers} for bounds on the $\ell_2$ norm of random sub-gaussian vectors. 

To prove part (e), we use \eqref{refined_bth_diff_lemma} to write
\begin{align} \nonumber
& \mathbb{P}\left(| \rb_t^\sT (\tilde{\bth}(\rb) - \bth_{-k}^* )| \geq \frac{v}{\sqrt{n}}  \right) 
\leq \mathbb{P}\left(  \frac{1}{n}\left( \left| \beta_1 \rb_t^\sT  \bH_{-k}^{-1}\rb\right| + \left| \beta_2 \rb_t^\sT \bH_{-k}^{-1}\pb \right| \right) + \left| \rb_t^\sT \eb \right|    \geq \frac{v}{\sqrt{n}} \right), \\
&  \leq \mathbb{P}\left( \frac{1}{n}   \left| \beta_1 \rb_t^\sT  \bH_{-k}^{-1}\rb\right|    \geq \frac{v}{3\sqrt{n}} \right) 
+  \mathbb{P}\left( \frac{1}{n} \left| \beta_2 \rb_t^\sT \bH_{-k}^{-1}\pb \right|    \geq \frac{v}{3\sqrt{n}} \right) 
+  \mathbb{P}\left( \left| \rb_t^\sT \eb \right|    \geq \frac{v}{3\sqrt{n}} \right)  \label{bound_diff_r}
\end{align}
We will now bound each of the terms above. For the first term we have
\begin{align*} \nonumber
&\mathbb{P}\left( \frac{1}{n}   \left| \beta_1 \rb_t^\sT  \bH_{-k}^{-1}\rb\right|    \geq \frac{v}{3\sqrt{n}} \right)  \\
& \leq \mathbb{P}\left( \twonorm{\rb_t} \geq \sqrt{vn} \right) +  \mathbb{P}\left( \frac{1}{n}   \left| \beta_1 \rb_t^\sT  \bH_{-k}^{-1}\rb\right|    \geq \frac{v}{3\sqrt{n}} \,\,\, , \,\,\, \twonorm{\rb_t} < \sqrt{vn} \right)\\
& \leq  c_1 \exp(- v^{c_2}/c_1)+  \mathbb{P}\left( \frac{1}{n}   \left| \beta_1 \rb_t^\sT  \bH_{-k}^{-1}\rb\right|    \geq \frac{v}{3\sqrt{n}} \,\,\, , \,\,\, \twonorm{\rb_t} < \sqrt{vn} \right), 
\end{align*}
where the last step follows from part (e) and appropriately selecting $c_1, c_2 > 0$. Now, note that the vector $\rb$ is generated independently from $\rb_t$ and $\bH_{-k}$. As a result, to bound the second term in the RHS of the above relation, we notice that, assuming $\twonorm{\rb_t} < \sqrt{vn}$, we have $ \twonorm{ \beta_1 \rb_t^\sT  \bH_{-k}^{-1}} \leq C |\beta_1| \sqrt{vn}$. 
Hence, we can write 
\begin{align*}
& \mathbb{P}\left( \frac{1}{n}   \left| \beta_1 \rb_t^\sT  \bH_{-k}^{-1}\rb\right|    \geq \frac{v}{3\sqrt{n}} \,\,\, , \,\,\, \twonorm{\rb_t} < \sqrt{vn} \right) \\
& \leq \mathbb{P}\left( |\beta_1| \geq  v^{\frac 14} \right) + 
\mathbb{P}\left( \frac{1}{n}   \left| \beta_1 \rb_t^\sT  \bH_{-k}^{-1}\rb\right|    \geq \frac{v}{3\sqrt{n}} \,\,\, , \,\,\, \twonorm{\rb_t} < \sqrt{vn}  \,\,\, , \,\,\,  |\beta_1| < v^{\frac 14}\right) \\
&\leq c_3 \exp(- v^{c_4}/c_3)+ \mathbb{P}\left( \frac{1}{n}   \left| \beta_1 \rb_t^\sT  \bH_{-k}^{-1}\rb\right|    \geq \frac{v}{3\sqrt{n}} \,\,\, , \,\,\, \twonorm{\rb_t} < \sqrt{vn}  \,\,\, , \,\,\,  |\beta_1| < v^{\frac 14}\right)  \\
&\leq c_3 \exp(- v^{c_4}/c_3)+ c_5  \exp(- v^{c_6}/c_5),
\end{align*}
where the second inequality follows from \eqref{bounded_diff_constants} and by suitably choosing $c_3, c_4 > 0$. The third inequality follows from sub-gaussianity of $\rb$, see Lemma~\ref{concentration_feature}, and the fact that, given $|\beta_1| < v^{1/4}$ and $\twonorm{\rb_t} \leq \sqrt{v n}$, the random vector $\ub = \beta_1 \rb_t \bH_{-k}^{-1}$ satisfies $\twonorm{\ub} \leq C v^{3/4}  \sqrt{n}$ and is independently generated from $\rb$. Hence, we can bound the second term in the RHS of the above relation by using Lemma~\ref{concentration_feature} and appropriate choices of $c_5, c_6 > 0$. 

The second and third terms in \eqref{bound_diff_r} can be bounded similarly as the first term but in an easier manner. 
The second term follows by writing $|\rb_t^\sT \pb| \leq \twonorm{\rb_t} \twonorm{\pb}$, and noticing that $\twonorm{\pb}$ is upper-bounded by a constant  since the norm of $\bJ$ is bounded and the norm of $\bth_{-k}^*$ is bounded (see Lemma~\ref{simpler_min_theta_taus_lemma} and Lemma~\ref{properties_of_bth_star}). Hence, using a similar (but simpler) argument as above, we can write 
$$  \mathbb{P}\left( \frac{1}{n} \left| \beta_2 \rb_t^\sT \bH_{-k}^{-1}\pb \right|    \geq \frac{v}{3\sqrt{n}} \right)  \leq c_7 \exp(- v^{c_8}/c_7), $$
for absolute constants $c_7, c_8 > 8$.

Finally, the third term in the RHS of \eqref{bound_diff_r} can be bounded by   writing $|\rb_t^\sT \eb | \leq \twonorm{\rb_t} \twonorm{\eb}$, and noticing that $\twonorm{\eb}$ is small according to \eqref{bounded_diff_constants}. And, using similar steps as above, we reach to a similar upper bound.


Part (g) follows from Lemma~\ref{properties_of_bth_star} and Lemma~\ref{bound_product_r_bth}.

\begin{lemma} \label{concentration_feature}
[Analogous to Lemma 8 in \cite{hu2020universality}] Assume that $\ab = \sigma(\W \x)$ and $\bb = \mu_1 \W \x + \mu_2 \ub$, where $\x$ and $\ub$ are generated independently from the normal distribution.  Also,  let $\bSigma =\E[\bb \bb^\sT]$, i.e. $\bSigma = \mu_1^2\W\W^\sT + \mu_2^2 \mathbf{I}$. Then,  there exists an absolute constant $c> 0 $ such that:
\begin{equation} \label{sub-gauss-a}
\mathbb{P}( | \ab^\sT \bbeta  | \geq v  ) \leq 2 \exp(- \frac{ v^2}{c \twonorm{\bbeta}^2 ||\W||^2}),
\end{equation}
and 
\begin{equation}
\mathbb{P}( | \bb^\sT \bbeta  | \geq v  ) \leq 2 \exp(- \frac{ v^2}{c \twonorm{\bbeta}^2 ||\bSigma||}),
\end{equation}
for a fixed vector $\bbeta \in \mathbb{R}^d$ and any $v \geq 0$. Here, $||\W||$ (resp. $||\bSigma||$) denotes the operator norm of $\W$ (resp. $\bSigma$). 
\end{lemma}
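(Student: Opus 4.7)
The plan is to establish each tail bound separately, using a different (but standard) tool for each.

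For the first bound on $|\ab^\sT\bbeta|$, I will view $F(\x) := \bbeta^\sT \sigma(\W\x) = \sum_{\ell} \beta_\ell \, \sigma(\bw_\ell^\sT \x)$ as a function of the Gaussian vector $\x \sim \normal(\bo, \Iden_d)$ and apply the Gaussian Lipschitz concentration inequality. Using that the shifted ReLU is $1$-Lipschitz (the additive constant does not affect the Lipschitz constant), I will compute
\[
|F(\x_1)-F(\x_2)| \le \twonorm{\bbeta}\,\twonorm{\sigma(\W\x_1)-\sigma(\W\x_2)} \le \twonorm{\bbeta}\,\twonorm{\W(\x_1-\x_2)} \le \twonorm{\bbeta}\,\opnorm{\W}\,\twonorm{\x_1-\x_2},
\]
so that $F$ is $L$-Lipschitz with $L=\twonorm{\bbeta}\,\opnorm{\W}$. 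Next I will verify that $\E[F(\x)]=0$, which uses the normalization $\twonorm{\bw_\ell}=1$ (so $\bw_\ell^\sT\x\sim\normal(0,1)$) and the fact that the shift $-1/\sqrt{2\pi}$ was chosen precisely so that $\E[\sigma(G)]=0$ for $G\sim\normal(0,1)$. Applying the standard concentration bound $\Pro(|F(\x)-\E F(\x)|>v) \le 2\exp(-v^2/(2L^2))$ then yields the claimed inequality with $c=2$.

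For the second bound on $|\bb^\sT\bbeta|$, the argument is even shorter since no nonlinearity is present. I will write
\[
\bb^\sT\bbeta \;=\; \mu_1\,\bbeta^\sT\W\x \;+\; \mu_2\,\bbeta^\sT\ub,
\]
which is a linear combination of two independent standard Gaussian vectors, hence itself a centered Gaussian. Its variance is
\[
\mu_1^2\,\twonorm{\W^\sT\bbeta}^2 + \mu_2^2\,\twonorm{\bbeta}^2 \;=\; \bbeta^\sT\!\left(\mu_1^2\W\W^\sT+\mu_2^2\Iden\right)\!\bbeta \;=\; \bbeta^\sT\bSigma\bbeta \;\le\; \opnorm{\bSigma}\,\twonorm{\bbeta}^2.
\]
The classical Gaussian tail bound $\Pro(|Z|>v)\le 2\exp(-v^2/(2\sigma_Z^2))$ then gives the stated inequality, again with $c=2$. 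A single absolute constant $c$ can then be chosen so that both inequalities hold simultaneously.

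There is no substantial obstacle here — the result is essentially a clean application of two well-known tools (Gaussian Lipschitz concentration in the nonlinear case, exact Gaussianity in the linear case). The only mild subtlety is the first bound, where one must notice that the shifted-ReLU choice of activation is what makes $\E[\ab^\sT\bbeta]=0$ and so lets us replace the centered Lipschitz tail bound by a bound on $|\ab^\sT\bbeta|$ itself; without that centering, an extra additive term on the order of $\twonorm{\bbeta}$ would appear.
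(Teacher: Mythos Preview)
Your proposal is correct and essentially identical to the paper's own proof: for the first bound the paper also views $\x\mapsto \bbeta^\sT\sigma(\W\x)$ as a $\twonorm{\bbeta}\opnorm{\W}$-Lipschitz function of a standard Gaussian and applies Gaussian Lipschitz concentration, and for the second bound it uses the equivalent observation $\bb=\bSigma^{1/2}\tilde{\bb}$ with $\tilde{\bb}$ standard normal. The only thing you make explicit that the paper leaves implicit is the verification that $\E[\ab^\sT\bbeta]=0$ via the centering of the shifted ReLU, which is indeed needed to pass from the centered Lipschitz bound to the stated one.
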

\begin{proof}
We will be using the following well-known relation: For a $L$-Lipschitz continuous function $f$ and $\x \sim\mathcal{N}(\mathbf{0}, \mathbf{I})$ we have
\begin{equation} \label{lip-ineq}
\mathbb{P}(|f(\x) - \mathbb{E}[f(\x)]| \geq v) \leq 2 \exp(-\frac{v^2}{4 L^2}).
\end{equation}
Now, note that since $\sigma$ is the shifted Relu function, it is easy to see that the function $f(\x) = \bbeta^\sT \sigma(\W \x)$ is 
$\twonorm{\bbeta} || \W||$-Lipschitz continuous. Therefore, we obtain the result using the relation \eqref{lip-ineq} and the fact that $\x$ is distributed according to the normal distribution. 

The proof of the second part can similarly be done by noting that $\bb = \bSigma^{1/2} \tilde{\bb}$ where $\tilde{\bb}$ is distributed according to the standard normal distribution. 
\end{proof}

\begin{lemma} \label{covariances_are_the_same}
Let $\bH \in \reals^{N \times N}$ be such that $|| \bH || \leq C$ for an absolute constant $C > 0$.  Let $\Sigma_s = \E[ \sigma(\W \x) \sigma(\W \x)^\sT]$ and $\Sigma_f = \E[\mbf \mbf^\sT]$. We have 
\begin{equation}
\left| \frac{1}{n} {\rm {Trace}}\left\{ \bH(\Sigma_s - \Sigma_f) \right\} \right| \leq \frac{c (\log(d))^{3/2}}{\sqrt{d}}  .
\end{equation}
with probability  at least $1 - c\exp(-  (\log (d))^2/c)$ where $c > 0$ is an absolute constant. 
\end{lemma}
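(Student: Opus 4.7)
My plan is to exploit the Hermite decomposition of the nonlinear component of the shifted ReLU to show that the off-diagonal entries of $\Sigma_s-\Sigma_f$ are quadratically small in the pairwise correlations $\langle\bw_\ell,\bw_k\rangle$, and then convert this entrywise estimate into the desired trace bound via a nuclear-norm argument. The high-probability event on which the argument runs is $\event_\bW$ defined in~\eqref{eq:eventbW_main}, which guarantees $|\langle\bw_\ell,\bw_k\rangle|\le \log(d)/\sqrt{d}$ for all $\ell\ne k$ and holds with probability at least $1-c\exp(-(\log d)^2/c)$, matching the probability claimed in the lemma.

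First, I would invoke the decomposition~\eqref{shifted_relu_decompose}, $\sigma(z)=\mu_0+\mu_1 z+\mu_2\sigma_\perp(z)$, which for the shifted ReLU gives $\mu_0=0$, $\mu_1=1/2$, and $\mu_2^2=1/4-1/(2\pi)$, with $\sigma_\perp$ orthogonal (in the $L^2$ space of a standard Gaussian) to both the constant and the linear component. Since $\langle\bw_\ell,\bx\rangle$ and $\langle\bw_k,\bx\rangle$ are jointly centered Gaussian with covariance $\rho_{\ell k}:=\langle\bw_\ell,\bw_k\rangle$, and since $\ub$ is drawn independently of $\bx$, a direct moment computation yields
\[
\Sigma_s \,=\, \mu_1^2\,\bW\bW^\sT + \mu_2^2\,\bK, \qquad \Sigma_f \,=\, \mu_1^2\,\bW\bW^\sT + \mu_2^2\,\Iden,
\]
where $\bK_{\ell k}=\E[\sigma_\perp(G_\ell)\sigma_\perp(G_k)]$ for $(G_\ell,G_k)$ jointly Gaussian with correlation $\rho_{\ell k}$; the cross-terms between the linear part and $\sigma_\perp$ vanish by orthogonality. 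Therefore $\Sigma_s-\Sigma_f = \mu_2^2(\bK-\Iden)$.

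Next I would expand $\sigma_\perp$ in normalized Hermite polynomials $\{H_j\}_{j\ge 0}$. The conditions $\E[\sigma_\perp(G)]=0$, $\E[G\sigma_\perp(G)]=0$, and $\E[\sigma_\perp(G)^2]=1$ force $\sigma_\perp=\sum_{j\ge 2}c_j H_j$ with $\sum_{j\ge 2} c_j^2=1$, and Mehler's formula then gives $\bK_{\ell k}=\sum_{j\ge 2}c_j^2\rho_{\ell k}^j$. Hence $\bK-\Iden$ has zero diagonal (since $\rho_{\ell\ell}=1$), and on $\event_\bW$,
\[
|(\bK-\Iden)_{\ell k}| \;\le\; \rho_{\ell k}^2\sum_{j\ge 2}c_j^2|\rho_{\ell k}|^{j-2} \;\le\; \rho_{\ell k}^2 \;\le\; \frac{(\log d)^2}{d}\qquad (\ell\ne k).
\]
This quadratic-in-$\rho_{\ell k}$ estimate is the crucial step of the proof; a merely linear bound would only yield $O(1)$ in the end.

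Finally I would turn the entrywise estimate into the trace bound. On $\event_\bW$, $\fronorm{\bK-\Iden}^2\le N(N-1)(\log d)^4/d^2$, hence $\fronorm{\bK-\Iden} = O\!\left(N(\log d)^2/d\right)$. Since $\rank(\bK-\Iden)\le N$, we have $\nucnorm{\bK-\Iden}\le\sqrt{N}\,\fronorm{\bK-\Iden}$, and combining with the elementary inequality $|\tr(\bH\bM)|\le\opnorm{\bH}\,\nucnorm{\bM}$ gives
\[
\frac{1}{n}\bigl|\tr(\bH(\Sigma_s-\Sigma_f))\bigr| \;\le\; \frac{\mu_2^2\,\opnorm{\bH}\sqrt{N}\,\fronorm{\bK-\Iden}}{n} \;=\; O\!\left(\frac{N^{3/2}(\log d)^2}{nd}\right) \;=\; O\!\left(\frac{(\log d)^2}{\sqrt d}\right),
\]
where we used $N,n\asymp d$ from Assumption~\ref{assumption1}. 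The only mildly delicate technical point is justifying the Hermite manipulations rigorously for the non-smooth $\sigma_\perp$, which is standard via the completeness of the Hermite basis in $L^2$ of the Gaussian measure. The small discrepancy between the exponent $2$ of $\log d$ produced by this argument and the exponent $3/2$ asserted in the lemma can be closed by invoking the sharper sub-Gaussian tail $|\rho_{\ell k}|\lesssim\sqrt{\log(d)/d}$ (available under the same probability budget) in place of the coarser bound $\log(d)/\sqrt d$ stored in $\event_\bW$.
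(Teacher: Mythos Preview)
Your argument is essentially the same strategy as the paper's: bound $(\Sigma_s-\Sigma_f)_{\ell k}$ entrywise in terms of $\rho_{\ell k}=\langle\bw_\ell,\bw_k\rangle$ on the high-probability event, then aggregate to a trace bound. The paper uses the explicit arc-cosine kernel formula for ReLU and a Taylor expansion, then bounds the trace via the row--column Cauchy--Schwarz inequality; you use the Hermite/Mehler route and a nuclear-norm bound. Both are fine and yield the same order.

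There is one genuine slip in your last sentence. Your main argument gives $(\log d)^2/\sqrt d$, and you propose to recover the stated exponent $3/2$ by replacing $|\rho_{\ell k}|\le \log d/\sqrt d$ with the ``sharper'' bound $|\rho_{\ell k}|\lesssim\sqrt{\log d/d}$ at the same probability $1-c\exp(-(\log d)^2/c)$. That does not work: after union bounding over the $N^2\asymp d^2$ pairs, the sub-Gaussian tail $\prob(|\rho_{\ell k}|>t)\le 2e^{-c d t^2}$ only yields $|\rho_{\ell k}|\le C\log d/\sqrt d$ (not $\sqrt{\log d/d}$) at that probability level; with $t=C\sqrt{\log d/d}$ you would only get polynomial decay $d^{-c}$, which is weaker than the lemma's stated guarantee. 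So the discrepancy in the log-exponent cannot be closed this way.

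That said, the paper's own proof does not achieve the $3/2$ exponent either. It asserts that the entrywise error is $O(\rho_{\ell k}^3)$, but a direct Taylor expansion of the arc-cosine kernel gives
\[
\E[\sigma(G_\ell)\sigma(G_k)] \;=\; \tfrac{1}{4}\rho_{\ell k} + \tfrac{1}{4\pi}\rho_{\ell k}^2 + O(\rho_{\ell k}^3),
\]
so the quadratic term survives (equivalently, in your Hermite expansion $c_2\neq 0$ for the shifted ReLU). Hence the paper's proof, once corrected, also lands at $(\log d)^2/\sqrt d$, matching your bound. The extra $\sqrt{\log d}$ is immaterial for the downstream application of the lemma, so neither proof is damaged.
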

\begin{proof}
We first bound each element of the matrix $\Sigma_s - \Sigma_f$. Recall that the $k$-th element of the vector $\mbf$ is distributed according to 
$$\mu_1 \w_k^\sT \x + \mu_2 u_k, $$
where $u_k$ is independently generated from $\normal(0,1)$ and $\mu_1 = \tfrac{1}{2}, \mu_2 = \sqrt{\tfrac{1}{4}- \tfrac{1}{2\pi}}$. 
As a result, the $(\ell, k)$-th element of the matrix $\Sigma_f$ is
\begin{equation} \label{diff_sigmas_1}
\E[(\mu_1 \w_k^\sT \x + \mu_2 u_k)(\mu_1 \w_\ell^\sT \x + \mu_2 u_\ell) ] = \mu_1^2 \w_k^\sT \w_\ell + \mu_2^2 \ind\{ k = \ell  \}.
\end{equation}

Note that $\<\bw_\ell,\bx\>$ and $\<\bw_k,\bx\>$ are jointly Gaussian with 
\[\E[(\bw_\ell^\sT \bx)^2] = \E[ (\bw_k^\sT \bx)^2] = 1,\quad  \E[(\bw_\ell^\sT \bx) (\bw_k^\sT \bx)] = \bw_k^\sT \bw_\ell\,.\] 
Therefore, we have (see e.g.,~\cite[Table 1]{daniely2016toward})
\begin{align}
\E[\sigma( \bw_\ell^\sT \bx )\sigma( \bw_k^\sT \bx)] &=  \frac{\sqrt{ 1- ( \bw_k^\sT \bw_\ell)^2 } + (\pi - \cos^{-1}( \bw_k^\sT \bw_\ell))( \bw_k^\sT \bw_\ell)}{2\pi } - \frac{1}{2\pi}  \nonumber\\
&= \frac{1}{4} \bw_\ell^\sT\bw_k  + (\frac{1}{4} - \frac{1}{2 \pi}) \ind\{k = \ell \} + O((\bw_\ell^\sT \bw_k)^3)\nonumber\\
& =\mu_1^2 \bw_\ell^\sT\bw_k + \mu_2^2 \ind\{k = \ell \} +O\left( \left(\frac{\log(d)}{d} \right)^{\frac{3}{2}} \right) \,, \label{diff_sigmas_2}
\end{align}
where the last step follows from the fact that with probability at least $1 - c \exp( - (\log(d))^2/c)$ we have for all $k,\ell$, such that $k \neq \ell$, we have $|\w_\ell^\sT \w_k| \leq \frac{\log(d)}{d} $.
As a result, from \eqref{diff_sigmas_1} and \eqref{diff_sigmas_2} we obtain
$$\left| \left( \Sigma_s - \Sigma_f \right)_{k,\ell} \right| = O\left( \left(\frac{\log(d)}{d} \right)^{\frac{3}{2}} \right).  $$

Hence, the $\ell_2$ norm of each column of the matrix $\Sigma_s - \Sigma_f$ is of order $O( (\log(d))^{3/2}/d ) $. Now, since $||\bH|| \leq C$, then the $\ell_2$ norm of each row of $\bH$ is at most $C$. Thus, by a simple application of the Cauchy-Schwarz inequality we obtain the result of the lemma. 

\end{proof}

\subsection{Proof of Proposition~\ref{pro:opt-equiz}} 
Let us first show that $\hth^*, \hth^*_{\rm nl}$ fall in $\mathcal{C}-_\bth$ with high probability for any $\zeta > 0$.  We prove the result for $\hth^*$, and remark that the proof is exactly the same for $\hth^*_{\rm nl}$. Consider the objective
$$R(\bth) := \frac{1}{2n} \sum_{i=1}^n \left( |y_i - \bth^\sT\sigma(\bW\bx_i)|+  \eps \twonorm{\bJ \bth} \right)^2  + \frac{\zeta}{2} \bth^\sT \bOmega \bth. $$
On the one  hand we have
$$R(\mathbf{0}) = \frac{1}{n} \sum_{i=1}^n y_i^2$$
where $\mathbf{0}$ is the all-zero vector. Thus for any $v \geq 0$:
\begin{equation} \label{R_0_bound_1}
\prob\left( R_k(\mathbf{0})  \geq v  + 2\E[y_1^2]  \right) \leq c_1 \exp\left( - nv^2/c_1 \right), 
\end{equation}
for an absolute constant $c_1 > 0$.  

On the other hand, since $R(\bth)$ is $\zeta$-strongly convex, and $R(\bth) \geq 0$, we can write
$$\twonorm{\hth^*}^2 \leq \frac{1}{\zeta} R(\mathbf{0}, \rb),$$
which together with \eqref{R_0_bound_1} proves that $\twonorm{\hth^*}$ is bounded above by a constant $C$ with probability at least $1 - e \exp(-cn)$.

To bound $\left| \ones^\sT \hth^* \right|$ we note that 
$$ \zeta \frac{d}{\log(d)}( \ones^\sT \hth^* )^2 \leq R(\hth^*) \leq  R(\mathbf{0}) = \frac{1}{n} \sum_{i \neq k} y_i^2 , $$ 
 By using the bound to \eqref{R_0_bound_1}  we obtain with probability $1- c \exp(-cn)$ that $\left| \ones^\sT \hth^* \right| \leq C \sqrt{d/(\log(d))}$. Finally, the fact that $\infnorm{\hth^*}$ is with high probability of order $\sqrt{(\log(d))/d}$  follows in exactly the same manner as the proof of Lemma~\ref{th_inf_lemma} and hence we do not repeat the proof here.

We now show that $M(\hth^*) - M(\hth^*_{\rm nl}) \to 0$ in probability.  To do so, we use the argument given in \cite[Theorem 4]{abbasi2019universality}. Assume that $M(\hth^*)$ and $M(\hth^*_{\rm nl})$ converge to different values, say $M_A$ and $M_B$. Define $M = (M_A + M_B)/2$ and consider the following optimization problems
\begin{align*}
\bar{\Phi}_A &:= \min_{\bth: M(\bth) \leq M} \frac{1}{2n} \sum_{i=1}^n \left( |y_i - \bth^\sT\sigma(\bW\bx_i)|+  \eps \twonorm{\bJ \bth} \right)^2  + \frac{\zeta}{2} \bth^\sT \bOmega \bth \,, \\
\bar{\Phi}_B &:= \min_{\bth: M(\bth) \leq M} \frac{1}{2n} \sum_{i=1}^n \left( |y_i - \bth^\sT\mbf_i|+  \eps \twonorm{\bJ \bth} \right)^2  + \frac{\zeta}{2} \bth^\sT \bOmega \bth \,.
\end{align*}
Note that the values $\bar{\Phi}_A$ and $\bar{\Phi}_B$ must be different. Now, using the minimax theorem, and since the above objectives are $\zeta$-strongly convex, we can write
\begin{align*}
\bar{\Phi}_A & = \sup_{\lambda > 0} -\lambda M  + \min_{\bth} \frac{1}{2n} \sum_{i=1}^n \left( |y_i - \bth^\sT\sigma(\bW\bx_i)|+  \eps \twonorm{\bJ \bth} \right)^2  + \frac{\zeta}{2} \bth^\sT \bOmega \bth + \lambda M(\bth) \,, \\
\bar{\Phi}_B &=  \sup_{\lambda > 0} -\lambda M  + \min_{\bth} \frac{1}{2n} \sum_{i=1}^n \left( |y_i - \bth^\sT\mbf_i|+  \eps \twonorm{\bJ \bth} \right)^2  + \frac{\zeta}{2} \bth^\sT \bOmega \bth + \lambda M(\bth) \,. 
\end{align*}
Now, from the result of Theorem~\ref{main_theorem:GEP} we know that for any $\lambda > 0$ the values inside the min converge to the same value. As a result, the quantities $\bar{\Phi}_A$ and $\bar{\Phi}_B$ should converge to the same value (according to \cite[Lemma 1]{abbasi2019universality}) which is a contradiction with the claim that $M(\hth^*)$ and $M(\hth^*_{\rm nl})$ converge to different values (i.e. $M_A$ and $M_B$, respectively). \rev{A similar argument can be applied
to show that $\twonorm{\bJ\hth^*} - \twonorm{\bJ\hth^*_{\rm nl}} \to 0$, in probability.}

\section{Proofs of Step 4: Analysis of the Gaussian noisy linear model via convex Gaussian minimax framework} \label{sec:CGMT}
By the Gaussian equivalence property, we henceforth focus on optimization~\eqref{eq:nl-opt} and provide a precise characterization of $\ARoo_{\rm nl}(\hth^*_{\rm nl})$. 

Before proceeding, we will discuss another representation of the model using a few change of variables. Recall from \eqref{noisy_model} that $\mbf := \mu_0 \mathbf{1} + \mu_1 \bW\x + \mu_2 \ub$. For our activation function $\sigma(v) = {v\ind(v\ge 0)} - 1/\sqrt{2\pi}$, we have $\mu_0 = 0$, $\mu_1 = 1/2$ and $\mu_2 = \sqrt{\frac{1}{4} - \frac{1}{2\pi}}$. It is clear that 
$\mbf \sim \normal(0,\bSigma)$ with $\bSigma: = \mu_1^2\bW\bW^\sT + \mu_2^2\Iden$. Also the data generative model~\eqref{eq:linearModel} can be written as:
\begin{align}\label{eq:LM2}
y_i = \<\mbf_i, \bth_0\> + w_i, \quad \text{ with } \quad w_i\sim\normal(0,\sigma^2)\,,
\end{align}
for proper choices of $\sigma^2$ and $\bth_0$. Indeed in both models~\eqref{eq:linearModel} and~\eqref{eq:LM2}, $(y_i, \mbf_i)\in \reals^{N+1}$  is a centered Gaussian vector. By matching
their covariances we obtain 
\begin{eqnarray} 
\begin{split}\label{equiv}
&\bSigma = \mu_1^2\bW\bW^\sT + \mu_2^2\Iden\,,\\
&\bth_0 = \mu_1 \bSigma^{-1} \bW\bbeta\,,\\
&\sigma^2 = \tau^2 +\twonorm{\bbeta}^2- \mu_1^2 \bbeta^\sT \bW^\sT \bSigma^{-1} \bW \bbeta\,.
\end{split}
\end{eqnarray}
We next rewrite the objective of optimization~\eqref{eq:nl-opt} using this change of variable and also plug in for $y_i$ from~\eqref{eq:LM2} to obtain
\begin{align}\label{eq:nl-opt11}
\cL_{\rm nl}(\bth) =  \frac{1}{2n} \sum_{i=1}^n \left( |\<\mbf_i,\bth_0 - \bth\>+ w_i |+  \eps \rev{\twonorm{\bJ \bth}}\right)^2+ \frac{\zeta}{2} \bth^\sT \bOmega \bth.
\end{align}
We will use a powerful extension of a classical Gaussian process inequality due to Gordon \cite{gordon1988milman} known as \emph{Convex Gaussian Minimax Theorem (CGMT)} \cite{thrampoulidis2015regularized} to derive a precise asymptotic characterization of  $\ARnl(\hth^*_{\rm nl})$. A similar proof technique has been used in~\cite{javanmard2020precise} to understand the effect of adversarial training on linear regression models.
Indeed, for the particular case of $\mu_1 =0, \mu_2 = 1$ (so $\bSigma = \Iden$) and $\bJ = \Iden$, the loss function~\eqref{eq:nl-opt1} reduces to that studied in~\cite{javanmard2020precise}. 

The CGMT analysis will output a deterministic scalar optimization which depends on $\zeta$. We need to calculate the solution of this optimization at $\zeta\to 0$. However, as we discuss in our derivation, the objective of this optimization is strongly convex (in minimizing variables) and concave (in maximizing variables). Therefore, by continuity of its solution in the coefficients of the objective, we directly calculate the solution by setting $\zeta = 0$ in the loss $\cL_{\rm nl}(\bth)$, bringing us to the following restatement of the loss (with a slight abuse of notation):
 \begin{align}\label{eq:nl-opt1}
\cL_{\rm nl}(\bth) =  \frac{1}{2n} \sum_{i=1}^n \left( |\<\mbf_i,\bth_0 - \bth\>+ w_i |+  \eps \rev{\twonorm{\bJ \bth}}\right)^2\,.
\end{align}
Consider a change of variable of the form $\mbf_i = \bSigma^{1/2}\bg_i$ with $\bg_i\sim\normal({\mathbf 0},\Iden)$ and $\vct{z}=\bSigma^{1/2}(\vct{\theta}-\vct{\theta}_0)$. Also define 
\[\ell(v;\vct{\theta}) := \frac{1}{2}\left(\abs{v}+\eps\rev{\twonorm{\bJ \bth}}\right)^2\,.\]
Then, optimization problem \eqref{eq:nl-opt1} can be equivalently written in the form
 \begin{align}\label{eq:nl-opt2}
 \min_{\z\in\R^N,\vct{v}\in\R^n} \frac{1}{n}\sum_{i=1}^n \ell\left(v_i;\vct{\theta}_0+\bSigma^{-1/2}\vct{z}\right)\quad\text{subject to}\quad \vct{v} = \w- \Gb \z.
 \end{align}
By writing the dual of this optimization problem (with dual variable $\frac{\bu}{\sqrt{d}}$) we get
\begin{align}
 \label{lin}
 &\min_{\z\in\R^N,\vct{v}\in\R^n}  \max_{\vct{u}\in\R^n}\; \frac{1}{\sqrt{d}}\Big\{\vct{u}^\sT\Gb \vct{z} 
 - \vct{u}^T \w +\vct{u}^T \vct{v}\Big\}+\frac{1}{n}\sum_{i=1}^n \ell\left(v_i;\vct{\theta}_0+\bSigma^{-1/2}\vct{z}\right)\nn\\
 &\quad\quad\quad\quad\quad\quad\quad\quad\quad= \min_{\z\in\R^N,\vct{v}\in\R^n}  \max_{\vct{u}\in\R^n}\; \frac{1}{\sqrt{d}}\Big\{\vct{u}^\sT\Gb \vct{z} - \vct{u}^\sT \w +\vct{u}^\sT \vct{v}\Big\}+\elbar(\vct{v};\vct{z}),
 \end{align}
 where
 \begin{align*}
\elbar(\vct{v};\vct{z})&:=\frac{1}{n}\sum_{i=1}^n \ell\left(v_i;\vct{\theta}_0+\bSigma^{-1/2}\vct{z}\right)\nn\\
&= \frac{1}{2n}\twonorm{\vct{v}}^2+ \frac{\eps}{n}\onenorm{\vct{v}}\twonorm{\rev{\bJ}(\vct{\theta}_0+\bSigma^{-1/2}\vct{z})}+\frac{\eps^2}{2}\twonorm{\rev{\bJ}(\vct{\theta}_0+\bSigma^{-1/2}\vct{z})}^2.
 \end{align*}
The minimax optimization~\eqref{lin} is in a form that we can apply the CGMT framework. Formally, the CGMT framework concerns problems of the form
 \begin{align}
 \label{generalPO}
 \min_{\vct{z}\in\mathcal{S}_{\vct{z}}}\text{ }\max_{\vct{u}\in\mathcal{S}_{\vct{u}}}\quad \vct{u}^T\mtx{\Gb}\vct{z}+\psi(\vct{z},\vct{u}),
 \end{align}
 with $\Gb$ a matrix with i.i.d standard normal entries and shows that this problem is asymptotically equivalent to the following problem:
 \begin{align}
 \label{generalAO}
 \min_{\vct{z}\in\mathcal{S}_{\vct{z}}}\text{ }\max_{\vct{u}\in\mathcal{S}_{\vct{u}}}\quad\twonorm{\vct{z}}\vct{g}^T\vct{u}+\twonorm{\vct{u}}\vct{h}^T\vct{z}+\psi(\vct{z},\vct{u}),
 \end{align}
 where $\vct{g}$ and $\vct{h}$ are independent Gaussian vectors with i.i.d.~$\mathcal{N}(0,1)$ entries and $\psi(\vct{z},\vct{u})$ is convex in $\vct{z}$ and concave in $\vct{u}$. Here, the sets $\mathcal{S}_{\vct{z}}$ and $\mathcal{S}_{\vct{u}}$ are compact sets. We refer to \cite[Theorem 3]{thrampoulidis2015regularized} for precise statements regarding the equivalence of \eqref{generalPO} and \eqref{generalAO}. 
 
 Following \cite{thrampoulidis2015regularized} we shall refer to problems of the form \eqref{generalPO} as the \emph{Primal Problem (PO)} and  refer to problems of the form \eqref{generalAO} as the \emph{Auxiliary Problem (AO)}.

As described above the CGMT framework requires the minimization/maximization to be over compact sets. This technical issue can be avoided by a common trick in this literature where one introduces ``artificial'' boundedness constraint which do not effect the optimal solution. Specifically, following~\cite{thrampoulidis2015regularized} we can add constraints of the form $\cS_{\z} = \{\z|\;\; \twonorm{\vct{z}}\le K_\alpha\}$ and $\cS_{\vct{u}} = \{\vct{u}:\,\twonorm{\vct{u}}\le K_\beta \}$ for sufficiently large constants $K_\alpha$ and $K_\beta$ without changing the optimal solution of \eqref{lin} in a precise asymptotic sense. We leave out a detailed argument here and refer to \cite[Appendix B]{javanmard2020precise} for similar arguments. This allows us to replace \eqref{lin} with
 \begin{align}
 \label{linmod}
\min_{\z\in\mathcal{S}_{\vct{z}},\vct{v}\in\R^n} \max_{\vct{u}\in\mathcal{S}_{\vct{u}}}\quad \frac{1}{\sqrt{d}}\Big\{\vct{u}^\sT\Gb \vct{z}  - \vct{u}^\sT \w +\vct{u}^\sT \vct{v}\Big\}+\elbar(\vct{v};\vct{z}).
 \end{align}
 Observe that the above loss function has a bilinear term $\bu^\sT \Gb \z$, with $G_{ij}\sim \normal(0,1)$ independently, plus a function of the form
 \[
 \psi(\z,\bv,\bu) := \frac{1}{\sqrt{d}}\Big\{- \vct{u}^\sT \w +\vct{u}^\sT \vct{v}\Big\}+\elbar(\vct{v};\vct{z})\,,
 \]
which is jointly convex in $(\z,\bv)$ and concave in $\bu$. 
 
 Therefore the corresponding AO problem takes the following form 
 \begin{align}
   \label{finalAO}
 \min_{\vct{z}\in\cS_{\z}, \vct{v}} \max_{\vct{u}\in\cS_{\vct{u}}}\; \frac{1}{\sqrt{d}}\Big\{\twonorm{\vct{z}} \vct{g}^T\vct{u} + \twonorm{\vct{u}} \vct{h}^T\vct{z}- \vct{u}^\sT \w +\vct{u}^\sT \vct{v}\Big\}+\elbar(\vct{v};\vct{z})\,.
 \end{align}
 This concludes the derivation of the AO problem.
 
 \subsection{Scalarization of the AO problem}
 We next simplify the AO problem by considering this problem in the asymptotic regime. 
 We start by maximizing over $\vct{u}$. Write $\vct{u}=\beta\widetilde{\vct{u}}$ with $\widetilde{\vct{u}}\in\mathbb{S}^{n-1}$ and $0\le \beta\le K_\beta$. Using this decomposition we have
 \begin{align*}
 &\max_{\vct{u}\in\cS_{\vct{u}}}\; \twonorm{\vct{z}} \vct{g}^T\vct{u} + \twonorm{\vct{u}} \vct{h}^T\vct{z} - \vct{u}^T \w +\vct{u}^T \vct{v}\\\
 &= \max_{0\le \beta\le K_\beta}\text{ }\max_{\widetilde{\vct{u}}\in\mathbb{S}^{n-1}}\; \beta\twonorm{\vct{z}} \vct{g}^T\widetilde{\vct{u}} + \beta \vct{h}^T\vct{z}- \beta\widetilde{\vct{u}}^T \w +\beta\widetilde{\vct{u}}^T \vct{v}\\
  &= \max_{0\le \beta\le K_\beta}\text{ }\max_{\widetilde{\vct{u}}\in\mathbb{S}^{n-1}}\; \beta\widetilde{\vct{u}}^T\left(\twonorm{\vct{z}} \vct{g} - \w + \vct{v} \right)+{\beta} \vct{h}^T\vct{z}\\
  &= \max_{0\le \beta\le K_\beta}\text{ } \beta \twonorm{\twonorm{\vct{z}} \vct{g} 
   - \bw + \bv} +{\beta} \vct{h}^T\vct{z}.
 \end{align*}

 After substituting the above into AO problem~\eqref{finalAO}, it reads
  \begin{align*}
 \min_{\vct{z}\in\cS_{\z}, \vct{v}} \max_{0\le \beta\le K_\beta}\text{ } \frac{\beta}{\sqrt{d}} \twonorm{\twonorm{\vct{z}} \vct{g} 
   - \bw + \bv} +\frac{\beta}{\sqrt{d}} \vct{h}^T\vct{z}+\elbar(\vct{v};\vct{z}).
\end{align*}
  We next aim to simplify the minimization over $\vct{v}$ and $\vct{z}$, but a hurdle is that they are coupled through the term $\ell(\vct{v};\vct{z})$. To address this technical issue, we consider the conjugate of $ \elbar(\vct{v};\vct{z})$ in with respect to $\vct{z}$. That is,
 \begin{align*}
 \elbar(\vct{v};\vct{z})=\sup_{\vct{q}} \vct{q}^T\vct{z}-\widetilde{\ell}(\vct{v};\vct{q}).
 \end{align*}
The AO problem then can be written as
  \begin{align}
  \label{simpAO}
 \min_{\vct{z}\in\cS_{\vct{z}},\vct{v}}\max_{0\le\beta\le K_\beta, \vct{q}}\;\; \frac{\beta}{\sqrt{d}} \twonorm{\twonorm{\vct{z}} \vct{g} 
  - \bw + \bv} +{\frac{\beta}{\sqrt{d}}} \vct{h}^T\vct{z}+\vct{q}^T\vct{z}-\widetilde{\ell}(\vct{v};\vct{q}).
\end{align}
In the above optimization the order of minimization and maximization can be flipped using the Sion's theorem and the fact that the original PO problem is convex/concave in the min/max parameters. The argument only uses the convexity of the loss $\ell(\vct{v};\vct{q})$ and we refer to  \cite[Appendix A.2.4]{thrampoulidis2015precise} for a detailed argument.  This brings us to
  \begin{align*}
 \max_{0\le\beta\le K_\beta, \vct{q}}\;\; \min_{\vct{z}\in\cS_{\vct{z}},\vct{v}}\;\; \frac{\beta}{\sqrt{d}} \twonorm{\twonorm{\vct{z}} \vct{g} 
  - \bw + \bv} +{\frac{\beta}{\sqrt{d}}} \vct{h}^T\vct{z}+\vct{q}^T\vct{z}-\widetilde{\ell}(\vct{v};\vct{q}).
\end{align*}
We optimize over the direction and norm of $\z$ ($\twonorm{\z} = \alpha$) to get
\begin{align}\label{eq:AO-op1}
 \max_{0\le\beta\le K_\beta, \vct{q}}\;\; \min_{0\le\alpha\le K_{\alpha},\vct{v}}\;\; \frac{\beta}{\sqrt{d}} \twonorm{\alpha \vct{g} 
  - \bw + \bv} -\alpha \twonorm{\frac{\beta}{\sqrt{d}} \vct{h} +\vct{q}}-\widetilde{\ell}(\vct{v};\vct{q}).
\end{align}

Note that $\widetilde{\ell}(\bv;\qb)$ is convex in $\qb$ and so the AO objective~\eqref{eq:AO-op1} is clearly jointly concave in $\qb$ and $\beta$. Also since $\elbar$ is jointly convex in $(\vct{v},\vct{z})$, then $-\elbar(\vct{v};\vct{z})$ is jointly concave in $(\vct{v},\vct{z})$. Also $\vct{q}^T\vct{z}$ is jointly concave in $(\vct{v},\vct{z})$. Therefore,  $\vct{q}^T\vct{z}-\elbar(\vct{v};\vct{z})$ is jointly concave in $(\vct{v},\vct{z})$ and based on the partial maximization rule we can conclude that $\widetilde{\ell}(\vct{v};\vct{q})$ should be concave in $\vct{v}$. The other terms are also trivially jointly convex in $\alpha, \vct{v}$ so that overall the objective is jointly convex in $\alpha,\vct{v}$. Therefore, by virtue of Sion's min-max Theorem \cite{sion1958general}) we can change the order of the mins and maxs as we please and rewrite the AO problem as
\begin{align*}
\min_{0\le\alpha\le K_{\alpha},\vct{v}}\;\; \max_{0\le\beta\le K_\beta, \vct{q}}\;\; \;\; \frac{\beta}{\sqrt{d}} \twonorm{\alpha \vct{g} 
  - \bw + \bv} -\alpha \twonorm{\frac{\beta}{\sqrt{d}} \vct{h} +\vct{q}}-\widetilde{\ell}(\vct{v};\vct{q}).
\end{align*}

To continue we shall calculate the conjugate function $\widetilde{\ell}$. This is the subject of the next lemma.
\begin{lemma}\label{conjlemma}The conjugate of 
\begin{align*}
\elbar(\vct{v};\vct{z}):= \frac{1}{2n}\sum_{i=1}^n \left(\abs{v_i}+\eps\twonorm{\rev{\bJ} (\bth_0+\bSigma^{-1/2}\z)}\right)^2,
 \end{align*}
 with respect to the variable $\vct{z}$ is given by
\begin{align*}
\widetilde{\ell}(\vct{v};\vct{q}):=\sup_{\vct{z}} \;\;\vct{q}^T\vct{z}-\elbar(\vct{v};\vct{z})= -\<\bSigma^{1/2}\bth_0,\qb\> + 
\frac{1}{2} \Big(\frac{1}{\eps} \twonorm{\bSigma^{1/2} \rev{\bJ^{-1}}\qb} - \frac{1}{n}\onenorm{\bv} \Big)_+^2 -\frac{1}{2n}\twonorm{\bv}^2\,.
\end{align*}
\end{lemma}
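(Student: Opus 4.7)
\textbf{Proof plan for Lemma~\ref{conjlemma}.} My approach is to expand the square in $\elbar(\bv;\z)$, perform an invertible change of variables that decouples the dependence of $\z$ from the one-norm $\onenorm{\bv}$, and then reduce the resulting vector supremum to a one-dimensional (scalar) optimization by Cauchy--Schwarz. First I would write
\[
\elbar(\bv;\z) = \frac{1}{2n}\twonorm{\bv}^2 + \frac{\eps}{n}\onenorm{\bv}\,\twonorm{\bJ^{1/2}(\bth_0+\bSigma^{-1/2}\z)} + \frac{\eps^2}{2}\twonorm{\bJ^{1/2}(\bth_0+\bSigma^{-1/2}\z)}^2.
\]
The only place $\z$ enters (beyond the linear term $\qb^\sT\z$ in the conjugate) is through the vector $\vct{u} := \bJ^{1/2}(\bth_0+\bSigma^{-1/2}\z)$. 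Since $\bJ^{1/2}$ and $\bSigma^{1/2}$ are symmetric and invertible, the map $\z\mapsto\vct{u}$ is a bijection of $\R^N$ with inverse $\z = \bSigma^{1/2}\bJ^{-1/2}\vct{u}-\bSigma^{1/2}\bth_0$, so
\[
\qb^\sT\z \;=\; \qb^\sT\bSigma^{1/2}\bJ^{-1/2}\vct{u} \;-\; \bigl\<\bSigma^{1/2}\bth_0,\,\qb\bigr\>.
\]

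Substituting, I would be left with
\[
\widetilde{\ell}(\bv;\qb) \;=\; -\bigl\<\bSigma^{1/2}\bth_0,\,\qb\bigr\> \;-\; \frac{1}{2n}\twonorm{\bv}^2 \;+\; \sup_{\vct{u}\in\R^N} \Bigl\{ \qb^\sT\bSigma^{1/2}\bJ^{-1/2}\vct{u} - \tfrac{\eps}{n}\onenorm{\bv}\,\twonorm{\vct{u}} - \tfrac{\eps^2}{2}\twonorm{\vct{u}}^2 \Bigr\},
\]
so the task reduces to evaluating the bracketed supremum in $\vct{u}$. Since the cost depends on $\vct{u}$ only through its norm and its inner product with a fixed vector, I can split $\vct{u}=t\,\hat{\vct{u}}$ with $t\ge 0$ and $\twonorm{\hat{\vct u}}=1$; Cauchy--Schwarz gives $\sup_{\hat{\vct u}}\qb^\sT\bSigma^{1/2}\bJ^{-1/2}\hat{\vct u}=\twonorm{\bSigma^{1/2}\bJ^{-1/2}\qb}$ (modulo the usual identification of the operator with its transpose, using symmetry of $\bJ^{1/2}$ and $\bSigma^{1/2}$), reducing the problem to
\[
\max_{t\ge 0}\;\Bigl\{ tR - \tfrac{\eps}{n}\onenorm{\bv}\,t - \tfrac{\eps^2}{2}t^2 \Bigr\}, \qquad R := \twonorm{\bSigma^{1/2}\bJ^{-1/2}\qb}.
\]

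This is a one-variable concave quadratic in $t$. Its unconstrained maximum is $t^\star=\eps^{-2}(R-\tfrac{\eps}{n}\onenorm{\bv})$. If $R\ge\tfrac{\eps}{n}\onenorm{\bv}$ then $t^\star\ge 0$ is feasible and substituting yields $\tfrac{1}{2\eps^2}(R-\tfrac{\eps}{n}\onenorm{\bv})^2 = \tfrac{1}{2}(\tfrac{R}{\eps}-\tfrac{1}{n}\onenorm{\bv})^2$; otherwise the constraint $t\ge 0$ is active, $t^\star=0$, and the maximum equals $0$. The two cases combine into $\tfrac{1}{2}\bigl(\tfrac{1}{\eps}\twonorm{\bSigma^{1/2}\bJ^{-1/2}\qb}-\tfrac{1}{n}\onenorm{\bv}\bigr)_+^2$. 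Adding back the affine pieces $-\<\bSigma^{1/2}\bth_0,\qb\>$ and $-\tfrac{1}{2n}\twonorm{\bv}^2$ gives exactly the stated expression for $\widetilde{\ell}(\bv;\qb)$.

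The derivation is essentially a chain of elementary reductions, so there is no deep obstacle; the only step that requires care is verifying the symmetry identity needed to identify $\sup_{\twonorm{\hat{\vct u}}=1}\qb^\sT(\bSigma^{1/2}\bJ^{-1/2})\hat{\vct u}$ with $\twonorm{\bSigma^{1/2}\bJ^{-1/2}\qb}$ (this uses that both $\bJ^{1/2}$ and $\bSigma^{1/2}$ are symmetric, so the operator $\bSigma^{1/2}\bJ^{-1/2}$ and its transpose $\bJ^{-1/2}\bSigma^{1/2}$ give the same Euclidean norm on their image of $\qb$ up to this convention). Everything else is bookkeeping: expansion of the square, a linear substitution, a Cauchy--Schwarz reduction to a one-dimensional variable, and the max of a concave quadratic on $[0,\infty)$.
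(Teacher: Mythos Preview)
Your argument is correct and in fact cleaner than the paper's. The paper proceeds modularly: it first records (citing \cite[Lemma 6.1]{javanmard2020precise}) the conjugate of the simpler map $\ell_0(\bv;\z)=\tfrac{1}{2n}\sum_i(|v_i|+\eps\twonorm{\z})^2$, namely $\ell_0^*(\bv;\qb)=\tfrac12(\twonorm{\qb}/\eps-\onenorm{\bv}/n)_+^2-\tfrac{1}{2n}\twonorm{\bv}^2$, and then applies the affine-composition rule $f(\z)=g(\bA\z+\bx_0)\Rightarrow f^*(\qb)=-\langle\bA^{-1}\bx_0,\qb\rangle+g^*(\bA^{-\sT}\qb)$ with $\bA=\bJ^{1/2}\bSigma^{-1/2}$. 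Your direct route---change of variables to $\vct u$, Cauchy--Schwarz over the direction, one-dimensional concave quadratic in $t$---is exactly the computation hidden inside those two black boxes, so the two proofs are really the same optimization unwrapped at different levels of abstraction. Your version is self-contained; the paper's is shorter but leans on an external citation.

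One clarification on the point you flagged. The Cauchy--Schwarz step gives $\sup_{\twonorm{\hat{\vct u}}=1}\langle\qb,\bSigma^{1/2}\bJ^{-1/2}\hat{\vct u}\rangle=\twonorm{(\bSigma^{1/2}\bJ^{-1/2})^\sT\qb}=\twonorm{\bJ^{-1/2}\bSigma^{1/2}\qb}$, \emph{not} $\twonorm{\bSigma^{1/2}\bJ^{-1/2}\qb}$, and these are genuinely different unless $\bJ$ and $\bSigma$ commute. This is not a convention issue to be finessed: the correct expression is $\twonorm{\bJ^{-1/2}\bSigma^{1/2}\qb}$, which is precisely the form that appears when the lemma is invoked downstream (cf.\ \eqref{eq:AO-op2}) and is also what the paper's own proof yields via $\bA^{-\sT}=\bJ^{-1/2}\bSigma^{1/2}$. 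The ordering in the lemma statement is a typo; your derivation gives the right answer.
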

We refer to Section~\ref{sec:aux-CGMT} for the proof of this lemma.
Plugging in for $\widetilde{\ell}$ from Lemma~\ref{conjlemma} in the AO problem we arrive at 
\begin{align}\label{eq:AO-op2}
\min_{0\le \alpha< K_\alpha,\vct{v}}\;\;  \max_{0\le\beta\le K_\beta, \vct{q}}\;\;&\frac{\beta}{\sqrt{d}} \twonorm{\alpha \vct{g} 
  - \bw + \bv} -\alpha \twonorm{\frac{\beta}{\sqrt{d}} \vct{h} +\vct{q}} \nn\\
&+\<\bSigma^{1/2}\bth_0,\qb\> - 
\frac{1}{2} \Big(\frac{1}{\eps} \twonorm{ \rev{\bJ^{-1}} \bSigma^{1/2}\qb} - \frac{\onenorm{\bv}}{n} \Big)_+^2 +\frac{1}{2n}\twonorm{\bv}^2\,.
\end{align}

\bigskip

{\bf Optimization over $\qb$:}
To simplify the AO problem further, we next focus on maximization over $\qb$. Consider the change of variable 
$\tqb : =\rev{\bJ^{-1}}\bSigma^{1/2}\qb$ and keep only the terms in the AO objective which involve $\qb$. 
\begin{align*}
&\max_{\qb}\;\; -\alpha \twonorm{\frac{\beta}{\sqrt{d}} \vct{h} +\vct{q}}
+\<\bSigma^{1/2}\bth_0,\qb\> - 
\frac{1}{2} \Big(\frac{1}{\eps} \twonorm{ \rev{\bJ^{-1}} \bSigma^{1/2}\qb} - \frac{\onenorm{\bv}}{n} \Big)_+^2\\
&=\max_{\tqb}\;\; -\alpha \twonorm{\frac{\beta}{\sqrt{d}} \vct{h} +\bSigma^{-1/2}\rev{\bJ} \tqb}
+\<\bth_0,\rev{\bJ}\tqb\> - 
\frac{1}{2} \Big(\frac{1}{\eps} \twonorm{\tqb} - \frac{\onenorm{\bv}}{n} \Big)_+^2\\
&=\max_{\tqb, 0\le \tau_q}\;\;  -\frac{\alpha}{2\tau_q} \twonorm{\frac{\beta}{\sqrt{d}} \vct{h} +\bSigma^{-1/2}\rev{\bJ} \tqb }^2 - \frac{\alpha \tau_q}{2}
+\<\bth_0,\rev{\bJ}\tqb\>  - 
\frac{1}{2} \Big(\frac{1}{\eps} \twonorm{\tqb} - \frac{\onenorm{\bv}}{n} \Big)_+^2\\
&=\max_{\tqb, 0\le \tau_q}\;\; 
-\frac{\alpha}{2\tau_q} \twonorm{\frac{\beta}{\sqrt{d}} \vct{h} +\bSigma^{-1/2}\rev{\bJ} \tqb - \frac{\tau_q}{\alpha} \bSigma^{1/2}\bth_0 }^2 + \frac{\tau_q}{2\alpha}\twonorm{\bSigma^{1/2}\bth_0}^2
-\<\frac{\beta}{\sqrt{d}} \bh,\bSigma^{1/2}\bth_0\> \\
&\quad\quad\quad\quad - \frac{\alpha \tau_q}{2}-\frac{1}{2} \Big(\frac{1}{\eps} \twonorm{\tqb} - \frac{\onenorm{\bv}}{n} \Big)_+^2
\end{align*}
We next maximize over $\tqb$ by introducing  a new dummy variable $\gamma$ for $\twonorm{\tqb}$. This brings us to the following problem
\begin{align}\label{eq:trust1}
\min_{\tqb, 0\le \gamma} \;\; &\frac{\alpha}{2\tau_q} \twonorm{\frac{\beta}{\sqrt{d}} \vct{h} +\bSigma^{-1/2}\rev{\bJ} \tqb - \frac{\tau_q}{\alpha} \bSigma^{1/2}\bth_0 }^2 +\frac{1}{2} \Big(\frac{1}{\eps} \gamma - \frac{\onenorm{\bv}}{n} \Big)_+^2 \\
\text{subject to } & \twonorm{\tqb} = \gamma\,\nn.
\end{align}
%

We continue by the following lemma and refer to Section~\ref{sec:aux-CGMT} for its proof.

\begin{lemma}\label{lem:trust}
Let $H\in \reals^{d\times d}$ be invertible and $\br\in \reals^d, c_0,c_1\in \reals$. Consider the following optimization problem:
\begin{align}\label{eq:opt-quad}
\min_{\tqb,0\le \gamma}\;\; & \frac{c_0}{2} \twonorm{\bH\tqb - \br}^2  + \frac{1}{2}\Big(\frac{1}{\eps}\gamma-c_1\Big)_+^2\\
\text{s.t.}\;\; & \twonorm{\tqb}= \gamma\nn
\end{align}
Define
\begin{align}\label{eq:Q}
Q(\bH,\br,\gamma) = \sup_{\lambda\ge0}\;\;\; \frac{\lambda}{2} \left(
\br^\sT (\bH\bH^\sT + \lambda \Iden)^{-1} \br - \gamma^2
\right).
\end{align}
Then, the optimal objective value of \eqref{eq:opt-quad} is given by 
\[
\min_{\gamma\ge 0}\;\; c_0Q(\bH,\br,\gamma) + \frac{1}{2}\Big(\frac{1}{\eps}\gamma-c_1\Big)_+^2\,.
\]  
\end{lemma}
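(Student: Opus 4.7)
\medskip
\noindent\textbf{Proof plan for Lemma~\ref{lem:trust}.}

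The plan is to first eliminate the variable $\gamma$ by observing that the penalty $\tfrac{1}{2}(\gamma/\eps - c_1)_+^2$ is non-decreasing in $\gamma\ge 0$. Because of this monotonicity, the equality constraint $\twonorm{\tqb}=\gamma$ can be safely relaxed to the inequality $\twonorm{\tqb}\le \gamma$ without changing the joint minimum: indeed, writing the original problem as $\min_{\tqb} \tfrac{c_0}{2}\twonorm{\bH\tqb-\br}^2 + \min_{\gamma\ge \twonorm{\tqb}} \tfrac{1}{2}(\gamma/\eps - c_1)_+^2$, one checks that the inner optimal choice is $\gamma = \max(\twonorm{\tqb},\eps c_1)$, which yields exactly $(\twonorm{\tqb}/\eps - c_1)_+^2$. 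So the joint problem with $\twonorm{\tqb}\le \gamma$ agrees with the joint problem with $\twonorm{\tqb}=\gamma$. This is the critical convexification step and, in my view, the main conceptual step of the proof; everything after this is routine.

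Once this is in place, I would fix $\gamma\ge 0$ and analyze the inner convex trust-region subproblem
\[
V(\gamma) := \min_{\twonorm{\tqb}\le \gamma} \tfrac{c_0}{2}\twonorm{\bH\tqb-\br}^2.
\]
Since $\tqb = 0$ is strictly feasible for any $\gamma>0$, Slater's condition holds and strong duality gives
\[
V(\gamma) = \sup_{\lambda\ge 0}\min_{\tqb}\left\{\tfrac{c_0}{2}\twonorm{\bH\tqb-\br}^2 + \tfrac{c_0\lambda}{2}(\twonorm{\tqb}^2 - \gamma^2)\right\}.
\]
The inner unconstrained quadratic in $\tqb$ is minimized at $\tqb^\star = (\bH^\sT\bH + \lambda\Iden)^{-1}\bH^\sT\br$, and I would compute the dual value using the push-through identity $\bH(\bH^\sT\bH+\lambda\Iden)^{-1}\bH^\sT = \Iden - \lambda(\bH\bH^\sT+\lambda\Iden)^{-1}$. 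A short calculation shows $\twonorm{\bH\tqb^\star-\br}^2 = \lambda^2\,\br^\sT(\bH\bH^\sT+\lambda\Iden)^{-2}\br$ and $\twonorm{\tqb^\star}^2 = \br^\sT(\bH\bH^\sT+\lambda\Iden)^{-1}\br - \lambda\,\br^\sT(\bH\bH^\sT+\lambda\Iden)^{-2}\br$, so the two squared terms in the Lagrangian combine and cancel the $(\bH\bH^\sT+\lambda\Iden)^{-2}$ piece, leaving
\[
V(\gamma) = \sup_{\lambda\ge 0}\tfrac{c_0\lambda}{2}\bigl(\br^\sT(\bH\bH^\sT+\lambda\Iden)^{-1}\br - \gamma^2\bigr) = c_0\,Q(\bH,\br,\gamma).
\]

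Finally, I would combine the two steps: since the overall optimum equals $\min_{\gamma\ge 0}\bigl\{V(\gamma) + \tfrac{1}{2}(\gamma/\eps - c_1)_+^2\bigr\}$, substituting $V(\gamma) = c_0 Q(\bH,\br,\gamma)$ yields the claimed formula. The only subtlety worth flagging is the equality-to-inequality relaxation in the first step (which relies on the monotonicity of the $(\cdot)_+^2$ penalty); without that observation, one would be dealing with a non-convex trust-region problem with equality, for which the dual variable need not be sign-restricted, and the clean sup over $\lambda\ge 0$ in the definition of $Q$ would no longer be immediate.
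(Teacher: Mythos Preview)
Your proposal is correct and follows essentially the same approach as the paper: relax the equality constraint $\twonorm{\tqb}=\gamma$ to $\twonorm{\tqb}\le\gamma$ using the monotonicity of $(\gamma/\eps-c_1)_+^2$, then solve the resulting convex trust-region subproblem via Lagrangian duality (Slater), obtaining $c_0 Q(\bH,\br,\gamma)$, and finally minimize over $\gamma$. The only cosmetic difference is that the paper phrases the relaxation step as a short contradiction argument on the optimizers, while you eliminate $\gamma$ directly; your write-up also spells out the push-through cancellation a bit more explicitly than the paper does.
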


Using Lemma~\ref{lem:trust}, the optimal value of \eqref{eq:trust1} is given by 
\[\min_{\gamma\ge0}\;\; \frac{\alpha}{\tau_q}Q(\bSigma^{-1/2}\rev{\bJ}, \frac{\tau_q}{\alpha}\bSigma^{1/2}\bth_0 - \frac{\beta}{\sqrt{d}}\bh,\gamma) + \frac12\left(\frac{\gamma}{\eps} - \frac{\onenorm{\bv}}{n}\right)_+^2\,.
\]
Therefore, by substituting in \eqref{eq:AO-op1} the AO optimization can be simplified as
\begin{align}\label{eq:AO-op3}
\min_{0\le \alpha< K_\alpha,\vct{v}}\;\;  \max_{0\le\beta\le K_\beta, 0\le\gamma,\tau_q}\;\;&\frac{\beta}{\sqrt{d}} \twonorm{\alpha \vct{g} 
  - \bw + \bv} - \frac{\alpha}{\tau_q}Q(\bSigma^{-1/2}\rev{\bJ}, \frac{\tau_q}{\alpha}\bSigma^{1/2}\bth_0 - \frac{\beta}{\sqrt{d}}\bh,\gamma) \nn\\
& +  \frac{\tau_q}{2\alpha}\twonorm{\bSigma^{1/2}\bth_0}^2
-\<\frac{\beta}{\sqrt{d}} \bh,\bSigma^{1/2}\bth_0\> 
 - \frac{\alpha \tau_q}{2}
-\frac{1}{2} \Big(\frac{\gamma}{\eps}- \frac{\onenorm{\bv}}{n} \Big)_+^2 +\frac{1}{2n}\twonorm{\bv}^2\,.
\end{align}
Before proceeding further with our simplification of the AO problem, let us state the following lemma which is used to discuss the convexity-concavity of the objective and justification of changing the order of maximization and minimization. We refer to Section~\ref{sec:aux-CGMT} for its proof.
\begin{lemma}\label{lem:conv-conc}
The function
\[
 f(\gamma,\beta,\tau_q):=\frac{1}{\tau_q}Q(\bSigma^{-1/2}\rev{\bJ}, \frac{\tau_q}{\alpha}\bSigma^{1/2}\bth_0 - \frac{\beta}{\sqrt{d}}\bh,\gamma) , 
\]
is jointly convex in the variables $(\gamma,\frac{\beta}{\sqrt{d}},\tau_q)$.
\end{lemma}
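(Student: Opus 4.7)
The plan is to establish convexity by re-expressing $Q$ as the value of a convex trust-region subproblem and then applying two textbook operations that preserve joint convexity: partial minimization over a convex set, followed by a perspective transform.

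My first step would be to show, using Lagrangian strong duality for the underlying convex quadratic program, that the dual formula~\eqref{eq:Q} admits the primal representation
\[
Q(\bH,\br,\gamma) \;=\; \min_{\tqb:\,\twonorm{\tqb}\le \gamma}\tfrac12 \twonorm{\br-\bH\tqb}^2 .
\]
Indeed, dualizing the quadratic inequality constraint $\twonorm{\tqb}^2\le\gamma^2$ (Slater holds trivially at $\tqb=\mathbf{0}$) and eliminating $\tqb$ via $\tqb=(\bH^\sT\bH+\lambda\Iden)^{-1}\bH^\sT\br$, followed by the Woodbury identity, reproduces the supremum in~\eqref{eq:Q}. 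A small sanity check at the boundary cases $\gamma=0$ (where $Q=\tfrac12\twonorm{\br}^2$) and $\br=\mathbf{0}$ (where $Q=0$) confirms the identification.

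With the primal form in hand, the second step is immediate: $\tfrac12\twonorm{\br-\bH\tqb}^2$ is jointly convex in $(\tqb,\br)$, and the set $\cC:=\{(\tqb,\gamma):\twonorm{\tqb}\le\gamma\}$ is a convex cone, so $\tfrac12\twonorm{\br-\bH\tqb}^2+I_{\cC}(\tqb,\gamma)$ is jointly convex in $(\tqb,\br,\gamma)$. Partial minimization over $\tqb$ therefore yields a jointly convex function $(\br,\gamma)\mapsto Q(\bH,\br,\gamma)$. The final step exploits homogeneity under the change of variables $\tqb=\tau_q \bv$. Writing $\br_1:=\tfrac{1}{\alpha}\bSigma^{1/2}\bth_0$, $\br_2:=-\bh$, and $s:=\beta/\sqrt{d}$, for any $\tau_q>0$ the substitution gives
\[
Q\!\big(\bH,\,\tau_q\br_1+s\br_2,\,\gamma\big)\;=\;\tau_q^2\, Q\!\big(\bH,\,\br_1+\tfrac{s}{\tau_q}\br_2,\,\tfrac{\gamma}{\tau_q}\big),
\]
so $f(\gamma,s,\tau_q)=\tau_q\,\tilde Q(s/\tau_q,\gamma/\tau_q)$, where $\tilde Q(x,y):=Q(\bH,\br_1+x\br_2,y)$ is jointly convex in $(x,y)$ by Step~2 composed with an affine map. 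This is precisely the perspective transform of $\tilde Q$, which is jointly convex on $\{\tau_q>0\}$, yielding the claimed joint convexity in $(\gamma,\beta/\sqrt{d},\tau_q)$.

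The main obstacle I expect is cleanly pinning down the primal representation of $Q$ in Step~1. The dual formula in~\eqref{eq:Q} is posed as a supremum over $\lambda\ge 0$; matching it to the inequality-constrained trust-region value (rather than the equality-constrained value, which differs whenever the unconstrained least-squares solution lies strictly inside the ball) requires verifying strong duality and carefully handling the regime where the supremum is attained at $\lambda=0$. Once that identification is in place, Steps~2 and~3 are routine convex analysis, and the assembly into joint convexity of $f$ is essentially automatic.
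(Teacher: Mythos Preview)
Your proposal is correct and takes a somewhat different route from the paper. Both proofs conclude via the perspective transform (your Step~3 coincides with the paper's final step, including the homogeneity identity $Q(\bH,c\br,c\gamma)=c^2Q(\bH,\br,\gamma)$), but for the base convexity in $(\gamma,\beta)$ the paper works directly with the dual supremum formula~\eqref{eq:Q}: after the change of variable $\tilde\lambda=\lambda\gamma$, it computes the $2\times2$ Hessian of the inner function in $(\beta/\sqrt d,\gamma)$ and verifies positive semidefiniteness via Cauchy--Schwarz. Your approach instead passes through the primal trust-region representation $Q(\bH,\br,\gamma)=\min_{\twonorm{\tqb}\le\gamma}\tfrac12\twonorm{\br-\bH\tqb}^2$---which, incidentally, the paper establishes separately in the proof of Lemma~\ref{lem:trust}, so the obstacle you anticipated is already resolved in the text---and then reads off joint convexity from partial minimization of a jointly convex quadratic over the second-order cone. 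Your route is cleaner and coordinate-free (it would extend verbatim to any convex norm ball), while the paper's Hessian computation is self-contained and does not rely on the primal identification or the strong-duality check.
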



As a result this lemma, the objective~\eqref{eq:AO-op3} is jointly concave in $(\gamma,\beta,\tau_q)$. Also recall that since $\widetilde{\ell}$ was concave the objective~\eqref{eq:AO-op1} was jointly convex in $(\alpha,\bv)$. Since maximization (with respect to direction of $\tqb$) preserves convexity (pointwise maximum of convex functions is convex), therefore the objective~\eqref{eq:AO-op3} is jointly convex in $(\alpha,\bv)$.

Therefore, by another use of Sion's min-max theorem, we can change the order of min and max in~\eqref{eq:AO-op3} and write it equivalently as 
 
\begin{align}\label{eq:AO-op4}
 \max_{0\le\beta\le K_\beta, 0\le\gamma,\tau_q}\;\; \min_{0\le \alpha< K_\alpha,\vct{v}}\;\;  &\frac{\beta}{\sqrt{d}} \twonorm{\alpha \vct{g} 
  - \bw + \bv} - \frac{\alpha}{\tau_q}Q(\bSigma^{-1/2}\rev{\bJ}, \frac{\tau_q}{\alpha}\bSigma^{1/2}\bth_0 - \frac{\beta}{\sqrt{d}}\bh,\gamma) \nn\\
& +  \frac{\tau_q}{2\alpha}\twonorm{\bSigma^{1/2}\bth_0}^2
-\frac{1}{\sqrt{d}}\<\beta \bh,\bSigma^{1/2}\bth_0\> 
 - \frac{\alpha \tau_q}{2}
-\frac{1}{2} \Big(\frac{\gamma}{\eps}- \frac{\onenorm{\bv}}{n} \Big)_+^2 +\frac{1}{2n}\twonorm{\bv}^2\,.
\end{align} 

We next focus on minimization over $\bv$. Keeping only the terms in~\eqref{eq:AO-op3} that depend on $\bv$ we have
\begin{align}
\label{eq10}
&\min_{\vct{v}}\;\;  {\frac{\beta}{\sqrt{d}}} \twonorm{\alpha\vct{g}-\w+\vct{v}}+\frac{1}{2n}\twonorm{\vct{v}}^2 -\frac{1}{2}\left(\frac{\gamma}{\eps}-\frac{\onenorm{\vct{v}}}{n}\right)_+^2\nn\\
&= \min_{\tau_g\ge0,\vct{v}}\;\;  \frac{\beta}{2n\tau_g } \twonorm{\alpha\vct{g}-\w+\vct{v}}^2 +\frac{\beta\tau_g n}{2d} +\frac{1}{2n}\twonorm{\vct{v}}^2 -\frac{1}{2n^2}\left(\frac{n\gamma}{\eps}-\onenorm{\vct{v}}\right)_+^2.
\end{align}
Recall the definition of the Moreau envelope function of a function $f$ at a point $\vct{x}$ with parameter $\mu$,
\[
e_f(\vct{x};\rho)\equiv \min_{\bv} \frac{1}{2\rho} \twonorm{\vct{x}-\bv}^2+ f(\bv)\,.
\]
and define
\begin{align}\label{eq:f}
f(\bv;\gamma):= \frac{1}{2} \twonorm{\bv}^2 -\frac{1}{2n} (\frac{n}{\eps}\gamma-\onenorm{\bv})_+^2\,.
\end{align}
Note that $f(\bv;\gamma)$ is convex in $\bv$ (since $-\widetilde{\ell}(\bv;\vct{q})$ was convex in $\bv$). Thus, \eqref{eq10} can be rewritten in the more compact form
\begin{align}\label{eq:dum0}
\min_{\tau_g\ge0}\;\;  &\frac{1}{n} e_f\left(\w-\alpha\vct{g};\frac{\tau_g}{\beta}\right) + \frac{\beta\tau_g}{2}\frac{n}{d}.
  \end{align}
We next invoke the result of~\cite[Lemma 6.3]{javanmard2020precise} which gives a characterization of the Moreau envelope function $e_f(\bx;\mu)$. 

\begin{lemma}\label{meenv} (\cite[Lemma 6.3]{javanmard2020precise})
Consider the function $f$ given by~\eqref{eq:f}. Then,
\begin{align*}
e_f(\vct{x};\rho) &= \frac{1}{2(\rho+1)}\twonorm{\bx}^2 +
\min_{\nu\ge 0} G_n(\bx;\rho,\gamma,\nu),
\end{align*} 
where
\begin{align}
&G_n(\bx;\rho,\gamma,\nu) = \frac{1}{2\rho(\rho+1)}\twonorm{\vct{x}-\ST(\bx;\nu)}^2-\frac{1}{2n}\left(\frac{n}{\eps}\gamma-\frac{1}{1+\rho}\onenorm{\ST(\bx;\nu)}\right)_+^2,\label{eq:Gchar}
\end{align}
\rev{and $\ST(\bx;\nu)$ is the soft-thresholding function defined as
\begin{align*}
[\ST(\bx;\nu)]_i = \begin{cases}
x_i-\lambda, &\text{if }\quad x_i\ge \lambda,\\
0 &\text{if }\quad |x_i|\le \lambda,\\ 
x_i+\lambda &\text{if }\quad x_i\le -\lambda,
\end{cases}
\end{align*}
for each coordinate $i$.} Furthermore, $e_f(\bx;\tau)$ is strictly convex in $\bx$.
\end{lemma}

Using this characterization in~\eqref{eq10} we get
 \begin{align}
\label{eq11}
&\min_{\vct{v}}\;\;  {\frac{\beta}{\sqrt{d}}} \twonorm{\alpha\vct{g}-\w+\vct{v}}+\frac{1}{2n}\twonorm{\vct{v}}^2 -\frac{1}{2}\left(\frac{\gamma}{\eps}-\frac{\onenorm{\vct{v}}}{n}\right)_+^2\nn\\
&= \min_{\tau_g\ge0}\;\;  \frac{1}{n} e_f\left(\w-\alpha\vct{g};\frac{\tau_g}{\beta}\right) + \frac{\beta\tau_g}{2}\frac{n}{d}\nn\\
&=\min_{\tau_g\ge0}\;\;  \frac{\beta\tau_g}{2}\frac{n}{d} + \frac{1}{n} \frac{\beta}{2(\tau_g+\beta)}\twonorm{\w-\alpha\bg}^2 +
\frac{1}{n}\min_{\nu\ge 0} G_n(\w-\alpha\bg;\frac{\tau_g}{\beta},\gamma,\nu)  \,.
\end{align}
Next by plugging \eqref{eq11} in \eqref{eq:AO-op4} we arrive at the following AO formulation:
\begin{align}\label{eq:AO-op5}
 \max_{0\le\beta\le K_\beta, 0\le\gamma,\tau_q}\;\; \min_{0\le \alpha< K_\alpha, 0\le \tau_g,\nu}\;\;  & - \frac{\alpha}{\tau_q}Q(\bSigma^{-1/2}\rev{\bJ}, \frac{\tau_q}{\alpha}\bSigma^{1/2}\bth_0 - \frac{\beta}{\sqrt{d}}\bh,\gamma)  +  \frac{\tau_q}{2\alpha}\twonorm{\bSigma^{1/2}\bth_0}^2
 - \frac{\alpha \tau_q}{2}\nn\\
& -\frac{1}{\sqrt{d}}\<\beta \bh,\bSigma^{1/2}\bth_0\> +\frac{\beta\tau_g}{2}\frac{n}{d} + \frac{\beta}{2(\tau_g+\beta)} \frac{1}{n}\twonorm{\w-\alpha\bg}^2 \\
& +
\frac{1}{n} G_n(\w-\alpha\bg;\frac{\tau_g}{\beta},\gamma,\nu) 
\,.
\end{align} 
Recall that the problem \eqref{eq10} was jointly convex in $(\vct{v},\alpha,\tau_g)$ and \eqref{eq:AO-op4} jointly concave in $(\beta,\gamma,\tau_q)$. Since partial minimization preserves convexity we therefore conclude that the objective~\eqref{eq:AO-op5} is jointly convex in $(\alpha,\tau_g)$ and jointly concave in $(\beta,\gamma,\tau_q)$ (after the minimization over $\nu\ge 0$ has been carried out). 
\subsection{Convergence analysis of the AO problem}
\subsubsection{Pointwise convergence}\label{sec:pointwise}
We next derive the pointwise limit of the AO objective in the asymptotic
regime that $N/d\to \psi_1$ and $n/d\to \psi_2$, as $n\to\infty$.

Recalling the definition of $\bth_0$ from~\eqref{equiv} we have
\[
\twonorm{\bSigma^{1/2}\bth_0}^2 = \mu_1^2\twonorm{\bSigma^{-1/2}\bW\bbeta} ^2= 
\frac{\mu_1^2\twonorm{\bbeta}^2}{d}\tr(\bW^\sT\bSigma \bW)\,,
\]
where we used the fact that the distribution of $\W$ is rotationally invariant. By our assumption $\twonorm{\bbeta}\to 1$. Let $0\le s_1, \dots, s_N$ denote the eigenvalues of $\bW\bW^\sT$. By invoking the definition of $\bSigma$ from~\eqref{equiv} we have
\begin{align}
\twonorm{\bSigma^{1/2}\bth_0}^2  &= \frac{\psi_1\rev{\mu_1^2}\twonorm{\bbeta}^2}{N} \rev{\tr}(\bW^\sT\bSigma^{-1} \bW)\nn\\
& = \frac{\psi_1\mu_1^2\twonorm{\bbeta}^2}{N} \sum_{i=1}^d \frac{s_i}{\mu_1^2 s_i+\mu_2^2}\nn\\
 &=\frac{\psi_1 \mu_1^2\twonorm{\bbeta}^2}{N} \sum_{i=1}^d \frac{1}{\mu_1^2} \left(1 - \frac{\mu_2^2/\mu_1^2}{s_i+\mu_2^2/\mu_1^2}\right) \to {\psi_1} \Big(1 +\frac{\mu_2^2}{\mu_1^2} S\Big(-\frac{\mu_2^2}{\mu_1^2};\psi_1\Big)\Big)\,, \label{eq:Sigma-th-norm}
\end{align}
in probability where $S(z) = \int \frac{\rho(s)}{z- s}\de s$ is the Stieltjes transform of the spectral density $\rho$ of the matrix $\W\W^\sT$. The formula for $S(z)$ is given in Proposition~\ref{propo:stiel} and since it is a function of $\psi_1$, we make this dependence clear in the notation and write $S(z;\psi_1)$ henceforth. 

Since for our activation $\mu_1 = \frac{1}{2}$ and $\mu_2 = \sqrt{\frac{1}{4} - \frac{1}{2\pi}}$, this simplifies to 
\begin{align}\label{eq:Sigma-th0}
\twonorm{\bSigma^{1/2}\bth_0}^2 \to \psi_1\left(1+\Big(1-\frac{2}{\pi}\Big) S\Big(\frac{2}{\pi}-1;\psi_1\Big)\right)\,, 
\end{align}
in probability. This together with \eqref{equiv} implies that
\begin{align}\label{sigma-formula}
\sigma^2= \tau^2 +\twonorm{\bbeta}^2- \twonorm{\bSigma^{-1/2}\bth_0}^2\to
\tau^2+1 -  \psi_1\left(1+\Big(1-\frac{2}{\pi}\Big) S\Big(\frac{2}{\pi}-1;\psi_1\Big)\right)\,.
\end{align}

We next note that since $\twonorm{\bSigma^{1/2}\bth_0} = O(1)$, for $\bh\sim \normal(0,\Iden)$ we have 
\begin{align}\label{eq:cross}
\frac{1}{\sqrt{n}}\<\bh,\bSigma^{1/2}\bth_0 \> \to 0, 
\end{align}
in probability, as $n\to\infty$. In addition, since $\bg\sim\normal(0,\Iden_n)$ and $\bw\sim \normal(0,\sigma^2\Iden_n)$, we have
\begin{align}\label{eq:g-w}
\frac{1}{n} \twonorm{\alpha\bg - \bw}^2 \to \alpha^2+ \sigma^2\,,
\end{align}
in probability. 

We next proceed by calculating the limit of the $Q$ function. 
By definition,
\begin{align}
&Q(\bSigma^{-1/2}\rev{\bJ}, \frac{\tau_q}{\alpha}\bSigma^{1/2}\bth_0 - \frac{\beta}{\sqrt{d}}\bh,\gamma) \nn\\
&= \sup_{\lambda\ge0}\;\;\; \frac{\lambda}{2} \left[
(\frac{\tau_q}{\alpha}\bSigma^{1/2}\bth_0 - \frac{\beta}{\sqrt{d}}\bh)^\sT (\bSigma^{-1/2}\rev{\bJ^2}\bSigma^{-1/2} + \lambda \Iden)^{-1} (\frac{\tau_q}{\alpha}\bSigma^{1/2}\bth_0 - \frac{\beta}{\sqrt{d}}\bh) - \gamma^2
\right]\nn\\
&= \sup_{\lambda\ge0}\;\;\; \frac{\lambda}{2} \left[
(\frac{\tau_q}{\alpha}\bSigma\bth_0 - \frac{\beta}{\sqrt{d}}\bSigma^{1/2}\bh)^\sT (\rev{\bJ^2} + \lambda \bSigma)^{-1} (\frac{\tau_q}{\alpha}\bSigma\bth_0 - \frac{\beta}{\sqrt{d}}\bSigma^{1/2}\bh) - \gamma^2
\right].\label{eq:conv1}
\end{align}
We compute the limit of the right hand side for any fixed value of $\lambda\ge 0$. First note that since $\opnorm{(\bSigma^{-1/2}\rev{\bJ^{2}}\bSigma^{-1/2} + \lambda \Iden)^{-1}}=O_p(1)$ and by invoking \eqref{eq:cross}, the cross terms vanish in the limit and we have
\begin{align}
&\lim_{n\to\infty} (\frac{\tau_q}{\alpha}\bSigma^{1/2}\bth_0 - \frac{\beta}{\sqrt{d}}\bh)^\sT (\bSigma^{-1/2}\rev{\bJ^{2}}\bSigma^{-1/2} + \lambda \Iden)^{-1} (\frac{\tau_q}{\alpha}\bSigma^{1/2}\bth_0 - \frac{\beta}{\sqrt{d}}\bh) \nn\\
&=\lim_{n\to\infty} \frac{\tau_q^2}{\alpha^2}\bth_0\bSigma^{1/2}  (\bSigma^{-1/2}\rev{\bJ^{2}}\bSigma^{-1/2} + \lambda \Iden)^{-1} \bSigma^{1/2}\bth_0 + \lim_{n\to\infty}  \frac{\beta^2}{d}\bh^\sT (\bSigma^{-1/2}\rev{\bJ^{2}}\bSigma^{-1/2} + \lambda \Iden)^{-1} \bh.\label{eq:conv2}
\end{align}
We treat each term separately. Plugging for $\bth_0$ from~\eqref{equiv} we have
\begin{align}
\lim_{n\to\infty} \frac{\tau_q^2}{\alpha^2}\bth_0\bSigma^{1/2}  (\bSigma^{-1/2}\rev{\bJ^{2}}\bSigma^{-1/2} + \lambda \Iden)^{-1} \bSigma^{1/2}\bth_0  &= \lim_{n\to\infty}  (\frac{\mu_1\tau_q}{\alpha})^2\bbeta^\sT \W^\sT (\rev{\bJ^{2}}+\lambda\bSigma)^{-1}\W\bbeta\nn\\
&= \lim_{n\to\infty} (\frac{\mu_1\tau_q}{\alpha})^2 \frac{1}{d}\tr(\W^\sT (\rev{\bJ^{2}}+\lambda\bSigma)^{-1}\W)\,,\label{eq:conv3}
\end{align}
where in the last step we used the fact that the distribution of $\bW$ is rotationally invariant and $\twonorm{\bbeta}\to 1$.

Similarly since $\bh\sim\normal(0,\Iden_N)$ we have
\begin{align}
\lim_{n\to\infty}  \frac{\beta^2}{d}\bh^\sT (\bSigma^{-1/2}\rev{\bJ^{2}}\bSigma^{-1/2} + \lambda \Iden)^{-1} \bh &= \lim_{n\to\infty} \frac{\beta^2}{d} \<\bSigma^{1/2}(\rev{\bJ^{2}}+\lambda\bSigma)^{-1} \bSigma^{1/2}, \bh\bh^\sT\>\nn\\
&= \lim_{n\to\infty} \frac{\beta^2}{d} \tr(\bSigma^{1/2}(\rev{\bJ^{2}}+\lambda\bSigma)^{-1} \bSigma^{1/2}).\label{eq:conv4}
\end{align}
Combining \eqref{eq:conv3} and \eqref{eq:conv4} into \eqref{eq:conv2} we get
\begin{align}
&\lim_{n\to\infty} (\frac{\tau_q}{\alpha}\bSigma^{1/2}\bth_0 - \frac{\beta}{\sqrt{d}}\bh)^\sT (\bSigma^{-1/2}\rev{\bJ^{2}}\bSigma^{-1/2} + \lambda \Iden)^{-1} (\frac{\tau_q}{\alpha}\bSigma^{1/2}\bth_0 - \frac{\beta}{\sqrt{d}}\bh) \nn\\
&=\lim_{n\to\infty} \tr\left\{(\rev{\bJ^{2}}+\lambda\bSigma)^{-1} \Big(\Big(\frac{\mu_1\tau_q}{\alpha}\Big)^2 \frac{1}{d} \W\W^\sT + \frac{\beta^2}{d}  \bSigma\Big) \right\}\,,
\label{eq:conv5}
\end{align}
where we used that $\tr(\bA \bB) = \tr(\bB\bA)$ for any two matrices $\bA$ and $\bB$.

To calculate the limit on the right-hand side of \eqref{eq:conv5}, we use Proposition~\ref{pro:spectral} on the spectrum of inner product kernel random matrices. 
 
 Since $\opnorm{\W} = O_p(1)$ we also have $\opnorm{(\frac{\mu_1\tau_q}{\alpha})^2 \frac{1}{d} \W\W^\sT + \frac{\beta^2}{d}  \bSigma} = O_p(1)$ and as an immediate corollary of Proposition~\ref{pro:spectral}, in~\eqref{eq:conv5} we can replace 
$\rev{\bJ^{2}}$ with $\bK$ since they have the same spectrum. This brings us to
\begin{align}
&\lim_{n\to\infty} (\frac{\tau_q}{\alpha}\bSigma^{1/2}\bth_0 - \frac{\beta}{\sqrt{d}}\bh)^\sT (\bSigma^{-1/2}\rev{\bJ^{2}}\bSigma^{-1/2} + \lambda \Iden)^{-1} (\frac{\tau_q}{\alpha}\bSigma^{1/2}\bth_0 - \frac{\beta}{\sqrt{d}}\bh) \nn\\
&=\lim_{n\to\infty} \tr\left\{(\bK+\lambda\bSigma)^{-1} \Big(\Big(\frac{\mu_1\tau_q}{\alpha}\Big)^2 \cdot\frac{1}{d} \W\W^\sT + \frac{\beta^2}{d}  \bSigma\Big) \right\}\nn\\
&= \lim_{n\to\infty} \frac{1}{d}\tr\left\{\Big(\Big(\frac{1}{4}+\lambda\mu_1^2\Big)\W\W^\sT + \Big(\frac{1}{4}+\lambda\mu_2^2\Big)\Iden\Big)^{-1} 
\Big(\Big(\frac{\mu_1^2\tau_q^2}{\alpha^2} + {\beta^2\mu_1^2}\Big) \W\W^\sT + {\beta^2\mu_2^2}  \Iden\Big)
\right\}.
\label{eq:conv6}
\end{align}
Note that the latter only depends on the spectral density of $\W\W^\sT$ and can be written in terms of its Stieltjes transform.

By \rev{law of large numbers} and with simple algebraic manipulations it is easy to see that for any constants $b_0,b_1,c_0, c_1$ we have
\begin{align}
\frac{1}{N}\sum_{i=1}^N \frac{b_0 s_i+b_1}{c_0s_i+c_1} \to \frac{b_0}{c_0} + \frac{b_0\frac{c_1}{c_0} - b_1}{c_0} S(-c_1/c_0;\psi_1)\,,
\end{align}
with $S(t;\psi_1)$ representing the Stieltjes transform of the spectral density of $\W\W^\sT$.

Using this with~\eqref{eq:conv6} we obtain
\begin{align}
&\lim_{n\to\infty} (\frac{\tau_q}{\alpha}\bSigma^{1/2}\bth_0 - \frac{\beta}{\sqrt{d}}\bh)^\sT (\bSigma^{-1/2}\rev{\bJ^{2}}\bSigma^{-1/2} + \lambda \Iden)^{-1} (\frac{\tau_q}{\alpha}\bSigma^{1/2}\bth_0 - \frac{\beta}{\sqrt{d}}\bh) \nn\\
& =  \frac{4\psi_1}{1+4\lambda\mu_1^2} \Big(\frac{\mu_1^2\tau_q^2}{\alpha^2} + {\beta^2\mu_1^2}\Big) + \frac{4\psi_1}{1+4\lambda\mu_1^2} \left\{\Big(\frac{\mu_1^2\tau_q^2}{\alpha^2} + {\beta^2\mu_1^2}\Big) \Big(\frac{1+4\lambda \mu_2^2}{1+4\lambda \mu_1^2} \Big) - {\beta^2\mu_2^2}  \right\} S\Big(-\frac{1+4\lambda \mu_2^2}{1+4\lambda \mu_1^2};\psi_1\Big).
\label{eq:conv7}
\end{align}

By combining~\eqref{eq:conv7} and \eqref{eq:conv1} we get
\begin{align}
&\lim_{n\to\infty}Q(\bSigma^{-1/2}\rev{\bJ}, \frac{\tau_q}{\alpha}\bSigma^{1/2}\bth_0 - \frac{\beta}{\sqrt{n}}\bh,\gamma) \nn\\
&= \sup_{\lambda\ge0}\;\;\; \frac{\lambda}{2} \left[
 \frac{4\psi_1}{1+4\lambda\mu_1^2} \Big(\frac{\mu_1^2\tau_q^2}{\alpha^2} + {\beta^2\mu_1^2}\Big) + \frac{4\psi_1}{1+4\lambda\mu_1^2} \left\{\Big(\frac{\mu_1^2\tau_q^2}{\alpha^2} + {\beta^2\mu_1^2}\Big) \Big(\frac{1+4\lambda \mu_2^2}{1+4\lambda \mu_1^2} \Big) - {\beta^2\mu_2^2}  \right\} S\Big(-\frac{1+4\lambda \mu_2^2}{1+4\lambda \mu_1^2};\psi_1\Big)
 - \gamma^2
\right].\label{eq:conv8}
\end{align}

Plugging for $\mu_1 = \frac{1}{2}$ and $\mu_2 = \sqrt{\frac{1}{4}-\frac{1}{2\pi}}$ we have
\begin{align}\label{eq:Qlim}
&\lim_{n\to\infty}Q(\bSigma^{-1/2}\rev{\bJ}, \frac{\tau_q}{\alpha}\bSigma^{1/2}\bth_0 - \frac{\beta}{\sqrt{n}}\bh,\gamma) = 
\sF\Big(\frac{\tau_q}{\alpha} , \beta
, \psi_1, \gamma \Big),
\end{align}
with the definition
\begin{align}
\sF(a,b,\psi_1,\gamma): = \sup_{\lambda\ge0}\;\;
\frac{\lambda\psi_1}{2(1+\lambda)}\left\{a^2+b^2 +\Big(a^2\Big(1 - \frac{2}{\pi}\frac{\lambda}{1+\lambda}\Big)+\frac{2b^2}{\pi(1+\lambda)}\Big) S\Big(\frac{2}{\pi}\frac{\lambda}{1+\lambda} - 1;\psi_1\Big)\right\}-\frac{\lambda}{2}\gamma^2 \,.
\end{align}
By the change of variable $\tilde{\lambda} = \frac{\lambda}{1+\lambda}$, the function $\sF$ can be written as
\begin{align}
\sF(a,b,\psi_1,\gamma): = \sup_{0\le \tilde{\lambda}< 1}\;\;
\frac{\tilde{\lambda}\psi_1}{2}\left\{a^2+b^2 +\Big(a^2\Big(1 - \frac{2}{\pi}\tilde{\lambda} \Big)+\frac{2(1- \tilde{\lambda})b^2}{\pi}\Big) S\Big(\frac{2}{\pi}\tilde{\lambda} - 1;\psi_1\Big)\right\}
-\frac{\tilde{\lambda}}{2(1- \tilde{\lambda})}\gamma^2 \,.
\end{align}
We next proceed to characterize the limit of $\frac{1}{n} G_n(\w-\alpha\bg;\frac{\tau_g}{\beta},\gamma,\nu)$. To this end, we recall the result of~\cite[Lemma 6.4]{javanmard2020precise}.
\begin{lemma}\label{lem:Glim}
Let $\vct{u}\in\R^n$ be a Gaussian random vector distributed as \rev{$\normal(\vct{0},\omega^2\mtx{I}_n)$}. Then,
\begin{align}
\lim_{n\to\infty}  \frac{1}{2n\rho(\rho+1)}\twonorm{\vct{u}-\ST(\bu;\nu)}^2 
&= \frac{\omega^2}{2\rho(\rho+1)}\left(\left(1-\sqrt{\frac{2}{\pi}}\frac{\nu}{\omega} e^{-\frac{\nu^2}{2\omega^2}}\right)\right)\,,\\
\lim_{n\to\infty} \frac{1}{2n^2}\left(\frac{n}{\eps}\gamma-\frac{1}{1+\rho}\onenorm{\ST(\bu;\nu)}\right)_+^2
&=\frac{\omega^2}{2(\rho+1)^2}\left(\frac{\gamma(\rho+1)}{\eps\omega}+\frac{\nu}{\omega}\cdot\erfc\left(\frac{1}{\sqrt{2}}\frac{\nu}{\omega}\right)-\sqrt{\frac{2}{\pi}} e^{-\frac{\nu^2}{2\omega^2}}\right)_+^2.\label{eq:v0}
\end{align}
Therefore, by~\eqref{eq:Gchar} we have
\begin{align*}
\underset{n \rightarrow \infty}{\lim}\text{ }\frac{1}{n}G_n(\vct{u};\rho,\gamma,\nu)= \; &\frac{\omega^2}{2\rho(\rho+1)}\left(\left(1-\sqrt{\frac{2}{\pi}}\frac{\nu}{\omega} e^{-\frac{\nu^2}{2\omega^2}}\right)+\left(\frac{\nu^2}{\omega^2}-1\right)\erfc\left(\frac{1}{\sqrt{2}}\frac{\nu}{\omega}\right)\right)\\
&-\frac{\omega^2}{2(\rho+1)^2}\left(\frac{\gamma(\rho+1)}{\eps\omega}+\frac{\nu}{\omega}\cdot\erfc\left(\frac{1}{\sqrt{2}}\frac{\nu}{\omega}\right)-\sqrt{\frac{2}{\pi}} e^{-\frac{\nu^2}{2\omega^2}}\right)_+^2.
\end{align*}
Furthermore, 
\begin{align*}
&\underset{\nu\ge 0}{\min}\text{ }\underset{n \rightarrow \infty}{\lim}\text{ }\frac{1}{n}G_n(\vct{u};\rho,\gamma,\nu)\\
&=\sG(\omega;\rho,\gamma) :=
\begin{cases}
0 \quad &\text{ if }\gamma(\rho+1)\le \sqrt{\frac{2}{\pi}} \eps\omega\\
\frac{\omega^2}{2\rho(\rho+1)}\left(\erf\left(\frac{\nu^*\left(\frac{\gamma(\rho+1)}{\eps\omega},\rho\right)}{\sqrt{2}}\right)-\frac{\gamma(\rho+1)}{\eps\omega}\nu^*\left(\frac{\gamma(\rho+1)}{\eps\omega},\rho\right)\right)&\text{ if }\gamma(\rho+1)>\sqrt{\frac{2}{\pi}} \eps \omega
\end{cases}
\end{align*}
where $\nu^*(a,\rho)$ is the unique solution to 
\begin{align*}
a-\frac{1}{\rho}\nu-\nu\cdot\erf\left(\frac{\nu}{\sqrt{2}}\right)-\sqrt{\frac{2}{\pi}} e^{-\frac{\nu^2}{2}}= 0\,.
\end{align*}
\end{lemma}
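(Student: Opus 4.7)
\medskip
\noindent\textbf{Proof plan for Lemma~\ref{lem:Glim}.}
The entire lemma reduces to the law of large numbers applied to separable functionals of $\bu$, followed by explicit Gaussian integration and a one-variable minimization. The plan is as follows.

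First, observe that $\ST(\cdot;\nu)$ acts componentwise and $u_i \sim \normal(0,\omega^2)$ are i.i.d. Hence the three normalized sums
\[
\frac{1}{n}\twonorm{\bu-\ST(\bu;\nu)}^2,\quad \frac{1}{n}\twonorm{\ST(\bu;\nu)}^2,\quad \frac{1}{n}\onenorm{\ST(\bu;\nu)},
\]
are averages of bounded-moment i.i.d.\ random variables, so by the weak law of large numbers they converge in probability to $\E[(u-\ST(u;\nu))^2]$, $\E[\ST(u;\nu)^2]$, and $\E[|\ST(u;\nu)|]$ respectively, where $u\sim\normal(0,\omega^2)$. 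For the second displayed limit one must push the limit through the $(\cdot)_+$; this is immediate because $x\mapsto (a-x)_+^2$ is continuous, so the continuous mapping theorem applies to the scaling $\frac{1}{n^2}(n\gamma/\eps - \onenorm{\ST(\bu;\nu)}/(1+\rho))_+^2 = ((\gamma/\eps) - \frac{1}{n(1+\rho)}\onenorm{\ST(\bu;\nu)})_+^2$.

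Next, evaluate the two Gaussian expectations explicitly. Writing $s=\nu/\omega$, a standard change of variable together with the identity $\int t^2\phi(t)\de t = -t\phi(t)+\Phi(t)$ yields
\[
\E[(u-\ST(u;\nu))^2] = \omega^2\Big(1 - \sqrt{\tfrac{2}{\pi}}\,s\,e^{-s^2/2}\Big) + (\nu^2-\omega^2)\,\erfc(s/\sqrt{2}),
\]
which after division by $2\rho(\rho+1)$ matches the first claim. Similarly, the half-normal computation
\[
\E[|\ST(u;\nu)|] = \omega\sqrt{\tfrac{2}{\pi}}\,e^{-s^2/2} - \nu\,\erfc(s/\sqrt{2})
\]
gives the second claim~\eqref{eq:v0} after squaring and scaling. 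Substituting both into the definition~\eqref{eq:Gchar} of $G_n$ yields the desired expression for $\lim_n \frac{1}{n}G_n(\bu;\rho,\gamma,\nu)$.

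Finally, to compute $\min_{\nu\ge 0}\lim_n \frac{1}{n}G_n$, note that $G_n(\bu;\rho,\gamma,\cdot)$ is convex in $\nu$ (it is a Moreau-type envelope difference, inheriting convexity from the minimization in Lemma~\ref{meenv}), so the limit is convex in $\nu$ and one may interchange the $\min_\nu$ with the $\lim_n$. Differentiating the closed-form limit with respect to $\nu$ and using $\frac{\de}{\de s}\erfc(s/\sqrt{2}) = -\sqrt{2/\pi}\,e^{-s^2/2}$ produces the first-order condition
\[
\tfrac{\gamma(\rho+1)}{\eps\omega} - \tfrac{1}{\rho}\tfrac{\nu}{\omega} - \tfrac{\nu}{\omega}\cdot\erf\!\big(\tfrac{\nu}{\omega\sqrt{2}}\big) - \sqrt{\tfrac{2}{\pi}}e^{-\nu^2/(2\omega^2)} = 0,
\]
which is precisely the stated equation for $\nu^\ast(\gamma(\rho{+}1)/(\eps\omega),\rho)$ after the rescaling $\nu\leftarrow \nu/\omega$. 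Uniqueness of $\nu^\ast$ follows because the left-hand side is strictly decreasing in $\nu\ge 0$. When $\gamma(\rho{+}1)\le \sqrt{2/\pi}\,\eps\omega$, the first-order condition has no positive solution and the squared $(\cdot)_+$ term vanishes at $\nu=0$ (with the Moreau term also vanishing at optimum); a quick case check shows the minimum equals $0$. Plugging $\nu^\ast$ back into the limiting expression and simplifying (several terms cancel against the stationarity condition) gives the advertised closed form for $\sG(\omega;\rho,\gamma)$.

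The main obstacle is bookkeeping: the algebraic simplification at the last step is lengthy because one must group the $\erf$, $\erfc$, and exponential terms and use the stationary equation to collapse them. Apart from this, every step is either LLN, a direct Gaussian moment calculation, or convex one-variable optimization.
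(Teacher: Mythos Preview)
Your proposal is correct and follows the natural route: LLN for the separable soft-thresholding functionals, explicit Gaussian moment computations, and a one-dimensional first-order analysis for the final minimization. The paper itself does not prove this lemma; it simply quotes it as \cite[Lemma~6.4]{javanmard2020precise} and uses it as a black box. So you are supplying a proof where the paper only gives a citation, and your computations (the expressions for $\E[(u-\ST(u;\nu))^2]$ and $\E[|\ST(u;\nu)|]$, the stationarity equation, the case split at $\gamma(\rho+1)=\sqrt{2/\pi}\,\eps\omega$) all check out.

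One minor remark: your justification that ``$G_n(\bu;\rho,\gamma,\cdot)$ is convex in $\nu$ \dots inheriting convexity from the minimization in Lemma~\ref{meenv}'' is not quite right as stated---Lemma~\ref{meenv} asserts strict convexity of $e_f$ in $\bx$, not of $G_n$ in $\nu$, and a difference of convex terms need not be convex. You do not actually need this, however, since you already argue (correctly) that the first-order condition has a strictly decreasing left-hand side and hence at most one root; combined with the boundary check at $\nu=0$ and the behavior as $\nu\to\infty$, this pins down the minimizer without invoking convexity. It would be cleaner to drop the convexity claim and rely solely on the monotonicity of the stationarity equation.
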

Combining~\eqref{eq:Sigma-th0}, \eqref{eq:cross}, \eqref{eq:g-w}, \eqref{eq:Qlim}, and Lemma~\ref{lem:Glim}, we obtain the following scalarized AO problem:
\begin{align}\label{eq:AO-final}
 \max_{0\le\beta\le K_\beta, 0\le\gamma,\tau_q}\;\; \min_{0\le \alpha< K_\alpha, 0\le \tau_g}\;\;  & - \frac{\alpha}{\tau_q}
 \sF\Big(\frac{\tau_q}{\alpha} , {\beta}
, \psi_1, \gamma \Big)  +  \frac{\tau_q}{2\alpha} (\tau^2+1-\sigma^2)
 - \frac{\alpha \tau_q}{2}\nn\\
& +\frac{\beta\tau_g}{2}\psi_2 + \frac{\beta}{2(\tau_g+\beta)} (\sigma^2+\alpha^2) +
\sG\Big(\sqrt{\sigma^2+\alpha^2};\frac{\tau_g}{\beta},\gamma\Big) 
\,,
\end{align} 
where $\sigma^2 = \tau^2 +1 - \psi_1\left(1+\Big(1-\frac{2}{\pi}\Big) S\Big(\frac{2}{\pi}-1;\psi_1\Big)\right)$.

We conclude this part by a lemma on the convexity-concavity of the above scalarized AO problem and the uniqueness of the solution to the AO problem.
\begin{lemma}({\bf Strict convexity and uniqueness of the solution})\label{lem:SAO-CC}
The objective function~\eqref{eq:AO-final} is strictly jointly convex in $(\alpha,\tau_g)$ and jointly concave in $(\beta,\gamma,\tau_q)$. Also the solution $(\alpha_*, \frac{\tau_{g*}}{\beta_*})$  to this problem is unique. 
\end{lemma}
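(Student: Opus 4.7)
My strategy is to propagate the convex--concave structure of the pre-limit AO objective through the scalarization to \eqref{eq:AO-final}, then upgrade plain convexity in $(\alpha,\tau_g)$ to \emph{strict} convexity by exhibiting one explicit strictly convex summand, and finally obtain uniqueness from a standard strict-convex/concave saddle-point argument together with an inspection of how $(\beta,\tau_g)$ couple in $\cR$.

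\emph{Joint convexity and concavity.} As noted directly after \eqref{eq:AO-op5}, the pre-limit AO objective is jointly convex in $(\alpha,\tau_g,\nu)$ and jointly concave in $(\beta,\gamma,\tau_q)$; partial minimization over $\nu\ge 0$ preserves joint convexity in $(\alpha,\tau_g)$ and leaves concavity in $(\beta,\gamma,\tau_q)$ intact. The scalarized objective $\cR(\alpha,\tau_g,\beta,\gamma,\tau_q)$ of \eqref{eq:AO-final} is the pointwise asymptotic limit of this pre-limit AO via the limits of Section~\ref{sec:pointwise} (namely \eqref{eq:Sigma-th0}, \eqref{eq:cross}, \eqref{eq:g-w}, \eqref{eq:Qlim}, and Lemma~\ref{lem:Glim}). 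Since pointwise limits of convex (respectively concave) functions are convex (respectively concave), $\cR$ inherits joint convexity in $(\alpha,\tau_g)$ and joint concavity in $(\beta,\gamma,\tau_q)$.

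\emph{Strict convexity in $(\alpha,\tau_g)$.} To upgrade to strict convexity I would isolate a single strictly jointly convex summand: the term $r(\alpha,\tau_g):=\tfrac{\beta}{2(\tau_g+\beta)}(\sigma^2+\alpha^2)$ which appears explicitly in $\cR$. A direct computation gives
\[
\nabla^2 r \;=\; \frac{\beta}{(\tau_g+\beta)^3}\begin{pmatrix}\tau_g+\beta & -\alpha\\ -\alpha & \sigma^2+\alpha^2\end{pmatrix},
\]
whose trace is strictly positive and whose determinant equals $\beta^2\sigma^2/(\tau_g+\beta)^4>0$ as long as $\sigma^2>0$ (which holds in the regime of interest, cf.\ \eqref{sigma-formula}). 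Hence $r$ is strictly jointly convex on the interior $\{\beta>0,\,\tau_g\ge 0\}$, and combining this with the joint convexity of the remaining summands in $\cR$ yields strict joint convexity of $\cR$ in $(\alpha,\tau_g)$.

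\emph{Uniqueness of $(\alpha_*,\tau_{g*}/\beta_*)$.} Existence of a saddle point follows from convex--concavity and the (artificial) compactness of the CGMT feasible region, and strict joint convexity of $\cR$ in $(\alpha,\tau_g)$ together with joint concavity in $(\beta,\gamma,\tau_q)$ implies, by the standard strict-convex/concave saddle-point argument, that $(\alpha_*,\tau_{g*})$ is common to every saddle point. Inspecting $\cR$ shows that $\beta$ and $\tau_g$ couple only through the product $\beta\tau_g$ (via $\tfrac{\beta\tau_g}{2}\psi_2$), the ratio $\tau_g/\beta$ (via $\tfrac{\beta}{2(\tau_g+\beta)}=\tfrac{1}{2(1+\tau_g/\beta)}$ and $\sG(\cdot;\tau_g/\beta,\gamma)$), and $\beta$ alone (via $\sF(\cdot,\beta,\psi_1,\gamma)$); writing the first-order stationarity conditions in $\beta$ and $\tau_g$ and forming a suitable linear combination eliminates the individual scales and yields a scalar equation in $\tau_g/\beta$ that has a unique positive root by the monotonicity of $\sG$'s derivative in its second argument. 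The main technical obstacle I anticipate is handling the indicator function in \eqref{eq:AO-final00}: I would verify that the saddle point lies in the nontrivial regime $\gamma(\tau_g+\beta)>\sqrt{2/\pi}\,\eps\sqrt{\alpha^2+\sigma^2}$ (otherwise the outer maximum in $\gamma$ would be attained at $\gamma=0$, contradicting the adversarial setting $\eps>0$), so that the strict-convexity computation and the uniqueness analysis above apply unchanged.
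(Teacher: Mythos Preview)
Your joint convexity/concavity argument via pointwise limits is exactly what the paper does. For strict convexity you take a genuinely different and more elementary route than the paper: rather than invoking the strict convexity of the \emph{expected Moreau envelope} (which the paper pulls from \cite[Lemma~4.4]{thrampoulidis2015precise}), you isolate the explicit summand $r(\alpha,\tau_g)=\tfrac{\beta}{2(\tau_g+\beta)}(\sigma^2+\alpha^2)$ and check positive definiteness of its Hessian. This is a nice simplification. (A minor slip: your displayed Hessian matrix is off---the correct entries are $\partial_\alpha^2 r=\beta/(\tau_g+\beta)$, $\partial_{\tau_g}^2 r=\beta(\sigma^2+\alpha^2)/(\tau_g+\beta)^3$, $\partial_\alpha\partial_{\tau_g} r=-\beta\alpha/(\tau_g+\beta)^2$---but your determinant $\beta^2\sigma^2/(\tau_g+\beta)^4$ and hence your conclusion are correct.)

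The uniqueness step, however, has a gap. Strict convexity in $(\alpha,\tau_g)$ gives you uniqueness of $(\alpha_*,\tau_{g*})$, not of $\tau_{g*}/\beta_*$: since $\cR$ is only \emph{concave} (not strictly) in $\beta$, nothing yet rules out multiple optimal $\beta_*$'s, and your plan to ``form a suitable linear combination of stationarity conditions'' and appeal to monotonicity of $\partial_\rho\sG$ is not substantiated---you would have to verify that monotonicity from the explicit form of $\sG$, which is nontrivial. The paper avoids this entirely by doing the change of variable $\tilde\tau_g:=\tau_g/\beta$ \emph{before} arguing strict convexity: after this substitution the relevant part of the objective is $\tfrac{\beta^2\tilde\tau_g}{2}\psi_2+\E\,e_f(\bw-\alpha\bg;\tilde\tau_g)$, and strict joint convexity in $(\alpha,\tilde\tau_g)$ (hence uniqueness of both $\alpha_*$ and $\tau_{g*}/\beta_*$) follows directly. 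In fact your own Hessian trick works perfectly here too: under the change of variable, $r$ becomes $\tilde r(\alpha,\tilde\tau_g)=(\sigma^2+\alpha^2)/\bigl(2(1+\tilde\tau_g)\bigr)$, whose Hessian has determinant $\sigma^2/(1+\tilde\tau_g)^4>0$. Doing the reparametrization first and then repeating your Hessian computation closes the gap cleanly and keeps your elementary approach intact.
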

We defer the proof of Lemma~\ref{lem:SAO-CC} to Section~\ref{sec:aux-CGMT}. This concludes the proof of Theorem~\ref{thm:main}(a).

%
%
%
%
%
\subsubsection{Uniform convergence}
In Section~\ref{sec:pointwise} we showed that the objective function in \eqref{eq:AO-op5} converges point-wise to the
objective function in \eqref{eq:AO-final}. \rev{However, for our goal we need to show that the 
 minimax solutions of the converging sequence of the objectives in \eqref{eq:AO-op5} converges 
to the minimax solution of the AO objective in \eqref{eq:AO-final}, denoted by $\cR(\alpha,\tau_g, \beta,\gamma,\tau_q) $. }
Convexity/concavity of $\cR$ plays a crucial role here since it is being
used to conclude local uniform convergence from the point-wise convergence.

This can be shown by following
similar arguments as in \cite[Lemma A.5]{thrampoulidis2015precise} that is essentially based on a result known as “convexity
lemma” in the literature (see e.g. \cite[Lemma 7.75]{StatDecision}) by which point-wise convergence of convex
functions, of a finite number of variables, implies uniform convergence in compact subsets. Since the argument here is general, we leave out a detailed discussion and refer to \cite[Lemma A.5]{thrampoulidis2015precise}.

 \subsection{Proof of Theorem~\ref{thm:main}(b)}


In Proposition~\ref{pro:risk-equi} we gave a characterization of $\ARoo_{\rm nl}$. We first provide an alternative characterization in terms of the equivalent model of~\eqref{equiv}. 

Recall the key quantity $a$ from Proposition~\ref{pro:risk-equi}, given by
\begin{align}
a^2 = \tau^2 + \twonorm{\frac{1}{2}\bW^\sT\bth - \bbeta}^2 + \Big(\frac{1}{4}-\frac{1}{2\pi}\Big) \twonorm{\bth}^2\,.
\end{align}
We claim that $a^2 = \sigma^2+ \twonorm{\bSigma^{1/2}(\bth-\bth_0)}^2$.
To see this, we expand this expression as follows:
\begin{align*}
\sigma^2+ \twonorm{\bSigma^{1/2}(\bth-\bth_0)}^2
& = \sigma^2 + \<\bth,\bSigma\bth\>+ \<\bth_0,\bSigma\bth_0\>
-2\<\bth_0,\bSigma\bth\>\nn\\
&= \sigma^2+\mu_1^2 \twonorm{\bW^\sT\bth}^2+ \mu_2^2 \twonorm{\bth}^2
+\mu_1^2\bbeta^\sT \bW^\sT \bSigma^{-1} \bW\bbeta - 2\mu_1 \<\bW,\bbeta,\bth\>\\
&=\sigma^2+ \twonorm{\mu_1\bW^\sT\bth-\bbeta}^2 - \twonorm{\bbeta}^2
+\mu_2^2\twonorm{\bth}^2 +\mu_1^2\bbeta^\sT \bW^\sT \bSigma^{-1} \bW\bbeta \\
&= \tau^2+ \twonorm{\mu_1\bW^\sT\bth-\bbeta}^2+\mu_2^2\twonorm{\bth}^2\,,
\end{align*}
where we used the definition of $\bSigma$, $\bth_0$ and $\sigma^2$ as per \eqref{equiv}. The claim follows by recalling that for the shifted Relu activation, $\mu_1 =\frac{1}{2}$ and $\mu_2 = \sqrt{\frac{1}{4}-\frac{1}{2\pi}}$.

By the above characterization of quantity $a$ we obtain
\begin{align}\label{eq:SRlim2}
a^2 &= \sigma^2+ \twonorm{\bSigma^{1/2}(\bth-\bth_0)}^2\,.
\end{align} 
We next note that by definition of the variables in the AO problem, we have $\bz = \bSigma^{1/2}(\bth - \bth_0)$ and $\alpha = \twonorm{\bz}$. Therefore,
\begin{align*}
\lim_{n\to\infty} \twonorm{\bSigma^{1/2}(\bth - \bth_0)} = \alpha_*\,.
\end{align*}
Invoking the limit of $\sigma^2$ given by~\eqref{sigma-formula}, we get 
\begin{align}
\lim_{n\to\infty} a^2 = \tau^2 + 1 - \psi_1\left(1+\Big(1-\frac{2}{\pi}\Big) S\Big(\frac{2}{\pi}-1\Big)\right) + \alpha_*^2\,.
\end{align} 
We next characterize $\lim_{n\to\infty} \rev{\twonorm{\bJ \bth}}$. We will use the same AO problem to calculate this quantity. 

Recall that $\widehat{\z} = \bSigma^{1/2} (\hth^*_{\rm nl} - \bth_0)$ satisfies the following relation with $\qb_*$ the optimizer in~\eqref{eq:AO-op2}:
\[
\qb_* = \arg\max_{\qb} \qb^\sT \widehat{\z} - \widetilde{\ell}(\bv;\qb)\,,
\] 
where $\widetilde{\ell}(\bv;\qb)$ is the convex conjugate of $\bar{\ell}(\bv;\z)$. Since conjugate of a conjugate function is the function itself we then have
\begin{align}
\widehat{\z} &= \arg\max_{\z} \qb_*^\sT \z - \bar{\ell}(\bv;\z)\nn\\
&= \arg\max_{\z} \qb_*^\sT\z  -  \frac{1}{2n}\twonorm{\vct{v}}^2- \frac{\eps}{n}\onenorm{\vct{v}}\twonorm{\rev{\bJ}(\vct{\theta}_0+\bSigma^{-1/2}\vct{z})}-\frac{\eps^2}{2}\twonorm{\rev{\bJ}(\vct{\theta}_0+\bSigma^{-1/2}\vct{z})}^2\,.\label{eq:widehatz}
\end{align}
We consider two cases:

\noindent {\bf Case 1:} $\twonorm{\rev{\bJ}(\bth_0+\bSigma^{-1/2}\widehat{\z})} \neq 0$.
Setting derivative with respect to $\widehat{\z}$ to zero we obtain
\[
\qb_* - \frac{\eps}{n}\onenorm{\bv} \frac{\bSigma^{-1/2}\rev{\bJ^{2}}(\bth_0+\bSigma^{-1/2}\widehat{\z}) }{\twonorm{\rev{\bJ}(\bth_0+\bSigma^{-1/2}\widehat{\z})}}
-\eps^2\bSigma^{-1/2} \rev{\bJ^{2}}(\bth_0+\bSigma^{-1/2}\widehat{\z}) = 0\,.
\]
By rearranging the terms we write it as
\[
\rev{\bJ}(\bth_0+\bSigma^{-1/2}\widehat{\z}) = \left(\frac{\eps}{n}\frac{\onenorm{\bv}}{\twonorm{\rev{\bJ}(\bth_0+\bSigma^{-1/2}\widehat{\z})}} + \eps^2  \right)^{-1} \rev{\bJ^{-1}} \bSigma^{1/2}\qb_*\,.
\]
By taking the $\ell_2$ norm of both sides and then solving for $\twonorm{\rev{\bJ}(\bth_0+\bSigma^{-1/2}\widehat{\z})}$, we get
\begin{align}\label{eq:opt-case1}
\twonorm{\rev{\bJ}(\bth_0+\bSigma^{-1/2}\widehat{\z})} = \frac{1}{\eps^2} \twonorm{\rev{\bJ^{-1}} \bSigma^{1/2} \qb_*} - \frac{1}{n\eps} \onenorm{\bv}\,.
\end{align}

\noindent {\bf Case 2:} $\twonorm{\rev{\bJ}(\bth_0+\bSigma^{-1/2}\widehat{\z})} = 0$. In this case, $\widehat{z} = -\bSigma^{1/2}\bth_0$ and by comparing the objective function of \eqref{eq:widehatz} at the optimal solution under case 1 and case 2, it is easy to verify that case 2 happens only when the right-hand side in~\eqref{eq:opt-case1} becomes negative. Therefore, the two cases can be combined together in the following form:
\begin{align}\label{eq:thJth}
\<\hth^*_{\rm nl},\rev{\bJ^{2}} \hth^*_{\rm nl}\>= \twonorm{\rev{\bJ}(\bth_0+\bSigma^{-1/2}\widehat{\z})}^2 = \left(\frac{1}{\eps^2} \twonorm{\rev{\bJ^{-1}} \bSigma^{1/2} \qb_*} - \frac{1}{n\eps} \onenorm{\bv}\right)_+^2\,.
\end{align}
So in order to get the asymptotic value of the left hand side we can work with the right-hand side with $\bv$ and $\gamma = \twonorm{\rev{\bJ^{-1}} \bSigma^{1/2} \qb_*}$ the optimal solutions of the AO problem.
 
 In Lemma~\ref{meenv} (which is a restatement of \cite[Lemma 6.3]{javanmard2020precise}), the Moreau envelop function $e_f(\bx,\rho)$ was characterized. Following the proof of \cite[Lemma 6.3]{javanmard2020precise}, we can verify that the optimal $\bv$ is given by
 \[
 \bv = \frac{1}{1+\frac{\tau_g}{\beta}} \ST(\bw-\alpha \bg; \nu)\,.
 \] 
Therefore, by invoking the relation~\eqref{eq:v0} we have
\[
\lim_{n\to\infty} \frac{1}{n^2}\left(\frac{n}{\eps}\gamma- \onenorm{\bv}\right)_+^2 =
\frac{\omega^2}{(\rho+1)^2}\left(\frac{\gamma(\rho+1)}{\eps\omega}+\nu_*\cdot\erfc\left(\frac{1}{\sqrt{2}}\nu_*\right)-\sqrt{\frac{2}{\pi}} e^{-\nu_*^2}\right)_+^2\,,
\]
with $\omega = \sqrt{\alpha^2+\sigma^2}$, $\rho = \frac{\tau_g}{\beta}$,  $\nu_* = \nu_*(\frac{\gamma(\rho+1)}{\eps\omega},\rho)$ and $\nu_*(a,\mu)$ the unique solution to the following equation:
\begin{align*}
a-\frac{1}{\rho}\nu-\nu\cdot\erf\left(\frac{\nu}{\sqrt{2}}\right)-\sqrt{\frac{2}{\pi}} e^{-\frac{\nu^2}{2}}= 0\,.
\end{align*}
Plugging the above relation into~\eqref{eq:thJth} we obtain
\begin{align}\label{eq:thJth2}
\lim_{n\to\infty} \<\hth^*_{\rm nl},\rev{\bJ^{2}} \hth^*_{\rm nl}\> 
&= \lim_{n\to\infty} \left(\frac{1}{\eps^2} \gamma - \frac{1}{n\eps} \onenorm{\bv}\right)_+^2\nn\\
&= \frac{1}{\eps^2n^2} \lim_{n\to\infty} \left(\frac{n}{\eps} \gamma -  \onenorm{\bv}\right)_+^2\nn\\
&= \frac{\omega^2}{\eps^2(\rho+1)^2}\left(\frac{\gamma(\rho+1)}{\eps\omega}+\nu_*\cdot\erfc\left(\frac{1}{\sqrt{2}}\nu_*\right)-\sqrt{\frac{2}{\pi}} e^{-\nu_*^2}\right)_+^2\nn\\
&=\frac{\omega^2}{\eps^2(\rho+1)^2}\left(\frac{\gamma(\rho+1)}{\eps\omega}+\frac{\rho+1}{\rho} \nu_* - \frac{\gamma(\rho+1)}{\eps\omega} \right)_+^2\nn\\
&= \frac{\omega^2}{\eps^2(\rho+1)^2}\left(\frac{1+\rho}{\rho} \nu_* \right)_+^2\nn\\
&= \frac{\omega^2\nu_*^2}{\eps^2 \rho^2}\nn\\
&= \frac{(\alpha_*^2+\sigma^2)\nu_*^2}{\eps^2\rho^2}\nn\\
&= \frac{\beta^2\nu_*^2(\alpha_*^2+\sigma^2)}{\eps^2\tau_g^2} \,.
\end{align}
Now by recalling the characterization~\eqref{ARnl-char} along with~\eqref{eq:thJth2} and \eqref{eq:SRlim2} we get the desired result of~\eqref{eq:AR-final-form}.


\rev{
\subsection{Proof of Proposition~\ref{propo:non-ad}}\label{proof:propo:non-ad}
Recall the objective $\mathcal{R}(\alpha,\tau_g,\beta,\gamma,\tau_q)$. We start by considering the change of variable $\tilde{\gamma}=\gamma/\eps$. Note that the term $-\lambda/(1-\lambda)\gamma^2 = -\lambda/(1-\lambda)\eps^2\tilde{\gamma}^2$ will be dropped as it is zero. We next argue that $\nu^* = 0$. The reason is that if the indicator in the objective is inactive then the corresponding term is void, which is equivalent to $\nu^* = 0$. If the indicator is active, since the problem is maximization over $\gamma$, we deduce that $\frac{\gamma(\tau_g+\beta)}{\eps\beta\sqrt{\alpha^2+\sigma^2}} = \sqrt{\frac{2}{\pi}}$, which by the equation defining $\nu^*$ implies that $\nu^* = 0$. Next, by straightforward calculation, and using definition of 
Stieltjes transform $S$, we have that
 the expression inside $\sup_{0\le \lambda<1}$ is increasing in $\lambda$ and so we have that the optimal $\lambda \to 1$.
Using these values, the objective reduces to 
\begin{align*}
\mathcal{R}(\alpha,\tau_g,\beta,\tau_q)= & \frac{\tau_q}{2\alpha}(\tau^2+1-\sigma^2) - \frac{\alpha\tau_q}{2}+
\frac{\beta \tau_g}{2}\psi_2 + \frac{\beta}{2(\tau_g+\beta)}(\sigma^2+\alpha^2)\\
&-\frac{\psi_1}{2}\left\{\frac{\tau_q}{\alpha} + \frac{\alpha}{\tau_q}\beta^2 + \frac{\tau_q}{\alpha} \Big(1-\frac{2}{\pi}\Big) S\Big(\frac{2}{\pi}-1;\psi_1 \Big) \right\}\,.
\end{align*}
Using the definition 
$$\sigma^2 = \tau^2+1-\psi_1\Big(1+\Big(1-\frac{2}{\pi}\Big) S\Big(\frac{2}{\pi}-1;\psi_1\Big) \Big),$$
we can further simplify the objective as
\begin{align*}
\mathcal{R}(\alpha,\tau_g,\beta,\tau_q)= & -\frac{\psi_1}{2} \frac{\beta^2\alpha}{\tau_q} - \frac{\alpha\tau_q}{2}+
\frac{\beta \tau_g}{2}\psi_2 + \frac{\beta}{2(\tau_g+\beta)}(\sigma^2+\alpha^2)
\,.
\end{align*}
Optimization over $\tau_q$ can be done easily resulting in $\tau_q = \beta \sqrt{\psi_1}$, which gives
\begin{align*}
\mathcal{R}(\alpha,\tau_g,\beta)= & -\alpha \beta \sqrt{\psi_1} +
\frac{\beta \tau_g}{2}\psi_2 + \frac{\beta}{2(\tau_g+\beta)}(\sigma^2+\alpha^2)
\,.
\end{align*}
Writing the stationary condition for $\alpha, \tau_g,\beta$ we arrive at the following system of equations:
\begin{align}
\begin{cases}
&\dfrac{\alpha}{\tau_g+\beta} = \sqrt{\psi_1},\\
&\psi_2 = \dfrac{\sigma^2+\alpha^2}{(\tau_g+\beta)^2},\\
&-\alpha\sqrt{\psi_1} + \dfrac{\tau_g\psi_2}{2} + \dfrac{\tau_g}{2}\dfrac{\sigma^2+\alpha^2}{(\tau_g+\beta)^2} = 0\,.
\end{cases}
\end{align}
Solving the above system of equations we obtain $\alpha^2 = \sigma^2\psi_1/(\psi_2-\psi_1)$. Recalling that $\nu^*$, using Theorem 4.2 (b) we get the standard risk of the estimator to be
\[
\SR(\hth) = \alpha_*^2+ \sigma^2 = \sigma^2 \Big(\frac{\psi_2}{\psi_2-\psi_1} \Big).
\]
}
\subsection{Proofs of the Auxiliary Lemmas}\label{sec:aux-CGMT}
\subsubsection{Proof of Lemma~\ref{conjlemma}}
We start by considering the following related but different function
\[
\ell_0(\bv;\z) = \frac{1}{2n} \sum_{i=1}^n \Big(|v_i| + \eps \twonorm{\bz} \Big)^2\,.
\]
As shown in the proof of Lemma 6.1 in \cite{javanmard2020precise}, the conjugate of this function is given by
\[
\ell_0^*(\bv;\qb) = \frac{1}{2} \Big(\frac{ \twonorm{\qb}}{\eps} - \frac{\onenorm{\bv}}{n} \Big)_+^2 -\frac{1}{2n}\twonorm{\bv}^2\,.
\]
Note that $\elbar(\bv;\bz) = \ell_0(\bv;\rev{\bJ} (\bth_0+\bSigma^{-1/2}\z))$. We next use the result that if $f(\bx) = g(\bA\bx+\bx_0)$ then the conjugate of $f$ can be written in terms of the conjugate of $g$ as follows:
\[
f^*(\by) = -\<\bA^{-1}\bx_0,\by\> + g^*(\bA^{-\sT}\by)\,.
\]
Using this result with $\bx_0 = \rev{\bJ \bth_0}$ and $\bA = \rev{\bJ}\bSigma^{-1/2}$ we obtain
\[
\widetilde{\ell}(\bv;\qb) = -\<\bSigma^{1/2}\bth_0,\qb\> + 
\frac{1}{2} \Big(\frac{1}{\eps} \twonorm{ \rev{\bJ^{-1}} \bSigma^{1/2}\qb} - \frac{\onenorm{\bv}}{n} \Big)_+^2 -\frac{1}{2n}\twonorm{\bv}^2
\]
\subsubsection{Proof of Lemma~\ref{lem:trust}}
Consider a slightly different optimization than~\eqref{eq:opt-quad} where the equality constraint is replaced by the inequality constraint $\twonorm{\tqb}\le \gamma$:
\begin{align}\label{eq:opt-quad2}
\min_{\tqb, 0\le \gamma}\;\; & \frac{c_0}{2} \twonorm{\bH\tqb - \br}^2 +\frac{1}{2}\Big(\frac{1}{\eps}\gamma-c_1\Big)_+^2 \\
\text{s.t.}\;\; & \twonorm{\tqb} \le \gamma\,.\nn
\end{align}
Due to this change, $\eqref{eq:opt-quad2}$ is now a convex optimization.
 Denote by ${\sf OPT}_1$ the optimal objective value of the original problem~\eqref{eq:opt-quad} and by ${\sf OPT}_2$ the optimal objective value of the modified problem~\eqref{eq:opt-quad2}. We argue that ${\sf OPT}_1 = {\sf OPT}_2$. Clearly ${\sf OPT}_1 \ge {\sf OPT}_2$ because \eqref{eq:opt-quad2} has a larger feasible set. Now suppose that this inequality is strict (${\sf OPT}_1 > {\sf OPT}_2$) and  let $(\tqb_*,\gamma_*)$ be a solution to \eqref{eq:opt-quad2}. Then we should have
$\twonorm{\tqb_*}< \gamma_*$. Consider the point $(\tqb_*, \twonorm{\tqb_*})$ which is a feasible point for both optimization problems and so the objective value at this point is at least ${\sf OPT_1}$ and therefore strictly larger than ${\sf OPT}_2$. But this is a contradiction  because $\Big(\frac{1}{\eps}\gamma-c_1\Big)_+^2$ is non-decreasing in $\gamma \ge 0$.
  
 To characterize ${\sf OPT}_2$, we first focus on the minimization over $\tqb$.  The corresponding Lagrangian with Lagrange multiplier $\frac{\lambda c_0}{2}$ reads
\[
\sup_{\lambda\ge 0}\min_{\tqb} \;\frac{c_0}{2}\twonorm{\bH\tqb - \br}^2-\frac{\lambda c_0}{2} (\gamma^2 - \twonorm{\tqb}^2)\,.
\] 
Solving the inner minimization, we have $\tqb_* = (\bH^\sT\bH+\lambda \Iden)^{-1}\bH^\sT \br$ and the dual problem becomes
\begin{align}\label{eq:dual}
&\sup_{\lambda\ge 0} \;\frac{c_0}{2}\twonorm{\bH\tqb_* - \br}^2-\frac{\lambda c_0}{2} (\gamma^2 - \twonorm{\tqb_*}^2)\nn\\
&=\sup_{\lambda\ge 0} \;\frac{c_0}{2}\tqb_*^\sT[\bH^\sT (\bH\tqb_* - \br) + \lambda \tqb_*] - \frac{c_0}{2}\br^\sT(\bH\tqb_* - \br)  - \frac{\lambda c_0}{2} \gamma^2 \nn\\
&=\sup_{\lambda\ge 0} \; -\frac{c_0}{2}\br^\sT(\bH\tqb_* - \br)  - \frac{\lambda c_0}{2} \gamma^2 \nn\\
&=\sup_{\lambda\ge 0} \; -\frac{c_0}{2}\br^\sT(\bH (\bH^\sT\bH+\lambda \Iden)^{-1}\bH^\sT - \Iden)\br  - \frac{\lambda c_0}{2} \gamma^2 \nn\\
&=\sup_{\lambda\ge 0} \; \frac{\lambda c_0}{2}\br^\sT (c_0\bH^\sT\bH+\lambda \Iden)^{-1}\br  - \frac{\lambda c_0}{2} \gamma^2\nn\\
& = c_0Q(\bH,\br,\gamma)\,.
\end{align}
By the Slater's condition the duality gap is zero and hence by next minimizing over $\gamma\ge0$, we obtain that the optimal value of \eqref{eq:opt-quad2} is given by
\[
\min_{\gamma\ge 0} \;\; c_0Q(\bH,\br,\gamma) + \frac{1}{2}\Big(\frac{1}{\eps}\gamma-c_1\Big)_+^2\,.
\]
\subsubsection{Proof of Lemma~\ref{lem:conv-conc}}
We first show that the function
\[
g(\gamma,\beta) = Q(\bSigma^{-1/2}\rev{\bJ}, \frac{1}{\alpha}\bSigma^{1/2}\bth_0 - \frac{\beta}{\sqrt{d}}\bh,\gamma) 
\]
is jointly convex in $(\gamma,\beta)$.  By the change of variable $\tilde{\lambda} = \lambda \gamma$, $\tilde{\bth} = \bSigma^{1/2}\bth_0/\alpha$, $\bH= \bSigma^{-1/2}\rev{\bJ}$, this function can be written as
\begin{align*}
g(\gamma,\beta) &=\sup_{\tilde{\lambda}\ge 0}\;\; \frac{\tilde{\lambda}}{2\gamma} \left(
(\tilde{\bth} - \frac{\beta}{\sqrt{d}}\bh)^\sT (\bH\bH^\sT + \frac{\tilde{\lambda}}{\gamma} \Iden)^{-1} (\tilde{\bth} - \frac{\beta}{\sqrt{d}}\bh) - \gamma^2
\right) \nn\\
&= \sup_{\tilde{\lambda}\ge 0}\;\; \frac{\tilde{\lambda}}{2} \left(
(\tilde{\bth} - \frac{\beta}{\sqrt{d}}\bh)^\sT (\gamma \bH\bH^\sT + \tilde{\lambda} \Iden)^{-1} (\tilde{\bth} - \frac{\beta}{\sqrt{d}}\bh) - \gamma
\right).
\end{align*}
We show that for any fixed $\tilde{\lambda} \ge 0$ the inner function above is jointly convex in $(\gamma,\beta)$ and since the pointwise maximum of convex functions is also convex, we conclude that $g(\gamma,\beta)$ is jointly convex in $(\gamma,\beta)$. 

The Hessian of the inner function reads 
\begin{align*}
&\frac{1}{2}\nabla^2_{\frac{\beta}{\sqrt{d}},\gamma}\left[
(\tilde{\bth} - \frac{\beta}{\sqrt{d}}\bh)^\sT (\gamma \bH\bH^\sT + \tilde{\lambda} \Iden)^{-1} (\tilde{\bth} - \frac{\beta}{\sqrt{d}}\bh) - \gamma
\right]=\begin{bmatrix}
A & \quad C\\
C& \quad B
\end{bmatrix}\,,
\end{align*}
where
\begin{align*}
A&:= \bh^\sT (\gamma \bH\bH^\sT + \tilde{\lambda} \Iden)^{-1} \bh\\
B&:=  \twonorm{(\gamma \bH\bH^\sT +\tilde{\lambda} \Iden)^{-1/2} \bH\bH^\sT (\gamma \bH\bH^\sT +\tilde{\lambda} \Iden)^{-1}(\tilde{\bth}- \frac{\beta}{\sqrt{d}}\bh)}^2\\
C&:= \bh^\sT(\gamma \bH\bH^\sT +\tilde{\lambda} \Iden)^{-1} \bH\bH^\sT(\gamma \bH\bH^\sT +\tilde{\lambda} \Iden)^{-1} (\tilde{\bth}-\frac{\beta}{\sqrt{d}}\bh)\,.
\end{align*}
Here we repeatedly used the identity $\frac{\partial \bK^{-1}}{\partial \gamma} =  - \bK^{-1} \frac{\partial \bK}{\partial \gamma} \bK^{-1}$, for a matrix $\bK$. 

To lighten the notation, we set $\bM:= (\gamma \bH\bH^\sT +\tilde{\lambda} \Iden)^{-1}$ and $\bv: = \tilde{\bth}- \frac{\beta}{\sqrt{d}}\bh$. Using these shorthands the determinant of the Hessian is equal to
\[
\twonorm{\bM^{1/2}\bh}^2 \twonorm{\bM^{1/2}\bH\bH^\sT \bM\bv}^2
-(\bh^\sT \bM\bH\bH^\sT \bM\bv)^2 \ge0,
\] 
using the Cauchy–Schwarz inequality. This completes the proof of $g(\gamma,\beta)$ being jointly convex in $(\gamma,\beta)$.

Next note that its perspective function is given by
\begin{align*}
\tau_q g(\gamma/\tau_q, \beta/\tau_q) &= \tau_qQ(\bSigma^{-1/2}\rev{\bJ}, \frac{1}{\alpha}\bSigma^{1/2}\bth_0 - \frac{\beta}{\tau_q\sqrt{d}}\bh,\frac{\gamma}{\tau_q})\\
&=   \frac{1}{\tau_q}Q(\bSigma^{-1/2}\rev{\bJ}, \frac{\tau_q}{\alpha}\bSigma^{1/2}\bth_0 - {\frac{\beta}{\sqrt{d}}}\bh,{\gamma}),
\end{align*}
and therefore is jointly convex in $(\gamma,\beta,\tau_q)$. 
\subsubsection{Proof of Lemma~\ref{lem:SAO-CC}}
As we discussed after \eqref{eq:AO-op5}, the objective function in \eqref{eq:AO-op5} is jointly convex in $(\alpha,\tau_g)$ and jointly concave in $(\beta,\gamma,\tau_q)$. Since convexity/concavity is preserved by point-wise limits, the objective~\eqref{eq:AO-final} is jointly convex in $(\alpha,\tau_g)$ and jointly concave in $(\beta,\gamma,\tau_q)$. To prove strict convexity in $(\alpha,\tau_g)$, note that in our derivation we wrote \eqref{eq10} (the part of the objective~\ref{eq:AO-op4} that involves $\bv$) in terms of the Moreau envelope $\frac{1}{n} e_f\left(\w-\alpha\vct{g};\frac{\tau_g}{\beta}\right)$, cf.~\eqref{eq:dum0}. As $d\to \infty$, its limit goes to the \emph{expected Moreau envelope}. By using the result of \cite[Lemma
4.4]{thrampoulidis2015precise} the expected Moreau envelope of a function is strictly convex in $\reals_{>0}\times \reals_{>0}$ without requiring any strong
or strict convexity \rev{assumption on the function itself}. Therefore, the objective \eqref{eq:AO-op5} (and so objective of~\eqref{eq:AO-final} after taking point-wise limit) is jointly strictly convex in $(\alpha,\tau_g)$.

To prove the uniqueness, note that $ \max_{0\le\beta, \gamma,\tau_q} \cR(\alpha,\tau_g, \beta,\gamma,\tau_q)$ is strictly convex in $(\alpha,\tau_g)$. This follows from
the fact that if $f(\bx, \by)$ is strictly convex in $\bx$, then $\max_{\by} f(\bx, \by)$ is also strictly convex in $\bx$. We
next use \cite[Lemma C.5]{thrampoulidis2015precise} to conclude that 
$\min_{\tau_g>0}\max_{0\le\beta, \gamma,\tau_q} \cR(\alpha,\tau_g, \beta,\gamma,\tau_q)$ is strictly convex in
$\alpha\ge 0$.Therefore, its minimizer over $\alpha\ge 0$ is unique. By a similar argument, we show that $\frac{\tau_{g*}}{\beta_*}$ is unique. Consider the change of variable $\tau_g \to \tilde{\tau_g} = \frac{\tau_g}{\beta}$. Then part of the objective~\eqref{eq:AO-op5} that depends on $\tilde{\tau_g}$ can be written as
\[
\frac{\beta^2\tilde{\tau_g}}{2}\frac{n}{d} + \frac{1}{2(\tilde{\tau_g}+1)} \frac{1}{n}\twonorm{\w-\alpha\bg}^2 +
\frac{1}{n} G_n(\w-\alpha\bg;\tilde{\tau_g},\gamma,\nu) =  
\frac{\beta^2\tilde{\tau_g}}{2}\frac{n}{d} + \frac{1}{n} e_f\left(\w-\alpha\vct{g};\tilde{\tau_g}\right)\,,
\]
 using~\eqref{eq10}. As explained above, this converges to the expected Moreau envelope, which is strictly convex in $\tilde{\tau_g}$. Following by the same reasoning for $\alpha$, one can show that $\min_{\alpha>0}\max_{0\le\beta, \gamma,\tau_q} \cR(\alpha,\tilde{\tau_g}, \beta,\gamma,\tau_q)$ is strictly convex in
$\tilde{\tau_g}> 0$.Therefore, its minimizer over $\tilde{\tau_g} > 0$ is unique.

\section{Some useful lemmas}\label{sec:useful-lemma}
Here we state some of the technical lemmas that are used in deriving our analytical results.

The first lemma is about the Stieltjes transform of  the Marchenko-Pastur distribution.
\begin{definition}
The Stieltjes transform $S_{\rho}(z)$ of a measure of density $\rho$ on a real interval $I$ is the function of the complex variable
$z$ defined outside $I$ by the formula
\[
S_{\rho}(z) = \int_I \frac{\rho(t)\de t}{z- t}\, \quad z\in \mathbb{C}\backslash I\,.
\]
\end{definition}
\begin{lemma}\label{propo:stiel}
Suppose that $\W\in \reals^{N\times d}$ has rows drawn independently from unit sphere. As $N,d\to \infty$ and  $N/d\to \psi_1$, the spectral density of $\W\W^\sT$ converges (in weak topology in distribution) to the Marchenko-Pastur distribution with Stieltjes transform given by
\[
S(z;\psi_1) = \frac{1-\psi_1-z - \sqrt{(1-\psi_1-z)^2-4\psi_1 z}}{-2\psi_1 z},
\]
for $z<0$.
\end{lemma}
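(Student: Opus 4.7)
\medskip

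\noindent\textbf{Proof plan for Lemma~\ref{propo:stiel}.}

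The plan is to reduce the uniform-on-sphere case to the classical Marchenko--Pastur theorem for matrices with i.i.d.\ Gaussian entries, via a diagonal normalization that is asymptotically trivial. First I would introduce an auxiliary matrix $\tilde{\W}\in\reals^{N\times d}$ with i.i.d.\ entries $\tilde{W}_{ij}\sim \normal(0,1/d)$. By the rotational invariance of the standard Gaussian, the normalized rows $\tilde{\w}_i/\twonorm{\tilde{\w}_i}$ are uniform on $\mathbb{S}^{d-1}$, so the matrix $\W:=D^{-1}\tilde{\W}$, where $D:=\mathrm{diag}(\twonorm{\tilde{\w}_1},\dots,\twonorm{\tilde{\w}_N})$, has exactly the prescribed distribution. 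It therefore suffices to compare the spectrum of $\W\W^\sT = D^{-1}\tilde{\W}\tilde{\W}^\sT D^{-1}$ with that of $\tilde{\W}\tilde{\W}^\sT$ and invoke the classical MP theorem on the latter.

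The next step is to show that the diagonal perturbation $D$ is asymptotically the identity. Since $d\twonorm{\tilde{\w}_i}^2\sim\chi^2_d$, standard $\chi^2$ tail bounds~(e.g.,~\cite[Lemma~1]{laurent2000adaptive}) give $\prob\bigl(|\twonorm{\tilde{\w}_i}^2-1|\ge t\bigr)\le 2e^{-cd t^2}$ for $t\in(0,1)$. A union bound over $i\in[N]$ together with $N/d\to\psi_1$ yields
\begin{equation*}
\max_{i\in[N]}\bigl|\twonorm{\tilde{\w}_i}^2-1\bigr|\;=\;o_{d,\prob}(1),
\end{equation*}
hence $\opnorm{D-\Iden}=o_{d,\prob}(1)$ and $\opnorm{D^{-1}-\Iden}=o_{d,\prob}(1)$. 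Combined with the well-known fact that $\opnorm{\tilde{\W}\tilde{\W}^\sT}=O_{d,\prob}(1)$ (e.g., $\opnorm{\tilde{\W}}\le (1+\sqrt{\psi_1})+o_{d,\prob}(1)$ by~\cite[Theorem 5.39]{Vers}), a Weyl inequality applied to $D^{-1}\tilde{\W}\tilde{\W}^\sT D^{-1}-\tilde{\W}\tilde{\W}^\sT$ gives
\begin{equation*}
\max_{k\in[N]}\bigl|\lambda_k(\W\W^\sT)-\lambda_k(\tilde{\W}\tilde{\W}^\sT)\bigr| \;=\; o_{d,\prob}(1).
\end{equation*}
Consequently, the empirical spectral measures of $\W\W^\sT$ and $\tilde{\W}\tilde{\W}^\sT$ have vanishing Kolmogorov distance, so they share the same weak limit.

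The final step is to invoke the classical Marchenko--Pastur theorem~(see, e.g., \cite{Vers}) on $\tilde{\W}\tilde{\W}^\sT$: under $N/d\to\psi_1$, the empirical spectral distribution converges weakly a.s.\ to the MP law with ratio $\psi_1$, whose Stieltjes transform $S=S(z;\psi_1)$ is characterized by the fixed-point equation
\begin{equation*}
\psi_1 z\, S^2 \;+\; (z-1+\psi_1)\,S \;+\; 1 \;=\; 0,\qquad z<0.
\end{equation*}
Solving this quadratic for $S$ and selecting the branch with $S(z;\psi_1)\to 0$ as $z\to-\infty$ (equivalently, the branch making $S$ real and negative for $z<0$) gives exactly the closed form in the statement. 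Combining this with the spectral comparison above yields the lemma.

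The main obstacle, if any, is the spectral-stability step: one must argue carefully that a bounded-operator-norm matrix conjugated by $\Iden+o(1)$ has eigenvalues shifted by $o(1)$ uniformly. This is handled cleanly by writing $D^{-1}\tilde{\W}\tilde{\W}^\sT D^{-1}-\tilde{\W}\tilde{\W}^\sT=(D^{-1}-\Iden)\tilde{\W}\tilde{\W}^\sT D^{-1}+\tilde{\W}\tilde{\W}^\sT(D^{-1}-\Iden)$ and bounding each term in operator norm. The remainder is routine: the classical MP convergence is standard, and the quadratic in $S$ is elementary.
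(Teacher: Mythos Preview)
Your proof plan is correct. The reduction via $\W=D^{-1}\tilde{\W}$ with $D=\mathrm{diag}(\twonorm{\tilde{\w}_i})$ is the natural way to connect the unit-sphere case to the i.i.d.\ Gaussian case, the $\chi^2$ concentration plus union bound gives $\opnorm{D-\Iden}=o_{d,\prob}(1)$, and your operator-norm decomposition together with Weyl's inequality cleanly transfers the eigenvalue convergence. The branch selection for the quadratic is also handled correctly.

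The paper, however, does not prove this lemma at all: it simply cites \cite[page 52]{bai2010spectral} and moves on. So your proposal is not so much a different route as an actual self-contained argument where the paper gives none. What your approach buys is transparency---the Gaussian-to-sphere comparison is explicit and uses only tools already present elsewhere in the paper (chi-squared tails, operator-norm bounds from \cite{Vers})---at the cost of a page of routine estimates. The paper's citation buys brevity and covers the result under more general hypotheses (Bai--Silverstein treats the MP law for a broad class of entry distributions, not just sphere-uniform rows), which is arguably the right call for a lemma that is standard background rather than a contribution of the paper.
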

\begin{proof}
We refer to \cite[page 52]{bai2010spectral} for the proof of this proposition.
\end{proof}

The next lemma is about the spectrum of matrix $\bJ$ given by
\begin{align}\label{def:J}
 \bJ = \left(\bW\bW^\sT) \odot \Big(\frac{\pi-\cos^{-1}(\bW\bW^\sT)}{2\pi}\Big)\right)^{1/2}\,.
 \end{align}
\begin{lemma}\label{pro:spectral}
 Suppose that $\bW\in \reals^{N\times d}$ has rows chosen randomly and independently of data form the unit sphere, $\Unif(\mathbb{S}^{d-1})$. Let $\bJ$ be given by~\eqref{def:J} and suppose that $N/d \to\psi_1\in (0,\infty)$, as $n\to \infty$. Then, the matrix $\bJ^2$ can (in probability) be approximated consistently in operator norm by the matrix $\bK$ given by
 \[
 \bK = \frac{1}{4}(\bW\bW^\sT + \Iden).
 \]
 In other words, $\opnorm{\bJ^2-\bK}\to 0$, in probability, when $n\to\infty$.
 \end{lemma}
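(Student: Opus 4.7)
The plan is to argue entrywise. Write $\bA := \bW\bW^\sT$ and $f(x) := x\,(\pi-\cos^{-1}x)/(2\pi)$, so that $J_{ij} = f(A_{ij})$. Since each row of $\bW$ has unit $\ell_2$-norm, $A_{ii}=1$ and therefore $J_{ii} = f(1) = 1/2 = K_{ii}$; the diagonals of $\bJ$ and $\bK$ already agree exactly. A direct Taylor expansion at the origin, using $\cos^{-1}(x) = \pi/2 - x - x^3/6 - 3x^5/40 - \cdots$, yields
\begin{equation*}
f(x) \;=\; \frac{x}{4} \;+\; \frac{x^2}{2\pi} \;+\; \frac{x^4}{12\pi} \;+\; O(x^6),
\end{equation*}
so that for $i\neq j$ the entry $(\bJ-\bK)_{ij} = f(A_{ij}) - A_{ij}/4$ is a convergent power series in $A_{ij}$ whose first nontrivial term is quadratic.

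Throughout I would condition on the high-probability event $\event_{\bW}$ of~\eqref{eq:eventbW_main}, on which $|A_{ij}| \le \log(d)/\sqrt{d}$ uniformly over $i\neq j$ and $\|\bW\|_{op} = O(1)$. On this event the tail $\sum_{k\ge 4} c_k A_{ij}^k$ has entries of size at most $C\log^4(d)/d^2$; a straightforward Frobenius estimate then shows that its operator-norm contribution to $\bJ-\bK$ is $O(\log^4(d)/d) = o(1)$. The substantive part of the argument is therefore to bound, in operator norm, the quadratic residual $\bE_2$ with $(E_2)_{ij} = A_{ij}^2/(2\pi)$ for $i\neq j$ and zeros on the diagonal.

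For this quadratic term I would appeal to the linearization principle for inner-product kernel matrices whose rows are uniform on $\mathbb{S}^{d-1}$, in the spirit of El~Karoui, Cheng--Singer and Fan--Montanari: the Hadamard square $\bA^{\odot 2}$ is the Gram matrix of the lifted rows $\bw_i\otimes \bw_i \in \reals^{d^2}$, and its mean direction on the sphere produces exactly the structural pieces ($\Iden$ and $\bW\bW^\sT$) already built into $\bK$, while the purely random fluctuation part vanishes in operator norm. Concretely, I would split $\bE_2$ as its expectation (a rank-one plus diagonal correction that is absorbed into the $\tfrac14(\bW\bW^\sT+\Iden)$ structure of $\bK$) plus a fluctuation matrix, and control the latter via concentration for the operator norm of sphere-sample covariance matrices.

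The main obstacle is precisely this operator-norm control of $\bE_2$: a crude Frobenius bound on $\bA^{\odot 2}$ yields only $O(1)$ and not $o(1)$, so the proof has to exploit both the lifted-Gram structure of the Hadamard square and the sharp concentration of inner products of independent sphere vectors in order to extract the cancellations against the diagonal and linear pieces of $\bK$. Once that step is done, combining (i) exact diagonal agreement, (ii) the Frobenius-controlled higher-order tail, and (iii) the linearized quadratic term yields $\|\bJ-\bK\|_{op} \to 0$ in probability as claimed.
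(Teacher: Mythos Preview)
The paper's proof is a one-line citation of El~Karoui's linearization theorem for inner-product kernel matrices \cite[Theorem~2.1]{el2010spectrum}: with $f(x)=x(\pi-\cos^{-1}x)/(2\pi)$ one reads off $f(0)=0$, $f'(0)=\tfrac14$, $f(1)=\tfrac12$, and the cited result delivers the stated $\bK$. Your approach is not a genuinely different route but an attempt to re-derive that theorem's conclusion by hand for this particular kernel, via the same Taylor expansion and order-by-order control that underlies El~Karoui's proof.

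Your sketch has a gap in the treatment of the quadratic residual, the matrix $E$ with zero diagonal and off-diagonal entries $E_{ij}=A_{ij}^2/(2\pi)$. You assert that $\E[E]$ ``is absorbed into the $\tfrac14(\bW\bW^\sT+\Iden)$ structure of $\bK$''. It is not: since $\E[(\bw_i^\sT\bw_j)^2]=1/d$ for $i\neq j$, one has $\E[E]=\tfrac{1}{2\pi d}(\ones\ones^\sT-\Iden)$, and the rank-one piece $\tfrac{1}{2\pi d}\ones\ones^\sT$ has operator norm $N/(2\pi d)\to\psi_1/(2\pi)$, of constant order and not present in $\bK$. In your lifted-Gram language, the mean of $\bw_i\otimes\bw_i$ is $\mathrm{vec}(\Iden_d)/d$, whose Gram contribution is exactly $\tfrac{1}{d}\ones\ones^\sT$, not a combination of $\Iden$ and $\bW\bW^\sT$; no fluctuation argument can cancel this deterministic piece. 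Indeed, El~Karoui's full formula carries an additional $f''(0)$-correction to the coefficient of $\ones\ones^\sT$, equal here to $f''(0)/(2d)=\tfrac{1}{2\pi d}$, which reproduces exactly this term; the paper's simplified citation omits it. A correct execution of your outline would therefore yield $\opnorm{\bJ-\bK-\tfrac{1}{2\pi d}\ones\ones^\sT}\to0$ rather than the stated conclusion. This extra rank-one matrix leaves the limiting spectral distribution of $\bJ$ unchanged and contributes only $O(1/d)$ to the $1/d$-normalized trace computations where the lemma is actually invoked (cf.~\eqref{eq:conv5}--\eqref{eq:conv6}), so the discrepancy is harmless downstream---but your proposal, as written, does not account for it.
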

\begin{proof}
The claim follows from the result of~\cite[Theorem 2.1]{el2010spectrum} about the spectrum of inner product kernel random matrices, specialized to matrix $\bJ^2$. Specifically, let $f(z) = z(\pi-\cos^{-1}(\z))/(2\pi)$. Then $\bJ^2_{ij} = f(\bw_i^\sT\bw_j)$. By employing~\cite[Theorem 2.1]{el2010spectrum}, the kernel matrix $\bJ^2$ can (in probability) be
approximated consistently in operator norm by the matrix $\bK$, given by
\[
\bK = f(0) \ones\ones^\sT + f'(0) \bW\bW^\sT + (f(1)-f(0)-f'(0))\Iden\,.
\]
For our specific $f$ we have $f(0) = 0$, $f(1) =1/2$, $f'(0) = 1/4$. 
\end{proof}

\vspace{1.5cm}

\end{document}